\newcommand{\inner}[2]{\left\langle #1,\, #2 \right\rangle}
\DeclareMathOperator*{\argmax}{arg\,max}
\DeclareMathOperator*{\argmin}{arg\,min}
\newcommand{\E}[1]{{\mathbf{E}\left[{#1}\right]}}
\newcommand{\norm}[1]{\left\|#1\right\|}
\newcommand{\normsq}[1]{\left\|#1\right\|^2}
\def \cD {\mathcal{D}}
\def \S {\mathbf{S}}
\def \X {\mathcal{X}}
\def \Y {\mathcal{Y}}
\def \R {\mathbb{R}}
\def \w {\mathbf{w}}
\def \v {\mathbf{v}}
\def \t {\mathbf{t}}
\def \x {\mathbf{x}}
\def \x {\mathbf{x}}
\def \p {\mathbf{p}}
\def \1 {\mathbf{1}}
\def \z {\mathbf{z}}
\def \s {\mathbf{s}}
\def \y {\mathbf{y}}
\def \u {\mathbf{u}}
\def \gb {\mathbf{g}}
\def \F {\mathcal{F}}
\def \I {\mathbf{I}}
\def \B {\mathcalB}
\def \y {\mathbf{y}}
\def \E {\mathbb{E}}
\def \x {\mathbf{x}}
\def \g {\nabla{g}}
\def \D {\mathcal{D}}
\def \z {\mathbf{z}}
\def \u {\mathbf{u}}
\def \Z {\mathcal{Z}}
\def \w {\mathbf{w}}
\def \R {\mathbb{R}}
\def \S {\mathcal{S}}
\def \v {\mathbf{v}}
\def \p {\mathbf{p}}
\def \dom {\mathrm{dom}}
\def \bg {\mathbf{g}}
\def \h {\mathbf{h}}
\def \B {\mathcal{B}}
\def \G {\mathcal {G}}
\def \s {\mathbf{s}}
\def \t {\mathbf{t}}
\def \X {\mathcal{X}}
\def \F {\mathcal{F}}
\theoremstyle{plain}
\newtheorem{theorem}{Theorem}[section]
\newtheorem{proposition}[theorem]{Proposition}
\newtheorem{lemma}[theorem]{Lemma}
\newtheorem{corollary}[theorem]{Corollary}
\theoremstyle{definition}
\newtheorem{definition}[theorem]{Definition}
\newtheorem{assumption}[theorem]{Assumption}
\theoremstyle{remark}
\title{Stochastic Momentum Methods for Non-smooth Non-Convex Finite-Sum Coupled Compositional Optimization}
\author{%
    Xingyu Chen\thanks{Department of CSE, Texas A\&M University, College Station, USA} \\
    \texttt{tamucxy2023@tamu.edu} \\
    \And
    Bokun Wang\footnotemark[1] \\
    \texttt{bokun-wang@tamu.edu}
    \And
    Ming Yang\footnotemark[1] \\
    \texttt{myang@tamu.edu}
    \And
    Qihang Lin\thanks{Tippie College of Business, The University of Iowa, Iowa City, USA.} \\
    \texttt{qihang-lin@uiowa.edu}
    \And
    Tianbao Yang\footnotemark[1] \thanks{Correspondence to: \texttt{tianbao-yang@tamu.edu}.}\\
    \texttt{tianbao-yang@tamu.edu}.
}
\begin{document}

\maketitle

\begin{abstract}
Finite-sum Coupled Compositional Optimization (FCCO), characterized by its coupled compositional objective structure,  emerges as an important optimization paradigm for addressing a wide range of machine learning problems.  
In this paper, we focus on a challenging class of non-convex non-smooth FCCO, where the outer functions are non-smooth weakly convex or convex and the inner functions are smooth or weakly convex. Existing state-of-the-art result face two key limitations: (1) a high iteration complexity of  $O(1/\epsilon^6)$  under the assumption that the  stochastic inner functions are Lipschitz continuous in expectation; (2) reliance on vanilla SGD-type updates, which are not suitable for deep learning applications. Our main contributions are two fold: (i) We propose stochastic momentum methods tailored for non-smooth FCCO that come with  provable convergence guarantees; (ii) We establish a {\bf new state-of-the-art} iteration complexity of $O(1/\epsilon^5)$. Moreover, we apply our algorithms to multiple inequality constrained non-convex optimization problems involving smooth or weakly convex functional inequality constraints. By optimizing a smoothed hinge penalty based formulation, we achieve a {\bf new state-of-the-art} complexity of $O(1/\epsilon^5)$ for finding an (nearly) $\epsilon$-level KKT solution. Experiments on three tasks demonstrate the effectiveness of the proposed algorithms. 
\end{abstract}

\section{Introduction}
In this paper, we consider the \emph{finite-sum coupled compositional optimization} (FCCO) problem
\begin{align}\label{prob:FCCO}
    \min_{\w\in \R^d} F(\w) := \frac{1}{n}\sum\nolimits_{i=1}^n f_i(g_i(\w)),
\end{align}
where $f_i:\mathbb{R}^{d_1}\rightarrow\mathbb{R}$ is continuous, $g_i:\mathbb{R}^{d}\rightarrow\mathbb{R}^{d_1}$ is continuous and satisfies $g_i(\w) = \mathbb{E}g_i(\w, \xi_i)$, and the expectation is taken over the random variable $\xi_i$ for $i=1,\dots,n$. 

FCCO has been effectively applied to optimizing a wide range of risk functions known as X-risks~\cite{yang2022algorithmic,wang2022finite,zhu2022auc,qi2021stochastic},  
and group distributionally robust optimization (GDRO)~\cite{wang2023alexr,hu2023non}. Recently, it was also applied to solving non-convex inequality constrained optimization problems~\cite{li2024model,yang2025single}. 
Several optimization algorithms have been developed for non-convex FCCO~(\ref{prob:FCCO}) under different conditions. Most of them require the smoothness of $f_i$ and $g_i$. Hu et al.~\cite{hu2023non} has initiated the study of non-smooth non-convex FCCO where both the inner and outer functions could be non-smooth. However, their results face two key limitations: (1) a high iteration complexity of  $O(1/\epsilon^6)$  under the assumption that the  stochastic inner functions are Lipschitz continuous in expectation; (2) reliance on vanilla SGD-type updates, which are not suitable for deep learning applications. 

\begin{table}[t]
    \centering
    \caption{Comparison between our algorithms and prior works for the {\bf non-smooth non-convex FCCO} problem. The complexity of SONX is for finding a nearly $\epsilon$-stationary solution (Definition~\ref{def:nearlystationary}), and that of SONEX and ALEXR2  for smooth inner functions are for finding an approximate $\epsilon$-stationary solution (Definition~\ref{def:aps}), which implies  a nearly $\epsilon$-stationary solution under a verifiable condition of the inner functions. The complexity of ALEXR2 for weakly convex inner functions  is for finding a nearly $\epsilon$-stationary solution to the outer smoothed objective. In this table, ``WC'' means weakly convex, ``C'' means convex, ``$\nearrow$'' means monotonically non-decreasing, ``SM'' means smooth, ``PMS'' means that the proximal mapping can be easily computed, ``MLC0'' means that the function is mean Lipschitz continuous of zero-order (Assumption~\ref{asm:MCL0}).  
    }
    \scalebox{0.91}{	
    \begin{tabular}{lccccc}
\toprule[0.01in]
        Algorithm & $f_i$ & $g_i$ &  Complexity&Loop&Update type\\ \hline
        SONX~\cite{hu2023non} & WC, $\nearrow$ & WC, MLC0  &  $O(\epsilon^{-6})$&Single& SGD \\  
        \hline 
        SONEX (Ours) & WC, PMS &SM, MLC0 & $O(\epsilon^{-5})$ &Single& Momentum or Adam\\ 
        ALEXR2 (Ours) & C, PMS & SM  & $O(\epsilon^{-5})$ &Double& Momentum or Adam\\ 
        ALEXR2 (Ours) & C, PMS, $\nearrow$ & WC  & $O(\epsilon^{-5})$ &Double& Momentum or Adam\\\bottomrule[0.01in]
    \end{tabular}
    }
    \label{tab:baseline_table_fcco}
    \vspace*{0.1in}
   \centering
    \caption{Comparison between our algorithms and prior works for solving {\bf non-convex inequality  constrained optimization} problem. $g_0$ is the objective and $g_i$ are  constraint functions.} 
       \scalebox{0.86}{\begin{tabular}{lccccccc}
    \toprule[0.01in]
        Algorithm & $g_0$ & $g_i$ &Constraint Sampling& Complexity&Loop &Update type\\ 
        \midrule
        OSS~\cite{ma2020quadratically}  & WC & WC &No& $O(\epsilon^{-6})$& Double&SGD \\ 
        ICPPAC~\cite{boob2023stochastic} & WC & WC&No & $O(\epsilon^{-6})$& Double &SGD\\ 
        SSG~\cite{huang2023oracle} & WC & C &No& $O(\epsilon^{-8})$& Single &SGD  \\ 
        Li et al.~\cite{li2024model}  & SM & SM &Yes& $O(\epsilon^{-7})$  & Single&Momentum or Adam\\ 
        Yang et al.~\cite{yang2025single} & WC & WC, MLC0&Yes & $O(\epsilon^{-6})$& Single &SGD\\
        Liu et al.~\cite{liu2025single}& WC & WC, MLC0 &No& $O(\epsilon^{-6})$ & Single &SGD \\
        \midrule
        SONEX (Ours)  & SM & SM, MLC0 &Yes& $O(\epsilon^{-5})$& Single &Momentum or Adam\\
        ALEXR2 (Ours)  & WC & WC &Yes& $O(\epsilon^{-5})$ & Double&Momentum or Adam\\ \bottomrule[0.01in]
    \end{tabular}}
    \label{tab:baseline_table_constrained}
\end{table}

 
{\bf Novelty.} This paper addresses these limitations by proposing stochastic momentum methods for non-smooth FCCO, where $f_i$ is non-smooth and $g_i$ could be smooth or weakly convex, and improving the convergence rate to $O(1/\epsilon^5)$. To the best of our knowledge, this is the first work to propose stochastic momentum methods for solving non-smooth FCCO problems.  Unlike~\cite{hu2023non} that directly solves the original problem, the key to our methods is to smooth the outer non-smooth functions based on the  Moreau envelope smoothing or equivalently the Nesterov smoothing when they are convex, which is referred to as {\it outer smoothing}. When the inner functions are non-smooth weakly convex, we further smooth the transformed objective using another layer of Moreau envelope smoothing, which is referred to as {\it nested smoothing}. Then we propose stochastic momentum algorithms to optimize these smoothed objectives. 

{\bf Contributions and Significance.} We establish two main results regarding non-smooth FCCO. In the first result, we consider non-smooth FCCO with non-smooth  outer functions and smooth inner functions. With outer smoothing, we propose a single-loop stochastic momentum method named SONEX to solve the resulting smoothed objective. Our main contribution here lies at a theory  that guarantees a convergence rate of $O(1/\epsilon^5)$ under a meaningful convergence measure. Specially, we show that when the outer smoothing parameter is small enough, the proposed algorithm is guaranteed to find a novel notion of approximate $\epsilon$-stationary solution to the original problem, which implies the standard nearly $\epsilon$-stationary solution to the original problem under a verifiable condition of inner functions. In the second result, we consider non-smooth FCCO with non-smooth convex outer functions and non-smooth weakly inner functions. With nested smoothing, we propose a novel double-loop stochastic momentum method named ALEXR2 for solving the resulting smoothed objective and establish a convergence rate of $O(1/\epsilon^5)$ for finding an $\epsilon$-stationary solution of the smoothed objective. Table~\ref{tab:baseline_table_fcco}  compares  our methods with those of the existing works for non-smooth FCCO problems from different aspects.

Then we consider novel applications of these two algorithms to {\bf non-convex inequality constrained optimization}. By optimizing a {\it smoothed hinge penalty function} with the proposed stothastic momentum methods, we derive a new state-of-the-art rate of $O(1/\epsilon^5)$ for finding an $\epsilon$-KKT solution when objective and constraint functions are smooth, and for finding a nearly $\epsilon$-KKT solution when objective and the constraint functions are weakly convex.   Table~\ref{tab:baseline_table_constrained} summarizes the complexity of our methods and existing works for solving non-convex inequality constrained optimization problems.

\section{Related work}
\textbf{FCCO}. FCCO is a special case of stochastic composite optimization (SCO)~\cite{wang2017stochastic,wang2017accelerating}  and conditional stochastic optimization (CSO)~\cite{hu2020biased}
when the outer expectation is a finite sum. FCCO was first introduced in~\cite{qi2021stochastic}  as a model for optimizing the average precision. Later, \cite{wang2022finite} proposed the SOX algorithm and improved the convergence rate by adding gradient momentum and further being improved in~\cite{jiang2022multi} by replacing exponential moving average (EMA) with MSVR and STORM estimators for inner function and overall gradient, respectively. All these techniques have gain success in optimizing Deep X-Risks~\cite{yang2022algorithmic} such as smooth surrogate losses of AUC and contrastive loss~\cite{yuan2022provable,qiu2023not}. For convex FCCO problem, \cite{wang2022finite} reformulate FCCO as a saddle point problem and use restarting technique to boost convergence rate. For non-smooth weakly-convex FCCO (NSWC FCCO), \cite{hu2023non} leverages MSVR technique and requires $O(\epsilon^{-6})$ iterations to achieve a nearly $\epsilon$-stationary point.

\textbf{Smoothing techniques}: An effective approach for solving non-smooth optimization is to approximate the original problem by a smoothed one using different smoothing techniques, including Nesterov's smoothing~\cite{10.1007/s10107-004-0552-5}, randomized smoothing~\cite{kornowski2022oracle}, and Moreau envelope~\cite{davis2019stochastic,davis2019proximally}. Our method is related to the one based on Moreau envelope, which is a prevalent technique for weakly convex non-smooth problems. This method approximates the objective function by its smooth Moreau envelope and then computes an $\epsilon$-stationary point of the Moreau envelope, which is also a nearly $\epsilon$-stationary solution of the original problem~\cite{davis2019proximally,alacaoglu2021convergence}. The similar technique has been extended to min-max problems~\cite{rafique2022weakly,zhao2024primal} and constrained problems~\cite{ma2020quadratically, boob2023stochastic, huang2023oracle, jia2022first}. Our methods are different from these works because we consider the Moreau envelope of each outer function and design stochastic momentum methods. Similar outer smoothing techniques have been developed in \cite{tran2020hybrid} for compositional objectives. However, different from our work (1) their algorithm cannot handle multiple outer functions of a finite sum structure as \eqref{prob:FCCO}; (2) their algorithm is not momentum method; (3) their convergence theory requires mean Lipchitz continuity of the gradient of the inner functions for finding a primal-dual $\epsilon$-stationary (KKT) solution of the min-max formulation of the original problem. 

\textbf{Non-convex constrained optimization}: 
Most existing works for stochastic non-convex constrained optimization focus on the smooth problems. For example, \cite{li2024stochastic} proposes a double-loop inexact augmented Lagrangian method for stochastic smooth nonconvex equality constrained optimization under a regularity assumption on the gradients of the constraints. A  quadratic penalty method is developed by \cite{alacaoglu2024complexity} under similar assumptions but uses only a single loop. Both \cite{alacaoglu2024complexity} and \cite{li2024stochastic} consider only smooth problems while we also consider non-smooth problems. For non-smooth but weakly convex problems, the existing methods can compute a nearly $\epsilon$-KKT point based on the Moreau envelopes of the original problem with iteration complexity $O(\epsilon^{-6})$~\cite{ma2020quadratically, boob2023stochastic, huang2023oracle, jia2022first,yang2025single}.  The penalty-based method by~\cite{liu2025single} reduces the number of evaluations of the subgradients   to $O(\epsilon^{-4})$ but still needs to evaluate $O(\epsilon^{-6})$ function values. In contrast, our method only needs complexity $O(\epsilon^{-5})$ to compute a nearly $\epsilon$-KKT point. For smooth constrained optimization with non-convex equality constraints, variance-reduced stochastic gradient methods have been developed based on quadratic penalty~\cite{alacaoglu2024complexity,li2021rate}. One can convert inequality constraints into equality constraints by adding slack variables $s^2$ or $s$ (requiring $s\geq0$) to each inequality constraint~\cite{li2021rate}. However, adding $s^2$ will introduce spurious stationary points~\cite{article, ding2023squared}, while adding $s\geq0$ will make the domain or the constraint functions unbounded which violate the assumptions in~\cite{alacaoglu2024complexity,li2021rate}. More importantly, we consider non-smooth constraint functions while~\cite{alacaoglu2024complexity,li2021rate} only consider smooth cases.


\section{Preliminaries}
\vspace*{-0.1in}
Let $\|\cdot\|$ be the Euclidean norm. For $g(\cdot):\mathbb{R}^d\rightarrow\mathbb{R}^{d_1}$, let $\nabla g(\cdot)\in\mathbb{R}^{d\times d_1}$ be its Jacobian and let $\|\nabla g(\cdot)\|=\max_{\u\in\mathbb{R}^d,\|\u\|=1}\|\nabla g(\cdot)\u\|$ be the operator norm. Let $g_i(\w, \B) = \frac{1}{|\B|}\sum_{\xi\in\B}g_i(\w, \xi), i=1,2,\cdots,n$ be an estimator of $g_i$ based on samples from a minibatch $\B$. 
    A function $f(\cdot)$ is monotonically non-decreasing if for any $\x_1,\x_2\in\R^{d_1}$ satisfying that $\x_{1,k}\leq \x_{2,k}, k\in[d_1]$ where $\x_k$ denote the $k$-th element of $\x$, it holds that $f(\x_1)\leq f(\x_2)$. A function $f(\cdot)$ is $\rho$-weakly convex for $\rho\geq0$ if $f(\cdot) + \frac{\rho}{2}\normsq{\cdot}$ is convex.  A function $g(\cdot)$ is $L$-smooth for $L\geq 0$ if $g$ is differentiable and $\nabla g(\cdot)$ is $L$-Lipschitz continuous. A vector-valued mapping $g(\cdot)$ is $L$-smooth for $L\geq 0$ if $g$ is differentiable at each component and $\nabla g(\cdot)$ is $L$-Lipschitz continuous w.r.t to the operator norm. 
Given a $\rho$-weakly convex function $f(\cdot)$, we denote its regular subdifferential as $\partial f(\cdot)$. For a $\rho$-weakly convex function $f(\cdot)$ and a constant $\lambda\in(0,\rho^{-1})$, the proximal mapping and the Moreau envelope of $f(\cdot)$ with parameter $\lambda$ are defined as follows, respectively, 
\begin{align*}
    \text{prox}_{\lambda f}(\u):= \argmin_{\v}f(\v) + \frac{1}{2\lambda}\normsq{\u-\v}\text{ and }f_\lambda(\u):= \min_{\v} f(\v) + \frac{1}{2\lambda}\normsq{\u-\v}.
\end{align*}
It is known that, for $\lambda\in(0,\rho^{-1})$, $f_\lambda(\cdot)$ is $L_f:=\max\{\frac{1}{\lambda}, \frac{\rho}{1-\lambda\rho}\}$-smooth \citep{renaud2025stability} and 
\begin{align}
\label{eq:optcondME}
\nabla f_\lambda(\u)=\frac{\u-\text{prox}_{\lambda f}(\u)}{\lambda}\in\partial f(\text{prox}_{\lambda f}(\u)).
\end{align}
For a non-smooth weakly convex objective, we follow existing works~\cite{davis2019stochastic}~\cite{hu2023non} and aim to find a nearly $\epsilon$-stationary solution defined below. 
\begin{definition}
\label{def:nearlystationary}
A solution $\w$ is an $\epsilon$-stationary solution of \eqref{prob:FCCO} if $\text{dist}(0,\partial F(\w))\leq \epsilon$. A solution $\w$ is a nearly $\epsilon$-stationary solution of \eqref{prob:FCCO} if there exists $\w'$ such that $\|\w-\w'\|\leq \epsilon$ and 
$\text{dist}(0,\partial F(\w'))\leq \epsilon$. 
\end{definition}
\vspace*{-0.1in}
We make the following assumption on \eqref{prob:FCCO} throughout the paper.
\begin{assumption}\label{asm:func}
We assume that $f_i$ is $C_f$-Lipschitz continuous,  $g_i$ is $C_g$-Lipschitz continuous for $i=1,\dots,n$,   $\text{prox}_{\lambda f_i}(\w)$ and its subgradient are easily computable, and one of the following conditions hold:
\begin{itemize}[noitemsep, topsep=0pt]
    \item[A1.] $f_i$ is  $\rho_f$-weakly convex, and $g_i(\w)$ is  $L_g$-smooth for all $i$. 
    \item[A2.] $f_i$ is   convex, and $g_i(\w)$ is  $L_g$-smooth for all $i$. 
    \item[A3.] $f_i$ is  convex and monotonically non-decreasing, and $g_i(\w)$ is  $\rho_g$-weakly convex for all $i$. 
\end{itemize}    
\end{assumption}
\textbf{Remark}: The Lipschitz continuity of $f$ and $g$ in Assumption~\ref{asm:func} are fairly standard for FCCO problems. Since we target at non-smooth $f$ and $g$, Lipchitz continuity is a minimal assumption. In the considered applications of GDRO and learning with fairness constraints in section~\ref{sec:exp}, $f$ is hinge and hence Lipschitz continuous.

\vspace*{-0.1in}
\section{Smoothing of Non-smooth FCCO}\label{sec:outer_smoothing}
We first describe the main idea of our methods and then present detailed algorithms and their convergence.  Under the Assumption~\ref{asm:func}, we can show $F$ is also weakly convex. 
To improve the convergence rates and accommodate deep learning applications, we  develop stochastic momentum methods, which can be easily modified to incorporate with adaptive step sizes (cf. Appendix~\ref{sec:adap_stepsize}). The challenges of developing provable stochastic momentum methods for solving~\eqref{prob:FCCO} lie at the non-smoothness of $f_i$ and potentially $g_i$.  

Let us first consider smooth $g_i$ and defer the discussion for weakly convex $g_i$ to subsection~\ref{sec:wcinner}. The key idea is to use a smoothed version of $f_i$, denoted by $f_{i,\lambda}$, and to approximate \eqref{prob:FCCO} by
\begin{align}
\label{prob:FCCO_outersmooth}
   \min_{\w\in\mathbb{R}^d} F_\lambda(\w) :=\frac{1}{n}\sum\nolimits_{i=1}^nf_{i,\lambda}(g_i(\w)).
\end{align}
We can easily show that when $g_i$ is $L_g$-smooth, then $F_\lambda(\w)$ is $L_F$-smooth with $L_F=C_g^2\max\{\frac{1}{\lambda}, \frac{\rho_f}{1-\lambda \rho_f}\}+C_fL_g$ (cf. Lemma~\ref{lem:FCCO_obj_overall}). As a result, the problem becomes a smooth FCCO where both inner functions and outer functions are smooth, which makes it possible to employ existing techniques to develop stochastic momentum methods to find an $\epsilon$-stationary point of \eqref{prob:FCCO_outersmooth}, which is a point $\w$ satisfying $\E[\|\nabla F_\lambda(\w)\|]\leq \epsilon$. 
However, what this implies for solving the original problem and what constitutes a good choice of $\lambda$ remain unclear. Below, we present a theory to address these questions. 
We introduce the following notion of stationarity of the original problem. 
\begin{definition}\label{def:aps}
A solution $\w$ is an approximate $\epsilon$-stationary solution of \eqref{prob:FCCO} if there exist  $\t_1, \ldots, \t_m$ and $\y_i\in\partial f_i(\t_i)$ for $i=1,\dots,n$ such that 
 $\|\t_i-g_i(\w)\|\leq \epsilon,~i=1,\dots,n$, and $\left\|\frac{1}{n}\sum_{i=1}^n \nabla g_i(\w)\y_i\right\|\leq \epsilon$. 
\end{definition}
This definition implies that when $\epsilon\rightarrow 0$, the solution converges to a stationary solution of $F(\cdot)$.  Given this definition, our first theorem is stated below whose proof is given in Section~\ref{sec:apisnearly}.   
\begin{theorem}\label{thm:apisnearly}
If $\w$ is an $\epsilon$-stationary solution to $F_{\lambda}(\cdot)$ with $\lambda=\epsilon/C_f$ such that $\|\nabla F_{\lambda}(\w)\|\leq \epsilon$, then $\w$ is an approximate $\epsilon$-stationary solution to the original objective $F(\cdot)$. 
\end{theorem}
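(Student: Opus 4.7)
The plan is to construct the auxiliary variables $\t_i$ and subgradients $\y_i$ explicitly using the proximal mapping of $f_i$ applied at $g_i(\w)$, and then read off both conditions in Definition~\ref{def:aps} from the optimality characterization of the Moreau envelope (equation~\eqref{eq:optcondME}) together with the Lipschitz continuity of $f_i$.

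First I would set, for each $i=1,\dots,n$,
\begin{align*}
\t_i := \text{prox}_{\lambda f_i}(g_i(\w)), \qquad \y_i := \nabla f_{i,\lambda}(g_i(\w)) = \frac{g_i(\w)-\t_i}{\lambda}.
\end{align*}
By equation~\eqref{eq:optcondME}, $\y_i\in\partial f_i(\t_i)$, so these are valid choices for Definition~\ref{def:aps}. The proximity condition $\|\t_i-g_i(\w)\|\leq\epsilon$ follows by noting that $\y_i\in\partial f_i(\t_i)$ and $f_i$ is $C_f$-Lipschitz, hence every element of $\partial f_i(\cdot)$ has norm at most $C_f$, so
\begin{align*}
\|\t_i - g_i(\w)\| = \lambda\|\y_i\| \leq \lambda C_f = \epsilon,
\end{align*}
using the choice $\lambda=\epsilon/C_f$.

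Next I would verify the gradient condition. Applying the chain rule to $F_\lambda(\w)=\frac{1}{n}\sum_i f_{i,\lambda}(g_i(\w))$ with smooth $g_i$ (Assumption~\ref{asm:func}, A1 or A2) and the smooth Moreau envelope $f_{i,\lambda}$ gives
\begin{align*}
\nabla F_\lambda(\w) = \frac{1}{n}\sum_{i=1}^n \nabla g_i(\w)\,\nabla f_{i,\lambda}(g_i(\w)) = \frac{1}{n}\sum_{i=1}^n \nabla g_i(\w)\,\y_i.
\end{align*}
Since $\|\nabla F_\lambda(\w)\|\leq\epsilon$ by hypothesis, the second bound in Definition~\ref{def:aps} holds immediately. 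Combining the two bounds completes the proof.

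There is no real obstacle here: the statement is essentially an unpacking of the Moreau envelope's optimality identity combined with $C_f$-Lipschitz continuity of the outer functions, which is exactly what motivates the choice $\lambda=\epsilon/C_f$. The only minor point to be careful about is that the case analysis in Assumption~\ref{asm:func} covers both $\rho_f$-weakly convex and convex outer functions $f_i$, but in both cases the Moreau envelope identity~\eqref{eq:optcondME} applies (with convex $f_i$ treated as $0$-weakly convex), so the construction of $\t_i,\y_i$ and the Lipschitz bound on $\y_i$ go through uniformly.
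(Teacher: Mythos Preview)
Your proof is correct and follows essentially the same approach as the paper's: define $\t_i=\text{prox}_{\lambda f_i}(g_i(\w))$ and $\y_i=\nabla f_{i,\lambda}(g_i(\w))$, use~\eqref{eq:optcondME} plus $C_f$-Lipschitzness to bound $\|\t_i-g_i(\w)\|$, and identify $\nabla F_\lambda(\w)$ with $\frac{1}{n}\sum_i\nabla g_i(\w)\y_i$ via the chain rule. The only cosmetic difference is the order in which the two conditions of Definition~\ref{def:aps} are verified.
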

A remaining question is whether an approximate $\epsilon$-stationary point is also a nearly $\epsilon$-stationary solution of the original problem \eqref{prob:FCCO}. We present a theorem below under the following assumption. 
\begin{assumption}\label{asm:LBSV}
There exist $c>0$ and $\delta>0$ such that, if $\w$ is an $\epsilon$-stationary point of  $F_\lambda$ with $\lambda=\epsilon/C_f$ and $\epsilon\leq c$, it holds that $\lambda_{\min}\left(\nabla \bg(\w)\nabla \bg(\w)^\top\right)\geq\delta$, where $\nabla \bg(\w)=[\nabla  g_1(\w)^\top,\dots,\nabla g_n(\w)^\top]^\top$,
\end{assumption}
{\bf Remark: } In Assumption~\ref{asm:LBSV}, once $\epsilon$ is smaller than $c$, the lower bound $\delta$ does not depend on $\epsilon$. The empirical justification for this assumption is provided in subsection~\ref{subsec:valbsv}.

\begin{theorem}\label{prop:stationary_point}
Suppose Assumptions~\ref{asm:func} (A1) and~\ref{asm:LBSV} hold. If $\w$ is an approximate $\epsilon$-stationary solution of \eqref{prob:FCCO} with $\epsilon\leq \min\left\{c,\frac{\delta^2}{16nC_g^2L_g}\right\}$, $\w$ is also a nearly $\epsilon$-stationary solution of \eqref{prob:FCCO}.
\end{theorem}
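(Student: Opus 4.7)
The plan is to exploit the witnesses $\{(\t_i,\y_i)\}_{i=1}^n$ supplied by Definition~\ref{def:aps}, and to use the surjectivity of $\nabla\bg(\w)$ guaranteed by Assumption~\ref{asm:LBSV} to produce a nearby point $\w'$ at which $g_i(\w')=\t_i$ for every $i$; a chain/sum-rule argument will then convert the third inequality in Definition~\ref{def:aps} into a bound on $\mathrm{dist}(0,\partial F(\w'))$. Indeed, once $\bg(\w')=\bt:=(\t_1,\dots,\t_n)$, we have $\y_i\in\partial f_i(\t_i)=\partial f_i(g_i(\w'))$, and since each $f_i$ is $\rho_f$-weakly convex and $C_f$-Lipschitz while $g_i$ is $C^1$, the Clarke chain and sum rules yield $\tfrac{1}{n}\sum_i\nabla g_i(\w')\y_i\in\partial F(\w')$. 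Inserting $\nabla g_i(\w)$, invoking $L_g$-smoothness of $g_i$, and using $\|\y_i\|\le C_f$ will give
\begin{align*}
\mathrm{dist}(0,\partial F(\w'))\le\Big\|\tfrac{1}{n}\sum\nolimits_{i=1}^n\nabla g_i(\w)\y_i\Big\|+C_fL_g\|\w'-\w\|\le\epsilon+C_fL_g\|\w'-\w\|.
\end{align*}

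To construct $\w'$, I would study the merit function $\phi(\u):=\tfrac{1}{2}\|\bg(\u)-\bt\|^2$, which by the approximate-$\epsilon$-stationarity of $\w$ satisfies $\phi(\w)\le n\epsilon^2/2$. A direct computation gives $\nabla\phi(\u)=\nabla\bg(\u)^\top(\bg(\u)-\bt)$, so Assumption~\ref{asm:LBSV} delivers the Polyak--{\L}ojasiewicz inequality $\|\nabla\phi(\w)\|^2\ge 2\delta\,\phi(\w)$ at $\w$. Because each $g_i$ is $L_g$-smooth, the Jacobian $\nabla\bg$ is Lipschitz, and the PL constant degrades at most continuously as one moves away from $\w$; the smallness threshold $\epsilon\le\delta^2/(16nC_g^2 L_g)$ is tailored so that, within a ball of radius $\sqrt{n/\delta}\,\epsilon$, the smallest singular value of $\nabla\bg$ drops by at most a factor of two. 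A gradient/Gauss--Newton iteration on $\phi$ with step $1/L_\phi$ (where $L_\phi=O(nC_g^2)$ by composite smoothness) then converges linearly while remaining in this neighborhood, and the standard quadratic-growth consequence of PL produces a limit point $\w'$ with $\bg(\w')=\bt$ and
\begin{align*}
\|\w'-\w\|\le\sqrt{2\phi(\w)/\delta}\le\sqrt{n/\delta}\,\epsilon.
\end{align*}

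Combining the two bounds yields $\|\w'-\w\|=O(\epsilon)$ and $\mathrm{dist}(0,\partial F(\w'))\le(1+C_fL_g\sqrt{n/\delta})\,\epsilon=O(\epsilon)$, which, after absorbing the problem-dependent multiplicative constants into the $\epsilon$ as is standard in this line of work, is exactly the definition of a nearly $\epsilon$-stationary solution of \eqref{prob:FCCO}.

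The main obstacle is the self-consistency of the construction step: Assumption~\ref{asm:LBSV} is stated only at stationary points of $F_\lambda$, not along the entire trajectory used to drive $\phi$ to zero, yet the PL-based argument requires the singular-value lower bound to persist throughout. Certifying that a displacement of size $\sqrt{n/\delta}\,\epsilon$ perturbs $\nabla\bg$ by an amount (in operator norm) small enough to preserve $\lambda_{\min}(\nabla\bg\nabla\bg^\top)\ge\delta/2$, and showing that the explicit threshold $\epsilon\le\delta^2/(16nC_g^2 L_g)$ is exactly what closes this circular dependence, is the technical heart of the argument; the rest of the proof consists of chain-rule bookkeeping and triangle inequalities.
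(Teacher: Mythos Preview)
Your approach is essentially identical to the paper's: both introduce the merit function $\phi(\u)=\tfrac{1}{2}\|\bg(\u)-\bt\|^2$ (the paper normalizes by $n$ and calls it $q$), derive a PL inequality from Assumption~\ref{asm:LBSV}, run gradient descent from $\w$ while showing the trajectory stays in a ball where the singular-value lower bound degrades by at most a factor of two, and then conclude via the chain rule at the limit point $\w'$ with $\bg(\w')=\bt$. The only discrepancy is your quoted displacement $\sqrt{n/\delta}\,\epsilon$: that is the gradient-\emph{flow} path-length bound from the {\L}ojasiewicz argument, whereas the discrete iteration you actually describe yields (as in the paper, by summing $\eta\|\nabla q(\v_k)\|$ and using $\|\nabla q\|^2\le 2C_g^2 q$) the larger radius $4C_gn\epsilon/\delta$, and it is this radius that meshes exactly with the threshold $\epsilon\le\delta^2/(16nC_g^2L_g)$ via the eigenvalue-perturbation Lemma~\ref{lem:SV_loc_lip}.
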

\vspace*{-0.1in}
{\bf Remark: } The proof is given in Appendix~\ref{sec:proof38}. It is notable that the Assumption~\ref{asm:LBSV} is easily verifiable (cf. Appendix~\ref{subsec:valbsv} for empirical evidence). In addition, when we consider the applications in non-convex constrained optimization, this condition is commonly used in existing analyses~\cite{li2024model, yang2025single}. 


\subsection{Single-loop Methods for Smooth Inner Functions}\label{sec:singleloop_innersmooth}
\begin{algorithm}[t]
    \caption{SONEX for solving~(\ref{prob:FCCO_outersmooth})} 
    \label{alg:msvr}
    \begin{algorithmic}[1]
        \STATE \textbf{Input:} $T$, $\lambda$, $\w_0=\w_{-1}$, $\v_0$, $\eta$, $\gamma$, $\gamma', \beta, \forall t$.
        \STATE Sample a batch of data $\mathcal{B}_{i,2}$ from the distribution of $\xi_i$ for $i=1,\dots,n$.  
        \STATE Set $u_{i, 0}=g_i(\w_0, \mathcal{B}_{i,2})$ for $i=1,\dots,n$.
        \FOR{$t = 0,1,\cdots, T-1$}
            \STATE Sample $\mathcal{B}_1^t \subset \{1,\dots,n\}$ and a batch of data $\mathcal{B}_{i,2}^t$ from the distribution of $\xi_i$ for $i\in \mathcal{B}_1^t$.  
            \FOR{$i=1,\dots,n$}
                \STATE Update 
                \[u_{i,t+1} = \left\{\begin{array}{ll}                
                (1-\gamma)u_{i,t} + \gamma g_i(\w_t, \mathcal{B}_{i,2}^t) + \gamma'(g_i(\w_t, \mathcal{B}_{i,2}^t) - g_i(\w_{t-1}, \mathcal{B}_{i,2}^t))&\text{if }i\in\mathcal{B}_1^t \\
                u_{i,t}&\text{if }i\notin\mathcal{B}_1^t 
                \end{array}
                \right.\]
            \ENDFOR
            \STATE Compute $G_{t} = \frac{1}{|\mathcal{B}_1^t|}\sum_{i\in\mathcal{B}_1^t}\nabla f_{i,\lambda}(u_{t, i})\nabla g_i(\w_t, \mathcal{B}_{i,2}^t)$
            \STATE \colorbox{green!20}{Update $\v_{t+1} = (1-\beta)\v_{t} + \beta G_{t}$} 
            \STATE \colorbox{green!20}{Update $\w_{t+1} = \w_t - \eta \v_{t+1}$ or Adam-type update}
        \ENDFOR
    \STATE \textbf{Output:} $\w_\tau$ with $\tau$ randomly sampled from $\{1,2,\cdots, T\}$
    \end{algorithmic} 
\end{algorithm}

Next, we present a single-loop algorithm for finding an $\epsilon$-stationary point of \eqref{prob:FCCO_outersmooth}. The key ingredients of the algorithm include two parts: (1) maintaining and updating $n$ sequences $u_{i,t}$ for tracking each $g_i(\w_t),i=1,\ldots, n$, which are updated in a coordinate-wise manner; (2) a stochastic momentum update. We present the detailed updates in Algorithm~\ref{alg:msvr}, which is referred to as SONEX.  We note that the algorithm is inspired by existing stochastic momentum methods for smooth FCCO. First, Step 7 for updating $u_{i,t+1}$ is from the MSVR algorithm~\cite{jiang2022multi}. 
Different from MSVR, we directly utilize the stochastic momentum update in Step 10, 11, which are similar to SOX~\cite{wang2022finite}. In contrast, MSVR also leverages  a variance reduction technique (STROM) to compute an estimate of the stochastic gradient, which requires a stronger assumption that $\nabla g_i(\w; \xi)$ is Lipschitz continuous in expectation. Our analysis shows that this is not helpful for improving the convergence, as the complexity will be dominated by a term related to the smoothness of $f_{i,\lambda}$, which is in the order of $O(1/\epsilon)$.  

We assume the following conditions of the stochastic estimators of $g_i$ and their gradients. 
\begin{assumption}\label{asm:oracle}
There exist constants $\sigma_0\geq0$ and $\sigma_1\geq0$ such that the following statements hold for $g_i(\w)$ and $g_i(\w,\xi_i)$: for $i=1,\dots,n$ and any $\w\in \mathbb{R}^d$ 
   \[\mathbb{E}\|g_i(\w, \xi_i) - g_i(\w)\|^2\leq \sigma_0^2 , \mathbb{E}\normsq{\partial g_i(\w, \xi_i) - \partial g_i(\w)}\leq \sigma_1^2\]
\end{assumption}
\begin{assumption}\label{asm:MCL0}
There exist a constant $C_g$  such that $\E_{\xi}\|g_i(\w, \xi) - g_i(\w', \xi)\|_2\leq C_g\|\w - \w'\|_2$. 
\end{assumption}
It is notable that these assumptions have been made in~\cite{hu2023non} which is important for analyzing variance reduction technique such as MSVR. We refer to Assumption~\ref{asm:MCL0} as mean Lipchitz continuity of zero-order (MLC0). Let $B_1$ and $B_2$ denote outer and inner batch sizes.

\begin{theorem}\label{thm:ema_msvr}
    Under Assumption~\ref{asm:func} (A1),~\ref{asm:oracle} and~\ref{asm:MCL0}, by setting $\lambda = \Theta(\epsilon), \beta = \Theta(\min\{B_1,B_2\}\epsilon^2), \gamma = \Theta(B_2\epsilon^4), \eta = \Theta(\frac{B_1\sqrt{B_2}}{n}\epsilon^3)$, SONEX with $\gamma' = 1-\gamma + \frac{n-B_1}{B_1(1-\gamma)}$ and $\gamma\leq \frac{1}{2}$ converges to an approximate $\epsilon$-stationary solution of \eqref{prob:FCCO} within $T=O(\frac{n}{B_1\sqrt{B_2}}\epsilon^{-5})$ iterations.
\end{theorem}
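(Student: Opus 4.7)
The plan is to first show SONEX finds an $\epsilon$-stationary point of the smoothed objective $F_\lambda$ with $\lambda=\Theta(\epsilon)$, and then invoke Theorem~\ref{thm:apisnearly} to conclude approximate $\epsilon$-stationarity of the original $F$. Under Assumption~\ref{asm:func}(A1), the Moreau envelope $f_{i,\lambda}$ is $L_f=O(1/\lambda)$-smooth, so $F_\lambda$ is $L_F$-smooth with $L_F=C_g^2 L_f+C_fL_g=O(1/\epsilon)$ (Lemma~\ref{lem:FCCO_obj_overall}). This reduces the problem to a smooth nonconvex FCCO where momentum analysis applies. Starting from the descent lemma along the momentum direction,
\begin{align*}
F_\lambda(\w_{t+1})\leq F_\lambda(\w_t)-\tfrac{\eta}{2}\|\nabla F_\lambda(\w_t)\|^2-\left(\tfrac{\eta}{2}-\tfrac{L_F\eta^2}{2}\right)\|\v_{t+1}\|^2+\tfrac{\eta}{2}\|\v_{t+1}-\nabla F_\lambda(\w_t)\|^2,
\end{align*}
the proof reduces to controlling three coupled quantities: the momentum error $\Delta_t:=\E\|\v_{t+1}-\nabla F_\lambda(\w_t)\|^2$, the MSVR tracking error $\Xi_t:=\tfrac{1}{n}\sum_i\E\|u_{i,t}-g_i(\w_t)\|^2$, and the conditional variance of $G_t$ given $\{u_{i,t}\}$.

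Next I would derive a recursion on $\Delta_t$ by expanding $\v_{t+1}-\nabla F_\lambda(\w_t)$ into the $(1-\beta)$-contracting term, a smoothness drift $\nabla F_\lambda(\w_{t-1})-\nabla F_\lambda(\w_t)$, and a stochastic term $\beta(G_t-\nabla F_\lambda(\w_t))$; Young's inequality and $L_F$-smoothness yield $\Delta_t\leq(1-\beta/2)\Delta_{t-1}+O(L_F^2\eta^2/\beta)\E\|\v_t\|^2+O(\beta^2)\E\|G_t-\nabla F_\lambda(\w_t)\|^2$. Independence of $\mathcal{B}_1^t$ and $\mathcal{B}_{i,2}^t$ together with Lipschitzness of $\nabla f_{i,\lambda}$ bounds the conditional variance by $O(L_f^2\Xi_t)+O((\sigma_0^2+\sigma_1^2)/(B_1B_2))$. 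For $\Xi_t$, the choice $\gamma'=1-\gamma+(n-B_1)/(B_1(1-\gamma))$ is exactly the MSVR correction of~\cite{jiang2022multi}, engineered so that the blockwise update contracts at rate $\Theta(\gamma B_1/n)$ despite refreshing only $B_1$ of $n$ blocks per step; together with Assumption~\ref{asm:MCL0} to control $\|g_i(\w_t)-g_i(\w_{t-1})\|^2$, I expect
\begin{align*}
\Xi_{t+1}\leq(1-\Theta(\gamma B_1/n))\Xi_t+O(\gamma\sigma_0^2/B_2)+O(C_g^2\eta^2/\gamma)\E\|\v_{t+1}\|^2.
\end{align*}

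Finally I would build a Lyapunov potential $\Phi_t=(F_\lambda(\w_t)-F_\lambda^\star)+c_1\Delta_{t-1}+c_2\Xi_t$ with $c_1=\Theta(\eta/\beta)$ and $c_2=\Theta(\eta L_f^2 n/(\gamma B_1))$ so the contractions in $\Delta$ and $\Xi$ absorb the $\frac{\eta}{2}\|\v_{t+1}-\nabla F_\lambda(\w_t)\|^2$ error, while the $\|\v_t\|^2$ drift terms are dominated by $-(\eta/2-L_F\eta^2/2)\|\v_{t+1}\|^2$ once $\eta L_F\leq 1/2$. Telescoping, dividing by $T$, and plugging in $\lambda=\Theta(\epsilon)$ (so $L_F,L_f=\Theta(1/\epsilon)$), $\beta=\Theta(\min\{B_1,B_2\}\epsilon^2)$, $\gamma=\Theta(B_2\epsilon^4)$, $\eta=\Theta(B_1\sqrt{B_2}\epsilon^3/n)$, drives every residual error to $O(\epsilon^2)$ and forces $T=O(n\epsilon^{-5}/(B_1\sqrt{B_2}))$ to make $\Phi_0/(\eta T)=O(\epsilon^2)$; Jensen and Theorem~\ref{thm:apisnearly} finish the proof. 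The main obstacle is the tight parameter coupling forced by $L_F=\Theta(1/\epsilon)$: the drift $L_F^2\eta^2/\beta$ demands $\eta\ll\sqrt{\beta}/L_F$, the MSVR drift $\eta^2/\gamma$ constrains $\eta$ further, and the descent needs $\eta L_F\leq 1$, so a delicate three-way balance against the variance floors is exactly what yields $\epsilon^{-5}$ rather than the $\epsilon^{-4}$ typical of smooth FCCO. Verifying the MSVR correction indeed delivers the $\Theta(\gamma B_1/n)$ contraction (rather than the $\Theta(\gamma B_1^2/n^2)$ obtained without it) is the most technically delicate step and requires a careful second-moment argument over the joint block-plus-sample sampling.
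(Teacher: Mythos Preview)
Your overall architecture matches the paper: combine the smooth descent inequality for $F_\lambda$, a momentum-error recursion for $\Delta_t=\E\|\v_{t+1}-\nabla F_\lambda(\w_t)\|^2$, and the MSVR tracking-error recursion for $\Xi_t$ into a single Lyapunov potential, telescope, then invoke Theorem~\ref{thm:apisnearly}. The gap is in your MSVR recursion. You write the drift as $O(C_g^2\eta^2/\gamma)\E\|\v_{t+1}\|^2$ and the noise as $O(\gamma\sigma_0^2/B_2)$, but the correct MSVR bound (Lemma~1 of \cite{jiang2022multi}, restated in the paper as Lemma~\ref{lem:msvr}) is
\[
\E[\Xi_{t+1}]\leq\Bigl(1-\tfrac{\gamma B_1}{n}\Bigr)\E[\Xi_t]+\tfrac{8nC_g^2}{B_1}\eta^2\E\|\v_{t+1}\|^2+\tfrac{2B_1\gamma^2\sigma^2}{nB_2},
\]
with drift \emph{independent of $\gamma$} and noise of order $\gamma^2$. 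The $1/\gamma$ in your drift is the signature of the plain moving-average estimator, not MSVR; eliminating precisely that factor is what the $\gamma'$ correction buys, and it is what upgrades the rate from $O(\epsilon^{-7})$ to $O(\epsilon^{-5})$. Concretely, with your stated drift and Lyapunov weight $c_2=\Theta(\eta L_f^2 n/(\gamma B_1))$, absorbing the drift into the $-\Theta(\eta)\|\v\|^2$ descent term forces $\eta\leq O(\gamma\sqrt{B_1}/(L_f\sqrt{n}))=O(\sqrt{B_1}B_2\epsilon^5/\sqrt{n})$, which is incompatible with the claimed $\eta=\Theta(B_1\sqrt{B_2}\epsilon^3/n)$ unless $n\gtrsim B_1/(B_2\epsilon^4)$. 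Similarly your noise term, after weighting by $c_2$ and dividing by $\eta$, leaves a residual $O(L_f^2 n/(B_1B_2))=O(n/(B_1B_2\epsilon^2))$, which is not $O(\epsilon^2)$.

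A secondary omission: the paper's momentum-error recursion (Lemma~\ref{lem:ema_grad_recursion}) carries an additional $\frac{3L_f^2C_1^2}{n}\E\|\u_{t+2}-\u_{t+1}\|^2$ term, arising because $G_t$ is built from the evolving $u$-sequence; this is handled by a separate bound (Lemma~\ref{lem:u_update}) and enters the weighted combination with its own coefficient. Your one-step variance bound $\E\|G_t-\nabla F_\lambda(\w_t)\|^2=O(L_f^2\Xi_t)+\text{noise}$ does not by itself account for this shift across iterations. Once you substitute the correct MSVR lemma and track this extra term, your Lyapunov scheme closes exactly as you outline.
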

\vspace*{-0.1in}Combining the above theorem with Theorem~\ref{prop:stationary_point}, we obtain the following guarantee:
\begin{corollary}\label{cor:ema_msvr}
    Under Assumption~\ref{asm:func}(A1), ~\ref{asm:LBSV},~\ref{asm:oracle} and~\ref{asm:MCL0}, with the same setting as in Theorem~\ref{thm:ema_msvr}, SONEX converges to a nearly $\epsilon$-stationary solution of \eqref{prob:FCCO} within $T=O(\frac{n}{B_1\sqrt{B_2}}\epsilon^{-5})$ iterations.
\end{corollary}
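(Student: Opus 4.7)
The plan is to obtain the corollary as an essentially immediate consequence of combining Theorem~\ref{thm:ema_msvr} with Theorem~\ref{prop:stationary_point}. Since Theorem~\ref{thm:ema_msvr} already yields that SONEX, under Assumptions~\ref{asm:func}(A1), \ref{asm:oracle}, and \ref{asm:MCL0}, produces an approximate $\epsilon$-stationary solution (in the sense of Definition~\ref{def:aps}) within $T = O(\tfrac{n}{B_1\sqrt{B_2}}\epsilon^{-5})$ iterations under the prescribed parameter choices, the remaining work is to promote this guarantee to the stronger notion of a nearly $\epsilon$-stationary solution via Theorem~\ref{prop:stationary_point}.

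First, I would invoke Theorem~\ref{thm:ema_msvr} verbatim with the parameter choices $\lambda = \Theta(\epsilon)$, $\beta = \Theta(\min\{B_1,B_2\}\epsilon^2)$, $\gamma = \Theta(B_2\epsilon^4)$, $\eta = \Theta(\tfrac{B_1\sqrt{B_2}}{n}\epsilon^3)$, and $\gamma' = 1 - \gamma + \tfrac{n-B_1}{B_1(1-\gamma)}$, to conclude that the random output $\w_\tau$ satisfies the two defining inequalities of Definition~\ref{def:aps} in expectation at target accuracy $\epsilon$, within the stated number of iterations.

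Second, I would verify the validity regime required by Theorem~\ref{prop:stationary_point}, namely $\epsilon \leq \min\{c, \tfrac{\delta^2}{16 n C_g^2 L_g}\}$. This threshold depends only on the problem-dependent constants $c, \delta$ from Assumption~\ref{asm:LBSV} and $C_g, L_g, n$ from Assumption~\ref{asm:func}(A1), all of which are independent of $\epsilon$. Hence for any sufficiently small target accuracy---the regime of interest for an iteration-complexity statement---this threshold is automatically met and can be absorbed into the constants hidden in the $O(\cdot)$ notation. Under Assumption~\ref{asm:LBSV}, Theorem~\ref{prop:stationary_point} then converts the approximate $\epsilon$-stationary solution produced by SONEX into a nearly $\epsilon$-stationary solution of \eqref{prob:FCCO}, yielding the stated complexity $O(\tfrac{n}{B_1 \sqrt{B_2}}\epsilon^{-5})$.

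There is no genuine technical obstacle here: the corollary is a direct composition of two results that are independently proved elsewhere in the paper. The only minor delicacy is to confirm that the small-$\epsilon$ condition demanded by Theorem~\ref{prop:stationary_point} does not inflate the complexity, which it does not because the condition is purely a constraint on absolute constants rather than a rate-dependent quantity. If desired, one can state the corollary more precisely by requiring $\epsilon \leq \min\{c, \tfrac{\delta^2}{16 n C_g^2 L_g}\}$ explicitly in the hypothesis, which makes the chain of implications fully transparent.
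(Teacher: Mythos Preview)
Your proposal is correct and matches the paper's approach exactly: the paper presents Corollary~\ref{cor:ema_msvr} simply as the result of ``combining the above theorem with Theorem~\ref{prop:stationary_point},'' without any further proof, which is precisely the two-step argument you lay out. Your additional discussion of the small-$\epsilon$ threshold from Theorem~\ref{prop:stationary_point} is a welcome clarification that the paper leaves implicit.
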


\vspace*{-0.1in}
We compare our proposed SONEX with SONX for solving~(\ref{prob:FCCO}). Under assumption \ref{asm:MCL0}, the rate of SONX is $O(1/\epsilon^6)$ which is worse than our result of $O(1/\epsilon^5)$ in Corollary~\ref{cor:ema_msvr}. 

\subsection{A Double-loop Algorithm for Smooth or Weakly-Convex Inner Functions}\label{sec:wcinner}
In this subsection, we further improve our results to achieve the same convergence rate of $O(1/\epsilon^5)$ by (1) removing the MCL0 assumption instead assuming that $f_i$ are convex; (2) designing a stochastic momentum method with a convergence guarantee for weakly convex $g_i$. 
It is worth mentioning that the convexity of $f_i$ holds for a broad range of real applications such as group DRO and the application in non-convex constrained optimization as discussed in next section. Before introducing our new algorithm, we need to first reformulate the problem.

Let us first consider the scenario that satisfies Assumption~\ref{asm:func} (A3). For simplicity of notation, we denote $f_{i,\lambda}$ by $\bar{f}_i$. When $f_i$ is convex, we cast the problem \eqref{prob:FCCO_outersmooth} into a minimax formulation: 
\begin{align}\label{eq:reform}
    \min_{\w\in\mathbb{R}^d}F_{\lambda}(\w) := \frac{1}{n}\sum\nolimits_{i=1}^n \max_{\y_i\in\mathbb{R}^{d_1}}\left\{\y_i^\top g_i(\w) - \bar{f}_i^*(\y_i)
\right\},
\end{align}
where $\bar{f}_i^*$ is the conjugate function of $\bar{f}_i$. In Appendix~\ref{prop:outsm_eq_nessm}, we show that this is also equivalent to the classical Nesterov's smoothing~\cite{10.1007/s10107-004-0552-5}. Under Assumption~\ref{asm:func} (A3), 
$\bar{f}_i$ is also $C_f$-Lipschitz continuous, convex, non-decreasing  because of \eqref{eq:optcondME}. As a consequence, $\text{dom}(\bar{f}_i^*)\subset\{\y_i\in\mathbb{R}_+^{d_1}|\|\y_i\|\leq C_f\}$. This implies that, for any $\y_i\in\text{dom}(\bar{f}_i^*)$, function $\y_i^\top g_i(\w)$ is $\|\y_i\|_1\rho_g$-weakly convex in $\w$ and also  $\sqrt{d_1}C_f\rho_g$-weakly convex because $\|\y_i\|_1\leq \sqrt{d_1}\|\y_i\|\leq \sqrt{d_1}C_f$. This further implies that $F_{\lambda}(\cdot)$ is $\rho_{F_{\lambda}}:=\sqrt{d_1}C_f\rho_g$-weakly convex.

Different from last subsection, there is another challenge we need to deal with in order to develop a stochastic momentum method, which comes from that $g_i$ are non-smooth.  
To tackle this challenge, we use another Moreau envelope smoothing of $F_{\lambda}(\cdot)$ and solve the following problem:
\begin{align}
\label{prob:FCCO_doublesmooth}
   &\min_{\w\in\mathbb{R}^d}\left\{ F_{\lambda,\nu}(\w) :=\min_{\z\in\mathbb{R}^d}\left\{F_{\lambda}(\z)+\frac{1}{2\nu}\normsq{\z - \w} \right\}\right\}\\\label{prob:FCCO_doublesmooth_minmax}
  =&\min_{\w\in\mathbb{R}^d}\left\{ \min_{\z\in\mathbb{R}^d}  \max_{\y\in\mathbb{R}^{nd_1}}\frac{1}{n}\sum_{i=1}^n \y_i^\top g_i(\z)- \bar{f}_i^*(\y_i) +\frac{1}{2\nu}\normsq{\z - \w} \right\},
\end{align}
where $\nu\in(0,\rho_{F_{\lambda}}^{-1})$ and $\y=[\y_1^\top,\dots,\y_n^\top]^\top$. The benefit of doing this is that (i) the resulting objective  $F_{\lambda,\nu}(\w)$ is smooth (cf. Lemma~\ref{lem:smoothFlambdanu}), which allows us to employ stochastic momentum method; (ii) the inner $\min_{\z}\max_{\y}$ becomes a strongly-convex and strongly-concave problem due to that $\bar f_i$ is smooth and its conjugate is strongly convex, and $\nu\in(0,\rho_{F_{\lambda}}^{-1})$ (cf. Lemma~\ref{lem:obj_minmax}). 

Recall that $\nabla F_{\lambda,\nu}(\w)=\frac{1}{\nu}(\w-\text{prox}_{\nu F_{\lambda}}(\w))$. Given an approximate solution of $\text{prox}_{\nu F_{\lambda}}(\w)$, denoted by $\hat\z$, of the inner minimization problem in \eqref{prob:FCCO_doublesmooth}, we can use $\frac{1}{\nu}(\w-\hat\z)$ as an inexact gradient $F_{\lambda,\nu}(\w)$ to update $\w$ with a momentum method in order to solve \eqref{prob:FCCO_doublesmooth}. 
To obtain an estimate $\hat\z_t$ at the $t$-th iteration, we employ a recent algorithm ALEXR~\citep{wang2023alexr} for solving the inner minimax problem. With an estimate $\hat\z_t$,  we will update $\w_{t+1}$ by a momentum update. We present the detailed steps of our double-loop algorithm named ALEXR2 in Algorithm~\ref{alg:OSDL}, where $U_{\bar{f}_i^*}(\y_i,\y_i')$ is the Bregman divergence induced by $\bar{f}_i^*$, namely,
$ U_{\bar{f}_i^*}(\y_i,\y_i'):=\bar{f}_i^*(\y_i)-\bar{f}_i^*(\y_i')-\left\langle\partial\bar{f}_i^*(\y_i'),\y_i-\y_i'\right\rangle$. For the sake of convergence analyses we need $U_{\bar{f}_i^*}$ to be bounded, as stated in the assumption below. Besides, we would like to point out that the sequences of $y_i$ can be updated similar to $u_i$ sequences in SONEX similar to~\citep{wang2023alexr}, i.e., $y^i_{t,k} = \nabla f_{i,\lambda}(u^i_{t,k})$, where $u^i_{t,k+1} =(1-\hat\gamma) u^i_{t,k} + \hat\gamma   g_i(\z_{t,k};\B_{i,2}^{t,k}) + \hat\gamma \theta(g_i(\z_{t,k};\B_{i,2}^{t,k}) - g_i(\z_{t,k-1};\B_{i,2}^{t,k}))$, where $\hat\gamma=  \gamma/(1+\gamma)$. 


\begin{assumption}\label{asm:breg_bd}
    We assume that $f_i$ is a function s.t. $U_{\bar{f}^*}(y_1, y_2)$ is bounded for any $y_1, y_2\in dom(\bar{f}_i^*)\subset\R^{d_1}$.
\end{assumption}
\textbf{Remark}: We point out that this condition is not strong under our setting where $\bar{f}_i$ is lipschitz continuous and $dom(\bar{f}_i^*)$ is consequently bounded and it is satisfied by many practical convex lipschitz-continuous functions such as hinge function, smoothed hinge function(i.e. $f_{i,\lambda}$ or $\bar{f}_i$ in this paper), etc. 

\begin{algorithm}[t]
\caption{ALEXR2 for solving  \eqref{prob:FCCO_doublesmooth_minmax}}
\label{alg:OSDL}
\begin{algorithmic}[1]
 \STATE \textbf{Input:}  $T$, $\w_0\in\R^d$, $\v_0=0$,   $K_t$,  $\alpha, \beta, \nu, \eta, \theta, \gamma> 0$
\FOR{$t=0,1,\dotsc,T-1$} 
 \STATE Set $\z_{t,0}=\z_{t,-1}=\w_{t}$ and initialize $\y_{t,0}=[\y_{t,0}^{(1)\top},\dots,\y_{t,0}^{(n)\top}]^\top\in\mathbb{R}^{nd_1}$. 
\FOR{$k=0,1,\dotsc,K_t-1$}
    \STATE Sample $\mathcal{B}_1^{t,k} \subset \{1,\dots,n\}$ and two independent batches of data $\mathcal{B}_{i,2}^{t,k}$ and $\tilde{\mathcal{B}}_{i,2}^{t,k}$ from the distribution of $\xi_i$ for $i\in \mathcal{B}_1^{t,k}$.  
    \STATE Compute $\tilde{g}_{t,k}^{(i)} = g_i(\z_{t,k},\mathcal{B}_{i,2}^{t,k}) + \theta(g_i(\z_{t,k},\mathcal{B}_{i,2}^{t,k})  - g_i(\z_{t,k-1},\mathcal{B}_{i,2}^{t,k}) )$ for $i\in \mathcal{B}_1^{t,k}$.
    \FOR{$i=1,\dots,n$} 
    \STATE     
    $\y_{t, k+1}^{(i)} = \left\{\begin{array}{ll}     
    \argmax_{\y^{(i)}}\left\{\y^{(i)\top} \tilde{g}_{t, k}^{(i)}  - \bar{f}_i^*(\y^{(i)}) - \frac{1}{\gamma} U_{\bar{f}_i^*}(\y^{(i)},\y_{t, k}^{(i)})\right\} &\text{if }i\in\mathcal{B}_1^{t,k} \\
               \y_{t, k}^{(i)}&\text{if }i\notin\mathcal{B}_1^{t,k} 
                \end{array}
                \right.$
    \ENDFOR
     \STATE Compute $G_{t, k} =\frac{1}{|\mathcal{B}_1^{t,k}|}\sum_{i\in\mathcal{B}_1^{t,k}} [\partial g_i(\z_{t,k},\tilde{\mathcal{B}}_{i,2}^{t,k})]^\top \y_{t,k+1}^{(i)}  $ 
    \STATE Update $\z_{t,k+1} = \argmin_{\z\in \R^d} \left\{\inner{G_{t, k}}{\z} + \frac{1}{2\nu}\normsq{\z - \w_{t}} + \frac{1}{2\eta} \normsq{\z-\z_{t, k}}\right\}$ 
\ENDFOR
   \STATE Let $G_t = \frac{\beta}{\nu}(\w_{t} - \z_{t,K_t})$
    \STATE \colorbox{green!20}{Update $\v_{t+1} = (1-\beta)\v_t + \beta G_t$}
    \STATE \colorbox{green!20}{$\text{Update } \w_{t+1} = \w_t - \alpha \v_{t+1} \text{ or use Adam-type update}$}

\ENDFOR
    \STATE \textbf{Output:} $\w_\tau$ with $\tau$ randomly sampled from $\{1,2,\ldots T\}$
\end{algorithmic}
\end{algorithm}

The following theorem states the convergence of ALEXR2 with proof given in Section~\ref{sec:proof54}.
\begin{theorem}\label{thm:ALEXR2_outerloop}
Suppose assumption~\ref{asm:func} (A3),~\ref{asm:oracle} and~\ref{asm:breg_bd} hold and $\lambda=\epsilon/C_f$ in \eqref{eq:reform}. For any $\epsilon>0$, there exists $\theta\in(0,1)$ with $1-\theta=O(\epsilon^2)$ such that, by setting $\eta=\frac{1-\theta}{\theta}L_f$ and $\gamma=\frac{(1-\theta)n}{B_1}$, $\beta\leq \frac{1}{2}$, $\alpha=\frac{\beta}{2L_{F_{\lambda,\nu}}}$ and $K_t = \tilde{O}\left(\frac{n}{B_1B_2\epsilon^3}+\frac{1}{\epsilon^3}\right)$, ALEXR2 returns $\w_\tau$ as a nearly $\epsilon$-stationary solution of \eqref{eq:reform} in expectation within $T = O(\epsilon^{-2})$ and a total iteration complexity of $\tilde{O}\left(\frac{n}{B_1B_2\epsilon^5}+\frac{1}{\epsilon^5}\right)$.
\end{theorem}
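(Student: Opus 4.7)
The plan is to combine an outer smooth non-convex momentum descent analysis on $F_{\lambda,\nu}(\w)$ with an inner strongly-convex--strongly-concave convergence guarantee for ALEXR on the proximal subproblem that defines $\text{prox}_{\nu F_{\lambda}}(\w_t)$. The overall strategy is a ``double-loop with biased gradient'' argument: the inner loop outputs $\z_{t,K_t}$ which serves as an inexact proximal point, so that $G_t = \tfrac{\beta}{\nu}(\w_t - \z_{t,K_t})$ plays the role of a biased stochastic estimator of $\beta\nabla F_{\lambda,\nu}(\w_t)$, with bias controlled by the inner solve accuracy.

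First, I would record the smoothness of the outer objective: Lemma~\ref{lem:smoothFlambdanu} gives that $F_{\lambda,\nu}$ is $L_{F_{\lambda,\nu}}$-smooth with gradient $\nabla F_{\lambda,\nu}(\w)=\tfrac{1}{\nu}(\w-\text{prox}_{\nu F_{\lambda}}(\w))$. Then, by a standard descent analysis for stochastic momentum on smooth non-convex functions with biased estimators, I would derive a bound of the form
\begin{align*}
\frac{1}{T}\sum_{t=0}^{T-1}\mathbb{E}\|\nabla F_{\lambda,\nu}(\w_t)\|^2 \le \frac{C_1}{\alpha T}+\frac{C_2}{\beta T}\sum_{t=0}^{T-1}\mathbb{E}\|\v_{t+1}-\beta\nabla F_{\lambda,\nu}(\w_t)\|^2,
\end{align*}
and unroll the momentum recursion to express the second term via the per-iteration squared error $\tfrac{\beta^2}{\nu^2}\mathbb{E}\|\z_{t,K_t}-\text{prox}_{\nu F_{\lambda}}(\w_t)\|^2$ plus a vanishing memory term. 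The decomposition $G_t=\beta\nabla F_{\lambda,\nu}(\w_t)+\tfrac{\beta}{\nu}(\text{prox}_{\nu F_\lambda}(\w_t)-\z_{t,K_t})$ is the pivotal identity.

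Second, I would invoke the ALEXR inner-loop analysis of \cite{wang2023alexr}: because the inner min--max objective in \eqref{prob:FCCO_doublesmooth_minmax} is strongly-convex--strongly-concave (Lemma~\ref{lem:obj_minmax}) with dual strong convexity coming from smoothness of $\bar f_i$ and primal strong convexity from the $\tfrac{1}{2\nu}\|\z-\w_t\|^2$ regularizer, ALEXR with $\eta=\tfrac{1-\theta}{\theta}L_f$, $\gamma=\tfrac{(1-\theta)n}{B_1}$ and $1-\theta=O(\epsilon^2)$ yields an expected squared-distance contraction of the order
\begin{align*}
\mathbb{E}\|\z_{t,K_t}-\text{prox}_{\nu F_{\lambda}}(\w_t)\|^2 \le \tilde O\!\left(\frac{1}{K_t}\Big(\tfrac{n}{B_1 B_2\epsilon}+\tfrac{1}{\epsilon}\Big)\right),
\end{align*}
using Assumption~\ref{asm:breg_bd} to control the initial Bregman gap and Assumption~\ref{asm:oracle} for the stochastic variance. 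The prescribed $K_t=\tilde O(\tfrac{n}{B_1B_2\epsilon^3}+\tfrac{1}{\epsilon^3})$ is chosen precisely so that this quantity is $O(\epsilon^2\nu^2)$, making the bias contribution to the outer bound $O(\epsilon^2)$ after division by $\nu^2$.

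Third, with $\lambda=\epsilon/C_f$ we obtain $\rho_{F_\lambda}=\sqrt{d_1}C_f\rho_g=O(1)$ (since $\rho_g$ and $C_f$ are constants), so $\nu$ is a constant and $L_{F_{\lambda,\nu}}=O(1/\nu)$. Choosing $\alpha=\tfrac{\beta}{2L_{F_{\lambda,\nu}}}$ and $\beta\le\tfrac12$ and setting $T=O(\epsilon^{-2})$, the $\tfrac{1}{\alpha T}$ term in the outer bound is $O(\epsilon^2)$; combined with the inner-error contribution and the momentum memory, we obtain $\mathbb{E}\|\nabla F_{\lambda,\nu}(\w_\tau)\|\le \epsilon$. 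Since $\nabla F_{\lambda,\nu}(\w_\tau)=\tfrac{1}{\nu}(\w_\tau-\hat\z)$ with $\hat\z:=\text{prox}_{\nu F_\lambda}(\w_\tau)$, the point $\hat\z$ witnesses $\w_\tau$ as a nearly $\epsilon$-stationary solution of \eqref{eq:reform}, and the total iteration count is $T\cdot \max_t K_t=\tilde O(n/(B_1B_2\epsilon^5)+1/\epsilon^5)$.

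The main obstacle I anticipate is carefully propagating the \emph{bias} from the inexact inner solve through the momentum recursion: unlike SONEX where the tracking error on $u_{i,t}$ sits under a coordinate-wise moving average with known variance-reduction identities, here $\v_{t+1}$ accumulates biased proximal estimates across outer iterations, so both a sharp-enough per-call inner contraction (requiring the $1-\theta=O(\epsilon^2)$ choice in ALEXR) and a delicate momentum-error telescoping that avoids blowing up in $1/\nu$ and $1/\beta$ are needed. Matching constants to hit the advertised $O(\epsilon^{-5})$, particularly justifying the logarithmic factors hidden in $\tilde O$ when converting ALEXR's expected-gap bound into a squared-iterate bound on $\z_{t,K_t}$, is where most of the technical effort will go.
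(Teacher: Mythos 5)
Your proposal is correct and follows essentially the same route as the paper's proof: a smooth non-convex momentum descent lemma on $F_{\lambda,\nu}$, a $(1-\beta/2)$-contraction recursion for the momentum error $\|\v_{t+1}-\nabla F_{\lambda,\nu}(\w_t)\|^2$ whose forcing term is $O(\beta)\cdot\frac{1}{\nu^2}\|\hat\z_t-\text{prox}_{\nu F_\lambda}(\w_t)\|^2$, and the ALEXR strongly-convex--strongly-concave inner guarantee invoked at accuracy $\epsilon'=O(\nu^2\epsilon^2)$ (costing $K_t=\tilde O(\epsilon^{-3})$ since $L_f=O(1/\epsilon)$), telescoped over $T=O(\epsilon^{-2})$ outer steps. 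The only cosmetic discrepancy is the stray factor of $\beta$ in your comparison target $\|\v_{t+1}-\beta\nabla F_{\lambda,\nu}(\w_t)\|^2$, which should be $\|\v_{t+1}-\nabla F_{\lambda,\nu}(\w_t)\|^2$ as in the paper's treatment of the estimator $\frac{1}{\nu}(\w_t-\hat\z_t)$.
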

\vspace*{-0.1in}
{\bf Remark: }We can easily extend the above result to Assumption~\ref{asm:func} (A2). When $g_i$ are smooth, we do not need the monotonicity of $f_i$ as the minimax problem $\min_{\z}\max_{\y}$ is still guaranteed to be strongly-convex and strongly-concave when $\nu\in(0,\rho_{F_{\lambda}}^{-1})$, where $\rho_{F_{\lambda}}:=\sqrt{d_1}C_fL_g$ is the weak convexity parameter of $F_{\lambda}$. 

Finally, we show that when $g_i$ are smooth, we can also obtain a nearly $\epsilon$-stationary solution to the original problem~\eqref{prob:FCCO}. Different from the result in Theorem~\ref{thm:ema_msvr}, the above theorem only guarantees a nearly $\epsilon$-stationary solution to $F_{\lambda}$. To address this gap,   
we can recover a nearly $\epsilon$-stationary solution to~\eqref{prob:FCCO} from $\w_\tau$ by running ALEXR starting with $\w_{\tau}$  with $K=\tilde{O}(\epsilon^{-5})$ iterations. This result is stated as the following corollary with the proof given in Section~\ref{sec:proof54}.
\begin{corollary}\label{coro:sm-nearly2stat}
Suppose assumption~\ref{asm:func} (A2),~\ref{asm:LBSV},~\ref{asm:oracle},~\ref{asm:breg_bd} hold. For any $\epsilon\leq \min\left\{c,\frac{\delta^2}{16nC_g^2L_g}\right\}$, let $\hat\w_\tau=$ALEXR($\w_{\tau}, K$) with $K = \tilde{O}\left(\frac{n}{B_1B_2\epsilon^5}+\frac{1}{\epsilon^5}\right)$, $\eta=\frac{1-\theta}{\theta}L_f$ and $\gamma=\frac{(1-\theta)n}{B_1}$ for $\theta\in(0,1)$ with $1-\theta=O(\epsilon^4)$. $\hat\w_\tau$ is a nearly $\epsilon$-stationary solution of \eqref{prob:FCCO} and it is found with a total iteration complexity of $\tilde{O}(\frac{n}{B_1B_2\epsilon^5}+\frac{1}{\epsilon^5})$.
\end{corollary}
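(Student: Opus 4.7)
The overall strategy is a two-phase procedure: first run ALEXR2 to obtain a nearly $\epsilon$-stationary solution $\w_\tau$ of the outer-smoothed objective $F_\lambda$ via Theorem~\ref{thm:ALEXR2_outerloop}, then post-process by one more invocation of the inner ALEXR routine with prox-center $\w_\tau$ and with sharply tightened accuracy parameters so that the returned iterate $\hat\w_\tau$ itself has small $\|\nabla F_\lambda(\hat\w_\tau)\|$; Theorems~\ref{thm:apisnearly} and~\ref{prop:stationary_point} then convert this into nearly $\epsilon$-stationarity for the original problem~\eqref{prob:FCCO}. As the remark after Theorem~\ref{thm:ALEXR2_outerloop} notes, the theorem applies under Assumption~\ref{asm:func}(A2) (since smoothness of $g_i$ renders the inner $\min_\z\max_\y$ strongly-convex strongly-concave whenever $\nu \in (0,\rho_{F_\lambda}^{-1})$ with $\rho_{F_\lambda} = \sqrt{d_1}C_f L_g$), producing $\w_\tau$ in expectation within $T = O(\epsilon^{-2})$ outer iterations at total cost $\tilde O(n/(B_1B_2\epsilon^5) + 1/\epsilon^5)$.

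Let $\z^* := \text{prox}_{\nu F_\lambda}(\w_\tau)$. The Moreau-envelope identity \eqref{eq:optcondME} gives $\nabla F_\lambda(\z^*) = (\w_\tau - \z^*)/\nu$, so Phase~1 yields $\mathbb{E}\|\nabla F_\lambda(\z^*)\| \leq \epsilon$. In Phase~2 I would re-run Lines~4--11 of Algorithm~\ref{alg:OSDL} with prox-center $\w_\tau$, i.e., solve the strongly-convex strongly-concave saddle-point problem $\min_\z \max_\y \tfrac{1}{n}\sum_i \y_i^\top g_i(\z) - \bar f_i^*(\y_i) + \tfrac{1}{2\nu}\|\z - \w_\tau\|^2$ whose primal modulus is $1/\nu = O(1/\epsilon)$ and whose dual modulus $1/L_f = O(\epsilon)$ comes from smoothness of $\bar f_i$. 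Since $F_\lambda$ is $L_F$-smooth with $L_F = O(1/\epsilon)$ under A2, to inherit $\|\nabla F_\lambda(\hat\w_\tau)\| = O(\epsilon)$ from $\|\nabla F_\lambda(\z^*)\| \leq \epsilon$ I need $\|\hat\w_\tau - \z^*\|$ on the order of $\epsilon/L_F = O(\epsilon^2)$. Re-deriving the ALEXR contraction with the stated parameters $1-\theta = O(\epsilon^4)$, $\eta = \tfrac{1-\theta}{\theta}L_f$, $\gamma = (1-\theta)n/B_1$ and carefully tracking how the mini-batch sizes $B_1, B_2$ enter the stochastic variance floor then shows that $K = \tilde O(n/(B_1B_2\epsilon^5) + 1/\epsilon^5)$ inner iterations suffice to achieve $\mathbb{E}\|\hat\w_\tau - \z^*\|^2 \leq O(\epsilon^4)$.

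Combining, by $L_F$-smoothness and Jensen's inequality,
\[
\mathbb{E}\|\nabla F_\lambda(\hat\w_\tau)\| \leq \mathbb{E}\|\nabla F_\lambda(\z^*)\| + L_F\sqrt{\mathbb{E}\|\hat\w_\tau - \z^*\|^2} = O(\epsilon),
\]
so $\hat\w_\tau$ is an $\epsilon$-stationary solution of $F_\lambda$ (in expectation, up to an absolute constant that is absorbed into $\epsilon$). Theorem~\ref{thm:apisnearly} with $\lambda = \epsilon/C_f$ then certifies $\hat\w_\tau$ as an approximate $\epsilon$-stationary solution of $F$ in the sense of Definition~\ref{def:aps}, and since A2 is the special case of A1 with $\rho_f = 0$, the hypothesis $\epsilon \leq \min\{c,\delta^2/(16nC_g^2L_g)\}$ together with Assumption~\ref{asm:LBSV} lets me invoke Theorem~\ref{prop:stationary_point} to conclude $\hat\w_\tau$ is a nearly $\epsilon$-stationary solution of~\eqref{prob:FCCO}. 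Summing Phases~1 and~2 keeps the total iteration complexity at $\tilde O(n/(B_1B_2\epsilon^5) + 1/\epsilon^5)$. The main obstacle is the sharpened Phase-2 analysis: Theorem~\ref{thm:ALEXR2_outerloop} only needs $O(\epsilon)$-accuracy per inner prox-point, whereas here I need the squared error to be $O(\epsilon^4)$ to offset the $O(1/\epsilon)$ blow-up of $L_F$, and showing that this one-order tightening costs only an extra $\tilde O(1/\epsilon^5)$ inner iterations—rather than something worse—requires carefully balancing the $\epsilon$-dependent contraction factor $1-\theta$, the logarithmic factors from driving the iterate below the variance floor, and the mini-batch-dependent noise terms in the primal-dual update.
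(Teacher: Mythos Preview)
Your proposal is correct and follows essentially the same two-phase approach as the paper: run ALEXR2 to obtain $\w_\tau$, re-invoke ALEXR once at prox-center $\w_\tau$ with tightened accuracy $\epsilon'=O(\epsilon^4)$ so that $\|\hat\w_\tau-\text{prox}_{\nu F_\lambda}(\w_\tau)\|=O(\epsilon^2)$, use $L_F=O(1/\epsilon)$-smoothness of $F_\lambda$ to transfer the gradient bound, and then chain Theorems~\ref{thm:apisnearly} and~\ref{prop:stationary_point}. The paper's proof is slightly terser (it invokes Theorem~\ref{thm:ALEXR_scvx} directly for Phase~2 rather than re-deriving the contraction, and it cites only Theorem~\ref{prop:stationary_point} at the end, leaving the intermediate passage through Theorem~\ref{thm:apisnearly} implicit), but the logic and complexity accounting are identical.
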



\section{Smoothed Hinge Penalty Method for Constrained Optimization}\label{sec:smth_constr_opt}
In this section, we consider a constrained optimization problem with $m>1$ inequality constraints:
\begin{align}\label{prob:cons_prob}
    \min_\w  g_0(\w) \text{ s.t. }g_i(\w)\leq 0, i=1,\cdots, m
\end{align}
where $g_i:\mathbb{R}^{d}\rightarrow\mathbb{R}$ is continuous and satisfies $g_i(\w) = \mathbb{E}g_i(\w, \xi_i)$ and the expectation is taken over the random variable $\xi_i$ for $i=0,1,\dots,m$. 
Following~\cite{yang2025single}, we consider an exact penalty method for \eqref{prob:cons_prob} by solving the following unconstrained problem:
\begin{align}\label{prob:cons_prob_penalty}
\min_{\w\in\mathbb{R}^d}\Phi(\w) := g_0(\w) + \frac{\rho}{m}\sum_{i=1}^{m}[g_i(\w)]_+
\end{align}
where $\rho$ is a sufficiently large number. 
Let $f(\cdot):= \rho[\cdot]_+$ so the penalty term $\frac{\rho}{m}\sum_{i=1}^{m}[g_i(\w)]_+$ has the same structure as $F(\w)$ in \eqref{prob:FCCO}. 
Yang et al.~\cite{yang2025single} have employed SONX for solving the above problem and established a complexity of $O(1/\epsilon^6)$ for finding a nearly $\epsilon$-KKT solution. Below, we apply the smoothing idea and  ALEXR2 or SONEX for improving the convergence rate. 
Our key idea is to optimize an outer smoothed problem:
\begin{align}\label{prob:cons_prob_smthed}
\hspace*{-0.15in}\min_{\w\in\mathbb{R}^d} \Phi_{\lambda}(\w):= g_0(\w) + \frac{1}{m}\sum_{i=1}^{m}f_\lambda(g_i(\w))=g_0(\w) + \frac{1}{m}\sum_{i=1}^{m}\max_{y_i\in[0,\rho]}\left\{y_ig_i(\w)-\frac{\lambda}{2}y_i^2\right\},
\end{align}
where $f_\lambda(z) := \min_{z'\in\mathbb{R}} f(z')+\frac{1}{2\lambda}(z-z')^2$. Except for the term $g_0$, the objective function of \eqref{prob:cons_prob_smthed} has the same structure as \eqref{prob:FCCO_outersmooth} and \eqref{eq:reform} with $\bar{f}_i^*(y_i)=\frac{\lambda}{2}y_i^2+\mathbf{1}_{[0,\rho]}(y_i)$. 

\begin{assumption}\label{asm:cons_cond}
    There exists a constant $\delta>0$ such that $\sigma_{min}(\mathbf{J}(\w))\geq\delta$ for any matrix $\mathbf{J}(\w)=[\h_1(\w),\dots, \h_m(\w)]\in\mathbb{R}^{d\times m}$ with $\h_i(\w)\in\partial g_i(\w)$ and any $\w$ satisfying $\max_{i=1,\dots,m} g_i(\w)> 0$. 
\end{assumption}

Under this assumption, we have the following proposition whose proof is given in Section~\ref{sec:exactpenalty}.
\begin{proposition}\label{prop:approx_to_kkt}
    Suppose Assumption \ref{asm:cons_cond} hold. If $\rho > \frac{m(C_g+1)}{\delta}$ and $\lambda= \frac{\epsilon}{\rho}$, a nearly $\epsilon$-stationary solution $\w$ to \eqref{prob:cons_prob_smthed} is also a nearly $O(\epsilon)$-KKT solution to the original problem~\eqref{prob:cons_prob} in the sense that there exist $\hat\w$ and $\nu_i\geq0$ for $i=1,\dots,m$ such that $\norm{\w-\hat{\w}}\leq \epsilon$, $\text{dist}(0,\partial g_0(\hat\w)+\sum_{i=1}^m\partial g_i(\hat\w)\nu_i))\leq O(\epsilon)$,  $\max_{i=1,\dots,m}g_i(\hat{\w})\leq  O(\epsilon)$ and  $|g_i(\hat{\w})\nu_i|\leq  O(\epsilon), \forall i=1,2,\cdots,m$.
\end{proposition}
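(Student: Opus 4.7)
The plan is to unpack the nearly $\epsilon$-stationary condition of $\Phi_\lambda$ at a nearby point $\hat{\w}$, identify the candidate Lagrange multipliers as $\nu_i := \frac{1}{m}\nabla f_\lambda(g_i(\hat\w))$, and then use Assumption~\ref{asm:cons_cond} to force those multipliers strictly below the upper clip of $\nabla f_\lambda$. Once the clip is inactive, both approximate primal feasibility and approximate complementary slackness will follow from an explicit formula for $\nabla f_\lambda$.

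First I would compute $\nabla f_\lambda$ explicitly for the one-dimensional convex function $f(z)=\rho[z]_+$. A direct minimization gives $\nabla f_\lambda(z)=0$ for $z\leq 0$, $\nabla f_\lambda(z)=z/\lambda$ for $0\leq z\leq \rho\lambda$, and $\nabla f_\lambda(z)=\rho$ for $z\geq \rho\lambda$, so in particular $\nabla f_\lambda$ takes values in $[0,\rho]$. Since $f_\lambda$ is $C^1$, convex and nondecreasing and each $g_i$ is weakly convex, the standard chain rule yields $\partial\Phi_\lambda(\w)\subseteq \partial g_0(\w)+\frac{1}{m}\sum_{i=1}^m \nabla f_\lambda(g_i(\w))\,\partial g_i(\w)$. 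Letting $\hat\w$ be the point witnessing near-stationarity (so $\|\w-\hat\w\|\leq\epsilon$ and $\mathrm{dist}(0,\partial\Phi_\lambda(\hat\w))\leq\epsilon$), I pick subgradients $\bg_0\in\partial g_0(\hat\w)$ and $\bg_i\in\partial g_i(\hat\w)$ realizing this bound and set $\nu_i:=\frac{1}{m}\nabla f_\lambda(g_i(\hat\w))\in[0,\rho/m]$. This immediately gives the stationarity piece of the KKT bound, $\|\bg_0+\sum_i \nu_i\bg_i\|\leq \epsilon$.

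Second I would handle primal feasibility. If $\max_i g_i(\hat\w)\leq 0$ then all $\nu_i=0$, both remaining KKT conditions hold trivially, and we are done. Otherwise Assumption~\ref{asm:cons_cond} applies at $\hat\w$ and yields $\|\mathbf{J}(\hat\w)\boldsymbol{\nu}\|\geq \delta\|\boldsymbol{\nu}\|$ where $\boldsymbol{\nu}=(\nu_1,\dots,\nu_m)^\top$, while the stationarity bound combined with $\|\bg_0\|\leq C_g$ gives $\|\mathbf{J}(\hat\w)\boldsymbol{\nu}\|\leq \epsilon+C_g$. Combining these two yields $\|\boldsymbol{\nu}\|_\infty\leq\|\boldsymbol{\nu}\|\leq (\epsilon+C_g)/\delta$, and the hypothesis $\rho>m(C_g+1)/\delta$ then forces $m\nu_i<\rho$ for every $i$ (assuming $\epsilon\leq 1$). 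Hence the clip of $\nabla f_\lambda$ at $\rho$ is never active, which by the explicit formula means $g_i(\hat\w)<\rho\lambda=\epsilon$ for every $i$, delivering the feasibility conclusion $\max_i g_i(\hat\w)\leq O(\epsilon)$.

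Third, for approximate complementary slackness I again use the piecewise formula. For indices with $g_i(\hat\w)\leq 0$ we have $\nu_i=0$, so the product vanishes. For the remaining indices, since the upper clip is inactive, $\nu_i=g_i(\hat\w)/(m\lambda)$, hence $|\nu_i g_i(\hat\w)|=g_i(\hat\w)^2/(m\lambda)\leq \rho^2\lambda/m=\rho\epsilon/m$; summing over $i$ gives $\sum_i|\nu_i g_i(\hat\w)|\leq \rho\epsilon=O(\epsilon)$. The main obstacle I foresee is the second step: carefully using Assumption~\ref{asm:cons_cond} to bound $\|\boldsymbol{\nu}\|$ and then translating the condition on $\rho$ into the strict inequality $m\nu_i<\rho$ that rules out saturation. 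Once that is in place, everything else reduces to an explicit computation using the piecewise form of $\nabla f_\lambda$ together with the chain-rule expansion of $\partial\Phi_\lambda$.
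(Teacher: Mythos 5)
Your proof is correct and follows essentially the same route as the paper's: the same multipliers $\nu_i=\frac{1}{m}\nabla f_\lambda(g_i(\hat\w))$, the same explicit piecewise formula for $\nabla f_\lambda$, and the same use of Assumption~\ref{asm:cons_cond} to show the clip at $\rho$ cannot be active, which yields feasibility and complementary slackness. The only difference is cosmetic: the paper argues feasibility by contradiction (a violated constraint would saturate some $\nu_k$ at $\rho/m$, forcing $\|\h_0+\mathbf{J}\bm{\nu}\|>1\geq\epsilon$), whereas you run the contrapositive directly by first bounding $\|\bm{\nu}\|\leq(\epsilon+C_g)/\delta$ from stationarity.
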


To find a nearly $\epsilon$-stationary solution to~(\ref{prob:cons_prob_smthed}), we can adapt either SONEX if $g_i$ are smooth or ALEXR2 if $g_i$ are weakly convex with a minor change by including the stochastic gradient of $g_0$ in the calculation of $G_t$. The complexity of this approach is presented below with proofs given in Section~\ref{sec:alexr2_consopt}.

\begin{theorem}\label{thm:alexr2_consopt}
    Suppose assumption ~\ref{asm:oracle} and~\ref{asm:cons_cond} hold, and the stochastic (sub)gradient of $g_0$ has bounded variance. Let $\rho > m(C_g+1)/\delta$ and $\lambda= \epsilon/\rho$. 
    \begin{itemize}[noitemsep,topsep=0pt,leftmargin=*]
    \item if $\{g_i\}_{i=0}^m$ are weakly convex, then with $ \tilde{\mathcal{O}}\left(
   \frac{m}{B_1B_2 \delta^4\epsilon^5}\right)$ iterations ALEXR2 find a nearly $\epsilon$-KKT solution $\w$ for~(\ref{prob:cons_prob}) satisfying that there exist $\hat\w$ and $\nu_i\geq0$ for $i=1,\dots,m$ such that $\E\norm{\w-\hat{\w}}\leq \epsilon$ and $\E\text{dist}(0,\partial g_0(\hat\w)+\sum_{i=1}^m\partial g_i(\hat\w)\nu_i))\leq O(\epsilon)$ and it holds with probability\footnote{If $|g_i(\w)|<+\infty$  for any $i$ and $\w$, this high probability result can be replaced by  $\E[\max\limits_{i=1,\dots,m}g_i(\hat{\w})]\leq  O(\epsilon)$ and $\E[|g_i(\hat{\w})\nu_i|]\leq  O(\epsilon) , \forall i=1,2,\cdots, m$. } $1-O(\epsilon)$ that $\max_{i=1,\dots,m}g_i(\hat{\w})\leq  O(\epsilon)$ and $|g_i(\hat{\w})\nu_i|\leq  O(\epsilon), \forall i=1,2,\cdots,m$. 
    \item if $\{g_i\}_{i=0}^m$ are smooth, then with $ \tilde{\mathcal{O}}\left(
   \frac{m}{B_1\sqrt{B_2} \delta^4\epsilon^5}\right)$ iterations  SONEX can find an $\epsilon$-KKT solution $\w$ for~(\ref{prob:cons_prob}) similar as above except for $\hat\w =\w$. 
   \end{itemize}
\end{theorem}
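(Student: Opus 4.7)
The plan is to combine three ingredients: (i) the convergence of SONEX/ALEXR2 applied to the outer-smoothed penalty problem \eqref{prob:cons_prob_smthed}; (ii) Proposition~\ref{prop:approx_to_kkt}, which converts a nearly $\epsilon$-stationary solution of \eqref{prob:cons_prob_smthed} into a nearly $\epsilon$-KKT solution of the original constrained problem \eqref{prob:cons_prob}; and (iii) a Markov argument to turn expectation bounds on the feasibility and complementary slackness quantities into a high-probability statement. First I would verify that \eqref{prob:cons_prob_smthed} fits the FCCO template of Section~\ref{sec:outer_smoothing} with $f_i=\rho[\cdot]_+$, which is convex, non-decreasing, and $\rho$-Lipschitz, and whose conjugate $\bar f_i^*(y_i)=\frac{\lambda}{2}y_i^2+\mathbf{1}_{[0,\rho]}(y_i)$ gives a bounded Bregman divergence, so Assumptions~\ref{asm:func} and~\ref{asm:breg_bd} are fulfilled; the extra $g_0$ term only adds an unbiased stochastic gradient to the update of $G_t$ and does not alter the analysis.

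Next, I would instantiate the convergence theorems. For the weakly convex case I invoke Theorem~\ref{thm:ALEXR2_outerloop} with $\lambda=\epsilon/\rho$ and $\rho=\Theta(m/\delta)$: the weak-convexity parameter of $\Phi_\lambda$ becomes $\rho_{F_\lambda}=O(\rho\rho_g)$, so $L_{F_{\lambda,\nu}}$, $K_t$, and the total iteration count inherit an $O(\rho^2)=O(m^2/\delta^2)$ factor. A careful bookkeeping of how $\rho$ propagates through $\nu^{-1}$, $\eta=\frac{1-\theta}{\theta}L_f$, and $1-\theta=O(\epsilon^2)$ yields the target complexity $\tilde O(m/(B_1 B_2 \delta^4 \epsilon^5))$; here I expect the $\delta^{-4}$ to arise from one factor $\rho^2$ in $K_t$ and another from the ratio $L_{F_{\lambda,\nu}}/\beta$ in $T$. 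For the smooth case I use Corollary~\ref{cor:ema_msvr} applied to $\Phi_\lambda$ (including $g_0$ in the gradient): the smoothness constant of each smoothed outer term is $L_f=\Theta(1/\lambda)=\Theta(\rho/\epsilon)$, and propagating this through the prescribed schedule of Theorem~\ref{thm:ema_msvr} gives the stated $\tilde O(m/(B_1\sqrt{B_2}\delta^4\epsilon^5))$.

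The conversion step is then a direct application of Proposition~\ref{prop:approx_to_kkt}. Since the proposition asserts existence of $\hat\w$ and multipliers $\nu_i\geq 0$ such that $\|\w-\hat\w\|\leq\epsilon$, $\mathrm{dist}(0,\partial g_0(\hat\w)+\sum_i\nu_i\partial g_i(\hat\w))\leq O(\epsilon)$, $\max_i g_i(\hat\w)\leq O(\epsilon)$, and $\sum_i|g_i(\hat\w)\nu_i|\leq O(\epsilon)$, I only need the iterate returned by SONEX/ALEXR2 to be nearly $\epsilon$-stationary for $\Phi_\lambda$ in expectation. For the smooth case this is immediate because $\w=\hat\w$. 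For the weakly convex case the solution is only nearly $\epsilon$-stationary in expectation, so $\mathrm{dist}(0,\partial\Phi_\lambda(\hat\w))\leq\epsilon$ and $\|\w-\hat\w\|\leq\epsilon$ hold in expectation, which translates to the first two KKT items in expectation. For feasibility and complementary slackness, Markov's inequality applied to the expected bounds of $\max_i g_i(\hat\w)_+$ and $\sum_i|g_i(\hat\w)\nu_i|$ upgrades an $O(\epsilon^2)$ expectation to an $O(\epsilon)$ high-probability bound with probability at least $1-O(\epsilon)$; if the $g_i$ are uniformly bounded these Markov arguments collapse to direct expectation statements as noted in the footnote.

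The main obstacle I anticipate is bookkeeping the dependence on $\rho$ (equivalently on $1/\delta$) through the constants in Theorems~\ref{thm:ema_msvr} and~\ref{thm:ALEXR2_outerloop}. One must track the weak-convexity and smoothness parameters of $\Phi_\lambda$, the induced step sizes $\eta,\gamma,\gamma',\beta,\alpha,\nu$, and the inner-loop length $K_t$, since the polynomial factors of $\rho$ appearing there determine whether the final bound scales as $\delta^{-4}$ rather than a worse power. A secondary technical point is adapting the SONEX/ALEXR2 analyses to include the stochastic term $\partial g_0$: this requires checking that including an extra variance-bounded stochastic gradient in $G_t$ only shifts the constants in front of $\epsilon^{-5}$ and does not affect the asymptotic rate, which follows from the same Lyapunov/descent arguments as the original proofs.
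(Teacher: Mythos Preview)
Your overall plan matches the paper's: apply SONEX/ALEXR2 to the smoothed penalty \eqref{prob:cons_prob_smthed}, absorb the extra $\partial g_0$ term into $G_t$ (the paper does exactly this in \eqref{eq:newGk}--\eqref{eq:newGk_sonex} and checks that the Lyapunov arguments go through with only shifted constants), track how $\rho=\Theta(m/\delta)$ propagates through $L_f,\rho_{F_\lambda},K_t,T$, and finally convert to a KKT statement.

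The genuine gap is in step (iii). You propose to apply Markov's inequality to $\max_i [g_i(\hat\w)]_+$ and $\sum_i|g_i(\hat\w)\nu_i|$, citing an ``$O(\epsilon^2)$ expectation'' on these quantities, but no such bound is available. Proposition~\ref{prop:approx_to_kkt} is a \emph{deterministic} statement that needs $\text{dist}(0,\partial\Phi_\lambda(\hat\w))\leq\epsilon$ pointwise; from $\E[\text{dist}(0,\partial\Phi_\lambda(\hat\w))]\leq\epsilon$ alone you cannot extract any expectation bound on feasibility, because $\max_i g_i(\hat\w)$ could be arbitrarily large on a small-probability event. For the same reason, the smooth case is \emph{not} ``immediate'': SONEX delivers $\E\|\nabla\Phi_\lambda(\w)\|\leq\epsilon$, which is still only an expectation guarantee, so the high-probability feasibility step is equally needed there.

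The paper's fix (Proposition~\ref{prop:approx_to_kkt_highprob}) does not treat Proposition~\ref{prop:approx_to_kkt} as a black box but re-enters its proof to extract the implication: under Assumption~\ref{asm:cons_cond} with $\rho>m(C_g+1)/\delta$, the event $\{\max_i g_i(\hat\w)>\epsilon\}$ forces some multiplier $\nu_k=\rho/m$, hence $\|\h_0(\hat\w)+\mathbf{J}(\hat\w)\bm{\nu}\|\geq\delta\rho/m-C_g>1$. Thus $\{\max_i g_i(\hat\w)>\epsilon\}\subset\{\text{dist}(0,\partial\Phi_\lambda(\hat\w))>1\}$, and Markov applied to the \emph{gradient norm} (whose expectation is $\leq\epsilon$) gives $\text{Prob}(\max_i g_i(\hat\w)>\epsilon)\leq\epsilon$. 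The complementary-slackness bound then holds deterministically on this feasibility event. In short, Markov must be applied to $\text{dist}(0,\partial\Phi_\lambda(\hat\w))$, not to the feasibility quantities themselves.
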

\vspace*{-0.1in}
Notably, our method not only improve the rate but also enjoy an improved dependence on $\delta$, compared to $O(\epsilon^{-6}\delta^{-6})$ in prior work \cite{yang2025single}. 
\section{Numerical Experiments}\label{sec:exp}
We conduct experiments to verify the effectiveness of the proposed algorithms. For non-smooth non-convex FCCO, we consider GDRO with CVaR divergence~\cite{wang2023alexr}. For non-convex constrained optimization, we follow~\cite{yang2025single} and consider two tasks, namely AUC maximization with ROC fairness constraints and continual learning with non-forgetting constraints. 

\begin{figure*}[t]
\centering
    \begin{minipage}[t]{0.24\linewidth}
    \centering
    \includegraphics[width=\linewidth]{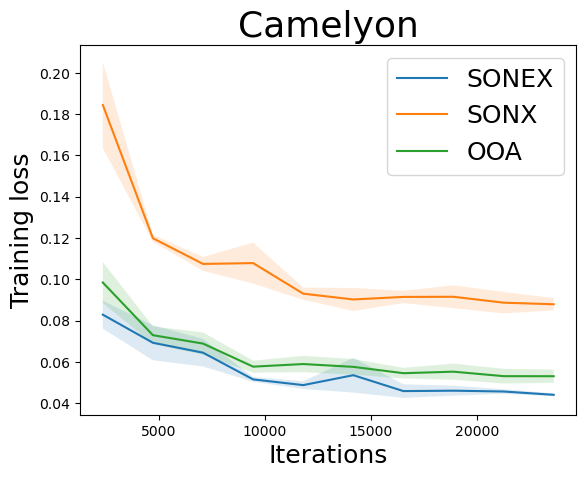}
    \end{minipage}
    \hfill
    \begin{minipage}[t]{0.24\linewidth}
    \centering
    \includegraphics[width=\linewidth]{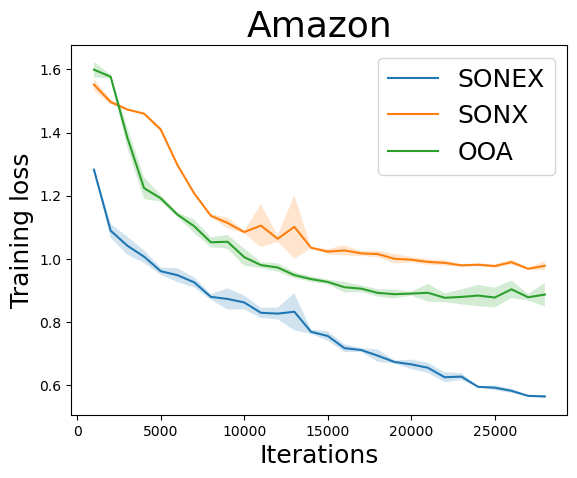}
    \end{minipage}
    \hfill
    \begin{minipage}[t]{0.24\linewidth}
    \centering
    \includegraphics[width=\linewidth]{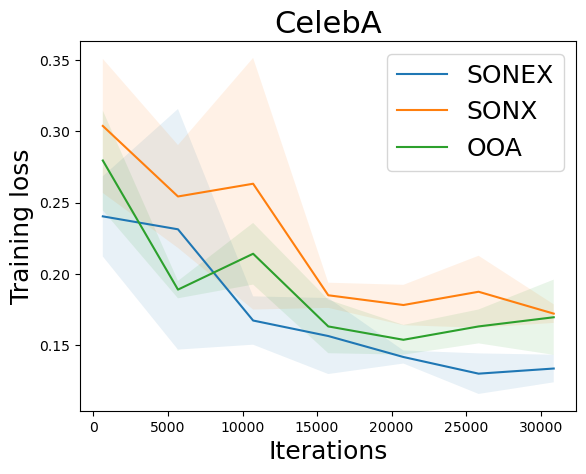}
    \end{minipage}
    \hfill
    \begin{minipage}[t]{0.24\linewidth}
    \centering
    \includegraphics[width=\linewidth]{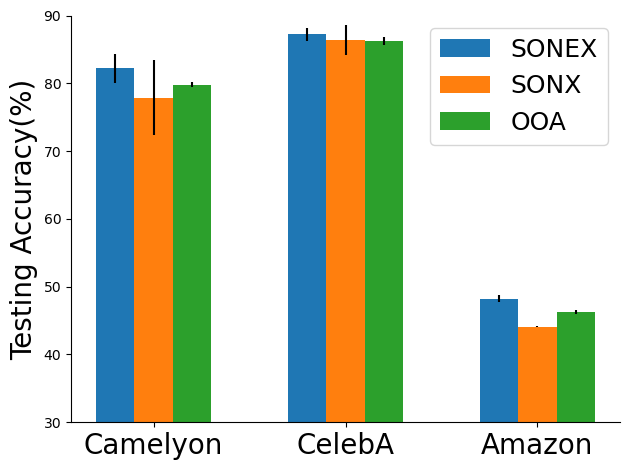}
    \end{minipage}
\vspace{-0.1in}
\centering
\caption{Training loss curves (left three) and testing accuracy (right one) of different methods for Group DRO with CVaR ratio $r=0.15$ on different datasets.} 
\label{fig:gdro}
\vspace*{0.1in}
\centering
    \includegraphics[scale=0.17]{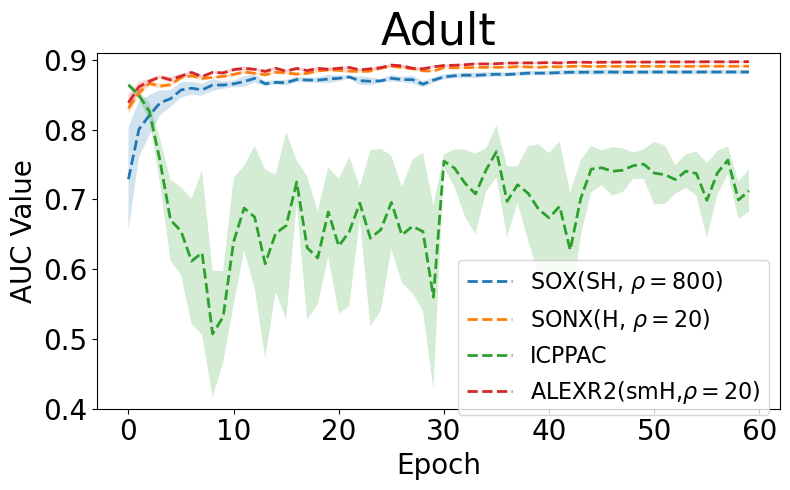}
    \includegraphics[scale=0.17]{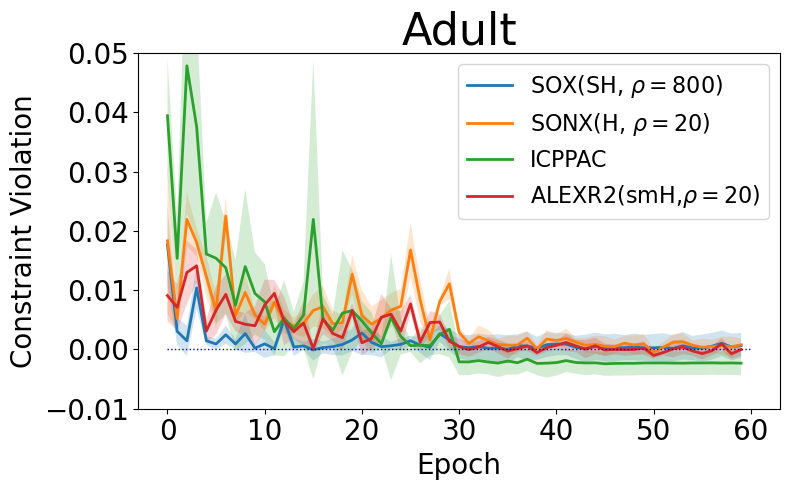}
    \includegraphics[scale=0.17]{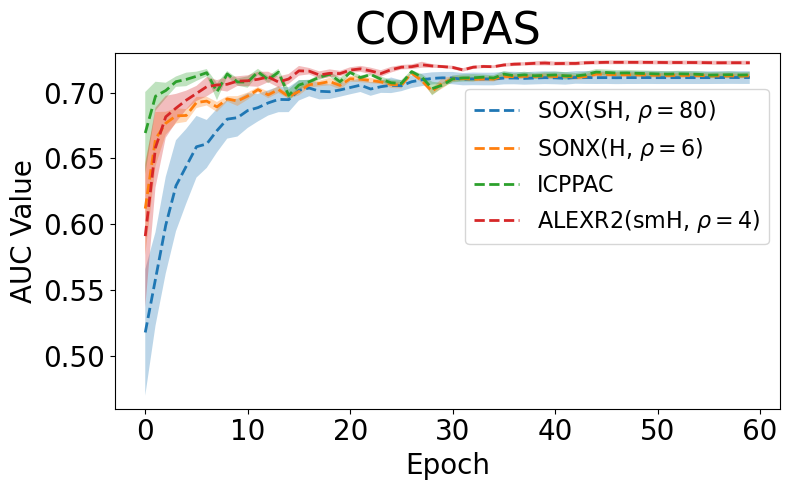}
    \includegraphics[scale=0.17]{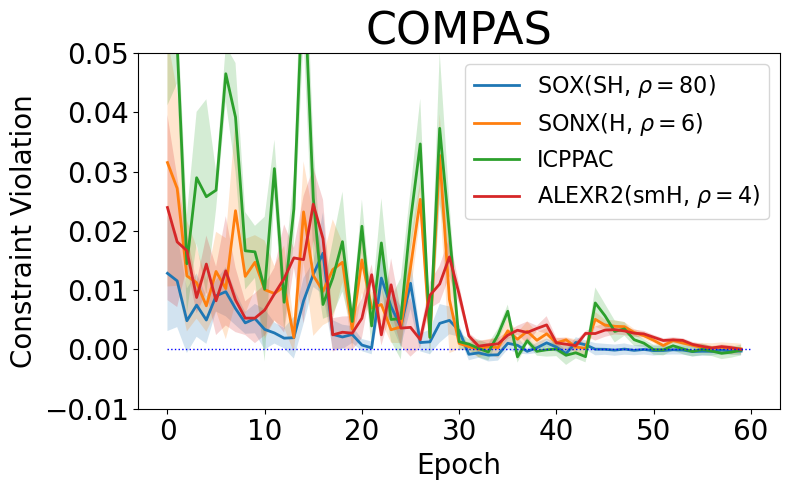}
    \vspace{-0.1in}
\centering
\caption{Training curves of AUC values (fig 1,3) and constraint violation (fig 2,4) of different methods. The format of legend is "Algorithm(penalty function, $\rho$)", and SH, H, smH mean square hinge, hinge and smoothed hinge, respectively.} 
\label{fig:fairness}
\vspace*{0.1in}
\centering
    \includegraphics[width=0.24\linewidth]{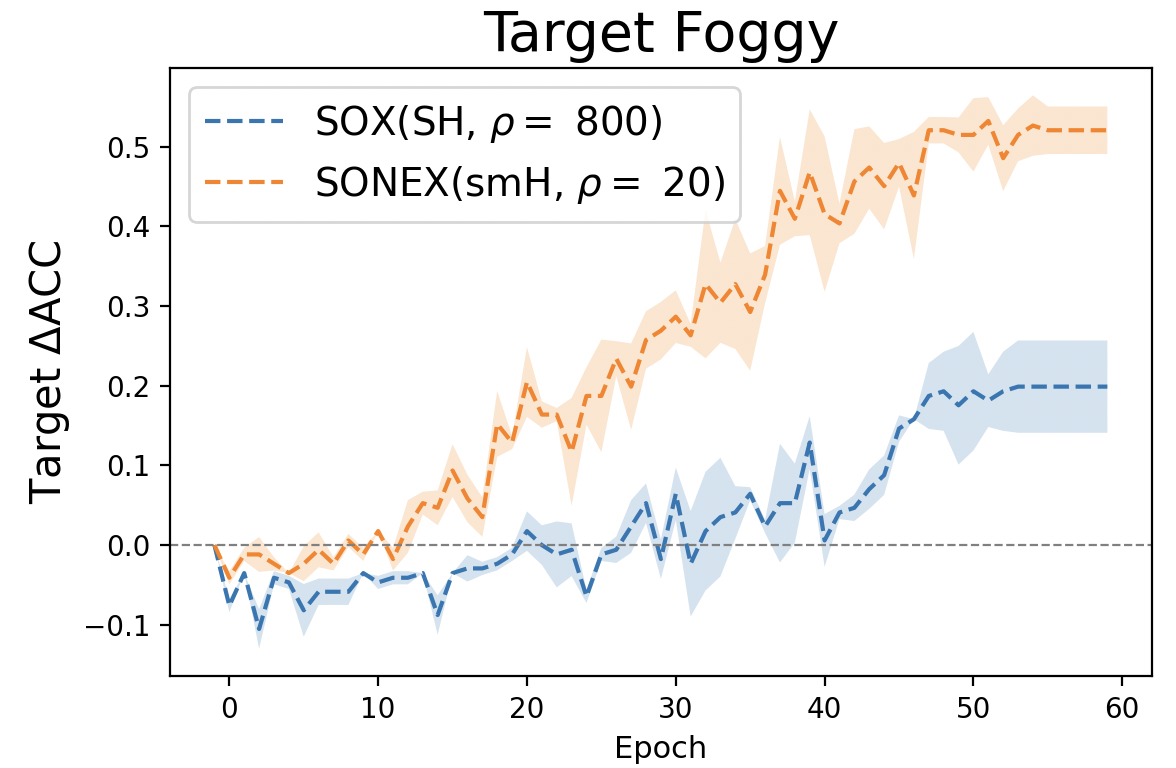}
    \includegraphics[width=0.24\linewidth]{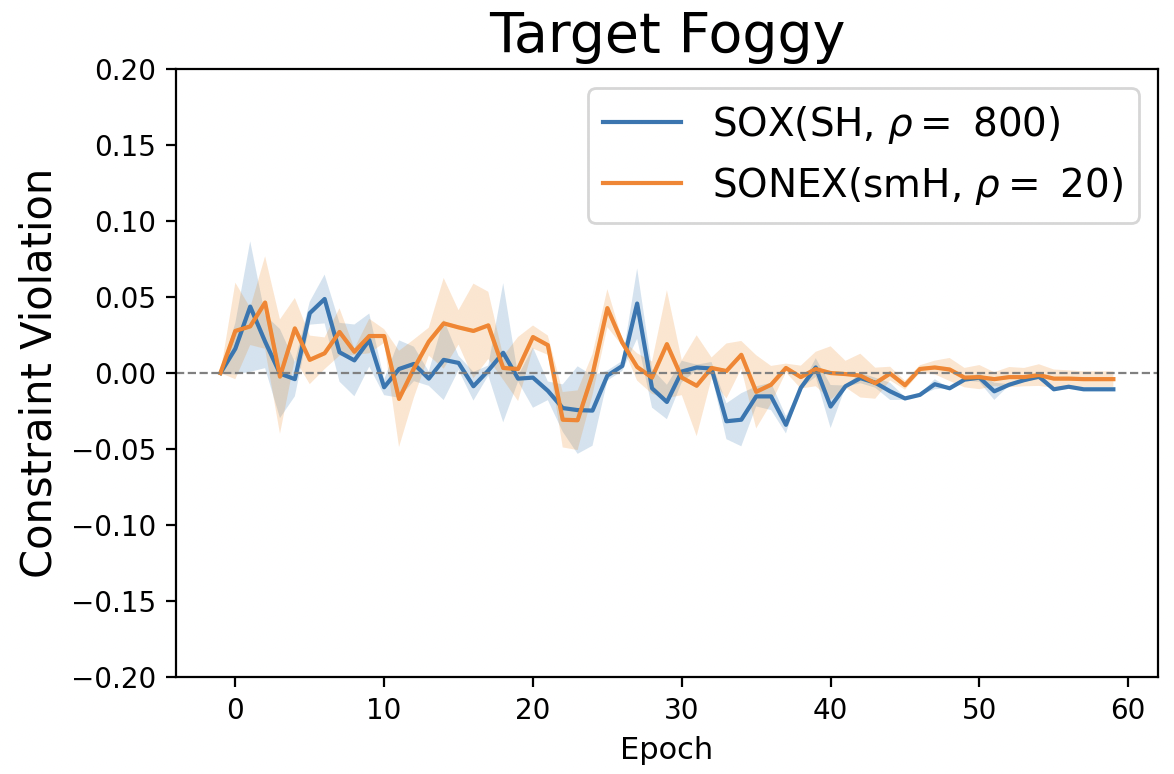} 
    \includegraphics[width=0.24\linewidth]{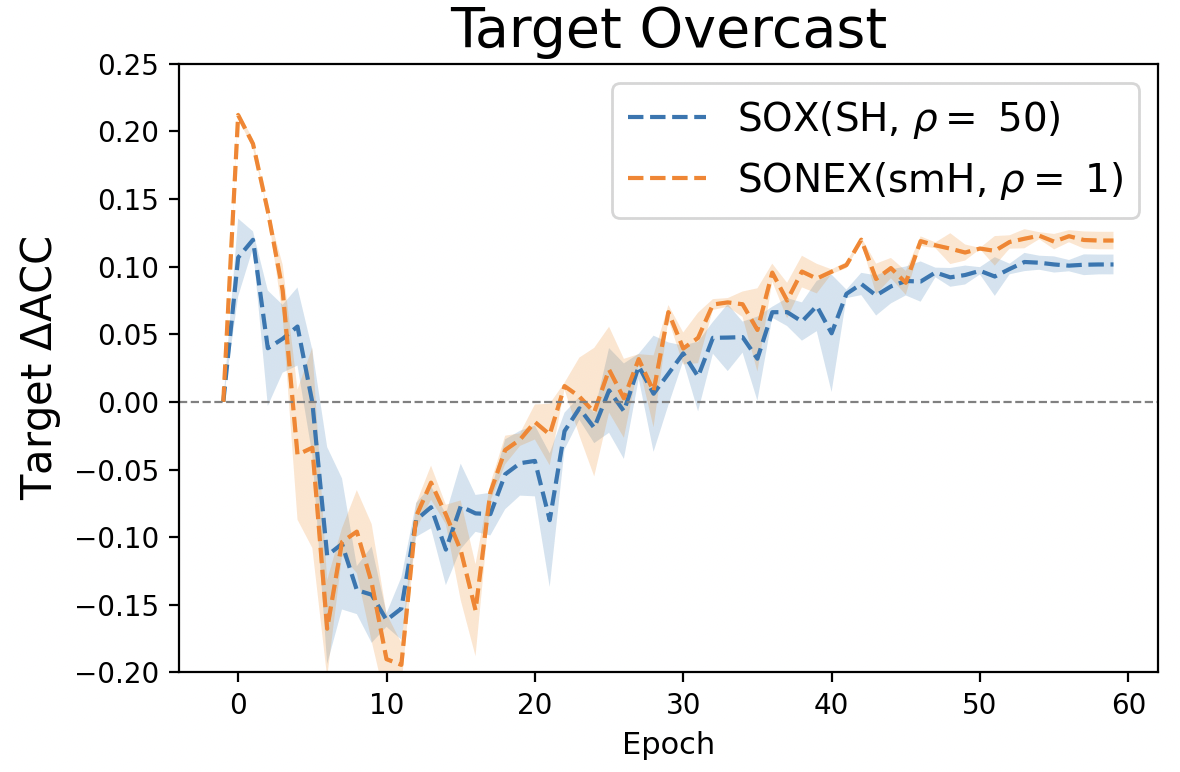}
    \includegraphics[width=0.24\linewidth]{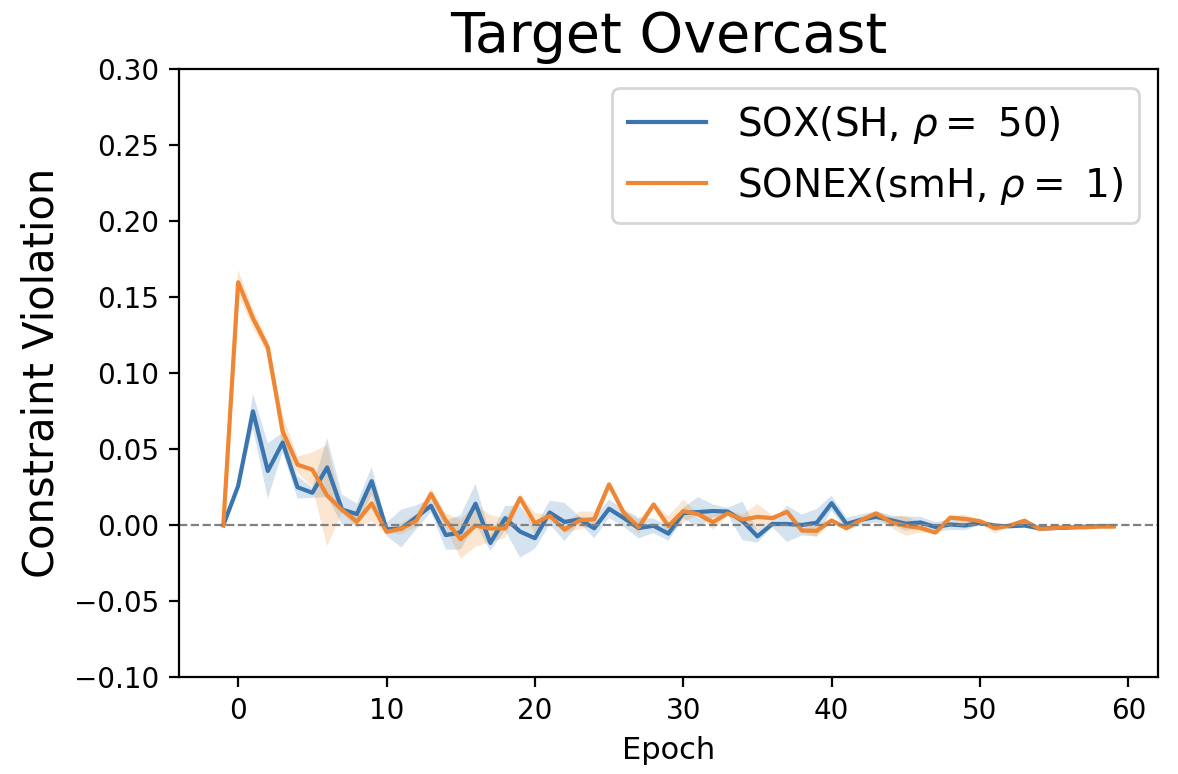}    
\vspace{-0.1in}
\centering
\caption{Training curves of Target $\Delta$ACC values (fig 1,3) and constraint violation (fig 2,4) of different methods. The format of label is "Algorithm(penalty function, $\rho$)", and SH, smH mean square hinge and smoothed hinge, respectively.}
\label{fig:non-forgetting}
\end{figure*}

{\bf GDRO with CVaR divergence.} 
We consider the following GDRO with CVaR divergence, which minimizes top-$k$ worst groups' losses~\cite{wang2023alexr}: 
\begin{align}\label{prob:gdro}
    \min_\theta \min_s s + \frac{1}{n}\sum_{g=1}^n\frac{1}{r}[\E_{(\x,y)\sim \D_g}\ell(\theta;(\x,y)) - s]_+
\end{align}
where $n$ is the number of groups, $r=k/n$, and $\D_g$ denotes the data of the $g$-th group. 

We use 3 datasets: Camelyon17, Amazon~\cite{bandi2018detection}, and CelebA~\cite{liu2015deep}. The first two datasets are from WILDS Benchmark for evaluating methods tackling the distributional shift~\cite{koh2021wilds}, where Camelyon17 has 30 groups and Amazon has 1252 groups. CelebA is a large-scale facial attribute dataset containing over 200,000 celebrity images annotated with 40 binary attributes. We select 4 binary attributes `Attractive', `Mouth\_Slightly\_Open', `Male' and `Blonde\_Hair' and construct 16 groups, where `Blonde\_Hair' also serves as the label for classification. 
We  compare SONEX with SONX and OOA, where OOA is a stochastic primal-dual algorithm~\footnote{We extend the original OOA~\cite{sagawadistributionally} to GDRO with CVaR setting as in~\cite{wang2023alexr}}. We use pretrained Densenet121~\cite{huang2017densely}, Distilbert~\cite{sanh2019distilbert} and ResNet50~\cite{he2016deep} for Camelyon17, Amazon and CelebA, respectively. We perform Adam-type update for SONEX on Amazon dataset and momentum-type update on Camelyon17 and CelebA datasets.  We run each experiment for 3 random seeds and report their average performance. The hyperparameter tuning is discussed in Appendix~\ref{subsec:gdro}. 

\textbf{Results.} We report our result under setting of $r=0.15$ in Figure~\ref{fig:gdro}, where the first 3 figures are the loss curves on the training dataset while the last one is the test accuracy. Our experiment results shows that SONEX performs better than SONX and OOA on GDRO tasks. 

{\bf AUC maximization with ROC fairness constraints.} The formulation is given in Appendix~\ref{subsec:auc}, where the objective is AUC loss and constraints specify the tolerance of the gap between false positive rates (true positive rates) of two sensitive groups at different classification thresholds.   For this experiment, we follow almost the same setting as in~\cite{yang2025single}. Two datasets are used, namely Adult~\cite{adult_2} and COMPAS~\cite{angwin2016machine}, which contain male/female, Caucasian/non-Caucasian groups, respectively. We set thresholds at $\Gamma = \{-3, -2, -1, \cdots, 3\}$ and the tolerance $\kappa = 0.005$, which gives us 14 constraints. We learn a simple neural network with 2 hidden layers. 
We compare ALEXR2 with the method in~\cite{li2024model} that optimizes a squared-hinge penalty function with the SOX algorithm~\cite{wang2022finite}, the method in~\cite{yang2025single} that optimizes a hinge penalty function with the SONX algorithm~\cite{hu2023non}, the double-loop method ICPPAC~\cite{boob2023stochastic}.  We perform Adam-type update for ALEXR2. We run each method for totally 60 epoches with a batch size of 128, repeat five times with different seeds and then report average of the AUC scores and constraint values. Hyperparameter tuning is presented in Appendix~\ref{subsec:auc}.

{\bf Results.} We compare the training curves of objective AUC values and the constraint violation measured by the worst
constraint function value at each epoch for different methods in Figure~\ref{fig:fairness} on different datasets. These results demonstrate that ALEXR2 optimizing smoothed hinge penalty function can better maximize AUC value on both datasets than the baseline methods while still having similar constraint satisfaction.

\textbf{Continual Learning with non-forgetting constraints.}  We follow a similar experimental setup to~\cite{li2024model} for fine-tuning a CLIP model for autonomous driving  on the BDD100K dataset~\cite{yu2020bdd100k}, a large-scale, multi-task driving image dataset.  The formulation is given in Appendix~\ref{subsec:cont}, where the objective is the contrastive loss on a target task (e.g., classifying foggy and overcast condition), and the constraints specify the logistic loss of the new model should not be larger than that of the old model for other different classes (e.g., clear, snowy, rainy, partly cloudy).  
Since the objective function is another FCCO, we modify SONEX such that the gradient estimator of the objective is computed similar as the smoothed square penalty function. We compare with SOX that optimizes the squared hinge loss use the Adam-type update for both methods. We do not compare with SONX that optimizes the hinge penalty function  with SGD-type update as it fails for learning the Transformer network used in this experiment. Hyperparameter tuning is presented in Appendix~\ref{subsec:cont}.

\textbf{Results.} We present training curves of accuracy improvement and constraint violation on different target tasks in Figure~\ref{fig:non-forgetting}. It shows that our method can achieve higher accuracy on target tasks than the baseline method of using squared hinge function while retaining similar constraint satisfaction.

\section{Conclusions and Discussion}
In this paper, we have considered non-smooth non-convex finite-sum coupled compositional optimization, where the out functions are non-smooth and inner functions are smooth or weakly convex. We proposed stochastic momentum methods to improve the convergence rate and accelerate the training for deep neural networks. We also considered the applications of the proposed stochastic momentum methods to solving  non-convex inequality constrained optimization problems and derived state-of-the-art results for finding an $\epsilon$-KKT solution. Our experiments demonstrate the effectiveness of our methods. One limitation of our results is that our single-loop method SONEX  still has a worse complexity than the double-loop method ALEXR2 in terms of dependence on the inner batch size.


\begin{ack}
We thank Quanqi Hu for the assistance in developing the codebase for group DRO.
X. Chen, B. Wang, M. Yang and T. Yang were partially supported by the National Science Foundation Career Award 2246753, the National Science Foundation
Award 2246757 and 2306572. Q. Lin was partially supported by National Science Foundation
Award 2147253.
\end{ack}



{
\small

\clearpage
\bibliographystyle{unsrt}
\bibliography{reference}  

\begin{thebibliography}{10}

\bibitem{yang2022algorithmic}
Tianbao Yang.
\newblock Algorithmic foundations of empirical {X}-risk minimization.
\newblock {\em arXiv preprint arXiv:2206.00439}, 2022.

\bibitem{wang2022finite}
Bokun Wang and Tianbao Yang.
\newblock Finite-sum coupled compositional stochastic optimization: Theory and applications.
\newblock In {\em International Conference on Machine Learning}, pages 23292--23317. PMLR, 2022.

\bibitem{zhu2022auc}
Dixian Zhu, Gang Li, Bokun Wang, Xiaodong Wu, and Tianbao Yang.
\newblock When auc meets dro: Optimizing partial auc for deep learning with non-convex convergence guarantee.
\newblock In {\em International Conference on Machine Learning}, pages 27548--27573. PMLR, 2022.

\bibitem{qi2021stochastic}
Qi~Qi, Youzhi Luo, Zhao Xu, Shuiwang Ji, and Tianbao Yang.
\newblock Stochastic optimization of areas under precision-recall curves with provable convergence.
\newblock {\em Advances in neural information processing systems}, 34:1752--1765, 2021.

\bibitem{wang2023alexr}
Bokun Wang and Tianbao Yang.
\newblock Alexr: An optimal single-loop algorithm for convex finite-sum coupled compositional stochastic optimization.
\newblock {\em arXiv preprint arXiv:2312.02277}, 2023.

\bibitem{hu2023non}
Quanqi Hu, Dixian Zhu, and Tianbao Yang.
\newblock Non-smooth weakly-convex finite-sum coupled compositional optimization.
\newblock {\em Advances in Neural Information Processing Systems}, 36:5348--5403, 2023.

\bibitem{li2024model}
Gang Li, Wendi Yu, Yao Yao, Wei Tong, Yingbin Liang, Qihang Lin, and Tianbao Yang.
\newblock Model developmental safety: A safety-centric method and applications in vision-language models.
\newblock {\em arXiv preprint arXiv:2410.03955}, 2024.

\bibitem{yang2025single}
Ming Yang, Gang Li, Quanqi Hu, Qihang Lin, and Tianbao Yang.
\newblock Single-loop algorithms for stochastic non-convex optimization with weakly-convex constraints.
\newblock {\em arXiv preprint arXiv:2504.15243}, 2025.

\bibitem{ma2020quadratically}
Runchao Ma, Qihang Lin, and Tianbao Yang.
\newblock Quadratically regularized subgradient methods for weakly convex optimization with weakly convex constraints.
\newblock In {\em International Conference on Machine Learning}, pages 6554--6564. PMLR, 2020.

\bibitem{boob2023stochastic}
Digvijay Boob, Qi~Deng, and Guanghui Lan.
\newblock Stochastic first-order methods for convex and nonconvex functional constrained optimization.
\newblock {\em Mathematical Programming}, 197(1):215--279, 2023.

\bibitem{huang2023oracle}
Yankun Huang and Qihang Lin.
\newblock Oracle complexity of single-loop switching subgradient methods for non-smooth weakly convex functional constrained optimization.
\newblock {\em Advances in Neural Information Processing Systems}, 36:61327--61340, 2023.

\bibitem{liu2025single}
Wei Liu and Yangyang Xu.
\newblock A single-loop spider-type stochastic subgradient method for expectation-constrained nonconvex nonsmooth optimization.
\newblock {\em arXiv preprint arXiv:2501.19214}, 2025.

\bibitem{wang2017stochastic}
Mengdi Wang, Ethan~X Fang, and Han Liu.
\newblock Stochastic compositional gradient descent: algorithms for minimizing compositions of expected-value functions.
\newblock {\em Mathematical Programming}, 161:419--449, 2017.

\bibitem{wang2017accelerating}
Mengdi Wang, Ji~Liu, and Ethan~X Fang.
\newblock Accelerating stochastic composition optimization.
\newblock {\em Journal of Machine Learning Research}, 18(105):1--23, 2017.

\bibitem{hu2020biased}
Yifan Hu, Siqi Zhang, Xin Chen, and Niao He.
\newblock Biased stochastic first-order methods for conditional stochastic optimization and applications in meta learning.
\newblock {\em Advances in Neural Information Processing Systems}, 33:2759--2770, 2020.

\bibitem{jiang2022multi}
Wei Jiang, Gang Li, Yibo Wang, Lijun Zhang, and Tianbao Yang.
\newblock Multi-block-single-probe variance reduced estimator for coupled compositional optimization.
\newblock {\em Advances in Neural Information Processing Systems}, 35:32499--32511, 2022.

\bibitem{yuan2022provable}
Zhuoning Yuan, Yuexin Wu, Zi-Hao Qiu, Xianzhi Du, Lijun Zhang, Denny Zhou, and Tianbao Yang.
\newblock Provable stochastic optimization for global contrastive learning: Small batch does not harm performance.
\newblock In {\em International Conference on Machine Learning}, pages 25760--25782. PMLR, 2022.

\bibitem{qiu2023not}
Zi-Hao Qiu, Quanqi Hu, Zhuoning Yuan, Denny Zhou, Lijun Zhang, and Tianbao Yang.
\newblock Not all semantics are created equal: contrastive self-supervised learning with automatic temperature individualization.
\newblock In {\em Proceedings of the 40th International Conference on Machine Learning}, pages 28389--28421, 2023.

\bibitem{10.1007/s10107-004-0552-5}
Yu~Nesterov.
\newblock Smooth minimization of non-smooth functions.
\newblock {\em Math. Program.}, 103(1):127–152, May 2005.

\bibitem{kornowski2022oracle}
Guy Kornowski and Ohad Shamir.
\newblock Oracle complexity in nonsmooth nonconvex optimization.
\newblock {\em Journal of Machine Learning Research}, 23(314):1--44, 2022.

\bibitem{davis2019stochastic}
Damek Davis and Dmitriy Drusvyatskiy.
\newblock Stochastic model-based minimization of weakly convex functions.
\newblock {\em SIAM Journal on Optimization}, 29(1):207--239, 2019.

\bibitem{davis2019proximally}
Damek Davis and Benjamin Grimmer.
\newblock Proximally guided stochastic subgradient method for nonsmooth, nonconvex problems.
\newblock {\em SIAM Journal on Optimization}, 29(3):1908--1930, 2019.

\bibitem{alacaoglu2021convergence}
Ahmet Alacaoglu, Yura Malitsky, and Volkan Cevher.
\newblock Convergence of adaptive algorithms for constrained weakly convex optimization.
\newblock {\em Advances in Neural Information Processing Systems}, 34:14214--14225, 2021.

\bibitem{rafique2022weakly}
Hassan Rafique, Mingrui Liu, Qihang Lin, and Tianbao Yang.
\newblock Weakly-convex--concave min--max optimization: provable algorithms and applications in machine learning.
\newblock {\em Optimization Methods and Software}, 37(3):1087--1121, 2022.

\bibitem{zhao2024primal}
Renbo Zhao.
\newblock A primal-dual smoothing framework for max-structured non-convex optimization.
\newblock {\em Mathematics of operations research}, 49(3):1535--1565, 2024.

\bibitem{jia2022first}
Zhichao Jia and Benjamin Grimmer.
\newblock First-order methods for nonsmooth nonconvex functional constrained optimization with or without slater points.
\newblock {\em arXiv preprint arXiv:2212.00927}, 2022.

\bibitem{tran2020hybrid}
Quoc Tran~Dinh, Deyi Liu, and Lam Nguyen.
\newblock Hybrid variance-reduced sgd algorithms for minimax problems with nonconvex-linear function.
\newblock {\em Advances in Neural Information Processing Systems}, 33:11096--11107, 2020.

\bibitem{li2024stochastic}
Zichong Li, Pin-Yu Chen, Sijia Liu, Songtao Lu, and Yangyang Xu.
\newblock Stochastic inexact augmented {L}agrangian method for nonconvex expectation constrained optimization.
\newblock {\em Computational Optimization and Applications}, 87(1):117--147, 2024.

\bibitem{alacaoglu2024complexity}
Ahmet Alacaoglu and Stephen~J Wright.
\newblock Complexity of single loop algorithms for nonlinear programming with stochastic objective and constraints.
\newblock In {\em International Conference on Artificial Intelligence and Statistics}, pages 4627--4635. PMLR, 2024.

\bibitem{li2021rate}
Zichong Li, Pin-Yu Chen, Sijia Liu, Songtao Lu, and Yangyang Xu.
\newblock Rate-improved inexact augmented lagrangian method for constrained nonconvex optimization.
\newblock In {\em International Conference on Artificial Intelligence and Statistics}, pages 2170--2178. PMLR, 2021.

\bibitem{article}
Ellen Fukuda and Masao Fukushima.
\newblock A note on the squared slack variables technique for nonlinear optimization.
\newblock {\em Journal of the Operations Research Society of Japan}, 60:262--270, 07 2017.

\bibitem{ding2023squared}
Lijun Ding and Stephen~J Wright.
\newblock On squared-variable formulations.
\newblock {\em arXiv preprint arXiv:2310.01784}, 2023.

\bibitem{renaud2025stability}
Marien Renaud, Valentin De~Bortoli, Arthur Leclaire, and Nicolas Papadakis.
\newblock From stability of langevin diffusion to convergence of proximal mcmc for non-log-concave sampling.
\newblock {\em arXiv preprint arXiv:2505.14177}, 2025.

\bibitem{bandi2018detection}
Peter Bandi, Oscar Geessink, Quirine Manson, Marcory Van~Dijk, Maschenka Balkenhol, Meyke Hermsen, Babak~Ehteshami Bejnordi, Byungjae Lee, Kyunghyun Paeng, Aoxiao Zhong, et~al.
\newblock From detection of individual metastases to classification of lymph node status at the patient level: the camelyon17 challenge.
\newblock {\em IEEE Transactions on Medical Imaging}, 2018.

\bibitem{liu2015deep}
Ziwei Liu, Ping Luo, Xiaogang Wang, and Xiaoou Tang.
\newblock Deep learning face attributes in the wild.
\newblock In {\em Proceedings of the IEEE international conference on computer vision}, pages 3730--3738, 2015.

\bibitem{koh2021wilds}
Pang~Wei Koh, Shiori Sagawa, Henrik Marklund, Sang~Michael Xie, Marvin Zhang, Akshay Balsubramani, Weihua Hu, Michihiro Yasunaga, Richard~Lanas Phillips, Irena Gao, et~al.
\newblock Wilds: A benchmark of in-the-wild distribution shifts.
\newblock In {\em International conference on machine learning}, pages 5637--5664. PMLR, 2021.

\bibitem{sagawadistributionally}
Shiori Sagawa, Pang~Wei Koh, Tatsunori~B Hashimoto, and Percy Liang.
\newblock Distributionally robust neural networks.
\newblock In {\em International Conference on Learning Representations}, 2019.

\bibitem{huang2017densely}
Gao Huang, Zhuang Liu, Laurens Van Der~Maaten, and Kilian~Q Weinberger.
\newblock Densely connected convolutional networks.
\newblock In {\em Proceedings of the IEEE conference on computer vision and pattern recognition}, pages 4700--4708, 2017.

\bibitem{sanh2019distilbert}
Victor Sanh, Lysandre Debut, Julien Chaumond, and Thomas Wolf.
\newblock Distilbert, a distilled version of bert: smaller, faster, cheaper and lighter.
\newblock {\em arXiv preprint arXiv:1910.01108}, 2019.

\bibitem{he2016deep}
Kaiming He, Xiangyu Zhang, Shaoqing Ren, and Jian Sun.
\newblock Deep residual learning for image recognition.
\newblock In {\em Proceedings of the IEEE conference on computer vision and pattern recognition}, pages 770--778, 2016.

\bibitem{adult_2}
Barry Becker and Ronny Kohavi.
\newblock {Adult}.
\newblock UCI Machine Learning Repository, 1996.
\newblock {DOI}: https://doi.org/10.24432/C5XW20.

\bibitem{angwin2016machine}
Julia Angwin, Jeff Larson, Surya Mattu, and Lauren Kirchner.
\newblock Machine bias.
\newblock {\em ProPublica}, 2016.

\bibitem{yu2020bdd100k}
Fisher Yu, Haofeng Chen, Xin Wang, Wenqi Xian, Yingying Chen, Fangchen Liu, Vashisht Madhavan, and Trevor Darrell.
\newblock Bdd100k: A diverse driving dataset for heterogeneous multitask learning.
\newblock In {\em Proceedings of the IEEE/CVF conference on computer vision and pattern recognition}, pages 2636--2645, 2020.

\bibitem{zhang2021multilevel}
Junyu Zhang and Lin Xiao.
\newblock Multilevel composite stochastic optimization via nested variance reduction.
\newblock {\em SIAM Journal on Optimization}, 31(2):1131--1157, 2021.

\bibitem{sion1958general}
Maurice Sion.
\newblock On general minimax theorems.
\newblock 1958.

\bibitem{pmlr-v139-li21a}
Zhize Li, Hongyan Bao, Xiangliang Zhang, and Peter Richtarik.
\newblock Page: A simple and optimal probabilistic gradient estimator for nonconvex optimization.
\newblock In {\em Proceedings of the 38th International Conference on Machine Learning}, pages 6286--6295, 2021.

\bibitem{bauschke2017correction}
Heinz~H Bauschke, Patrick~L Combettes, Heinz~H Bauschke, and Patrick~L Combettes.
\newblock {\em Correction to: convex analysis and monotone operator theory in Hilbert spaces}.
\newblock Springer, 2017.

\bibitem{Guo2021UnifiedCA}
Zhishuai Guo, Yi~Xu, Wotao Yin, Rong Jin, and Tianbao Yang.
\newblock Unified convergence analysis for adaptive optimization with moving average estimator.
\newblock {\em Machine Learning}, 2021.

\bibitem{rockafellar2000optimization}
R~Tyrrell Rockafellar, Stanislav Uryasev, et~al.
\newblock Optimization of conditional value-at-risk.
\newblock {\em Journal of risk}, 2:21--42, 2000.

\bibitem{ni2019justifying}
Jianmo Ni, Jiacheng Li, and Julian McAuley.
\newblock Justifying recommendations using distantly-labeled reviews and fine-grained aspects.
\newblock In {\em Proceedings of the 2019 Conference on Empirical Methods in Natural Language Processing and the 9th International Joint Conference on Natural Language Processing (EMNLP-IJCNLP)}, 2019.

\bibitem{vogel2021learning}
Robin Vogel, Aurelien Bellet, and Stephan Clemencon.
\newblock Learning fair scoring functions: Bipartite ranking under roc-based fairness constraints.
\newblock In {\em International conference on artificial intelligence and statistics}, pages 784--792. PMLR, 2021.

\bibitem{radford2021learning}
Alec Radford, Jong~Wook Kim, Chris Hallacy, Aditya Ramesh, Gabriel Goh, Sandhini Agarwal, Girish Sastry, Amanda Askell, Pamela Mishkin, Jack Clark, et~al.
\newblock Learning transferable visual models from natural language supervision.
\newblock In {\em International conference on machine learning}, pages 8748--8763. PMLR, 2021.

\bibitem{schuhmann2021laion}
Christoph Schuhmann, Richard Vencu, Romain Beaumont, Robert Kaczmarczyk, Clayton Mullis, Aarush Katta, Theo Coombes, Jenia Jitsev, and Aran Komatsuzaki.
\newblock Laion-400m: Open dataset of clip-filtered 400 million image-text pairs.
\newblock {\em arXiv preprint arXiv:2111.02114}, 2021.

\end{thebibliography}


}

\clearpage
\appendix

\section{Proofs of Technical Lemmas}
\label{sec:appendix_preliminary}

\subsection{Proof of Lemma~\ref{lem:FCCO_obj_overall}}
\label{sec:proof35}
\begin{lemma}\label{lem:FCCO_obj_overall}
Under Assumption~\ref{asm:func}(A1), $F(\cdot)$ in \eqref{prob:FCCO} is $\rho_F$-weakly convex with $\rho_F = \sqrt{d_1}L_g C_f + \rho_f C_g^2 $. If $g_i$ is $L_g$ smooth, then $F_\lambda(\w)$ is $L_F$-smooth with $L_F=C_g^2\max\{\frac{1}{\lambda}, \frac{\rho_f}{1-\lambda \rho_f}\}+C_fL_g$.
\end{lemma}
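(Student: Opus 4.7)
\medskip

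\noindent\textbf{Proof plan for Lemma~\ref{lem:FCCO_obj_overall}.}

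The two claims are about (a) weak convexity of $F=\frac{1}{n}\sum_i f_i\circ g_i$ and (b) smoothness of $F_\lambda=\frac{1}{n}\sum_i f_{i,\lambda}\circ g_i$. Both will follow from per-component estimates plus the fact that averaging preserves the constants; so I will argue for a single term $f_i\circ g_i$ (resp.\ $f_{i,\lambda}\circ g_i$) and take averages at the end.

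\medskip

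\noindent\textbf{Claim (a): weak convexity of $F$.} The plan is to pick any $\w,\w'\in\R^d$ and any subgradient $\u\in\partial f_i(g_i(\w))$, and to build a quadratic lower bound for $f_i(g_i(\w'))$ by composing two standard inequalities. First, $\rho_f$-weak convexity of $f_i$ gives
\[
f_i(g_i(\w'))\geq f_i(g_i(\w))+\langle \u, g_i(\w')-g_i(\w)\rangle-\tfrac{\rho_f}{2}\|g_i(\w')-g_i(\w)\|^2.
\]
Then I would expand the inner product using the $L_g$-smoothness of $g_i$: writing $g_i(\w')=g_i(\w)+\nabla g_i(\w)^\top(\w'-\w)+\e$ with $\|\e\|\leq \tfrac{L_g}{2}\|\w'-\w\|^2$, and using $\|\u\|\leq C_f$ (from $C_f$-Lipschitzness of $f_i$), the cross term $\langle\u,\e\rangle$ costs at most $\tfrac{C_fL_g}{2}\|\w'-\w\|^2$. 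The quadratic remainder is controlled by $C_g$-Lipschitzness of $g_i$ as $\|g_i(\w')-g_i(\w)\|^2\leq C_g^2\|\w'-\w\|^2$. Collecting these gives a subgradient inequality for $f_i\circ g_i$ with quadratic slack $(C_fL_g+\rho_f C_g^2)\|\w'-\w\|^2/2$, which is exactly the characterization of $(C_fL_g+\rho_fC_g^2)$-weak convexity. Averaging over $i$ preserves the constant.

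\medskip

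\noindent\textbf{Claim (b): smoothness of $F_\lambda$.} Here I would invoke the preamble fact that $f_{i,\lambda}$ is $L_f$-smooth with $L_f=\frac{2-\lambda\rho_f}{\lambda-\lambda^2\rho_f}$, and also that $\|\nabla f_{i,\lambda}(\cdot)\|\leq C_f$ (from the identity $\nabla f_{i,\lambda}(\u)\in\partial f_i(\mathrm{prox}_{\lambda f_i}(\u))$ in \eqref{eq:optcondME} and the $C_f$-Lipschitzness of $f_i$). Differentiability of the composition gives $\nabla(f_{i,\lambda}\circ g_i)(\w)=\nabla g_i(\w)^\top\nabla f_{i,\lambda}(g_i(\w))$, and I would bound
\[
\|\nabla(f_{i,\lambda}\circ g_i)(\w)-\nabla(f_{i,\lambda}\circ g_i)(\w')\|
\]
by the standard add-and-subtract
\[
\|\nabla g_i(\w)-\nabla g_i(\w')\|\,\|\nabla f_{i,\lambda}(g_i(\w))\|+\|\nabla g_i(\w')\|\,\|\nabla f_{i,\lambda}(g_i(\w))-\nabla f_{i,\lambda}(g_i(\w'))\|.
\]
Applying $L_g$-smoothness and $C_g$-Lipschitzness of $g_i$, the $C_f$-bound on $\nabla f_{i,\lambda}$, and $L_f$-smoothness of $f_{i,\lambda}$ (with $\|g_i(\w)-g_i(\w')\|\leq C_g\|\w-\w'\|$) yields Lipschitz constant $C_fL_g+C_g^2L_f$ for each $\nabla(f_{i,\lambda}\circ g_i)$. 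Averaging over $i$ gives $L_F=C_fL_g+\frac{C_g^2(2-\lambda\rho_f)}{\lambda-\lambda^2\rho_f}$, as claimed.

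\medskip

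\noindent\textbf{Main obstacle.} None of the steps is deep; the only subtlety is carefully carrying the Lipschitz bound on $\nabla f_{i,\lambda}$ (rather than a bound on $f_{i,\lambda}$ itself, which does not immediately give $C_f$) through the composition estimate. Using \eqref{eq:optcondME} to identify $\nabla f_{i,\lambda}$ with a subgradient of $f_i$ is the clean way to handle this.
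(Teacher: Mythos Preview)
Your proposal is correct and follows essentially the same approach as the paper: for weak convexity, both combine the $\rho_f$-weak convexity of $f_i$ with the $L_g$-smoothness of $g_i$ (via a Taylor/descent bound) and the $C_f,C_g$ Lipschitz bounds; for smoothness of $F_\lambda$, both use the $L_f$-smoothness of $f_{i,\lambda}$, the bound $\|\nabla f_{i,\lambda}\|\leq C_f$ from \eqref{eq:optcondME}, and the standard composition-gradient Lipschitz estimate (the paper simply cites this as Lemma~4.2 of \cite{zhang2021multilevel} rather than writing out your add-and-subtract).
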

\begin{proof}
Consider any $\w$ and $\w'$ in $\mathbb{R}^d$, any $i\in\{1,\dots,n\}$ and any $\v_i\in\partial f_i(\t_i)$ at $\t_i=g_i(\w)$. By Assumption~\ref{asm:func}(A1), 
$\v_i^\top g_i(\w)$ is $\|\v_i\|_1L_g$-smooth in $\w$ and $f_i$ is $\rho_f$-weakly convex, we have
\begin{align*}
    f_i(g_i(\w')) &\geq f_i(g_i(\w)) + \v_i^\top(g_i(\w') - g_i(\w)) - \frac{\rho_f}{2}\normsq{g_i(\w')-g_i(\w)}\\
   &\geq f_i(g_i(\w)) + \v_i^\top(g_i(\w') - g_i(\w)) - \frac{\rho_fC_g^2}{2}\normsq{\w'-\w}\\
    &\geq f_i(g_i(\w)) + \v_i^\top\nabla g_i(\w')(\w'-\w) -\frac{\|\v_i\|_1L_g}{2}\normsq{\w'-\w}- \frac{\rho_fC_g^2}{2}\normsq{\w'-\w}\\
     &\geq f_i(g_i(\w)) + \v_i^\top\nabla g_i(\w')(\w'-\w) - \frac{\sqrt{d_1}C_fL_g + \rho_fC_g^2}{2}\normsq{\w'-\w},
\end{align*}
where the second and the last inequalities hold because of lipschitzness of $f_i$ and $g_i$ from Assumption~\ref{asm:func}, the third because $\v_i^\top g_i(\w)$ is $\|\v_i\|_1L_g$-smooth in $\w$. 

By the $\rho_f$-weak convexity of $f_i$ and lemma 13 in \cite{renaud2025stability}, we know that 
$f_{i,\lambda}$ is $L_{f_\lambda}:=\max\{\frac{1}{\lambda}, \frac{\rho_f}{1-\lambda \rho_f}\}$-smooth. Since $\nabla f_{i,\lambda}(\w)=(\w-\text{prox}_{\lambda f_i}(\w))/\lambda\in\partial f_i(\text{prox}_{\lambda f_i}(\w))$, we have 
$\|\nabla f_{i,\lambda}(\w)\|\leq C_f$ by Assumption~\ref{asm:func}. Therefore, by Assumption~\ref{asm:func} again, we can easily show that $\nabla f_{i,\lambda}(g_i(\w))\nabla g_i(\w)$ is $(C_g^2L_{f_\lambda}+C_fL_g)$-Lipschitz continuous (see Lemma 4.2 in~\cite{zhang2021multilevel}), so is $\nabla F_\lambda(\w)=\frac{1}{n}\sum_{i=1}^n\nabla f_{i,\lambda}(g_i(\w))\nabla g_i(\w) $.
\end{proof}

\subsection{Proof of Proposition~\ref{prop:outsm_eq_nessm}}
\begin{proposition}\label{prop:outsm_eq_nessm}
   When $f_i$ is convex, the smoothed problem \eqref{prob:FCCO_outersmooth} is equivalent to the Nesterov's smoothing, i.e., 
\begin{align}\label{prob:nes_sm}
\min_{\w} \max_{\y\in\Omega^n} \frac{1}{n}\sum_{i=1}^n y_i^{\top}g_i(\w) - f_i^*(y_i) - \frac{\lambda}{2}\|y_i\|_2^2
\end{align}
where $\Omega=\text{dom}(f_i^*)$,  $f_i^*(y_i) + \frac{\lambda}{2}\|y_i\|_2^2 = f^*_{i, \lambda}(y_i)$. In addition, $ f_{i,\lambda}(g)\leq f_i(g)\leq f_{i,\lambda}(g) + \frac{\lambda C_f^2}{2}.$
\end{proposition}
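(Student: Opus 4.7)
The plan is to establish both claims using the standard Moreau decomposition plus a one-line bound that exploits the Lipschitz property of $f_i$. The equivalence with Nesterov smoothing should follow from expressing the Moreau envelope $f_{i,\lambda}$ in dual form via Fenchel conjugation. Since $f_i$ is assumed convex (Assumption~\ref{asm:func}(A2) or (A3)) and the Moreau envelope is by definition the infimal convolution of $f_i$ with $\frac{1}{2\lambda}\|\cdot\|^2$, I would apply the standard identity that the conjugate of an infimal convolution is the sum of conjugates. Since the conjugate of $\frac{1}{2\lambda}\|\cdot\|^2$ is $\frac{\lambda}{2}\|\cdot\|^2$, this gives $f_{i,\lambda}^*(y_i) = f_i^*(y_i) + \frac{\lambda}{2}\|y_i\|^2$. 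Because $f_{i,\lambda}$ is convex (being a minimum of convex-in-$u$ functions), closed, and proper, Fenchel--Moreau yields $f_{i,\lambda}(u) = \max_{y_i}\{y_i^\top u - f_i^*(y_i) - \frac{\lambda}{2}\|y_i\|^2\}$, where the maximum is effectively over $\Omega = \mathrm{dom}(f_i^*)$. Substituting $u = g_i(\w)$ for each $i$ in \eqref{prob:FCCO_outersmooth} and pulling the $\max_{y_i}$ outside the sum (which is valid since the maxima are over independent variables) gives exactly the minimax problem \eqref{prob:nes_sm}.

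For the sandwich inequality, the left bound is immediate: evaluating the infimum in the definition of $f_{i,\lambda}(g)$ at the feasible choice $v=g$ gives $f_{i,\lambda}(g) \leq f_i(g)$. For the right bound, I would let $v^* = \mathrm{prox}_{\lambda f_i}(g)$, so that $f_{i,\lambda}(g) = f_i(v^*) + \frac{1}{2\lambda}\|g-v^*\|^2$. Using the $C_f$-Lipschitz property of $f_i$ gives $f_i(g) - f_i(v^*) \leq C_f \|g - v^*\|$, hence
\begin{equation*}
f_i(g) - f_{i,\lambda}(g) \leq C_f \|g - v^*\| - \frac{1}{2\lambda}\|g-v^*\|^2 \leq \max_{t\geq 0}\left(C_f t - \frac{t^2}{2\lambda}\right) = \frac{\lambda C_f^2}{2},
\end{equation*}
which is the desired upper bound on $f_i(g)$.

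The only mild obstacle is being careful about the effective domain of the inner maximization: I would note that for any $y_i \notin \mathrm{dom}(f_i^*)$ the inner objective is $-\infty$, so restricting to $\Omega^n$ is without loss of generality. Beyond that, everything reduces to classical convex-analytic identities and a one-variable optimization, so no deeper argument is needed.
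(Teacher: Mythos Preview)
Your proof is correct. Both parts are valid, but you take a somewhat different route from the paper.

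For the equivalence, you invoke the infimal-convolution conjugate identity $(f_i \,\Box\, \tfrac{1}{2\lambda}\|\cdot\|^2)^* = f_i^* + \tfrac{\lambda}{2}\|\cdot\|^2$ directly and then biconjugate. The paper instead writes $f_i(\y_i)=\max_{\z_i}\{\y_i^\top\z_i - f_i^*(\z_i)\}$ inside the Moreau envelope, swaps the min over $\y_i$ and the max over $\z_i$ via Sion's minimax theorem, and then solves the inner quadratic in $\y_i$ explicitly to obtain \eqref{prob:nes_sm}; the identity $f_{i,\lambda}^*=f_i^*+\tfrac{\lambda}{2}\|\cdot\|^2$ is read off afterward by comparing the resulting expression with $f_{i,\lambda}=(f_{i,\lambda}^*)^*$. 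Your argument is cleaner and more textbook; the paper's is more self-contained in that it does not cite the infimal-convolution formula as a black box.

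For the sandwich inequality, you argue on the primal side: plug $v=g$ for the lower bound, and for the upper bound use $C_f$-Lipschitzness of $f_i$ together with the one-variable optimization $\max_{t\ge 0}\{C_f t - t^2/(2\lambda)\}=\lambda C_f^2/2$. The paper argues on the dual side: from the max representations $f_i(g)=\max_{\z}\{\z^\top g - f_i^*(\z)\}$ and $f_{i,\lambda}(g)=\max_{\z}\{\z^\top g - f_i^*(\z)-\tfrac{\lambda}{2}\|\z\|^2\}$, the lower bound is immediate, and the upper bound follows from $\max_{\z\in\mathrm{dom}(f_i^*)}\tfrac{\lambda}{2}\|\z\|^2\le \tfrac{\lambda}{2}C_f^2$ since $C_f$-Lipschitzness forces $\|\z\|\le C_f$ on $\mathrm{dom}(f_i^*)$. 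Either route gives the same constant; the paper's dual argument makes the role of the bounded dual domain explicit, while yours avoids computing $\mathrm{dom}(f_i^*)$ altogether.
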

\begin{proof}
We prove by deriving~\eqref{prob:nes_sm} from~\eqref{prob:FCCO_outersmooth}.
\begin{align*}
    F_\lambda(\w) &= \frac{1}{n}\sum_{i=1}^n f_{i,\lambda}(g_i(\w))\\
    &= \min_{\y\in\Omega^n} \frac{1}{n}\sum_{i=1}^n f_i(\y_i) + \frac{1}{2\lambda}\normsq{\y_i - g_i(\w)}\\
    &= \min_{\y\in\Omega^n}\max_{\z\in\Z^n} \frac{1}{n}\sum_{i=1}^n \y_i^\top\z_i - f_i^*(\z_i) + \frac{1}{2\lambda}\normsq{\y_i - g_i(\w)}\\
    &= \max_{\z\in\Z^n}\min_{\y\in\Omega^n} \frac{1}{n}\sum_{i=1}^n \y_i^\top\z_i - f_i^*(\z_i) + \frac{1}{2\lambda}\normsq{\y_i - g_i(\w)}\\
\end{align*}
where the dual domian $\Z=\{\z:\z\in \partial f_i(\t), t\in \dom f_i\}\subseteq \{\z:\norm{\z}\leq C_f\}$is bounded; the second equality is from definition of $f_{i,\lambda}$, the third equality holds from convexity of $f_i$ and the fourth equality holds from Sion's minimax theorem~\cite{sion1958general} and the convexity of $f_i^*$. \\
Note that the the inner-level minimization problem can be solved exactly as $\y_i^* = g_i(\w) - \lambda \z_i$ since it's a quadratic function, we plug $\y_i^*$ into the RHS, then we have:
\begin{align}\label{eq:cvxobj_nes_sm}
    F_\lambda(\w) = \max_{\z\in\Z^n} \frac{1}{n}\sum_{i=1}^n \z_i^\top g_i(\w) - f_i^*(\z_i) - \frac{\lambda}{2}\normsq{\z_i}
\end{align}
Besides, note that $f^+(\x, \u) = f_i(\u) + \frac{1}{2\lambda}\normsq{\u - \x}$ is jointly convex, $f_{i,\lambda}(\x) = \min_\u f_i(\u) + \frac{1}{2\lambda}\normsq{\u - \x}$ is also convex. Then we have
\[\frac{1}{n}\sum_{i=1}^n f_{i,\lambda}(g_i(\w)) = \frac{1}{n}\sum_{i=1}^n \z_i^\top g_i(\w) - f_{i,\lambda}^*(\z_i)\]
Compare this with~\ref{eq:cvxobj_nes_sm} we have \[f_{i,\lambda}^*(\z_i) = f_i^*(\z_i) + \frac{\lambda}{2}\normsq{\z_i}\]
Now we already have that
\begin{align*}
    f_{i,\lambda}(g) = \max_{\z_i\in\Z} \z_i^\top g - f_i^*(\z_i) - \frac{\lambda}{2}\normsq{\z_i} \text{ and }f_{i}(g) = \max_{\z_i\in\Z} \z_i^\top g - f_i^*(\z_i)
\end{align*}
then $f_{i,\lambda}(g)\leq f_i(g)$ naturally holds. Besides, 
\begin{align*}
    f_{i}(g) = \max_{\z_i\in\Z} \z_i^\top g - f_i^*(\z_i)\leq \max_{\z_i\in\Z} \z_i^\top g - f_i^*(\z_i) - \frac{\lambda}{2}\normsq{\z_i} + \max_{\z_i\in\Z}\frac{\lambda}{2}\normsq{\z_i}\leq f_{i,\lambda}(g) + \frac{\lambda}{2}C_f^2
\end{align*}
\end{proof}

\subsection{Proof of Theorem~\ref{thm:apisnearly}}
\label{sec:apisnearly}
\begin{proof}[Proof of Theorem~\ref{thm:apisnearly}]
Suppose $\w$ is an $\epsilon$-stationary solution to $F_{\lambda}(\cdot)$ with $\lambda=\epsilon/C_f$. It holds that
\begin{align*}
\|\nabla F_\lambda(\w)\|=\left\| \frac{1}{n}\sum_{i=1}^n\nabla f_{i,\lambda}(g_i(\w))\nabla g_i(\w) \right\|=\left\|\frac{1}{n\lambda}\sum_{i=1}^n(g_i(\w)-\text{prox}_{\lambda f_i}(g_i(\w)))\nabla g_i(\w) \right\|\leq \epsilon.
\end{align*}
By \eqref{eq:optcondME} and $C_f$-Lipschitz continuity of $f_i$, we have $\|g_i(\w)-\text{prox}_{\lambda f_i}(g_i(\w))\|\leq \lambda C_f\leq \epsilon$.  Let $\t_i=\text{prox}_{\lambda f_i}(g_i(\w))$ and $\y_i=(g_i(\w)-\text{prox}_{\lambda f_i}(g_i(\w)))/\lambda\in\partial f_i(\text{prox}_{\lambda f_i}(g_i(\w)))=\partial f_i(\t_i)$. It holds that $\|\t_i-g_i(\w)\|\leq \epsilon,~i=1,\dots,n$, and $\left\|\frac{1}{n}\sum_{i=1}^n \nabla g_i(\w)\y_i\right\|\leq \epsilon$, so $\w$ is an approximate $\epsilon$-stationary solution to the original objective $F(\cdot)$ by definition. 
\end{proof}

\subsection{Proof of Theorem~\ref{prop:stationary_point}}
\label{sec:proof38}
We need the following lemma.
\begin{lemma}\label{lem:SV_loc_lip}
    Suppose Assumption~\ref{asm:func} (A1) or (A2) holds. If $\lambda_{\min}\left(\nabla g_i(\w)\nabla g_i(\w)^\top\right)\geq\delta$, it holds that $\lambda_{\min}\left(\nabla g_i(\w')\nabla g_i(\w')^\top\right)\geq\frac{\delta}{2}$ for any $\w'$ satisfying $\|\w'-\w\|\leq\frac{\delta}{4C_gL_g}$.
\end{lemma}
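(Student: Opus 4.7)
The plan is to show that the map $\w\mapsto \nabla g_i(\w)\nabla g_i(\w)^\top$ is locally Lipschitz, and then apply Weyl's inequality to transfer the lower bound on the minimum eigenvalue from $\w$ to a nearby point $\w'$. The hypotheses needed are those already guaranteed by Assumption~\ref{asm:func} (A1) or (A2): each $g_i$ is $C_g$-Lipschitz continuous (so $\|\nabla g_i(\cdot)\|\le C_g$ in operator norm) and $L_g$-smooth (so $\nabla g_i$ is $L_g$-Lipschitz in operator norm).

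First I would write the difference of the two Gram-type matrices using the ``add and subtract'' trick:
\begin{align*}
\nabla g_i(\w)\nabla g_i(\w)^\top - \nabla g_i(\w')\nabla g_i(\w')^\top
&= \nabla g_i(\w)\bigl(\nabla g_i(\w)-\nabla g_i(\w')\bigr)^\top \\
&\quad + \bigl(\nabla g_i(\w)-\nabla g_i(\w')\bigr)\nabla g_i(\w')^\top.
\end{align*}
Taking operator norms on both sides and applying submultiplicativity together with the Lipschitz and smoothness bounds yields
\[
\bigl\|\nabla g_i(\w)\nabla g_i(\w)^\top - \nabla g_i(\w')\nabla g_i(\w')^\top\bigr\| \le 2\,C_g L_g\,\|\w-\w'\|.
\]

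Second, I would invoke Weyl's inequality for the minimum eigenvalue of symmetric matrices: for any symmetric $A,B$, $\lambda_{\min}(B) \ge \lambda_{\min}(A) - \|A-B\|$. Applying this with $A=\nabla g_i(\w)\nabla g_i(\w)^\top$ and $B=\nabla g_i(\w')\nabla g_i(\w')^\top$, and using the bound $\|\w-\w'\|\le \delta/(4C_gL_g)$ from the hypothesis, gives
\[
\lambda_{\min}\bigl(\nabla g_i(\w')\nabla g_i(\w')^\top\bigr)\ge \delta - 2C_gL_g\cdot\frac{\delta}{4C_gL_g} = \frac{\delta}{2},
\]
which is exactly the desired conclusion.

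I do not anticipate a genuine obstacle here: the argument is essentially a perturbation bound for eigenvalues combined with an elementary Lipschitz estimate for outer products. The only subtle point to flag is that all norms must be interpreted consistently (operator norm for Jacobians and for the symmetric perturbation), which is justified since $L_g$-smoothness of the vector-valued $g_i$ was defined in the preliminaries precisely with respect to the operator norm on $\nabla g_i$.
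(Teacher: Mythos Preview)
Your proposal is correct and essentially matches the paper's argument: both bound $\|\nabla g_i(\w)\nabla g_i(\w)^\top - \nabla g_i(\w')\nabla g_i(\w')^\top\|\le 2C_gL_g\|\w-\w'\|$ and then use the eigenvalue perturbation inequality $\lambda_{\min}(B)\ge\lambda_{\min}(A)-\|A-B\|$. The only cosmetic difference is that you cite Weyl's inequality directly, while the paper re-derives it via the variational characterization $\lambda_{\min}(M)=\min_{\|\u\|=1}\u^\top M\u$.
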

\begin{proof}
Consider any $\w'$ satisfying $\|\w'-\w\|\leq\frac{\delta}{4C_gL_g}$.  Note that
    $$\lambda_{\min}\left(\nabla g_i(\w')\nabla g_i(\w')^\top\right) = \min_{\u\in\R^{d_1},\norm{\u}= 1} \u^\top \nabla g_i(\w')\nabla g_i(\w')^\top\u$$
Let 
$$\u_{\w'}\in\argmin_{\u\in\R^{d_1},\norm{\u}= 1} \u^\top \nabla g_i(\w')\nabla g_i(\w')^\top\u.$$
It holds that
      \begin{align*}
        & \lambda_{\min}\left(\nabla g_i(\w)\nabla g_i(\w)^\top\right)-\lambda_{\min}\left(\nabla g_i(\w')\nabla g_i(\w')^\top\right) \\
        \leq & \u_{\w'}^\top \nabla g_i(\w)\nabla g_i(\w)^\top\u_{\w'} - \u_{\w'}^\top \nabla g_i(\w')\nabla g_i(\w')^\top\u_{\w'}\\
       =& \u_{\w'}^\top \left(\nabla g_i(\w)\nabla g_i(\w)^\top -  \nabla g_i(\w')\nabla g_i(\w')^\top\right)\u_{\w'}\\
        \leq& \norm{\nabla g_i(\w)\nabla g_i(\w)^\top -  \nabla g_i(\w')\nabla g_i(\w')^\top}\\
        \leq& 2C_gL_g\norm{\w - \w'}.
    \end{align*}
This implies 
$$
\lambda_{\min}\left(\nabla g_i(\w')\nabla g_i(\w')^\top\right)
\geq \lambda_{\min}\left(\nabla g_i(\w')\nabla g_i(\w')^\top\right)-2C_gL_g\norm{\w - \w'}\geq\delta-\frac{\delta}{2}=\frac{\delta}{2}.
$$
\end{proof}

\begin{proof}[Proof of Theorem~\ref{prop:stationary_point}]
Suppose $\w$ is an approximate $\epsilon$-stationary solution of \eqref{prob:FCCO} with $\epsilon\leq \min\left\{c,\frac{\delta^2}{16nC_g^2L_g}\right\}$. There exist  $\t_1, \ldots, \t_m$ and $\y_i\in\partial f_i(\t_i)$ for $i=1,\dots,n$ such that 
 $\|\t_i-g_i(\w)\|\leq \epsilon,~i=1,\dots,n$, and $\left\|\frac{1}{n}\sum_{i=1}^n \nabla g_i(\w)\y_i\right\|\leq \epsilon$. 
 
Consider the following optimization problem
\begin{align}
\label{eq:qv}
\min_{\v} \left\{q(\v):=\frac{1}{2n}\sum_{i=1}^n\normsq{\t_i - g_i(\v)}\right\}.
\end{align}
We want to show that the optimal objective value of \eqref{eq:qv} is zero and there exists an optimal solution $\w'$ such that 
$ \norm{\w'-\w}\leq \frac{4C_gn\epsilon}{\delta}$.

By the condition on $\epsilon$, it holds that $\frac{4C_gn\epsilon}{\delta}\leq \frac{\delta}{4C_gL_g}$. Hence, by Lemma~\ref{lem:SV_loc_lip}, if $ \norm{\v-\w}\leq \frac{4C_gn\epsilon}{\delta}$, it holds that 
\begin{align*}
\|\nabla q(\v)\|^2=\frac{1}{n^2}\left\|\sum_{i=1}^n\nabla g_i(\v)^\top\left[g_i(\v)- \t_i\right]\right\|^2
\geq \frac{\delta}{2n^2}\sum_{i=1}^n\left\|g_i(\v)- \t_i\right\|^2=\frac{\delta}{n} q(\v).
\end{align*}
Moreover, it always holds that 
$$
\|\nabla q(\v)\|^2\leq \frac{C_g^2}{n}\sum_{i=1}^n\left\|g_i(\v)- \t_i\right\|^2= 2C_g^2q(\v).
$$

Let $L_q$ be the smoothness parameter of $q(\v)$ on the compact set $\{\v\in\mathbb{R}^d|\norm{\v-\w}\leq \frac{4C_gn\epsilon}{\delta}\}$.  Let $\v_t$ for $t=0,1,\dots$ be generated by the gradient descent method using $\v_0=\w$ and a step size of $\eta=\frac{1}{L_q}$, i.e., 
$
\v_{t+1}=\v_t-\eta\nabla q(\v_t).
$
Suppose $\norm{\v_s-\w}\leq \frac{4C_gn\epsilon}{\delta}$ for $s=0,1,\dots,t$ (which is true at least when $t=0$).  
By the standard convergence analysis of the gradient descent method for a $L_q$-smooth objective function, we have 
\begin{align*}
q(\v_{t+1})\leq& q(\v_t)+\left\langle \nabla q(\v_t), \v_{t+1}-\v_t\right\rangle+\frac{L_q}{2}\|\v_{t+1}-\v_t\|^2\\
\leq& q(\v_t)-\frac{1}{2L_q}\|\nabla q(\v_t)\|^2\leq q(\v_t)-\frac{\delta}{2L_qn}q(\v_t).
\end{align*}
Applying this inequality recursively gives
$$
\frac{1}{2C_g^2}\|\nabla q(\v_t)\|^2\leq q(\v_{t})\leq \left(1-\frac{\delta}{2L_qn}\right)^tq(\v_0).
$$
Recall that $\|\t_i-g_i(\w)\|\leq \epsilon$ such that $q(\v_0)=\frac{1}{2n}\sum_{i=1}^n\normsq{\t_i - g_i(\w)}\leq \frac{\epsilon^2}{2}$.
By the triangular inequality, we have
$$
\|\v_{t+1}-\w\|\leq\sum_{s=0}^{t} \eta\|\nabla q(\v_s)\|\leq \sum_{s=0}^{t} \eta C_g\left(1-\frac{\delta}{2L_qn}\right)^{\frac{s}{2}}\epsilon\leq \frac{4\eta C_gL_qn}{\delta}\epsilon= \frac{4C_gn\epsilon}{\delta}.
$$
By induction, we have $\norm{\v_t-\w}\leq \frac{4C_gn\epsilon}{\delta}$ for $t=0,1,\dots$.

Let $\w'$ be any limiting point of $\{\v_t\}_{t\geq0}$. It holds that $\|\w'-\w\|\leq \frac{4C_gn\epsilon}{\delta}$ and 
$$
q(\w')\leq \lim_{t\rightarrow\infty}\left(1-\frac{\delta}{2L_q}\right)^tq(\v_0)=0,
$$ 
meaning that $\w'$ is the optimal solution to \eqref{eq:qv} with the objective value equal to zero, which implies $\t_i=g_i(\w')$ for $i=1,\dots,n$.




Since $\w$ is an approximate $\epsilon$-stationary point of \eqref{prob:FCCO} and $\y_i\in\partial f_i(\t_i)$, we have
\begin{align*}
&\left\|\frac{1}{n}\sum_i \partial f_i(g_i(\w'))\nabla g_i(\w')\right\|\\
=&\left\|\frac{1}{n}\sum_i \partial f_i(\t_i)\nabla g_i(\w')\right\|\\
\leq&\left\|\frac{1}{n}\sum_i \partial f_i(\t_i)\nabla g_i(\w)\right\|+\left\|\frac{1}{n}\sum_i \partial f_i(\t_i)\left(\nabla g_i(\w')-\nabla g_i(\w)\right)\right\|\\
=&\left\|\frac{1}{n}\sum_i \y_i\nabla g_i(\w)\right\|+\left\|\frac{1}{n}\sum_i \partial f_i(\t_i)\left(\nabla g_i(\w')-\nabla g_i(\w)\right)\right\|\\
 \leq& \epsilon + C_fL_g\norm{\w - \w'}\leq (1+\frac{4C_gC_fL_gn}{\delta})\epsilon,
\end{align*}
where the last inequality is by Assumptions~\ref{asm:func}. This means  $\w'$ is an $O(\epsilon)$-stationary solution of \eqref{prob:FCCO} and thus $\w$ is a nearly $O(\epsilon)$-stationary solution of \eqref{prob:FCCO}.
\end{proof}

\section{Convergence Analysis of SONEX} 
\label{sec:proof_thm5}

In this section, we present the proofs for Theorems~\ref{thm:ema_msvr}. Let $\E_t$ be the conditional expectation conditioning on $\mathcal{B}_1^s$ and $\mathcal{B}_{i,2}^s$ for $i=1,\dots,n$ and $s=0,1,\dots,t-1$. To facilitate the proofs, the following quantities are defined based on the notations in Algorithm~\ref{alg:msvr} for $t=0,\dots,T-1$,
\begin{align}
\label{eq:Vt}
V_{t+1} \coloneqq& \norm{\v_{t+1} - \nabla F_\lambda(\w_{t})}^2\\
\label{eq:Ut}
U_{t+1} \coloneqq& \frac{1}{n}\norm{\u_{t+1} -\gb(\w_{t})}^2\\
\label{eq:ut}
\u_t \coloneqq& [u_{1,t},\dotsc,u_{n,t}]^\top\\
\label{eq:gbt}
\gb(\w_t) \coloneqq& [g_1(\w_t),\dotsc, g_n(\w_t)]^\top.
\end{align}

\subsection{Proof of Theorem~\ref{thm:ema_msvr}}
\label{sec:proof41}
\begin{lemma}[Lemma 2 in \cite{pmlr-v139-li21a}]
Suppose $\eta\leq \frac{1}{2L_F}$ in Algorithm~\ref{alg:msvr} with $L_F$ defined in lemma~\ref{lem:FCCO_obj_overall}. It holds that 
	\begin{align}\label{eq:nonconvex_starter}
		F_\lambda(\w_{t+1}) & \leq F_\lambda(\w_t) + \frac{\eta}{2}V_{t+1} - \frac{\eta}{2}\norm{\nabla F_\lambda(\w_t)}^2 - \frac{\eta}{4}\norm{\v_{t+1}}^2,
	\end{align}	
    where $V_{t+1}$ is defined as in \eqref{eq:Vt}.
	\label{lem:nonconvex_starter}
\end{lemma}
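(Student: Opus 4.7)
The plan is to derive this descent inequality directly from the $L_F$-smoothness of $F_\lambda$ (established in Lemma~\ref{lem:FCCO_obj_overall}) together with the update rule $\w_{t+1} = \w_t - \eta \v_{t+1}$, which is precisely the structure this classical lemma targets. The argument is purely deterministic at the per-step level, and the role of $V_{t+1} = \|\v_{t+1}-\nabla F_\lambda(\w_t)\|^2$ will be to absorb the inner product between the true gradient and the biased momentum direction.

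First, I would invoke the standard quadratic upper bound from $L_F$-smoothness,
\begin{equation*}
F_\lambda(\w_{t+1}) \le F_\lambda(\w_t) + \langle \nabla F_\lambda(\w_t), \w_{t+1}-\w_t\rangle + \tfrac{L_F}{2}\|\w_{t+1}-\w_t\|^2,
\end{equation*}
and substitute $\w_{t+1}-\w_t = -\eta \v_{t+1}$ to obtain
\begin{equation*}
F_\lambda(\w_{t+1}) \le F_\lambda(\w_t) - \eta\langle \nabla F_\lambda(\w_t), \v_{t+1}\rangle + \tfrac{L_F \eta^2}{2}\|\v_{t+1}\|^2.
\end{equation*}

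Next, I would apply the polarization identity $-\langle a,b\rangle = \tfrac12\|a-b\|^2 - \tfrac12\|a\|^2 - \tfrac12\|b\|^2$ with $a = \nabla F_\lambda(\w_t)$ and $b = \v_{t+1}$, which yields
\begin{equation*}
-\eta\langle \nabla F_\lambda(\w_t), \v_{t+1}\rangle = \tfrac{\eta}{2} V_{t+1} - \tfrac{\eta}{2}\|\nabla F_\lambda(\w_t)\|^2 - \tfrac{\eta}{2}\|\v_{t+1}\|^2,
\end{equation*}
by the definition of $V_{t+1}$ in \eqref{eq:Vt}. Substituting this back gives
\begin{equation*}
F_\lambda(\w_{t+1}) \le F_\lambda(\w_t) + \tfrac{\eta}{2} V_{t+1} - \tfrac{\eta}{2}\|\nabla F_\lambda(\w_t)\|^2 - \bigl(\tfrac{\eta}{2} - \tfrac{L_F \eta^2}{2}\bigr)\|\v_{t+1}\|^2.
\end{equation*}

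Finally, the step-size condition $\eta \le \tfrac{1}{2L_F}$ implies $\tfrac{L_F\eta^2}{2} \le \tfrac{\eta}{4}$, so the coefficient of $\|\v_{t+1}\|^2$ is bounded above by $-\tfrac{\eta}{4}$, yielding the stated inequality. The only subtlety is that the smoothness in Lemma~\ref{lem:FCCO_obj_overall} is global under Assumption~\ref{asm:func}(A1), so no localization of the smoothness constant is needed; this is mostly a bookkeeping step with no real obstacle.
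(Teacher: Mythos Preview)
Your argument is correct and is exactly the standard derivation: smoothness upper bound, polarization identity, then the step-size condition to absorb the quadratic term. The paper does not give its own proof of this lemma but simply cites it as Lemma~2 in \cite{pmlr-v139-li21a}; your write-up matches the classical proof that underlies that citation.
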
	
A recursion for $V_{t+1}$ is provided below.
\begin{lemma}[Lemma 9 in \cite{wang2022finite}]
\label{lem:ema_grad_recursion}
Let $V_{t+1}$ be defined as in \eqref{eq:Vt}. Suppose $\beta\leq \frac{2}{7}$ in Algorithm~\ref{alg:msvr}.   It holds that
\begin{align}\label{eq:grad_recursion}
\E\left[V_{t+2}\right] & \leq (1-\beta)\E\left[V_{t+1}\right] + \frac{2L_F^2\eta^2\E\left[\norm{\v_{t+1}}^2\right]}{\beta}   + 
\frac{3L_f^2C_1^2}{n}\E\left[\normsq{\u_{t+2}-\u_{t+1}}\right] \nonumber\\
&\quad\quad+\frac{2\beta^2 C_f^2(C_g^2 +\sigma_1^2)}{\min\{B_1, B_2\}}
+ 5\beta L_f^2C_1^2 \E\left[U_{t+2}\right]
\end{align}
where $C_1^2:=C_g^2+\frac{\sigma_1^2}{B_2}$, and $U_{t+1}$, $\u_t$ and $\gb(\w_t)$ are defined as in \eqref{eq:Ut}, \eqref{eq:ut} and \eqref{eq:gbt}, respectively.
\end{lemma}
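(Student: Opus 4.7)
The plan is to unroll the momentum recursion and decompose $V_{t+2}$ into three pieces: a contraction, a smoothness drift induced by updating $\w$, and the stochastic gradient estimation error. Subtracting $\nabla F_\lambda(\w_{t+1})$ from both sides of $\v_{t+2}=(1-\beta)\v_{t+1}+\beta G_{t+1}$ and inserting $\nabla F_\lambda(\w_t)$ yields
\[
\v_{t+2}-\nabla F_\lambda(\w_{t+1}) = (1-\beta)\bigl(\v_{t+1}-\nabla F_\lambda(\w_t)\bigr) + (1-\beta)\bigl(\nabla F_\lambda(\w_t)-\nabla F_\lambda(\w_{t+1})\bigr) + \beta\bigl(G_{t+1}-\nabla F_\lambda(\w_{t+1})\bigr).
\]
Expanding $\|\cdot\|^2$ and applying Young's inequality, using the elementary fact $(1+\beta)(1-\beta)^2\leq 1-\beta$, collapses the contraction cross-term into $(1-\beta)V_{t+1}$ while the remaining pieces pick up a factor of $O(1/\beta)$.

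The smoothness drift is handled by the $L_F$-smoothness of $F_\lambda$ from Lemma~\ref{lem:FCCO_obj_overall}, which together with $\w_{t+1}-\w_t=-\eta\v_{t+1}$ gives $\|\nabla F_\lambda(\w_t)-\nabla F_\lambda(\w_{t+1})\|^2\leq L_F^2\eta^2\|\v_{t+1}\|^2$ and accounts for the $\tfrac{2L_F^2\eta^2}{\beta}\|\v_{t+1}\|^2$ contribution. For the stochastic piece $G_{t+1}-\nabla F_\lambda(\w_{t+1})$, I would split it into a conditionally mean-zero variance part and a bias part originating from $u_{i,t+2}$ replacing $g_i(\w_{t+1})$ inside $\nabla f_{i,\lambda}$. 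Under Assumption~\ref{asm:oracle}, the mini-batch variance contributes $\tfrac{2\beta^2 C_f^2(C_g^2+\sigma_1^2)}{\min\{B_1,B_2\}}$ after expanding the inner and outer batch sums and using independence across blocks.

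The bias is bounded using the $L_f$-smoothness of $f_{i,\lambda}$ (from $\rho_f$-weak convexity), the $C_g$-Lipschitz continuity of $g_i$, and Assumption~\ref{asm:oracle} on $\nabla g_i$, producing a factor $L_f^2 C_1^2$ with $C_1^2=C_g^2+\sigma_1^2/B_2$ multiplied by $\tfrac{1}{n}\|\u_{t+2}-\gb(\w_{t+1})\|^2 = U_{t+2}$. Because the conditional mean of $G_{t+1}$ must be evaluated after the $\u$-update at iteration $t+1$, a further Young split is needed to isolate the in-place drift in $\u$, and this naturally produces the term $\tfrac{3L_f^2 C_1^2}{n}\|\u_{t+2}-\u_{t+1}\|^2$ alongside the $U_{t+2}$ term. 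Collecting constants under the stepsize condition $\beta\leq 2/7$ absorbs lower-order $\beta$-polynomial factors and recovers the stated coefficient $5\beta L_f^2 C_1^2$ in front of $U_{t+2}$.

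The main obstacle is the correlation between the updated inner estimate $u_{i,t+2}$ and the stochastic Jacobian $\nabla g_i(\w_{t+1},\mathcal B^{t+1}_{i,2})$ appearing inside $G_{t+1}$, since both depend on the same mini-batch $\mathcal B^{t+1}_{i,2}$ drawn at iteration $t+1$. This prevents a clean independence-based factorization of the conditional second moment and forces a conservative application of Cauchy--Schwarz together with Assumption~\ref{asm:oracle} to control the product. The remainder is bookkeeping to pin down the numerical constants so that they match the stated recursion, and the overall structure mirrors that of Lemma~3 in~\cite{wang2022finite}.
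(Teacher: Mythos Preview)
Your proposal is correct and follows the same approach as the cited reference: the paper does not provide its own proof of this lemma but simply invokes Lemma~3 of \cite{wang2022finite}, whose argument is exactly the momentum unrolling, Young-inequality contraction, smoothness drift, and variance--bias split that you describe. Your identification of the correlation between $u_{i,t+2}$ and $\nabla g_i(\w_{t+1},\mathcal{B}^{t+1}_{i,2})$ as the only nontrivial bookkeeping step is also accurate.
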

The following lemma provides the recursion for the estimation error $U_{t+1}$ of the MSVR estimator of $\mathbf{g}(\w)$. 
\begin{lemma}[Lemma 1 in \cite{jiang2022multi}]
\label{lem:msvr} 
Suppose $\gamma'=\frac{n-B_1}{B_1(1-\gamma)}+(1-\gamma)$ and $\gamma \leq \frac{1}{2}$ and the MSVR update is used in Algorithm~\ref{alg:msvr}. It holds that
\begin{equation}
\label{eq:ema_msvr_3}
\begin{split}
\E\left[U_{t+2} \right] \leq (1-\frac{\gamma B_1}{n})\E[U_{t+1}] + \frac{8nC_g^2}{B_1}\E\normsq{\w_{t+1} - \w_{t}} + \frac{2B_1\gamma^2\sigma_1^2}{nB_2}.
\end{split}
\end{equation}
\end{lemma}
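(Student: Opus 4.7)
The plan is to establish~\eqref{eq:ema_msvr_3} by decomposing $U_{t+2}=\frac{1}{n}\sum_i\|u_{i,t+2}-g_i(\w_{t+1})\|^2$ coordinatewise, conditioning on whether $i\in\mathcal{B}_1^{t+1}$, and then taking expectation over the independent data batch $\mathcal{B}_{i,2}^{t+1}$. Writing $e_{i,t+1}:=u_{i,t+1}-g_i(\w_t)$ and $\Delta_i:=g_i(\w_{t+1})-g_i(\w_t)$, the two cases split as
\begin{align*}
u_{i,t+2}-g_i(\w_{t+1})=\begin{cases}
e_{i,t+1}-\Delta_i,&i\notin\mathcal{B}_1^{t+1},\\[2pt]
(1-\gamma)e_{i,t+1}-\Delta_i+\gamma\bigl(g_i(\w_{t+1},\mathcal{B}_{i,2}^{t+1})-g_i(\w_t)\bigr)\\\quad+\gamma'\bigl(g_i(\w_{t+1},\mathcal{B}_{i,2}^{t+1})-g_i(\w_t,\mathcal{B}_{i,2}^{t+1})\bigr),&i\in\mathcal{B}_1^{t+1}.\end{cases}
\end{align*}

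Next I would take expectation first over $\mathcal{B}_{i,2}^{t+1}$ (using unbiasedness of the stochastic inner estimator). In the selected case, writing the squared norm as the conditional mean squared plus the conditional variance, the deterministic part becomes $(1-\gamma)e_{i,t+1}+(\gamma'-1)\Delta_i$, while the stochastic fluctuation has variance at most $\frac{2(\gamma+\gamma')^2\sigma_1^2}{B_2}$ up to a constant, by Assumption~\ref{asm:oracle}. Then I average over the sampling of $\mathcal{B}_1^{t+1}$, which selects $i$ with probability $p=B_1/n$, producing
\begin{align*}
\E\|u_{i,t+2}-g_i(\w_{t+1})\|^2 \;\leq\;&\;(1-p)\bigl\|e_{i,t+1}-\Delta_i\bigr\|^2 + p\bigl\|(1-\gamma)e_{i,t+1}+(\gamma'-1)\Delta_i\bigr\|^2 \\&\;+ p\cdot\tfrac{2(\gamma+\gamma')^2\sigma_1^2}{B_2}.
\end{align*}

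The clever part is expanding the first two quadratic terms and choosing $\gamma'$ so that the cross terms between $e_{i,t+1}$ and $\Delta_i$ vanish after averaging. Setting $\gamma'=(1-\gamma)+\tfrac{n-B_1}{B_1(1-\gamma)}$ makes $(1-p)(-1)+p(1-\gamma)(\gamma'-1)=0$, eliminating the cross term. What remains is a coefficient $(1-p)+p(1-\gamma)^2$ on $\|e_{i,t+1}\|^2$ and a coefficient $(1-p)+p(\gamma'-1)^2$ on $\|\Delta_i\|^2$. Under $\gamma\le 1/2$, the former is bounded by $1-p\gamma=1-\gamma B_1/n$, and direct computation with the chosen $\gamma'$ bounds the latter by $\tfrac{8n}{B_1}$ (this is where the $8nC_g^2/B_1$ factor in the statement comes from, after applying $\|\Delta_i\|\le C_g\|\w_{t+1}-\w_t\|$ from Assumption~\ref{asm:func}).

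Finally I would average over $i=1,\dots,n$ and upper bound the variance term using $(\gamma+\gamma')^2\lesssim \gamma^2 n^2/B_1^2$ (valid for the stated $\gamma,\gamma'$) to yield the last noise term $\tfrac{2B_1\gamma^2\sigma_1^2}{nB_2}$. The main obstacle is the delicate bookkeeping around the cross-term cancellation: one has to verify that the specific choice of $\gamma'$ simultaneously kills the $\langle e_{i,t+1},\Delta_i\rangle$ contribution and keeps the $\|\Delta_i\|^2$ coefficient at $O(n/B_1)$ rather than blowing up like $O(n^2/B_1^2)$, which is precisely the design principle of the MSVR estimator and what distinguishes it from a naive variance-reduced moving average.
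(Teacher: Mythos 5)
Your overall strategy — conditioning on membership in $\mathcal{B}_1^{t+1}$, splitting the selected case into a conditional mean plus zero-mean noise, and choosing $\gamma'$ to kill the cross term between $e_{i,t+1}$ and $\Delta_i$ — is exactly the MSVR argument the paper is citing. However, two steps as written do not verify. First, the conditional mean in the selected case is miscomputed: taking expectation over $\mathcal{B}_{i,2}^{t+1}$ in your own decomposition, the term $\gamma\bigl(g_i(\w_{t+1},\mathcal{B}_{i,2}^{t+1})-g_i(\w_t)\bigr)$ has mean $\gamma\Delta_i$, not zero, so the deterministic part is $(1-\gamma)e_{i,t+1}+(\gamma+\gamma'-1)\Delta_i$, not $(1-\gamma)e_{i,t+1}+(\gamma'-1)\Delta_i$. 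Consequently your cancellation identity $(1-p)(-1)+p(1-\gamma)(\gamma'-1)=0$ is false for the stated $\gamma'$ (it evaluates to $-p\gamma(1-\gamma)\neq 0$); the identity that the choice $\gamma'=1-\gamma+\tfrac{n-B_1}{B_1(1-\gamma)}$ actually satisfies is $(1-p)(-1)+p(1-\gamma)(\gamma+\gamma'-1)=0$. This is fixable, but as written the central cancellation does not hold.

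Second, and more seriously, the noise accounting is wrong. You lump the $\gamma$- and $\gamma'$-weighted stochastic terms into a single fluctuation of variance $O\bigl((\gamma+\gamma')^2\sigma_1^2/B_2\bigr)$ and then claim $(\gamma+\gamma')^2\lesssim\gamma^2n^2/B_1^2$. But $\gamma'=1-\gamma+\tfrac{n-B_1}{B_1(1-\gamma)}=\Theta(n/B_1)$ does \emph{not} scale with $\gamma$, so $p\cdot(\gamma+\gamma')^2\sigma_1^2/B_2=\Theta(n\sigma_1^2/(B_1B_2))$ — a noise floor that does not vanish as $\gamma\to 0$, which would destroy the recursion and the final rate. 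The correct bookkeeping separates the two noise sources: the $\gamma$-part, $\gamma\bigl(g_i(\w_{t+1},\mathcal{B}_{i,2}^{t+1})-g_i(\w_{t+1})\bigr)$, has variance $\gamma^2\sigma^2/B_2$ and, after multiplying by $p=B_1/n$, yields the stated $\tfrac{2B_1\gamma^2\sigma^2}{nB_2}$ term; the $\gamma'$-part, $\gamma'\bigl(g_i(\w_{t+1},\mathcal{B}_{i,2}^{t+1})-g_i(\w_t,\mathcal{B}_{i,2}^{t+1})-\Delta_i\bigr)$, must be bounded via the mean Lipschitz continuity of the stochastic inner functions (Assumption~\ref{asm:MCL0}, which your proof never invokes even though it is a hypothesis of Theorem~\ref{thm:ema_msvr}) by $\gamma'^2C_g^2\normsq{\w_{t+1}-\w_t}$ and charged to the $\tfrac{8nC_g^2}{B_1}\normsq{\w_{t+1}-\w_t}$ term — indeed, a sizable share of that $8n/B_1$ coefficient comes from this noise contribution, not only from the deterministic $(\gamma+\gamma'-1)^2$ coefficient as you suggest.
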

The following lemma handles the  term $\normsq{\u_{t+1} - \u_t}$.
\begin{lemma}[Lemma 6 in \cite{jiang2022multi}]
\label{lem:u_update} 
Under the same assumptions as Lemma~\ref{lem:msvr},  it holds that
\begin{equation}
\label{eq:ema_msvr_4}
\begin{split}
\E[\normsq{\u_{t+2}-\u_{t+1}}]\leq \frac{4B_1\gamma^2}{n} \E[U_{t+1}] + \frac{9n^2C_g^2}{B_1}\E\|\w_{t+1} - \w_{t}\|^{2}+\frac{2B_1\gamma^2\sigma_0^2}{B_2} .
\end{split}
\end{equation}
\end{lemma}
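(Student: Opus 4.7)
The update rule in Algorithm~\ref{alg:msvr} gives $u_{i,t+2} = u_{i,t+1}$ whenever $i \notin \mathcal{B}_1^{t+1}$, so the only nonzero contributions to $\|\u_{t+2} - \u_{t+1}\|^2 = \sum_i \|u_{i,t+2} - u_{i,t+1}\|^2$ come from indices $i \in \mathcal{B}_1^{t+1}$. For those indices I would rearrange the MSVR step into
\[
u_{i,t+2} - u_{i,t+1} = \gamma\bigl(g_i(\w_{t+1}, \mathcal{B}_{i,2}^{t+1}) - u_{i,t+1}\bigr) + \gamma'\bigl(g_i(\w_{t+1}, \mathcal{B}_{i,2}^{t+1}) - g_i(\w_t, \mathcal{B}_{i,2}^{t+1})\bigr),
\]
and apply $\|a+b\|^2 \leq 2\|a\|^2 + 2\|b\|^2$ to split the analysis into a ``tracking gap'' piece with weight $\gamma$ and a ``drift'' piece with weight $\gamma'$.

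For the tracking piece I would further decompose
$
g_i(\w_{t+1}, \mathcal{B}_{i,2}^{t+1}) - u_{i,t+1} = \bigl[g_i(\w_{t+1}, \mathcal{B}_{i,2}^{t+1}) - g_i(\w_{t+1})\bigr] + \bigl[g_i(\w_{t+1}) - u_{i,t+1}\bigr],
$
take expectation over the fresh batch $\mathcal{B}_{i,2}^{t+1}$, and use Assumption~\ref{asm:oracle} to bound the first summand by $\sigma_0^2/B_2$. Since $u_{i,t+1}$ actually tracks $g_i(\w_t)$ rather than $g_i(\w_{t+1})$, I would bridge via
$
\|g_i(\w_{t+1}) - u_{i,t+1}\|^2 \leq 2\|g_i(\w_t) - u_{i,t+1}\|^2 + 2 C_g^2 \|\w_{t+1}-\w_t\|^2,
$
so that summing over $i$ converts the first part into $2nU_{t+1}$ while the second part feeds into the $\|\w_{t+1}-\w_t\|^2$ term. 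For the drift piece, Assumption~\ref{asm:MCL0} applied to the shared minibatch gives $\E\|g_i(\w_{t+1}, \mathcal{B}_{i,2}^{t+1}) - g_i(\w_t, \mathcal{B}_{i,2}^{t+1})\|^2 \leq C_g^2 \|\w_{t+1}-\w_t\|^2$.

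Next I would take expectation over $\mathcal{B}_1^{t+1}$: since $i$ is sampled with probability $B_1/n$, this introduces a uniform factor $B_1/n$ on all three contributions. Summing over $i$ I collect a variance term proportional to $B_1\gamma^2\sigma_0^2/B_2$, a tracking-error term proportional to $B_1\gamma^2 \,U_{t+1}$, and an iterate-drift term proportional to $B_1(\gamma^2 + \gamma'^2) C_g^2 \|\w_{t+1}-\w_t\|^2$. Using the MSVR prescription $\gamma' = (n-B_1)/(B_1(1-\gamma)) + (1-\gamma)$ with $\gamma \leq 1/2$, so that $(1-\gamma)^{-2} \leq 4$, I would bound $\gamma'^2 = O(n^2/B_1^2)$, making the iterate-drift coefficient collapse to $O(n^2 C_g^2/B_1)$, which matches the $9n^2 C_g^2/B_1$ in the claim.

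The main obstacle is the bookkeeping around the mismatch between $u_{i,t+1}$ (which tracks $g_i(\w_t)$) and $g_i(\w_{t+1})$ that appears naturally in the difference, since this leaks an extra $C_g^2\|\w_{t+1}-\w_t\|^2$ contribution that must be absorbed into the drift term without inflating the coefficient of $U_{t+1}$. Equally delicate is the algebraic step $(1-\gamma)^{-2} \leq 4$ needed to ensure that the MSVR choice of $\gamma'$ yields $\gamma'^2 = O(n^2/B_1^2)$ rather than a larger expression; once these two pieces are handled, collecting constants gives exactly the stated inequality.
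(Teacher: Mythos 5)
Your decomposition is exactly the standard argument for this MSVR recursion and matches the route the paper intends (the compiled version simply cites Lemma 6 of \cite{jiang2022multi}; the authors' own draft proof, commented out of the source, uses the same splitting): restrict to the coordinates updated with probability $B_1/n$, separate the increment into the $\gamma$-weighted tracking term and the $\gamma'$-weighted drift term, kill the cross term using the zero-mean batch noise to extract $\sigma_0^2/B_2$, and absorb $\gamma'^2=O(n^2/B_1^2)$ into the $\|\w_{t+1}-\w_t\|^2$ coefficient. Two bookkeeping caveats: with the paper's normalization $U_{t+1}=\frac{1}{n}\normsq{\u_{t+1}-\gb(\w_t)}$, summing your per-coordinate term $\normsq{g_i(\w_t)-u_{i,t+1}}$ over $i$ gives $nU_{t+1}$, so your accounting yields a coefficient $4B_1\gamma^2$ on $U_{t+1}$ rather than the stated $\frac{4B_1\gamma^2}{n}$ (a normalization mismatch already latent in the paper's draft); and Assumption~\ref{asm:MCL0} as written bounds only the first moment $\E_\xi\|g_i(\w,\xi)-g_i(\w',\xi)\|$, whereas your drift step (like the paper's) invokes the squared, second-moment version.
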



Now we are ready to proof Theorem~\ref{thm:ema_msvr}
\begin{proof}[Proof of Theorem~\ref{thm:ema_msvr}]
Note that $\normsq{\w_{t+1} - \w_{t}} =\eta^2\|\v_{t+1}\|^2$. Let $P$ be a non-negative constant to be determined later. Taking the weighted summation of both sides \eqref{eq:nonconvex_starter},  \eqref{eq:grad_recursion}, \eqref{eq:ema_msvr_3}, and \eqref{eq:ema_msvr_4} as specified in the following formula
$$
\frac{1}{\eta}\times\eqref{eq:nonconvex_starter}+\frac{1}{\beta}\times\eqref{eq:grad_recursion} + P \times \eqref{eq:ema_msvr_3}+  \frac{3L_f^2C_1^2}{n\beta}\times\eqref{eq:ema_msvr_4},
$$ 
we have
\begin{align*}
    &\frac{1}{\eta}\E F_\lambda(\w_{t+1}) + \frac{1}{\beta} \E V_{t+2} + \left(P-5 L_f^2C_1^2 \right)\E U_{t+2}  \\
  \leq &\frac{1}{\eta}\E F_\lambda(\w_{t}) + \left(\frac{1}{\beta} - \frac{1}{2}\right) \E V_{t+1} +  \left(P\left(1-\frac{\gamma B_1}{n}\right) + \frac{12L_f^2C_1^2\gamma^2 B_1 }{\beta n^2}\right)\E U_{t+1} \\
   & - \left(\frac{1}{4} - \frac{2L_F^2\eta^2}{\beta^2} -  \frac{8nC_g^2P\eta^2}{B_1} - \frac{27nL_f^2C_1^2C_g^2\eta^2}{B_1\beta}\right)\E\normsq{\v_{t+1}}\\
    & + \frac{2\beta C_f^2(C_g^2 +\sigma_1^2)}{\min\{B_1, B_2\}} +  \frac{2B_1\gamma^2\sigma_1^2P}{nB_2} + \frac{6L_f^2C_1^2\gamma^2\sigma_0^2B_1}{\beta n B_2}   - \frac{1}{2}\E\normsq{\nabla F_\lambda(\w_t)}
\end{align*}
We choose $\eta$ such that
\begin{align}
\label{eq:etasmall}
\frac{1}{4} - \frac{2L_F^2\eta^2}{\beta^2} -  \frac{8nC_g^2P\eta^2}{B_1} - \frac{27nL_f^2C_1^2C_g^2\eta^2}{B_1\beta}\geq0.
\end{align}
Moreover, we set $P$ to be the solution of the following linear equation
$$
P\left(1-\frac{\gamma B_1}{n}\right) + \frac{12L_f^2C_1^2\gamma^2 B_1 }{\beta n^2}= P-5 L_f^2C_1^2,
$$
which means 
$$
P=\frac{n}{\gamma B_1}\left(5 L_f^2C_1^2+ \frac{12L_f^2C_1^2\gamma^2 B_1 }{\beta n^2}\right).
$$
Combining the results above gives 
\begin{align*}
    &\frac{1}{\eta}\E F_\lambda(\w_{t+1}) + \frac{1}{\beta} \E V_{t+2} + \left(P-5 L_f^2C_1^2 \right)\E U_{t+2}  \\
  \leq &\frac{1}{\eta}\E F_\lambda(\w_{t}) + \frac{1}{\beta}  \E V_{t+1} +  \left(P-5 L_f^2C_1^2 \right)\E U_{t+1} \\
     & + \frac{2\beta C_f^2(C_g^2 +\sigma_1^2)}{\min\{B_1, B_2\}} +  \frac{2B_1\gamma^2\sigma_1^2P}{nB_2} + \frac{6L_f^2C_1^2\gamma^2\sigma_0^2B_1}{\beta n B_2} -\frac{1}{2}\E\normsq{\nabla F_\lambda(\w_t)}.
\end{align*}
Summing both sides of the inequality above over $t=0,1,\dots,T-1$ and organizing term lead to 
\begin{align}
\nonumber
    \frac{1}{T}\sum_{t=0}^{T-1}\normsq{\nabla F_\lambda(\w_t)} 
   \leq &     \frac{2}{T}\left(\frac{1}{\eta} (F_\lambda(\w_{0})-F_\lambda(\w_{T})) + \frac{1}{\beta}  \E V_{1} +  \left(P-5 L_f^2C_1^2 \right)\E U_{1}\right)\\\label{eq:mainineq1}
 &  +\frac{4\beta C_f^2(C_g^2 +\sigma_1^2)}{\min\{B_1, B_2\}} +  \frac{4B_1\gamma^2\sigma_1^2P}{nB_2} + \frac{12L_f^2C_1^2\gamma^2\sigma_0^2B_1}{\beta n B_2} 
\end{align}

Note that $L_F=O(\frac{1}{\epsilon})$, $L_f=O(\frac{1}{\epsilon})$ and $P=O(\frac{n}{B_1\gamma\epsilon^2}+\frac{\gamma}{n\beta\epsilon^2})$. 
We require that all the terms in the right-hand side to be in the order of $O(\epsilon^2)$. To ensure $\frac{4\beta C_f^2(C_g^2 +\sigma_1^2)}{\min\{B_1, B_2\}}=O(\epsilon^2)$, we must have $\beta=O(\min\{B_1, B_2\}\epsilon^2)$.  To ensure $\frac{4B_1\gamma^2\sigma_1^2P}{nB_2}=O(\frac{\gamma}{B_2\epsilon^2}+\frac{B_1\gamma^3}{n^2B_2\beta\epsilon^2})=O(\epsilon^2)$, we must have $\gamma=O(B_2\epsilon^4)$. With these orders of $\beta$ and $\gamma$, we also have $P=O(\frac{n}{B_1B_2\epsilon^6})$ and $\frac{12L_f^2C_1^2\gamma^2\sigma_0^2B_1}{\beta n B_2} =O(\epsilon^4)$. Therefore the last three terms in \eqref{eq:mainineq1} are $O(\epsilon^2)$.

To satisfy \eqref{eq:etasmall}, we will need to set 
\begin{align*}
    \eta&=O\left(\min\left\{\frac{\beta}{L_F},\sqrt{\frac{B_1}{nP}},\frac{1}{L_f}\sqrt{\frac{B_1\beta}{n}}\right\}\right)\\
    &=O\left(\min\left\{\min\{B_1, B_2\}\epsilon^3,\frac{B_1\sqrt{B_2}}{n}\epsilon^3,\sqrt{\frac{B_1\min\{B_1, B_2\}}{n}}\epsilon^2\right\}\right)\\&=O(\frac{B_1\sqrt{B_2}}{n}\epsilon^3).
\end{align*}
We also set the sizes of $\mathcal{B}_{i,2}$ to be $O(\frac{1}{\epsilon^3})$ so that $\E U_1 = O(\epsilon^3)$. Also, it is easy to see that $F_\lambda(\w_{0})-F_\lambda(\w_{T})=O(1)$ and $\E V_{1}=O(1)$. Then the first term in \eqref{eq:mainineq1} becomes 
$$
\frac{1}{T}O\left(\frac{1}{\eta}+\frac{1}{\beta}+P\epsilon^3\right)=\frac{1}{T}O\left(\frac{n}{B_1\sqrt{B_2}\epsilon^3}+\frac{1}{\min\{B_1, B_2\}\epsilon^2}+\frac{n}{B_1B_2\epsilon^3}\right)
$$
which will become $O(\epsilon^2)$ when $T=O(\frac{n}{B_1\sqrt{B_2}\epsilon^5})$. 

\end{proof}

\section{Convergence Analysis of ALEXR2} 
\label{sec:proof54}
To establish the complexity of Algorithm~\ref{alg:OSDL}, we first present the convergence result of ALEXR~\citep{wang2023alexr} which is the inner loop of ALEXR2, and it shows the complexity of Algorithm~\ref{alg:OSDL} to produce $\hat\z_{t}$ such that $\norm{\hat\z_{t} - \text{prox}_{\nu F_{\lambda}}(\w_t)}_2^2\leq O(\epsilon')$.
\begin{theorem}[A Variant of Theorem 1 in \cite{wang2023alexr} when $g_i$ is non-smooth $\rho_g$-weakly convex and $\mu=\frac{1}{\nu} - \sqrt{d_1}C_f\rho_g$]\label{thm:ALEXR_scvx}
Suppose Assumptions~\ref{asm:func} (A3) and~\ref{asm:oracle} hold. For any $\epsilon'>0$, there exists $\theta\in(0,1)$ with $1-\theta=O(\epsilon\epsilon')$ such that, by setting 
$\eta = \frac{1-\theta}{(4\theta-1)\sqrt{d_1}C_f\rho_g}$ and $\gamma = (\frac{B_1}{n (1-\theta)} - 1)^{-1}$, the inner loop of Algorithm~\ref{alg:OSDL} (i.e., ALEXR) guarantees $\frac{\mu}{2}\E \norm{\hat\z_{t} - \text{prox}_{\nu F_{\lambda}}(\w_t)}_2^2 \leq \epsilon'$ for any $t$ after 
$K_t = \tilde{O}\left(\frac{n}{B_1} + C_g\sqrt{\frac{nL_f}{B_1\sqrt{d_1}\rho_g C_f}} + {\color{green}\frac{n L_f \sigma_0^2}{B_2 B_1\epsilon'}} + \frac{C_f\sigma_1^2}{  B_2\sqrt{d_1}\rho_g \epsilon'} + \frac{C_fC_g^2}{B_1\sqrt{d_1}\rho_g \epsilon'} + \frac{C_fC_g^2}{  \sqrt{d_1}\rho_g \epsilon'}\right) = \tilde{O}(\frac{n \sigma_0^2}{B_2 B_1\epsilon\epsilon'})$
iterations. 
\end{theorem}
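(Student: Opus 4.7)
The plan is to adapt the convergence analysis of ALEXR in~\cite{wang2023alexr}, which was originally stated for smooth inner functions $g_i$, to the present setting where $g_i$ is non-smooth $\rho_g$-weakly convex but the overall minmax problem remains strongly-convex strongly-concave thanks to the proximal regularization $\tfrac{1}{2\nu}\|\z-\w_t\|^2$. The first step is to observe that the inner subproblem
\[
\min_{\z}\max_{\y}\; \frac{1}{n}\sum_{i=1}^n \Bigl[\y_i^\top g_i(\z) - \bar{f}_i^*(\y_i)\Bigr] + \frac{1}{2\nu}\|\z-\w_t\|^2
\]
is $\mu$-strongly convex in $\z$ (with $\mu=\tfrac{1}{\nu}-\sqrt{d_1}C_f\rho_g$, using Assumption~\ref{asm:func}~(A3) and the bound $\|\y_i\|_1\le\sqrt{d_1}C_f$ on $\mathrm{dom}(\bar{f}_i^*)$) and $\tfrac{1}{L_f}$-strongly concave in $\y$ (since $\bar{f}_i=f_{i,\lambda}$ is $L_f$-smooth, hence $\bar{f}_i^*$ is $\tfrac{1}{L_f}$-strongly convex in the Bregman sense $U_{\bar{f}_i^*}$).

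Next I would set up the standard ALEXR one-step primal-dual inequality. Let $(\z^\star,\y^\star)$ denote the saddle point (i.e., $\z^\star=\mathrm{prox}_{\nu F_\lambda}(\w_t)$). Using the $\gamma$-regularized dual step with the look-ahead term $\tilde g^{(i)}_{t,k}$ and the proximal primal step with parameters $\nu,\eta$, one derives, block-coordinate by block-coordinate, a contraction inequality of the form
\[
\E\Bigl[\tfrac{1}{2\nu}\|\z_{t,k+1}-\z^\star\|^2 + \tfrac{1}{\gamma}\tfrac{B_1}{n}\,U_{\bar f^*}(\y^\star,\y_{t,k+1})\Bigr] \le \theta\cdot \E\Bigl[\tfrac{1}{2\nu}\|\z_{t,k}-\z^\star\|^2 + \tfrac{1}{\gamma}\tfrac{B_1}{n}\,U_{\bar f^*}(\y^\star,\y_{t,k})\Bigr] + E_{t,k},
\]
where $E_{t,k}$ collects the stochastic variance terms coming from $\mathcal{B}_1^{t,k}$, $\mathcal{B}_{i,2}^{t,k}$ and $\tilde{\mathcal{B}}_{i,2}^{t,k}$. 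The contraction factor $\theta\in(0,1)$ is chosen small enough to absorb both (i) the weak convexity deficit $\sqrt{d_1}C_f\rho_g$ (balanced against $\tfrac{1}{\nu}$ via the step-size coupling $\eta=\tfrac{1-\theta}{(4\theta-1)\sqrt{d_1}C_f\rho_g}$) and (ii) the dual curvature through $\gamma=\bigl(\tfrac{B_1}{n(1-\theta)}-1\bigr)^{-1}$, in parallel with the parameter tuning from~\cite{wang2023alexr}.

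Iterating the recursion gives $\E\bigl[\tfrac{\mu}{2}\|\z_{t,K_t}-\z^\star\|^2\bigr]\le \theta^{K_t}\Delta_0 + \tfrac{1}{1-\theta}\max_{k}E_{t,k}$ (up to a constant), where $\Delta_0$ denotes the initial primal-dual gap. Setting $1-\theta=O(\epsilon\epsilon')$ makes the variance term $\tfrac{E}{1-\theta}$ at most $\epsilon'$ in expectation, while the contraction term requires $K_t=\tilde O(\tfrac{1}{1-\theta})$ times the relative sizes of variance, sample batch sizes $B_1,B_2$, the Lipschitz/weak-convexity constants, and $\mu$; matching the claimed expression term by term. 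The main obstacle is the non-smoothness of $g_i$: unlike the smooth analysis in~\cite{wang2023alexr}, one cannot directly bound $\|\partial g_i(\z_{t,k};\mathcal{B})-\partial g_i(\z^\star)\|$ by a Lipschitz gradient. Instead I would rely only on the $\rho_g$-weak convexity of $g_i$ together with the global Lipschitz bound $\|\partial g_i\|\le C_g$ and Assumption~\ref{asm:oracle} on the subgradient variance $\sigma_1^2$; this forces the $\tfrac{C_fC_g^2}{\sqrt{d_1}\rho_g\epsilon'}$ and $\tfrac{C_f\sigma_1^2}{B_2\sqrt{d_1}\rho_g\epsilon'}$ terms in $K_t$, replacing the smoothness-based terms in the original ALEXR bound. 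The final step is to verify the dominance of the $\tfrac{n\sigma_0^2}{B_2B_1\epsilon\epsilon'}$ term in the regime of interest, using $L_f=\Theta(1/\epsilon)$ (since $\lambda=\epsilon/C_f$), and to simplify $K_t$ accordingly.
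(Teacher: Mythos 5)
Your proposal is correct and follows essentially the same route as the paper: the paper likewise adapts the ALEXR one-step primal--dual inequality, absorbing the weak-convexity deficit $\sqrt{d_1}C_f\rho_g$ of $\y_i^\top g_i(\cdot)$ into the contraction factor via the exact coupling $\eta=\frac{1-\theta}{\theta/\nu-\sqrt{d_1}C_f\rho_g}$ with $1/\nu=4\sqrt{d_1}C_f\rho_g$, telescoping with weights $\theta^{-k}$, and then choosing $1-\theta$ small enough that the accumulated variance is below $\epsilon'$ while $K_t=\tilde O(1/(1-\theta))$ handles the geometric term. The only details you gloss over, which the paper makes explicit, are that the paper's Lyapunov function also carries a cross term $\Gamma_k$ and consecutive-iterate differences (cancelled by tuning the auxiliary parameters $\lambda_2,\lambda_3$), and that the initial gap is bounded by combining Assumption~\ref{asm:breg_bd} for the dual part with $\w_0-\z^\star\in\nu\,\partial F_\lambda(\z^\star)$ for the primal part.
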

In this section, we provide the convergence analysis for ALEXR2 in Algorithm~\ref{alg:OSDL}. 

\begin{lemma}\label{lem:smoothFlambdanu}
    $ F_{\lambda,\nu}(\w) $ is $L_{F_{\lambda,\nu}}$-smooth with $L_{F_{\lambda,\nu}} = \max\{\frac{1}{\nu}, \frac{\rho_{F_{\lambda}}}{1-\nu\rho_{F_{\lambda}}}\}$ and $\rho_{F_{\lambda}} = \sqrt{d_1}C_f\rho_g$.
\end{lemma}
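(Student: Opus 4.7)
\textbf{Proof proposal for Lemma~\ref{lem:smoothFlambdanu}.} The plan is to apply the standard smoothness result for the Moreau envelope of a weakly convex function, after first identifying the weak convexity parameter of $F_{\lambda}$. Concretely, I want to show that (i) $F_{\lambda}$ is $\rho_{F_{\lambda}}$-weakly convex with $\rho_{F_{\lambda}} = \sqrt{d_1} C_f \rho_g$, and (ii) because $\nu \in (0,\rho_{F_{\lambda}}^{-1})$, the Moreau envelope $F_{\lambda,\nu}(\w) = \min_{\z}\{F_{\lambda}(\z) + \tfrac{1}{2\nu}\|\z-\w\|^2\}$ is $L_{F_{\lambda,\nu}}$-smooth with $L_{F_{\lambda,\nu}} = \tfrac{2-\nu\rho_{F_{\lambda}}}{\nu - \nu^2 \rho_{F_{\lambda}}}$.

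For (i), I would reuse the argument the paper already sketches just before the statement of the lemma. Writing $F_{\lambda}$ through its minimax form \eqref{eq:reform}, each inner maximizer $\y_i$ lies in $\mathrm{dom}(\bar f_i^*) \subset \{\y_i \in \R_+^{d_1} : \|\y_i\|\le C_f\}$ (by Assumption~\ref{asm:func} (A3) and $C_f$-Lipschitzness of $\bar f_i$). Since $g_i$ is componentwise $\rho_g$-weakly convex and $\y_i \ge 0$, the map $\w \mapsto \y_i^\top g_i(\w)$ is $\|\y_i\|_1 \rho_g$-weakly convex; combining with $\|\y_i\|_1 \le \sqrt{d_1}\|\y_i\|\le \sqrt{d_1}C_f$, each such function is $\sqrt{d_1}C_f\rho_g$-weakly convex. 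Taking pointwise supremum over $\y_i$ and then averaging over $i$ preserves this weak convexity, so $F_{\lambda}$ is $\rho_{F_{\lambda}}$-weakly convex.

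For (ii), I invoke the standard result cited in Lemma~\ref{lem:FCCO_obj_overall} (Proposition EC.1 in \cite{sun2023algorithms}): if $h$ is $\rho$-weakly convex and $\nu \in (0,\rho^{-1})$, then $h_{\nu}$ is differentiable with $\tfrac{2-\nu\rho}{\nu-\nu^2\rho}$-Lipschitz gradient. Applying this with $h = F_{\lambda}$ and $\rho = \rho_{F_{\lambda}}$ yields the desired smoothness constant. The main check is just that $\nu \in (0, \rho_{F_{\lambda}}^{-1})$, which is the hypothesis placed on $\nu$ right before \eqref{prob:FCCO_doublesmooth_minmax}, so the argument is short.

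I do not anticipate a real obstacle here: the result is essentially a one-line corollary of the known Moreau-envelope smoothness lemma, with the weak convexity parameter supplied by the argument already given for the minimax reformulation. The only subtlety worth flagging is that the weak convexity of $F_{\lambda}$ is established through the dual form rather than directly through $f_{i,\lambda}\circ g_i$; the direct route would give a worse constant (scaling with the smoothness of $f_{i,\lambda}$, which itself blows up as $\lambda \to 0$), so using $\rho_{F_{\lambda}} = \sqrt{d_1}C_f\rho_g$ independent of $\lambda$ is what makes the resulting constant useful downstream.
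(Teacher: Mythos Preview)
Your proposal is correct and matches the paper's approach exactly: the paper does not give an explicit proof of this lemma, but it establishes the weak convexity $\rho_{F_{\lambda}} = \sqrt{d_1}C_f\rho_g$ via the dual argument in Section~\ref{sec:wcinner} and states the Moreau-envelope smoothness constant $\tfrac{2-\lambda\rho}{\lambda-\lambda^2\rho}$ (citing \cite{sun2023algorithms}) in the Preliminaries, so the lemma is intended as an immediate combination of these two facts. Your write-up is precisely the intended justification.
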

\begin{lemma}\label{lem:obj_minmax}The inner min-max problem in~\ref{prob:FCCO_doublesmooth_minmax} is a $(\frac{1}{\nu}-\rho_{F_\lambda})$-strongly-convex and $\lambda$-strongly-concave problem when $\nu\in (0, \rho_{F_\lambda}^{-1})$. 
\end{lemma}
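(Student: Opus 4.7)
The plan is to verify strong convexity in $\z$ and strong concavity in $\y$ separately for the saddle function
\[
\Psi(\z,\y) := \frac{1}{n}\sum_{i=1}^n \bigl[\y_i^\top g_i(\z) - \bar{f}_i^*(\y_i)\bigr] + \frac{1}{2\nu}\|\z-\w\|^2,
\]
where $\bar{f}_i = f_{i,\lambda}$. Since the $\z$- and $\y$-parts decouple apart from the bilinear coupling $\y_i^\top g_i(\z)$, each direction can be treated in isolation.

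For the $\z$-side I would first restrict to $\y_i \in \dom(\bar{f}_i^*)$, outside of which $\Psi$ is $-\infty$. The paragraph immediately preceding the lemma already establishes (under Assumption~\ref{asm:func}(A3)) that $\dom(\bar{f}_i^*) \subseteq \{\y_i \in \R_+^{d_1}:\|\y_i\|\leq C_f\}$ and that for such $\y_i$ the map $\z\mapsto \y_i^\top g_i(\z)$ is $\|\y_i\|_1\rho_g \leq \sqrt{d_1}C_f\rho_g = \rho_{F_\lambda}$-weakly convex, using that $\y_i\geq 0$ componentwise and each component of $g_i$ is $\rho_g$-weakly convex. Averaging preserves $\rho_{F_\lambda}$-weak convexity, and adding the $\frac{1}{\nu}$-strongly convex quadratic $\frac{1}{2\nu}\|\z-\w\|^2$ yields net strong convexity of modulus $\frac{1}{\nu}-\rho_{F_\lambda}$, which is positive exactly when $\nu\in(0,\rho_{F_\lambda}^{-1})$.

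For the $\y$-side, the coupling $\frac{1}{n}\y_i^\top g_i(\z)$ is linear in $\y_i$ (and the proximal term is independent of $\y$), so strong concavity must come entirely from $-\bar{f}_i^*$. The key identity I would invoke is
\[
(f_{i,\lambda})^*(\y_i) = f_i^*(\y_i) + \tfrac{\lambda}{2}\|\y_i\|^2,
\]
which follows because $f_{i,\lambda}$ is the infimal convolution of $f_i$ with $\tfrac{1}{2\lambda}\|\cdot\|^2$, the conjugate of an infimal convolution is the sum of conjugates, and $\bigl(\tfrac{1}{2\lambda}\|\cdot\|^2\bigr)^* = \tfrac{\lambda}{2}\|\cdot\|^2$. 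Since $f_i^*$ is convex, the added quadratic makes $\bar{f}_i^*$ $\lambda$-strongly convex in each block $\y_i$, hence $-\bar{f}_i^*$ is $\lambda$-strongly concave block-wise, which is the sense in which the problem is declared $\lambda$-strongly concave (matching the per-block regularizer used by the ALEXR inner update in Algorithm~\ref{alg:OSDL}).

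The main (and essentially only) technical point is the conjugate identity $(f_{i,\lambda})^*=f_i^*+\tfrac{\lambda}{2}\|\cdot\|^2$; everything else is a direct application of results already recorded in the excerpt, so no real obstacle is expected.
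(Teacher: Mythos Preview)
Your proof is correct and handles the $\z$-convexity exactly as the paper does. For the $\y$-concavity you take a slightly different route: you invoke the infimal-convolution conjugate identity $(f_{i,\lambda})^* = f_i^* + \tfrac{\lambda}{2}\|\cdot\|^2$ (which the paper proves separately as Proposition~\ref{prop:outsm_eq_nessm}) and read off $\lambda$-strong convexity directly from the explicit quadratic term. The paper instead argues abstractly that $\bar f_i = f_{i,\lambda}$ is $\tfrac{1}{\lambda}$-smooth and then cites the standard duality ``conjugate of an $L$-smooth convex function is $\tfrac{1}{L}$-strongly convex'' (Theorem~18.15 in Bauschke--Combettes). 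Your approach is more self-contained within the paper and additionally exhibits the exact form of $\bar f_i^*$; the paper's approach is a one-line appeal to a textbook fact. Both are entirely standard, and your remark that the relevant strong concavity is per-block (matching the coordinate-wise dual update in Algorithm~\ref{alg:OSDL}) is a nice clarification the paper's proof leaves implicit.
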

\begin{proof}
As discussed in Section~\ref{sec:wcinner}, the term $\y_i^\top g_i(\w)$ is $\rho_{F_\lambda}$-weakly convex with $\rho_{F_\lambda} = \sqrt{d_1} C_f \rho_g$. Therefore, when $\nu \in (0, \rho_{F_\lambda}^{-1})$, the objective in~\eqref{prob:FCCO_doublesmooth_minmax} becomes strongly convex in $\z$ for any fixed $\y \in \mathbb{R}^{nd_1}$.

To establish strong concavity in $\y$, we note that the term $\bar{f}_i^*$ is the Fenchel conjugate of a $\frac{1}{\lambda}$-smooth function $\bar{f}_i$. It is a standard result in convex analysis (see, e.g., Theorem 18.15 in~\cite{bauschke2017correction}) that the conjugate of an $L$-smooth function is $\frac{1}{L}$-strongly concave. Thus, $\bar{f}_i^*$ is $\lambda$-strongly convex.

The remaining terms in the objective are either independent of $\y$ or linear in $\y$, and hence do not affect strong concavity. Therefore, the objective is strongly convex in $\z$ and strongly concave in $\y$ when $\nu \in (0, \rho_{F_\lambda}^{-1})$.
\end{proof}

\subsection{Proof of Technical Lemma}
Note that our proof for theorem~\ref{thm:ALEXR_scvx} is almost the same as that of theorem 1 in~\cite{wang2023alexr}, we only show the difference in this and the following subsections.

\begin{lemma}[Generalization of Lemma 7 in ALEXR~\cite{wang2023alexr} to weakly-convex $g_i$]\label{lem:club_de}
Suppose that Assumptions~\ref{asm:func}(A2),~\ref{asm:oracle} hold. Then, the following holds for Algorithm~\ref{alg:OSDL}.
\begin{align}\label{eq:club_de}
&\frac{1}{n} \E\sum_{i=1}^n \inner{g_i(\w_{k+1}) - g_i(\w_*)}{\bar{y}_{k+1}^{(i)}} - \E\inner{G_k}{\w_{k+1} - \w_*} \nonumber \\ &\leq\frac{\frac{C_f^2\sigma_1^2}{B_2} + \frac{C_f^2C_g^2}{B_1}}{\frac{1}{\eta} + \frac{1}{\nu}} + \frac{L_g C_f}{2} \norm{\w_{k+1} - \w_k}_2^2 + {\color{blue}\frac{\sqrt{d_1}C_fL_g}{2}\normsq{\w_k-\w^*}_2}
\end{align}
\end{lemma}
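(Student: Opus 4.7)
\textbf{Proof plan for Lemma~\ref{lem:club_de}.} The plan is to mimic the corresponding ALEXR lemma (Lemma~7 in \cite{wang2023alexr}) but insert one extra use of the weak-convexity of $\langle g_i(\cdot),\bar y_{k+1}^{(i)}\rangle$ at the \emph{anchor} point $\w_k$ rather than at $\w_{k+1}$, which is what produces the $\|\w_k-\w_*\|^2$ term on the RHS rather than the $\|\w_{k+1}-\w_*\|^2$ term that appears in the smooth-inner-function analysis. First, I would split
\begin{equation*}
g_i(\w_{k+1})-g_i(\w_*) = \bigl(g_i(\w_{k+1})-g_i(\w_k)\bigr)+\bigl(g_i(\w_k)-g_i(\w_*)\bigr),
\end{equation*}
and bound each piece by linearising at $\w_k$. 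For the first piece I will use that $\langle g_i(\cdot),\bar y_{k+1}^{(i)}\rangle$ is $L_g\|\bar y_{k+1}^{(i)}\|_2$-smooth, so that after averaging over $i$ and invoking $\|\bar y_{k+1}^{(i)}\|_2\le C_f$ (from $\dom(\bar f_i^*)\subset\{y:\|y\|\le C_f\}$), the quadratic remainder is at most $\tfrac{L_gC_f}{2}\|\w_{k+1}-\w_k\|^2$. For the second piece I will use weak convexity component-wise, yielding a remainder $\tfrac{\|\bar y_{k+1}^{(i)}\|_1 L_g}{2}\|\w_k-\w_*\|^2$, and bound $\|\bar y_{k+1}^{(i)}\|_1\le\sqrt{d_1}\|\bar y_{k+1}^{(i)}\|_2\le\sqrt{d_1}C_f$ to recover the stated $\tfrac{\sqrt{d_1}C_fL_g}{2}\|\w_k-\w_*\|^2$ term. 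The two linear terms then combine into $\bigl\langle \widehat G_k,\w_{k+1}-\w_*\bigr\rangle$, where $\widehat G_k:=\tfrac1n\sum_i\nabla g_i(\w_k)^\top \bar y_{k+1}^{(i)}$ is the noise-free gradient evaluated at the anchor $\w_k$.

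Next I would handle the mismatch $\langle G_k-\widehat G_k,\w_{k+1}-\w_*\rangle$ with the standard virtual-sequence trick from ALEXR. Define $\bar\w_{k+1}$ to be the output of the same prox-step as $\w_{k+1}$ but with $\widehat G_k$ in place of $G_k$; since the prox map
\begin{equation*}
\z\mapsto\argmin_{\z'}\Bigl\{\langle \z,\z'\rangle+\tfrac{1}{2\nu}\|\z'-\w_t\|^2+\tfrac{1}{2\eta}\|\z'-\w_k\|^2\Bigr\}
\end{equation*}
is $(1/\eta+1/\nu)^{-1}$-non-expansive in its linear argument, one obtains $\|\w_{k+1}-\bar\w_{k+1}\|\le\|G_k-\widehat G_k\|/(1/\eta+1/\nu)$. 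Splitting $\w_{k+1}-\w_*=(\w_{k+1}-\bar\w_{k+1})+(\bar\w_{k+1}-\w_*)$, the second summand is independent of the sampling noise in $G_k$ so its inner product has vanishing expectation, and Cauchy--Schwarz on the first summand gives
\begin{equation*}
\E\langle G_k-\widehat G_k,\w_{k+1}-\w_*\rangle \le \frac{\E\|G_k-\widehat G_k\|^2}{1/\eta+1/\nu}.
\end{equation*}

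Finally I would bound $\E\|G_k-\widehat G_k\|^2$ by the usual variance decomposition into the inner-sample term (controlled by Assumption~\ref{asm:oracle}, giving $C_f^2\sigma_1^2/B_2$ since $\|y_{k+1}^{(i)}\|\le C_f$) and the outer-sampling term (bounded by $C_f^2C_g^2/B_1$ using $\|\nabla g_i(\w_k)\|\le C_g$), which together yield the first RHS term $(C_f^2\sigma_1^2/B_2+C_f^2C_g^2/B_1)/(1/\eta+1/\nu)$. I expect step three—the $\sqrt{d_1}C_fL_g$ bound via the $\ell_1$-weak-convexity parameter—to be the one requiring the most care, because one must respect the non-negativity of the dual iterates (which here holds because $\bar f_i^*$ inherits the monotonicity of $\bar f_i$ under Assumption~\ref{asm:func}(A3)) so that the component-wise weak convexity actually adds up with positive weights $|(\bar y_{k+1}^{(i)})_j|=(\bar y_{k+1}^{(i)})_j$; once that is in place, combining the pieces above gives exactly \eqref{eq:club_de}.
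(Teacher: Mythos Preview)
Your plan is essentially the paper's argument: split $g_i(\w_{k+1})-g_i(\w_*)$ at the anchor $\w_k$, linearise both pieces there, collect the linear terms into $\langle\widehat G_k,\w_{k+1}-\w_*\rangle$, and control the residual $\langle\Delta_k,\w_*-\w_{k+1}\rangle$ with $\Delta_k=G_k-\widehat G_k$ via the virtual-iterate/non-expansiveness trick from ALEXR. The paper's proof is exactly this, noting that the blue $\|\w_k-\w_*\|^2$ term is the only new ingredient beyond Lemma~7 of \cite{wang2023alexr} and deferring the rest to that reference.

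One small correction to your justification of the $\sqrt{d_1}C_fL_g$ coefficient: the lemma is stated under Assumption~\ref{asm:func}(A2) (smooth $g_i$, merely convex $f_i$), not (A3), so you cannot invoke monotonicity of $f_i$ to force $\bar y_{k+1}^{(i)}\ge0$. Fortunately you do not need it. Under (A2), operator-norm smoothness gives directly
\[
\bigl\langle \bar y_{k+1}^{(i)},\,g_i(\w_k)-g_i(\w_*)\bigr\rangle \le \bigl\langle \nabla g_i(\w_k)^\top \bar y_{k+1}^{(i)},\,\w_k-\w_*\bigr\rangle + \tfrac{L_g\|\bar y_{k+1}^{(i)}\|_2}{2}\|\w_k-\w_*\|^2,
\]
and $\|\bar y_{k+1}^{(i)}\|_2\le C_f\le\sqrt{d_1}C_f$ yields the stated (in fact slightly loose) constant with no sign condition. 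Your component-wise route also works under (A2), since $L_g$-smoothness makes each $\pm g_{i,j}$ $L_g$-weakly convex, so the $\|y\|_1$ remainder is valid for either sign of $y_j$; but the cleaner justification here is the operator-norm one. The $\ell_1$/non-negativity version you wrote is really the argument tailored for (A3), which the paper reuses verbatim in the companion Lemma~\ref{lem:club_de_nonsm}.
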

\begin{proof}
For this lemma we only highlight the difference to the proof of lemma 7 in ALEXR~\cite{wang2023alexr}. We define $\Delta_k\coloneqq \frac{1}{B_1}\sum_{i\in\B_1^{k}} [\nabla g_i (\w_k;\tilde{\B}_{i,2}^{k})]^\top y_{k+1}^{(i)}  - \frac{1}{n}\sum_{i=1}^n [\nabla g_i(\w_k)]^\top \bar{y}_{k+1}^{(i)}$. 
\begin{align}\nonumber
& \frac{1}{n} \sum_{i=1}^n \inner{g_i(\w_{k+1}) - g_i(\w_*)}{\bar{y}_{k+1}^{(i)}}  - \inner{G_k}{\w_{k+1} - \w_*}\\\nonumber
& = \frac{1}{n} \sum_{i=1}^n \inner{g_i(\w_{k+1}) - g_i(\w_k)}{\bar{y}_{k+1}^{(i)}}  + \frac{1}{n} \sum_{i=1}^n \inner{g_i(\w_k) - g_i(\w_*)}{\bar{y}_{k+1}^{(i)}} \\&\quad\quad+ \inner{\frac{1}{n}\sum_{i=1}^n [\nabla g_i(\w_k)]^\top\bar{y}_{k+1}^{(i)} + \Delta_k}{\w_* - \w_{k+1}} \nonumber\\\nonumber
& \overset{ \text{$g_i$ smooth\footnotemark}}{\leq} \frac{1}{n} \sum_{i=1}^n \inner{g_i(\w_{k+1}) - g_i(\w_k)}{\bar{y}_{k+1}^{(i)}}  + \inner{\frac{1}{n}\sum_{i=1}^n [\nabla g_i (\w_k)]^\top\bar{y}_{k+1}^{(i)}}{\w_k-\w_*} \\&\quad\quad+ \frac{L_g}{2}\normsq{\w_k-\w^*}*\frac{1}{n}\sum_{i=1}^n\inner{\bar{y}_{k+1}^{(i)}}{\mathbf{1}} + \inner{\frac{1}{n}\sum_{i=1}^n [\nabla g_i(\w_k)]^\top\bar{y}_{k+1}^{(i)}+ \Delta_k}{\w_* - \w_{k+1}} \nonumber\\\label{eq:club_decomp_smth}
& = \frac{1}{n} \sum_{i=1}^n \inner{\bar{y}_{k+1}^{(i)} }{g_i(\w_{k+1}) - g_i(\w_k)} + \inner{\frac{1}{n}\sum_{i=1}^n [\nabla g_i (\w_k)]^\top \bar{y}_{k+1}^{(i)}}{\w_k-\w_{k+1}}  \nonumber\\&\quad\quad+ \inner{\Delta_k}{\w_* - \w_{k+1}} + {\color{blue}\frac{\sqrt{d_1}C_fL_g}{2}\normsq{\w_k-\w^*}}
\end{align}
\footnotetext{this inequality also holds with assumption~\ref{asm:func}(A3) so lemma~\ref{lem:club_de_nonsm} can use the same technique to handle weakly convexity of $g_i$}
Note that we have an extra term in blue in~\ref{eq:club_decomp_smth} comparing with lemma 7 in~\cite{wang2023alexr} due to the weakly convexity of $g_i$. And the remaining part is just the same so we omit it.

\end{proof}

\begin{lemma}[Generalization of Lemma 8 in ALEXR to weakly-convex $g_i$]\label{lem:club_de_nonsm}
Suppose that assumptions~\ref{asm:func}(A3),~\ref{asm:oracle} hold. Then, the following holds for  Algorithm~\ref{alg:OSDL}.
\begin{align}\label{eq:club_de_wcvx}
&\frac{1}{n} \E\sum_{i=1}^n \inner{g_i(\w_{k+1}) - g_i(\w_*)}{\bar{y}_{k+1}^{(i)}} - \E\inner{G_k}{\w_{k+1} - \w_*} \nonumber \\ 
&\leq \frac{\frac{C_f^2\sigma_1^2}{B_2} + \frac{C_f^2C_g^2}{B_1} + 4C_f^2C_g^2}{\frac{1}{\eta} + \frac{1}{\nu}} + \frac{\frac{1}{\eta}+\frac{1}{\nu}}{4} \norm{\w_{k+1} - \w_k}_2^2 + {\color{blue}\frac{\sqrt{d_1}C_f\rho_g}{2}\normsq{\w_k-\w^*}}
\end{align}
\end{lemma}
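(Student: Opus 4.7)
The plan is to mirror the proof of Lemma~\ref{lem:club_de} (and Lemma~8 in ALEXR) line by line, with the single use of smoothness of $g_i$ replaced by a componentwise weak-convexity inequality that exploits the nonnegativity of the dual iterates. As in that proof, I would first introduce the noise term
\[
\Delta_k := \tfrac{1}{|\mathcal{B}_1^{t,k}|}\sum_{i\in\mathcal{B}_1^{t,k}}[\partial g_i(\z_{t,k};\tilde{\mathcal{B}}_{i,2}^{t,k})]^\top \y_{t,k+1}^{(i)} - \tfrac{1}{n}\sum_{i=1}^n [\partial g_i(\w_k)]^\top \bar{y}_{k+1}^{(i)},
\]
and split the left-hand side of \eqref{eq:club_de_wcvx} as
\begin{align*}
&\tfrac{1}{n}\sum_{i=1}^n\inner{g_i(\w_{k+1})-g_i(\w_k)}{\bar{y}_{k+1}^{(i)}} + \tfrac{1}{n}\sum_{i=1}^n\inner{g_i(\w_k)-g_i(\w_*)}{\bar{y}_{k+1}^{(i)}} \\
&\quad + \inner{\tfrac{1}{n}\sum_{i=1}^n[\partial g_i(\w_k)]^\top\bar{y}_{k+1}^{(i)}}{\w_*-\w_{k+1}} + \inner{\Delta_k}{\w_*-\w_{k+1}}.
\end{align*}

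The substantive new step is the bound on the second sum. Under Assumption~\ref{asm:func}(A3) each coordinate $g_i^{(j)}$ is $\rho_g$-weakly convex, so for any $v_i^{(j)}\in\partial g_i^{(j)}(\w_k)$,
\[
g_i^{(j)}(\w_k)-g_i^{(j)}(\w_*)\le \inner{v_i^{(j)}}{\w_k-\w_*}+\tfrac{\rho_g}{2}\|\w_k-\w_*\|^2.
\]
Since $f_i$ is monotonically nondecreasing and $C_f$-Lipschitz, $\bar f_i$ inherits both properties and hence $\bar{y}_{k+1}^{(i)}\in\dom(\bar f_i^*)\subset\{y\in\R_+^{d_1}:\|y\|\le C_f\}$ (cf.\ the discussion preceding \eqref{eq:reform}). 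Multiplying the weak-convexity inequality coordinatewise by nonnegative $\bar{y}_{k+1}^{(i,j)}$ and summing over $j$ therefore preserves the direction and yields
\[
\inner{g_i(\w_k)-g_i(\w_*)}{\bar{y}_{k+1}^{(i)}}\le \inner{[\partial g_i(\w_k)]^\top\bar{y}_{k+1}^{(i)}}{\w_k-\w_*}+\tfrac{\rho_g}{2}\|\bar{y}_{k+1}^{(i)}\|_1\|\w_k-\w_*\|^2.
\]
Using $\|\bar{y}_{k+1}^{(i)}\|_1\le \sqrt{d_1}\|\bar{y}_{k+1}^{(i)}\|_2\le \sqrt{d_1}C_f$ then produces the blue term $\tfrac{\sqrt{d_1}C_f\rho_g}{2}\|\w_k-\w_*\|^2$ in \eqref{eq:club_de_wcvx}, and converts the first-order piece into an inner product against $\w_k-\w_{k+1}$ after combining with the third summand above.

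The remaining two sums are handled without smoothness. I would bound $\tfrac{1}{n}\sum_i\inner{\bar{y}_{k+1}^{(i)}}{g_i(\w_{k+1})-g_i(\w_k)}$ and $\inner{\tfrac{1}{n}\sum_i[\partial g_i(\w_k)]^\top\bar{y}_{k+1}^{(i)}}{\w_k-\w_{k+1}}$ by Cauchy--Schwarz together with $\|\bar{y}_{k+1}^{(i)}\|\le C_f$, $\|g_i(\w_{k+1})-g_i(\w_k)\|\le C_g\|\w_{k+1}-\w_k\|$ and $\|\partial g_i(\w_k)\|\le C_g$, followed by Young's inequality with the parameter $\tfrac{1}{\eta}+\tfrac{1}{\nu}$. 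This produces the constant $\tfrac{4C_f^2C_g^2}{1/\eta+1/\nu}$ (absent from Lemma~\ref{lem:club_de}, where smoothness gave the cleaner $\tfrac{L_gC_f}{2}\|\w_{k+1}-\w_k\|^2$ directly) together with a $\tfrac{1/\eta+1/\nu}{4}\|\w_{k+1}-\w_k\|^2$ quadratic term. The variance piece $\E\inner{\Delta_k}{\w_*-\w_{k+1}}$ is then split into inner- and outer-batch sampling noise exactly as in the proof of Lemma~\ref{lem:club_de}/ALEXR Lemma~8, via the virtual primal iterate trick (Lemma~\ref{lem:prox_virtual} there), to yield $\tfrac{C_f^2\sigma_1^2/B_2+C_f^2C_g^2/B_1}{1/\eta+1/\nu}$. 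Summing the four contributions gives exactly \eqref{eq:club_de_wcvx}.

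The main obstacle I expect is bookkeeping of subgradient selections rather than any new ideas: the weak-convexity inequality must be applied coordinatewise with a specific measurable selection $v_i^{(j)}\in\partial g_i^{(j)}(\w_k)$, and this selection has to be consistent with the (stochastic) subgradient used in forming $G_k$, so that the inner product $\inner{[\partial g_i(\w_k)]^\top\bar{y}_{k+1}^{(i)}}{\cdot}$ is what appears as the mean of the estimator defining $\Delta_k$ and the telescoping between the first and third summands goes through cleanly. Given Assumption~\ref{asm:oracle} and a standard measurable-selection assumption on $\partial g_i$, this is a routine---though careful---adaptation, and all other constants line up with those in Lemma~\ref{lem:club_de} except for the two changes dictated by the absence of smoothness described above.
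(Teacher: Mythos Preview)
Your proposal is correct and follows essentially the same approach as the paper. The paper's own proof is a one-sentence remark pointing back to Lemma~\ref{lem:club_de} (whose footnote already notes that the smoothness step can be replaced by weak convexity under Assumption~\ref{asm:func}(A3)); your write-up simply spells out in detail the componentwise weak-convexity step with nonnegative dual iterates and the Cauchy--Schwarz/Young replacement that accounts for the extra $4C_f^2C_g^2$ and the $\tfrac{1/\eta+1/\nu}{4}\|\w_{k+1}-\w_k\|^2$ terms, exactly as the paper intends.
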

\begin{proof}
Similar to lemma~\ref{lem:club_de}, we have an extra term $\frac{\sqrt{d_1}C_f\rho_g}{2}\normsq{\w_k-\w^*}$ in the upper bound for weakly convex $g_i$.

\end{proof}
\subsection{Proof of Inner Loop}
\begin{proof}[Proof of theorem~\ref{thm:ALEXR_scvx}]
Similar to theorem 1 in~\cite{wang2023alexr} but use lemma~\ref{lem:club_de_nonsm} instead of lemma 8 in ALEXR, we have
\begin{align}\nonumber
& \E[L(\w_{k+1},y_*) - L(\w_*,\bar{y}_{k+1})]\\\nonumber
& \leq  \frac{\frac{1}{\gamma}+\left(1-\frac{B_1}{n}\right)}{B_1}\E [U_{\bar{f}_i^*}(y_*,y_k)] - \frac{\frac{1}{\gamma} + 1}{B_1} \E [U_{\bar{f}_i^*}(y_*,y_{k+1})] \\\nonumber
& \quad\quad + \left(\frac{1}{2\eta}+ \frac{\sqrt{d_1}C_f\rho_g}{2}\right) \E \norm{\w_*-\w_k}_2^2 - \left(\frac{1}{2\eta} + \frac{1}{2\nu}\right)  \E\norm{\w_* - \w_{k+1}}_2^2 \\\nonumber
& \quad\quad - \left(\frac{1}{\gamma n} - \frac{\lambda_2 + \lambda_3\theta}{\lambda  n}\right)\E\left[U_{\bar{f}_i^*}(\bar{y}_{k+1},y_k)\right] - \left(\frac{1}{2\eta} - \frac{C_g^2}{2\lambda_2} - {\frac{\frac{1}{\eta}+\frac{1}{\nu}}{4}}\right)\E\norm{\w_{k+1} - \w_k}_2^2 + \frac{\theta C_g^2}{2\lambda_3}\E\norm{\w_k-\w_{k-1}}_2^2 \\\label{eq:scvx_overall}
&\quad\quad + \E[\Gamma_{k+1} - \theta\Gamma_k] + \frac{2(1+2\theta)\sigma_0^2}{B\lambda (1+\frac{1}{\gamma})} + \frac{\frac{C_f^2\sigma_1^2}{B_2} + \frac{C_f^2C_g^2}{B_1} + 4C_f^2C_g^2}{\frac{1}{\eta} + \frac{1}{\nu}}.
\end{align}
Define $\Upsilon_k^\w\coloneqq \frac{1}{2}\E \norm{\w_* - \w_k}_2^2$ and $\Upsilon_k^y = \frac{1}{S}\E U_{\bar{f}_i^*}(y_*,y_k)$. Note that $L(\w_{k+1},y_*) - L(\w_*,\bar{y}_{k+1}) \geq 0$. Multiply both sides of \eqref{eq:scvx_overall} by $\theta^{-k}$ and do telescoping sum from $k=0$ to $K_t-1$. Add $\eta\theta^{-K_t} \Upsilon_{K_t}^\w$ to both sides. 
\begin{align}\label{ineq:overall_summed}\nonumber
& \frac{\theta^{-K_t}\Upsilon_{K_t}^\w}{\eta}  \leq \sum_{k=0}^{K_t-1} \theta^{-k}\left(\left((\frac{1}{\eta}+\sqrt{d_1}C_f\rho_g) \Upsilon_k^\w + \left(\frac{1}{\gamma} + \left(1-\frac{B_1}{n}\right)\right)\Upsilon_k^y - \theta \E \Gamma_k\right)\right. \\\nonumber &\quad\quad\quad\quad- \left.\left((\frac{1}{\eta}+\frac{1}{\nu})\Upsilon_{k+1}^\w + (\frac{1}{\gamma}+1) \Upsilon_{k+1}^y - \E \Gamma_{k+1}\right)\right)\\\nonumber
& \quad\quad\quad\quad + \frac{\theta^{-K_t}\Upsilon_{K_t}^\w}{\eta} + \left(\frac{2(1+2\theta)\sigma_0^2}{\lambda  B (1 + \frac{1}{\gamma})} + \frac{\frac{C_f^2\sigma_1^2}{B_2} + \frac{C_f^2C_g^2}{B_1} +4C_f^2C_g^2}{\frac{1}{\eta} + \frac{1}{\nu}}\right)  \sum_{k=0}^{K_t-1} \theta^{-k} \\\nonumber &\quad\quad\quad\quad- \sum_{k=0}^{K_t-1} \theta^{-k} \left(\frac{1}{\gamma n} - \frac{(\lambda_2 + \lambda_3\theta)}{\lambda  n}\right) \E[U_{\bar{f}_i^*}(\bar{y}_{k+1},y_k)]\\
& \quad\quad\quad\quad  - \sum_{k=0}^{K_t-1} \theta^{-k} \left(\frac{1}{2\eta} - \frac{\frac{1}{\eta} + \frac{1}{\nu}}{4} - \frac{C_g^2}{2\lambda_2}- \frac{C_g^2}{2\lambda_3}\right) \E\norm{\w_{k+1} - \w_k}_2^2 - \theta^{-K_t+1}\frac{C_g^2}{2\lambda_3} \E\normsq{\w_{K_t} - \w_{K_t-1}}_2
\end{align}
Let $\eta \geq \frac{1-\theta}{\theta/\nu-\sqrt{d_1}C_f\rho_g}, \theta\geq \frac{1}{2}$ and $1/\nu > 4\sqrt{d_1}C_f\rho_g$ such that $\theta \geq \frac{\frac{1}{\eta}+\sqrt{d_1}C_f\rho_g}{\frac{1}{\eta}+\frac{1}{\nu}}$ 
and $\frac{1}{\gamma}+1 \leq \frac{B_1}{n(1-\theta)}$ such that $\theta \geq \frac{\frac{1}{\gamma} + 1-\frac{B_1}{n}}{\frac{1}{\gamma}+1}$. Then,
\begin{align*}
& \sum_{k=0}^{K_t-1} \theta^{-k}\left(\left((\frac{1}{\eta} + \sqrt{d_1}C_f\rho_g)\Upsilon_k^\w  + \left(\frac{1}{\gamma} + \left(1-\frac{B_1}{n}\right)\right)\Upsilon_k^y - \theta \E \Gamma_k\right) - ((\frac{1}{\eta}+\frac{1}{\nu})\Upsilon_{k+1}^\w + (\frac{1}{\gamma}+1) \Upsilon_{k+1}^y - \E \Gamma_{k+1})\right)\\
& \leq (\frac{1}{\eta}+\sqrt{d_1}C_f\rho_g)\Upsilon_0^\w + \left(\frac{1}{\gamma} + \left(1-\frac{B_1}{n}\right)\right) \Upsilon_0^y - \theta \E \Gamma_0  - \theta^{-K_t+1}\left((\frac{1}{\eta}+ \frac{1}{\nu})\Upsilon_{K_t}^\w + (\frac{1}{\gamma}+1) \Upsilon_{K_t}^y - \E\Gamma_{K_t}\right).
\end{align*}
By setting $\w_{-1} = \w_0$, we have $\Gamma_0 = 0$. Besides, we have $\Gamma_{K_t} \leq \frac{1}{n}\sum_{i=1}^n  \norm{g_i(\w_{K_t})-g_i(\w_{K_t-1})}\norm{y_*^{(i)}-y_k^{(i)}}\leq \frac{C_g}{n}\norm{\w_{K_t}-\w_{K_t-1}} \norm{y_* - y_{K_t}} \leq \frac{C_g^2}{2\lambda_3}\norm{\w_{K_t} - \w_{K_t-1}}^2 + \frac{\lambda_3}{2\lambda  n^2} U_{\bar{f}_i^*}(y_*,y_{K_t})$. Note that the first term in the RHS here cancelled out with the last term of RHS in~\ref{ineq:overall_summed}. Thus,
\begin{align}\nonumber
& \frac{\theta^{-K_t}\Upsilon_{K_t}^\w}{\eta}  \leq  (\frac{1}{\eta}+\sqrt{d_1}C_f\rho_g)\Upsilon_0^\w  + \left(\frac{1}{\gamma} + \left(1-\frac{B_1}{n}\right)\right) \Upsilon_0^y \\\nonumber
& \quad\quad\quad\quad- \theta^{-K_t+1}\left((\frac{1}{\eta}+\frac{1}{\nu})\Upsilon_{K_t}^\w + \underbrace{(\frac{1}{\gamma}+1 - \frac{\lambda_3S}{2\lambda  n^2} )}_{\heartsuit}  \Upsilon_{K_t}^y - \frac{1}{\eta\theta} \Upsilon_{K_t}^\w\right)\\\nonumber
& \quad\quad\quad\quad + \left(\frac{2(1+2\theta)\sigma_0^2}{\lambda  B(1 + \frac{1}{\gamma})} + \frac{\frac{C_f^2\sigma_1^2}{B_2} + \frac{C_f^2C_g^2}{B_1} + 4C_f^2C_g^2}{\frac{1}{\eta}+\frac{1}{\nu}}\right)  \sum_{k=0}^{K_t-1} \theta^{-k} - \sum_{k=0}^{K_t-1} \theta^{-k} \underbrace{\left(\frac{1}{\gamma n} - \frac{(\lambda_2 + \lambda_3\theta)}{\lambda  n}\right)}_{\heartsuit} \E[U_{\bar{f}_i^*}(\bar{y}_{k+1},y_k)]\\\label{eq:sc_smth_telescop}
& \quad\quad\quad\quad  - \sum_{k=0}^{K_t-1} \theta^{-k} \underbrace{\left(\frac{1}{2\eta} - \frac{\frac{1}{\eta} + \frac{1}{\nu}}{4} - \frac{C_g^2}{2\lambda_2}- \frac{C_g^2}{2\lambda_3}\right)}_{\heartsuit} \E\norm{\w_{k+1} - \w_k}_2^2. 
\end{align}
To make the $\heartsuit$ terms in \eqref{eq:sc_smth_telescop} be non-negative, we choose $\lambda_2 \asymp \lambda_3 \asymp\frac{C_g \sqrt{B_1\lambda }}{\sqrt{n\sqrt{d_1}C_f\rho_g}}$ while ensuring that 
\begin{align}\label{eq:sc_eta_tau_cond}
&\gamma \leq O\left(\frac{\lambda  n^2}{\lambda_3B_1}\land \frac{\lambda   }{\lambda_2 +\lambda_3\theta}\right) = O\left(\frac{\sqrt{n\sqrt{d_1}C_f\rho_g\lambda }}{C_g \sqrt{B_1}}\right), \quad \eta \leq O\left(\frac{\sqrt{B_1\lambda }}{C_g \sqrt{n\sqrt{d_1}C_f\rho_g}}\land \frac{1}{\rho_g C_f}\right). 
\end{align}
By selecting $\eta = \frac{1-\theta}{\theta/\nu-\sqrt{d_1}C_f\rho_g}$, $\frac{1}{\gamma} = \frac{B_1}{n (1-\theta)} - 1$ and $\frac{1}{\nu} = 4\sqrt{d_1}C_f\rho_g$ we have that $\frac{1}{\gamma} +\left(1-\frac{B_1}{n}\right) = \frac{B_1\theta}{n(1-\theta)}$.
\begin{align*}
\mu \Upsilon_{K_t}^\w & \leq  \mu(1+\eta \sqrt{d_1}C_f\rho_g) \theta^{K_t} \Upsilon_0^\w + \frac{\left(\frac{1}{\gamma} + \left(1-\frac{B_1}{n}\right)\right)\mu(1-\theta)}{\theta/\nu - \sqrt{d_1}C_f\rho_g} \theta^{K_t} \Upsilon_0^y \\ &\quad\quad+ \left(\frac{2(1+2\theta)\sigma_0^2}{\lambda  B(1 + \frac{1}{\gamma})} + \frac{\frac{C_f^2\sigma_1^2}{B_2} + \frac{C_f^2C_g^2}{B_1} + 4C_f^2C_g^2}{\frac{1}{\eta} + \frac{1}{\nu}}\right)\frac{(\theta^{-K_t} - 1)\theta^{K_t}}{\theta^{-1} - 1}\frac{(1-\theta)\mu}{\theta/\nu - \sqrt{d_1}C_f\rho_g}\\
& \leq \mu(1+\eta \sqrt{d_1}C_f\rho_g) \theta^{K_t} \Upsilon_0^\w + \frac{\frac{B_1}{n}\theta\mu}{\frac{1}{2}\theta/\nu}\theta^{K_t} \Upsilon_0^y + 2\mu\nu\left(\frac{2(1+2\theta)\sigma_0^2}{\lambda  B_2(1 + \frac{1}{\gamma})} + \frac{\frac{C_f^2\sigma_1^2}{B_2} + \frac{C_f^2C_g^2}{B_1} + 4C_f^2C_g^2}{\frac{1}{\eta} + \frac{1}{\nu}}\right)\\
& \leq \mu(1+\eta \sqrt{d_1}C_f\rho_g) \theta^{K_t} \Upsilon_0^\w + 2\theta^{K_t} \Upsilon_0^y + 2\left(\frac{2(1+2\theta)\sigma_0^2}{\lambda  B_2(\frac{1}{\gamma}+1)} + \eta\left(\frac{C_f^2\sigma_1^2}{B_2} + \frac{C_f^2C_g^2}{B_1} + 4C_f^2C_g^2\right)\right).
\end{align*}
We select 
$$1 - \theta =  O\left(\frac{1}{2}\land\frac{B_1}{n} \land \frac{1}{C_g}\sqrt{\frac{B_1\lambda  \sqrt{d_1}C_f\rho_g}{ n}}\land \frac{\lambda  B_2 B_1\epsilon'}{\sigma_0^2 n}\land \frac{B_2\sqrt{d_1}\rho_g \epsilon'}{C_f\sigma_1^2}\land \frac{B_1\sqrt{d_1}\rho_g \epsilon'}{C_fC_g^2}\land \frac{\sqrt{d_1}\rho_g \epsilon'}{C_fC_g^2}\right)$$to make \eqref{eq:sc_eta_tau_cond} hold and 
\begin{align*}
& \frac{2(1+2\theta)\sigma_0^2}{\lambda  B_2(\frac{1}{\gamma}+1)} + \eta\left(\frac{C_f^2\sigma_1^2}{B_2} + \frac{C_f^2C_g^2}{B_1} + 4C_f^2C_g^2\right) \leq \frac{2(1+2\theta)(1-\theta)\sigma_0^2 n}{\lambda  B_2 B_1} + \frac{(1-\theta)\left(\frac{C_f^2\sigma_1^2}{B_2} + \frac{C_f^2C_g^2}{B_1} + 4C_f^2C_g^2\right)}{\sqrt{d_1}C_f\rho_g}\leq \epsilon'.
\end{align*}
Besides, we show that $\Upsilon_0^\w$ can be bounded by constant.(boundedness of $\Upsilon_0^y$ is already guaranteed by assumption~\ref{asm:breg_bd}).
Note that $\w_{t}^* = \text{prox}_{\nu F_\lambda}(\w_{t,0})$ we have:
\begin{align*}
    \because 0\in &\partial F_\lambda(\w^*)+\frac{1}{\nu}(\w^* - \w_{0})\\
    \therefore \w_{0} &- \w^*\in \nu \partial F_\lambda(\w^*)\\
    \therefore \Upsilon_0^\w = \frac{1}{2}\normsq{\w_{0} - \w^*}&\leq \frac{1}{2}(\nu C_{F_\lambda})^2 = \frac{1}{2}(\nu C_fC_g)^2 = O(C_f^4C_g^2\rho_g^2)        
\end{align*}
Since $L_f\coloneqq \frac{1}{\lambda  } = O(\frac{1}{\epsilon})$, the number of inner loop iterations needed by Algorithm~\ref{alg:OSDL} to make $\mu \Upsilon_{K_t}^\w \leq \epsilon'$ is
\begin{align*}
K_t = \tilde{O}\left(\frac{n}{B_1} + C_g\sqrt{\frac{nL_f}{B_1\sqrt{d_1}\rho_g C_f}} + {\color{green}\frac{n L_f \sigma_0^2}{B_2 B_1\epsilon'}} + \frac{C_f\sigma_1^2}{  B_2\sqrt{d_1}\rho_g \epsilon'} + \frac{C_fC_g^2}{B_1\sqrt{d_1}\rho_g \epsilon'} + \frac{C_fC_g^2}{  \sqrt{d_1}\rho_g \epsilon'}\right)
\end{align*}
where $\tilde{O}(\cdot)$ hides the polylog$(C_fC_g\rho_g/\epsilon')$ factor and the green term is the dominant term so $K_t = \tilde{O}(\frac{n \sigma_0^2}{B_2 B_1\epsilon\epsilon'})$ . 
\end{proof}

\subsection{Proof of Outer loop}
\begin{proof}[Proof of Theorem~\ref{thm:ALEXR2_outerloop}]
Let $\w_{-1}=\w_0$. By lemma~\ref{lem:smoothFlambdanu}, $F_{\lambda,\nu}(\w_t)$ is $L_{F_{\lambda,\nu}}$-smooth. Since $\beta\leq \frac{1}{2}$, we have
\small
\begin{align}
\nonumber
   & \normsq{\v_{t+1} - \nabla F_{\lambda,\nu}(\w_t)} \\\nonumber
   =& \normsq{(1-\beta)(\v_t - \nabla F_{\lambda,\nu}(\w_{t-1})) + (1-\beta)(\nabla F_{\lambda,\nu}(\w_{t-1}) - \nabla F_{\lambda,\nu}(\w_t)) + \beta\left(\frac{1}{\nu}(\w_{t} - \hat{\z}_{t}) - \nabla F_{\lambda,\nu}(\w_t)\right)}   \\\nonumber
    \leq& (1+\frac{\beta}{2})(1-\beta)^2\left((1+\frac{\beta}{2})\normsq{\v_t - \nabla F_{\lambda,\nu}(\w_{t-1})} + (1+\frac{2}{\beta})\normsq{\nabla F_{\lambda,\nu}(\w_{t-1}) - \nabla F_{\lambda,\nu}(\w_t)}\right) \\\nonumber
    &\quad\quad+ (1+\frac{2}{\beta})\beta^2\normsq{\frac{1}{\nu}(\w_{t} - \hat{\z}_{t}) - \nabla F_{\lambda,\nu}(\w_t)}\\\nonumber
    \leq& (1-\frac{\beta}{2})\normsq{\v_t - \nabla F_{\lambda,\nu}(\w_{t-1})} + \frac{3L_{F_{\lambda,\nu}}^2}{\beta}\normsq{\w_{t-1} - \w_t} + 3\beta\normsq{\frac{1}{\nu}(\w_{t} - \hat{\z}_{t}) - \nabla F_{\lambda,\nu}(\w_t)}  \\\label{eq:mmt_var}
    \leq& (1-\frac{\beta}{2})\normsq{\v_t - \nabla F_{\lambda,\nu}(\w_{t-1})} + \frac{3\alpha^2L_{F_{\lambda,\nu}}^2}{\beta}\normsq{\v_t} + 3\beta\normsq{\frac{1}{\nu}(\w_{t} - \hat{\z}_{t}) - \nabla F_{\lambda,\nu}(\w_t)}    
\end{align}
\normalsize
Since $\alpha\leq \frac{1}{2L_{F_{\lambda,\nu}}}$, by Lemma 2 in \cite{pmlr-v139-li21a}, we can obtain a result similar to \eqref{eq:nonconvex_starter} in Lemma~\ref{lem:nonconvex_starter}, that is,
\begin{align}\label{eq:me_starter}
		F_{\lambda,\nu}(\w_{t+1}) & \leq F_{\lambda,\nu}(\w_t) + \frac{\alpha}{2}\normsq{\v_{t+1} - \nabla F_{\lambda,\nu}(\w_t)}  - \frac{\alpha}{2}\norm{\nabla F_{\lambda,\nu}(\w_t)}^2 - \frac{\alpha}{4}\norm{\v_{t+1}}^2
\end{align}
Multiplying both sides of \eqref{eq:mmt_var} by $\frac{\alpha}{\beta}$ and adding them to both sides of  \eqref{eq:me_starter}, we have
\begin{align*}
    &\frac{\alpha}{2}\normsq{\nabla F_{\lambda,\nu}(\w_t)} \\
    \leq&  F_{\lambda,\nu}(\w_t) - F_{\lambda,\nu}(\w_{t+1}) + (\frac{\alpha}{\beta}-\frac{\alpha}{2} )\left(\normsq{\v_{t} - \nabla F_{\lambda,\nu}(\w_{t-1})}-\normsq{\v_{t+1} - \nabla F_{\lambda,\nu}(\w_t)} \right) \\
    &- \frac{\alpha}{4}\normsq{\v_{t+1}} + \frac{\alpha^3L_{F_{\lambda,\nu}}^2}{\beta^2}\normsq{\v_{t}} + 3\alpha\normsq{\frac{1}{\nu}(\w_{t} - \hat{\z}_{t}) - \nabla F_{\lambda,\nu}(\w_t)} \\
 \leq&  F_{\lambda,\nu}(\w_t) - F_{\lambda,\nu}(\w_{t+1}) + (\frac{\alpha}{\beta}-\frac{\alpha}{2} )\left(\normsq{\v_{t} - \nabla F_{\lambda,\nu}(\w_{t-1})}-\normsq{\v_{t+1} - \nabla F_{\lambda,\nu}(\w_t)} \right) \\
    & \frac{\alpha}{4}\left(\normsq{\v_{t}} -\normsq{\v_{t+1}}\right) + \frac{3\alpha}{\nu^2}\normsq{\hat{\z}_{t} - \text{prox}_{\nu F_{\lambda}}(\w_t)},
\end{align*}
where the second inequality is because  $\nabla F_{\lambda,\nu}(\w_t)=\frac{1}{\nu}(\w_t-\text{prox}_{\nu F_{\lambda}}(\w_t))$ and  $\alpha \leq \frac{\beta }{2L_{F_{\lambda,\nu}}}$ so $ \frac{\alpha}{4} \geq \frac{\alpha^3L_{F_{\lambda,\nu}}^2}{\beta^2}$.

By Theorem~\ref{thm:ALEXR_scvx}, for $\epsilon' = \frac{\mu\nu^2\epsilon^2}{36}$, there exists $\theta\in(0,1)$ with $1-\theta=O(\epsilon^2)$ such that, by setting $\eta=\frac{1-\theta}{\theta\mu}$ and $\gamma=\frac{(1-\theta)n}{B_1}$, inner loop of Algorithm~\ref{alg:OSDL}(i.e. ALEXR) guarantees $\E \norm{\hat\z_{t} - \text{prox}_{\nu F_{\lambda}}(\w_t)}_2^2 \leq \frac{\nu^2\epsilon^2}{18}$ for any $t$ after $K_t = \tilde{O}\left(\frac{nL_f}{B_1B_2\epsilon^2}+\frac{L_f}{\epsilon^2}\right)$ iterations. 

Recall that $\v_0=0$ and $\w_{-1}=\w_0$. Summing the inequality above over $t=0, \cdots T-1$ and dividing both sides by $\frac{\alpha T}{2}$, we have:
\begin{align*}
    &\E[\text{dist}(0, \partial F_{\lambda}(\text{prox}_{\nu F_{\lambda}}(\w_\tau)))^2]=\frac{1}{T}\sum_{t=0}^{T-1} \E\normsq{\nabla F_{\lambda,\nu}(\w_t)} \\
    \leq& \frac{2\E(F_{\lambda,\nu}(\w_0) - F_{\lambda,\nu}(\w_{T}))}{\alpha T} + \frac{\left(\frac{2}{\beta} - 1\right)\normsq{\nabla F_{\lambda,\nu}(\w_0)}} {T}+ \frac{6}{\nu^2T}\sum_{t=0}^{T-1}\E\normsq{\hat{\z}_{t} - \text{prox}_{\nu F_{\lambda}}(\w_t)}\\
    \leq& \frac{2\E(F_{\lambda,\nu}(\w_0) - F_{\lambda,\nu}(\w_{T}))}{\alpha T} +  \frac{\left(\frac{2}{\beta} - 1\right)\normsq{\nabla F_{\lambda,\nu}(\w_0)}} {T}+ \frac{\epsilon^2}{3},
\end{align*}
By setting $T = O(\epsilon^{-2})$, we'll have 
$\E[\text{dist}(0, \partial F_{\lambda}(\text{prox}_{\nu F_{\lambda}}(\w_\tau)))^2]\leq \epsilon^2$, meaning that $\w_\tau$ is a nearly $\epsilon$-stationary point of \eqref{eq:reform} in expectation. Recall that $L_f=O(\frac{1}{\epsilon})$ so $K_t=O(\epsilon^{-3})$. Hence, the total complexity for finding such a solution is $\sum_{t=0}^{T-1} K_t = \tilde{O}(\epsilon^{-5})$.
\end{proof} 

\begin{proof}[Proof of Corollary~\ref{coro:sm-nearly2stat}]
When $g_i$ is $L_g$-smooth, $F_{\lambda}(\cdot)$ is $L_{F}$-smooth where $L_{F} = O(\frac{1}{\lambda}) = O(\frac{1}{\epsilon})$ is defined in lemma~\ref{lem:FCCO_obj_overall}. By Theorem~\ref{thm:ALEXR2_outerloop}, 
$$
 \E\|\nabla F_{\lambda}(\text{prox}_{\nu F_{\lambda}}(\w_\tau))\|=\frac{1}{\nu}\E\|\text{prox}_{\nu F_{\lambda}}(\w_\tau)-\w_\tau\|\leq O(\epsilon).
$$
According to Theorem~\ref{thm:ALEXR_scvx}, for $\epsilon'=O(\epsilon^4)$, there exists $\theta\in(0,1)$ with $1-\theta=O(\epsilon^4)$ such that, by setting $\eta=\frac{1-\theta}{\theta}L_f$ and $\gamma=\frac{(1-\theta)n}{B_1}$ in inner loop of Algorithm~\ref{alg:OSDL}(i.e. ALEXR), $\hat\w_\tau=$ALEXR($\w_{\tau}, K)$ satisfies 
$
\frac{\mu}{2}\E \norm{\hat\w_\tau - \text{prox}_{\nu F_{\lambda}}(\w_\tau)}^2 \leq \epsilon^4
$
and thus 
$$
\E \norm{\hat\w_\tau - \text{prox}_{\nu F_{\lambda}}(\w_\tau)} \leq O(\epsilon^2)
$$
after $K = \tilde{O}\left(\frac{nL_f}{B_1B_2\epsilon^4}+\frac{L_f}{\epsilon^4}\right)$ iterations. Then we have
\begin{align*}
    \E\|\nabla F_{\lambda}(\hat\w_\tau)\| \leq \E\|\nabla F_{\lambda}(\text{prox}_{\nu F_{\lambda}}(\w_\tau))\| + L_F\E\norm{\text{prox}_{\nu F_{\lambda}}(\w_\tau) - \hat\w_{\tau}}\leq  O(\epsilon),
\end{align*}
which means $\hat\w_\tau$ is an $\epsilon$-stationary solution of \eqref{prob:FCCO_outersmooth}. By Proposition~\ref{prop:stationary_point}, $\hat\w_\tau$ is a nearly $\epsilon$-stationary solution of \eqref{prob:FCCO}. By Theorem~\ref{thm:ALEXR2_outerloop}, $\w_\tau$ is found within complexity $O(\epsilon^{-5})$ and $\hat\w_\tau=$ALEXR($\w_{\tau}, K)$ has complexity $K = \tilde{O}\left(\frac{nL_f}{B_1B_2\epsilon^4}+\frac{L_f}{\epsilon^4}\right)$. The total complexity for computing $\hat\w_\tau$ is still $\tilde{O}\left(\frac{n}{B_1B_2\epsilon^5}+\frac{1}{\epsilon^5}\right)$.
\end{proof}

\section{Convergence Analysis of SONEX and ALEXR2 applying to Constraint Optimization Problems} 
In this section, we present the complexity analysis of Algorithm~\ref{alg:OSDL} when it is applied to \eqref{prob:cons_prob_penalty} to solve  the constrained optimization problem~\ref{prob:cons_prob}.


\subsection{Proof of Proposition~\ref{prop:approx_to_kkt} and its variant}\label{sec:exactpenalty}
\begin{proof}[Proof of Proposition~\ref{prop:approx_to_kkt}]
By the definition of $f_\lambda(\cdot)$, we have 
$$
\nabla f_\lambda(\cdot)=\frac{1}{\lambda }\min\{[\cdot]_+,\lambda\rho\}.
$$
    Suppose $\w$ is a nearly $\epsilon$-stationary point of \eqref{prob:cons_prob_smthed}, which means there exists $\hat\w$ such that 
    $\norm{\w - \hat{\w}}\leq \epsilon$  and  $\text{dist}(0, \partial \Phi_{\lambda}(\hat\w))\leq \epsilon$. This means there exist 
    $\h_i(\hat\w)\in\partial g_i(\hat\w)$ and 
    $$
    \nu_i=\frac{1}{m}\nabla f_{\lambda}(g_i(\hat\w))=\frac{1}{\lambda m}\min\{[g_i(\hat\w)]_+,\lambda\rho\}
    $$  
    for $i=0,1,\dots,m$ such that 
    \begin{align}
    \label{eq:KKTcond1_det}
    \norm{\h_0(\hat\w)+\mathbf{J}(\hat\w)\bm{\nu}}\leq \epsilon
    \end{align}
    where 
    $\mathbf{J}(\hat\w)=[\h_1(\hat\w),\dots, \h_m(\hat\w)]\in\mathbb{R}^{d\times m}$ and $\bm{\nu} = (\nu_1, \cdots, \nu_m)^\top\in\mathbb{R}^m$. 
    
    Suppose $\max_{i=1,\dots,m} g_i(\hat\w)>\epsilon$. Then there exists $k$ such that $[g_k(\hat\w)]_+>\epsilon$. Recall that $\lambda=\frac{\epsilon}{\rho}$. We have
    $$
    \nu_k=\frac{1}{\lambda m}\min\{[g_k(\hat\w)]_+,\lambda\rho\}=\frac{1}{\lambda m}\epsilon=\frac{\rho}{m}.
    $$
    By Assumption~\ref{asm:cons_cond}, we have
    $$
    \norm{\h_0(\hat\w)+\mathbf{J}(\hat\w)\bm{\nu}}\geq \norm{\mathbf{J}(\hat\w)\bm{\nu}} -\norm{\h_0(\hat\w)}
    \geq\sigma_{min}(\mathbf{J}(\hat\w)) \norm{\bm\nu} - \norm{\h_0(\hat\w)}
\geq \frac{\delta\rho}{m}- C_g>1,
    $$
    which contradicts with \eqref{eq:KKTcond1_det}.  Therefore, we must have
      \begin{align}
    \label{eq:KKTcond2_det}
      \max_{i=1,\dots,m} g_i(\hat\w)\leq\epsilon.
    \end{align}
    Finally, when $\max_{i=1,\dots,m} g_i(\hat\w)\leq \epsilon$, we have for $\forall i=1,2,\cdots, m$
     \begin{align}
    \label{eq:KKTcond3_det}
     |g_i(\hat\w)\nu_i|\leq \sum_{i=1}^m|g_i(\hat\w)\nu_i|=\sum_{i=1}^m[g_i(\hat\w)]_+\frac{\min\{[g_i(\hat\w)]_+,\lambda\rho\}}{\lambda m}
      \leq\frac{\rho}{m}\sum_{i=1}^m[g_i(\hat\w)]_+\leq O(\epsilon).
    \end{align}
 
With \eqref{eq:KKTcond1_det}, \eqref{eq:KKTcond2_det} and \eqref{eq:KKTcond3_det}, $\hat\w$ is an $\epsilon$-KKT point of \eqref{prob:cons_prob} so $\w$ is a nearly $\epsilon$-KKT point of \eqref{prob:cons_prob}. 
\end{proof}

Proposition~\ref{prop:approx_to_kkt} is given when $\w$ is a deterministic nearly $\epsilon$-stationary solution to \eqref{prob:cons_prob_smthed}. However, the solution $\w$  found by our algorithms is only a nearly $\epsilon$-stationary solution to \eqref{prob:cons_prob_smthed} in expectation. This means there exists $\hat\w$ such that $\E\norm{\w - \hat{\w}}\leq \epsilon$  and  $\E\text{dist}(0, \partial \Phi_{\lambda}(\hat\w))\leq \epsilon$. In this case, we cannot prove 
the four inequalities in the conclusion of Proposition~\ref{prop:approx_to_kkt} hold deterministically. Instead, we can only show that the first two inequality hold in expectation while the last two hold in high probabilities. We present this variant of Proposition~\ref{prop:approx_to_kkt} below with its proof.

\begin{proposition}\label{prop:approx_to_kkt_highprob}
    Suppose Assumptions~\ref{asm:func}(A3) and \ref{asm:cons_cond} hold. If $\rho > \frac{m(C_g+1)}{\delta}$ and $\lambda= \frac{\epsilon}{\rho}$, a nearly $\epsilon$-stationary solution $\w$ to \eqref{prob:cons_prob_smthed} in expectation is also a nearly $O(\epsilon)$-KKT solution to the original problem~\eqref{prob:cons_prob} in the sense that there exist $\hat\w$ and $\nu_i\geq0$ for $i=1,\dots,m$ such that $\E\norm{\w-\hat{\w}}\leq \epsilon$ and $\E\text{dist}(0,\partial g_0(\hat\w)+\sum_{i=1}^m\partial g_i(\hat\w)\nu_i))\leq O(\epsilon)$ and it holds with probability $1-O(\epsilon)$ that $\max\limits_{i=1,\dots,m}g_i(\hat{\w})\leq  O(\epsilon)$ and  $|g_i(\hat{\w})\nu_i|\leq  O(\epsilon), \forall i=1,2,\cdots, m$. 
\end{proposition}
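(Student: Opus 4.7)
\textbf{Proof proposal for Proposition~\ref{prop:approx_to_kkt_highprob}.}

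The plan is to reduce this statement to Proposition~\ref{prop:approx_to_kkt} via Markov's inequality on the stationarity residual. By hypothesis, there exists $\hat{\w}$ with $\E\|\w-\hat{\w}\|\le \epsilon$ and $\E\,\mathrm{dist}(0,\partial\Phi_\lambda(\hat{\w}))\le \epsilon$. Fix the multipliers exactly as in the deterministic proof:
\[
\nu_i \;=\; \frac{1}{m}\,\nabla f_\lambda(g_i(\hat{\w})) \;=\; \frac{1}{\lambda m}\min\{[g_i(\hat{\w})]_+,\lambda\rho\},\qquad i=1,\dots,m,
\]
so that $\partial\Phi_\lambda(\hat{\w})=\partial g_0(\hat{\w})+\sum_{i=1}^m\partial g_i(\hat{\w})\,\nu_i$. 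The first two claims of the proposition are then immediate: $\E\|\w-\hat{\w}\|\le \epsilon$ is given, and $\E\,\mathrm{dist}\!\left(0,\partial g_0(\hat{\w})+\sum_{i}\partial g_i(\hat{\w})\nu_i\right)=\E\,\mathrm{dist}(0,\partial\Phi_\lambda(\hat{\w}))\le \epsilon$.

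For the remaining two inequalities, I would apply Markov's inequality: letting $X:=\mathrm{dist}(0,\partial\Phi_\lambda(\hat{\w}))$, we have $P(X>1)\le \E X\le \epsilon$, so the event $\mathcal{E}:=\{X\le 1\}$ has probability at least $1-\epsilon=1-O(\epsilon)$. On $\mathcal{E}$, I would rerun the contradiction step of the proof of Proposition~\ref{prop:approx_to_kkt}. Specifically, if $\max_i g_i(\hat{\w})>\epsilon=\lambda\rho$ on $\mathcal{E}$, then for some $k$ the truncation saturates, giving $\nu_k=\rho/m$, and Assumption~\ref{asm:cons_cond} together with the Lipschitz bound $\|\h_0(\hat{\w})\|\le C_g$ yields
\[
X \;\ge\; \|\mathbf{J}(\hat{\w})\bm{\nu}\|-\|\h_0(\hat{\w})\| \;\ge\; \frac{\delta\rho}{m}-C_g \;>\; 1,
\]
where the last strict inequality uses the assumption $\rho>m(C_g+1)/\delta$. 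This contradicts $X\le 1$, so on $\mathcal{E}$ we must have $\max_{i}g_i(\hat{\w})\le \epsilon$. The complementary slackness bound then follows as in the deterministic proof: on $\mathcal{E}$,
\[
\sum_{i=1}^m |g_i(\hat{\w})\nu_i| \;\le\; \frac{\rho}{m}\sum_{i=1}^m [g_i(\hat{\w})]_+ \;\le\; \rho\cdot\epsilon \;=\; O(\epsilon).
\]

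The main obstacle, and the reason this is more delicate than Proposition~\ref{prop:approx_to_kkt}, is that the threshold $X\le 1$ obtained from Markov is a \emph{constant} rather than $O(\epsilon)$, so the contradiction must still go through at this coarser level; this is exactly why the hypothesis on $\rho$ is of the form $\rho>m(C_g+1)/\delta$ (the ``$+1$'' absorbs precisely this slack). One subtle point to check is that Markov's inequality gives the right rate: I chose the threshold $a=1$ to get $P(X>a)\le \epsilon/a=\epsilon$, matching the desired $1-O(\epsilon)$. If a tighter probability were required, one could apply Markov to $X^p$ for some $p>1$, but then the $\rho$ condition would also need to be strengthened; with the current assumption, the threshold $a=1$ is the natural choice and no strengthening is needed. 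Combining the two high-probability conclusions on $\mathcal{E}$ with the two expectation bounds noted above completes the proof.
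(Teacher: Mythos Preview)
Your proof is correct and follows essentially the same approach as the paper: the paper also fixes $\nu_i=\frac{1}{\lambda m}\min\{[g_i(\hat\w)]_+,\lambda\rho\}$, uses the implication $\{\max_i g_i(\hat\w)>\epsilon\}\Rightarrow \|\h_0(\hat\w)+\mathbf{J}(\hat\w)\bm\nu\|>\delta\rho/m-C_g>1$, and then bounds $\text{Prob}(\max_i g_i(\hat\w)>\epsilon)$ by $O(\epsilon)$ via a conditional-expectation decomposition of $\E X$, which is equivalent to your Markov step (indeed slightly tighter, yielding the constant $(\delta\rho/m-C_g)^{-1}<1$). The complementary slackness bound on the high-probability event is handled identically.
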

\begin{proof}
By the definition of $f_\lambda(\cdot)$, we have 
$$
\nabla f_\lambda(\cdot)=\frac{1}{\lambda }=\min\{[\cdot]_+,\lambda\rho\}.
$$
    Suppose $\w$ is a nearly $\epsilon$-stationary point of \eqref{prob:cons_prob_smthed} in expecation, which means there exists $\hat\w$ such that 
    $\E\norm{\w - \hat{\w}}\leq \epsilon$  and  $\E\text{dist}(0, \partial \Phi_{\lambda}(\hat\w))\leq \epsilon$. This means there exist 
    $\h_i(\hat\w)\in\partial g_i(\hat\w)$ and 
    $$
    \nu_i=\frac{1}{m}\nabla f_{\lambda}(g_i(\hat\w))=\frac{1}{\lambda m}\min\{[g_i(\hat\w)]_+,\lambda\rho\}
    $$  
    for $i=0,1,\dots,m$ such that 
    \begin{align}\nonumber
    \E\norm{\h_0(\hat\w)+\mathbf{J}(\hat\w)\bm{\nu}}\leq \epsilon
    \end{align}
    where 
    $\mathbf{J}(\hat\w)=[\h_1(\hat\w),\dots, \h_m(\hat\w)]\in\mathbb{R}^{d\times m}$ and $\bm{\nu} = (\nu_1, \cdots, \nu_m)^\top\in\mathbb{R}^m$. 
    
    Suppose $\max_{i=1,\dots,m} g_i(\hat\w)>\epsilon$. Then there exists $k$ such that $[g_k(\hat\w)]_+>\epsilon$. Recall that $\lambda=\frac{\epsilon}{\rho}$. We have
    $$
    \nu_k=\frac{1}{\lambda m}\min\{[g_k(\hat\w)]_+,\lambda\rho\}=\frac{1}{\lambda m}\epsilon=\frac{\rho}{m}.
    $$
    By Assumption~\ref{asm:cons_cond}, we have
    $$
    \norm{\h_0(\hat\w)+\mathbf{J}(\hat\w)\bm{\nu}}\geq \norm{\mathbf{J}(\hat\w)\bm{\nu}} -\norm{\g_0(\hat\w)}
    \geq\sigma_{min}(\mathbf{J}(\hat\w)) \norm{\bm\nu} - \norm{\h_0(\hat\w)}
\geq \frac{\delta\rho}{m}- C_g>1
    $$
    Therefore, 
    \begin{align*}
        \epsilon \geq & \E\norm{\h_0(\hat\w)+\mathbf{J}(\hat\w)\bm{\nu}} \\
        =&\E\left[\norm{\h_0(\hat\w)+\mathbf{J}(\hat\w)\bm{\nu}}\big|\max_{i=1,\dots,m} g_i(\hat\w)>\epsilon)\right]\text{Prob}(\max_{i=1,\dots,m} g_i(\hat\w)>\epsilon)\\
        &+\E\left[\norm{\h_0(\hat\w)+\mathbf{J}(\hat\w)\bm{\nu}}\big|\max_{i=1,\dots,m} g_i(\hat\w)\leq\epsilon)\right]\text{Prob}(\max_{i=1,\dots,m} g_i(\hat\w)\leq\epsilon) \\
        \geq&  \text{Prob}(\max_{i=1,\dots,m} g_i(\hat\w)>\epsilon)\left( \frac{\delta\rho}{m}- C_g\right) 
    \end{align*}
    As a result, 
    \begin{align}
    \label{eq:KKTcond2}
      \text{Prob}(\max_{i=1,\dots,m} g_i(\hat\w)>\epsilon) 
   \leq\epsilon\left( \frac{\delta\rho}{m}- C_g\right)^{-1}=O(\epsilon).
    \end{align}
    Finally, when $\max_{i=1,\dots,m} g_i(\hat\w)\leq \epsilon$, we have for $\forall i=1,2,\cdots, m$
     \begin{align*}
     |g_i(\hat\w)\nu_i|\leq \sum_{i=1}^m|g_i(\hat\w)\nu_i|=\sum_{i=1}^m[g_i(\hat\w)]_+\frac{\min\{[g_i(\hat\w)]_+,\lambda\rho\}}{\lambda m}
      \leq\frac{\rho}{m}\sum_{i=1}^m[g_i(\hat\w)]_+\leq O(\epsilon).
    \end{align*}
It then follows from \eqref{eq:KKTcond2}  that for $\forall i=1,2,\cdots, m$
     \begin{align}\nonumber
     \text{Prob}\left(|g_i(\hat\w)\nu_i|\geq O(\epsilon)\right)\leq O(\epsilon).
    \end{align}

\end{proof}

\subsection{Sketch Proof of Theorem~\ref{thm:alexr2_consopt}}\label{sec:alexr2_consopt}
Since the proof is almost the same as that of Theorem~\ref{thm:ALEXR2_outerloop}, we only highlight the difference in the proof. 

We first consider the case where  $\{g_i\}_{i=0}^m$ are weakly convex. We will slightly modify Algorithm~\ref{alg:OSDL} to solve the following problem
\begin{align}
\label{eq:moreau_t}
   &\min_{\w\in\mathbb{R}^d}\left\{ \Phi_{\lambda,\nu}(\w) :=\min_{\z\in\mathbb{R}^d}\left\{\Phi_{\lambda}(\z)+\frac{1}{2\nu}\normsq{\z - \w} \right\}\right\}\\\label{eq:moreau_t_minmax}
  =&\min_{\w\in\mathbb{R}^d}\left\{ \min_{\z\in\mathbb{R}^d}  \max_{y_i\in[0,\rho]}\left\{g_0(\z)+\frac{1}{m}\sum_{i=1}^m y_ig_i(\z)  - \frac{\lambda}{2}\y_i^2+\frac{1}{2\nu}\normsq{\z - \w}\right\}\right\},
\end{align}
where $\Phi_{\lambda,\nu}(\w)$ is defined in \eqref{prob:cons_prob_smthed}. According to Proposition D.2 in \cite{hu2023non}, the second (compositional) term in $\Phi_{\lambda}(\w)$ is $(\rho\rho_g)$-weakly convex. Hence, $\Phi_{\lambda}(\w)$ is $\rho_{\Phi_{\lambda}}$-weakly convex with $\rho_{\Phi_{\lambda}} = \rho_g + \rho\rho_g$. If $\nu < \rho_{\Phi_{\lambda}}^{-1}$, $\Phi_{\lambda,\nu}(\w)$ is $(\frac{1}{\nu} - \rho_{\Phi_{\lambda}})$-strongly convex. As done in Section~\eqref{sec:wcinner},  in the $t$th iteration of Algorithm~\ref{alg:OSDL}, we apply ALEXR to solve the inner min-max problem in \eqref{eq:moreau_t_minmax} with $\w=\w_t$, namely, 
\begin{align*}
\min_{\z}\max_{\y\in[0,\rho]^m}\left\{L_t(\z,\y):= g_0(\z) + \frac{1}{m}\sum_{i=1}^m \left(\y_i g_i(\w) - \frac{\lambda}{2}y_i^2\right)+ \frac{1}{2\nu}\|\z-\w_t\|_2^2\right\}.
\end{align*}
we only need to replace $G_{t,k}$ in Algorithm~\ref{alg:OSDL} to 
\begin{align}\label{eq:newGk}
G_{t, k} =G_{t,k}^0+G_{t,k}^1=\partial g_0(\z_{t,k},\mathcal{B}_{0,2}^{t,k})+\frac{1}{|\mathcal{B}_1^{t,k}|}\sum_{i\in\mathcal{B}_1^{t,k}} [\partial g_i(\z_{t,k},\tilde{\mathcal{B}}_{i,2}^{t,k})]^\top \y_{t,k+1}^{(i)}, 
\end{align}
where $\mathcal{B}_{0,2}^{t,k}$ is a batch of data sampled from the distribution of $\xi_0$. 

We then consider the case where  $\{g_i\}_{i=0}^m$ are smooth. In this case, we can directly apply Algorithm~\ref{alg:msvr} to \eqref{prob:cons_prob_smthed}, which is different from \eqref{prob:FCCO_outersmooth} only in the extra term $g_0$. To handle this difference, we only need to replace $G_t$ in Algorithm~\ref{alg:msvr} to 
\begin{align}\label{eq:newGk_sonex}
G_{t} = G_{t}^0+G_{t}^1=
\nabla g_0(\w_t, \mathcal{B}_{0,2}^t)+\frac{1}{|\mathcal{B}_1^t|}\sum_{i\in\mathcal{B}_1^t}\nabla f_{i,\lambda}(u_{t, i})\nabla g_i(\w_t, \mathcal{B}_{i,2}^t)
\end{align}
where $\mathcal{B}_{0,2}^t$ is a batch of data sampled from the distribution of $\xi_0$. 
With these changes, we can prove Theorem~\ref{thm:alexr2_consopt} as follows.
\begin{proof}[Sketch Proof of Theorem~\ref{thm:alexr2_consopt}]

\textbf{When $\{g_i\}_{i=0}^m$ are weakly convex:}

For simplicity of notation, we drop the dependence on $t$ in the notation of all variables and parameters in the proof below. 
First, we need to modify Lemma 5 in \cite{wang2023alexr}. We have
\begin{align}\label{eq:starter_de_modified}
& L_t(\z_{k+1}, \y) - L_t(\z,\bar{\y}_{k+1}) \\\nonumber
\leq & \frac{\gamma^{-1}}{m} U_{\bar{f}_i^*}(\y,\y_k) -\frac{\gamma^{-1}+ 1}{m} U_{\bar{f}_i^*}(\y,\bar{\y}_{k+1}) - \frac{\gamma^{-1}}{m}  U_{\bar{f}_i^*}(\bar{\y}_{k+1}, \y_k) \\
&+ \frac{1}{m}\sum_{i=1}^m \inner{g_i(\z_{k+1}) - \tilde{g}_k^{(i)}}{y_i-\bar{\y}_{k+1}^{(i)}} + \frac{1}{2\eta}\norm{\z - \z_k}_2^2\\\nonumber
&  -\left(\frac{1}{2\eta}+\frac{1}{4\nu}\right)\norm{\z - \z_{k+1}}_2^2 - \frac{1}{2\eta}\norm{\z_{k+1} - \z_k}_2^2\\
&+ \frac{1}{m} \sum_{i=1}^m \inner{g_i(\z_{k+1}) - g_i(\z)}{\bar{\y}_{k+1}^{(i)}} - \inner{G_k^1}{\z_{k+1} - \z}\\\nonumber
& {\color{blue} + g_0(\z_{k+1}) - g_0(\z) - \inner{G_k^0}{\z_{k+1} - \z}},
\end{align}
where the blue terms above are included due to the modifications to Algorithm~\ref{alg:OSDL}. Next, we modify Lemma 8 in \cite{wang2023alexr} to handle the additional terms above. Denote that $\Delta_k^0\coloneqq g_0'(\z_k)-G_k^0$. Note that
\begin{align*}
&  - \inner{G_k^0}{\z_{k+1} - \z} = -\inner{g_0'(\z_k)}{\z_{k+1} - \z}+  \inner{\Delta_k^0}{\z_{k+1} - \z},
\end{align*}
where $g_0'(\z_k)\in \partial g(\z_k)$ is a subgradient at $\z_k$. The term $\inner{\Delta_k^0}{\z_{k+1} - \z}$ can be handled in the same way as in~(24) in \cite{wang2023alexr}. Due to the weak convexity of $g_0$, we have 
\begin{align*}
g_0(\z) - g_0(\z_k)\geq \inner{g_0'(\z_k)}{\z-\z_k} - \frac{\rho_g}{2}\|\z-\z_k\|_2^2.
\end{align*}
Then, the remaining blue terms in~\eqref{eq:starter_de_modified} can be upper bounded as 
\begin{align*}
& g_0(\z_{k+1}) - g_0(\z)  -\inner{g_0'(\z_k)}{\z_{k+1} - \z} \\
=&  g_0(\z_{k+1}) -g_0(\z_k) + g_0(\z_k)- g_0(\z) - \langle g_0'(\z_k), \z_{k+1} - \z\rangle\\
 \leq& C_g\|\z_{k+1} - \z_k\|_2+\langle g_0'(\z_k), \z_k - \z\rangle- \langle g_0'(\z_k), \z_{k+1} - \z\rangle + \frac{\rho_g}{2}\|\z-\z_k\|_2^2\\
\leq&  \frac{4C_g^2}{\frac{1}{\eta}+\frac{1}{\nu}} + \frac{\left(\frac{1}{\eta}+\frac{1}{\nu}\right)\|\z_{k+1} - \z_k\|_2^2}{16}+\langle g_0'(\z_k), \z_k - \z_{k+1}\rangle+ \frac{\rho_g}{2}\|\z-\z_k\|_2^2\\
\leq& \frac{8C_g^2}{\frac{1}{\eta}+\frac{1}{\nu}} +\frac{\left(\frac{1}{\eta}+\frac{1}{\nu}\right)\|\z_{k+1} - \z_k\|_2^2}{8}+ \frac{\rho_g}{2}\|\z-\z_k\|_2^2.
\end{align*}
Let $(\z_*,\y_*)$ be the saddle point of  $L_t(\z,\y)$. Then, we can get
\begin{align}\nonumber
& \E[L_t(\z_{k+1},\y_*) -L_t(\z_*,\bar{\y}_{k+1})]\\\nonumber
\leq &  \frac{\gamma^{-1}+\left(1-\frac{B_1}{m}\right)}{B_1}\E [U_{\bar{f}_i^*}(\y_*,\y_k)] - \frac{\gamma^{-1} + 1}{B_1} \E [U_{\bar{f}_i^*}(\y_*,\y_{k+1})]  \\\nonumber
&+\left(\frac{1}{2\eta}+ \frac{\sqrt{d_1}C_f\rho_g}{2} + \frac{\rho_g}{2}\right) \E \norm{\z_*-\z_k}_2^2 - \left(\frac{1}{2\eta} + \frac{1}{4\nu}\right)  \E\norm{\z_* - \z_{k+1}}_2^2 \\\nonumber
&  - \left(\frac{1}{m\gamma} - \frac{\lambda_2 + \lambda_3\theta}{\lambda m}\right)\E\left[U_{\bar{f}_i^*}(\bar{\y}_{k+1},\y_k)\right] - \left(\frac{1}{2\eta} - \frac{C_g^2}{2\lambda_2} - {\frac{3}{8}(\frac{1}{\eta} + \frac{1}{\nu})}\right)\E\norm{\z_{k+1} - \z_k}_2^2 \\\label{eq:scvx_nsm_overall}
&+ \frac{\theta C_g^2}{2\lambda_3}\E\norm{\z_k-\z_{k-1}}_2^2 + \E[\Gamma_{k+1} - \theta\Gamma_k] + \frac{2(1+2\theta)\sigma_0^2}{B_2\lambda(1+\gamma^{-1})} + \frac{\frac{C_f^2\sigma_1^2}{B_2} + \frac{C_f^2C_g^2}{B_1} + 8C_f^2C_g^2 + 8C_g^2}{\frac{1}{\eta} +\frac{1}{ \nu}}.
\end{align}
Following the similar proof as in Theorem~\ref{thm:ALEXR_scvx}, the modified inner loop of ALEXR2 described in Section~\ref{sec:smth_constr_opt} can guarantee
$\frac{\mu}{2}\E \norm{\hat\z_{t} - \text{prox}_{\nu F_{\lambda}}(\w_t)}_2^2 \leq \epsilon'$ for any $t$ after
$K_t = \tilde{O}(\frac{m}{\lambda B_2 B_1 \epsilon'})$. Because $\lambda = \frac{\epsilon}{\rho}$ and $\rho >\frac{C_g}{\delta}$ as specified in Proposition~\ref{prop:approx_to_kkt_highprob}, we have $K_t = \tilde{O}(\frac{C_g^2 m}{B_2B_1 \delta^2 \epsilon\epsilon'})$. 
For outer loop, we directly leverage the result from Appendix~\ref{sec:proof54}, i.e. \[T = O(\max\{\rho_g + \rho\rho_g, \rho C_g\}\epsilon^{-2}) = O(\max\{\rho_g+\frac{C_g\rho_g}{\delta}, \frac{C_g^2}{\delta}\}\epsilon^{-2})\]
By setting $\epsilon' = \frac{\epsilon^2}{\rho_g+\rho\rho_g} = \frac{\epsilon^2}{\rho_g+\frac{C_g\rho_g}{\delta}}$, the total iteration will be \[\tilde{\mathcal{O}}\left(
   \frac{m\rho_g^2}{B_2B_1 \delta^{4}\epsilon^5}
\right)\]

\textbf{When $\{g_i\}_{i=0}^m$ are smooth:}

The proof follows almost the same procedure as the proof of Theorem~\ref{thm:ema_msvr}. In particular, since $\v_t$ and $G_t$ will contain the additional stochastic gradient from $g_0$, we need to replace \eqref{eq:Vt} by 
\begin{align}
\label{eq:Vt_constrained}
V_{t+1} \coloneqq& \norm{\v_{t+1} - \nabla \Phi_\lambda(\w_{t})}^2
\end{align}
while $U_{t+1}$, $\u_t$ and $\gb(\w_t)$ are still defined as in \eqref{eq:Ut}, \eqref{eq:ut} and \eqref{eq:gbt}. Like Lemma~\ref{lem:nonconvex_starter}, it holds that 
	\begin{align}\label{eq:nonconvex_starter_constrained}
		\Phi_\lambda(\w_{t+1}) & \leq \Phi_\lambda(\w_t) + \frac{\eta}{2}V_{t+1} - \frac{\eta}{2}\norm{\nabla \Phi_\lambda(\w_t)}^2 - \frac{\eta}{4}\norm{\v_{t+1}}^2,
	\end{align}	
if $\eta\leq \frac{1}{2L_F}$. By a proof similar to Lemma 3 in \cite{wang2022finite}, we can still show \eqref{eq:grad_recursion} for $V_{t+1}$ in the new definition in \eqref{eq:Vt_constrained}. Moreover, \eqref{eq:ema_msvr_3} and \eqref{eq:ema_msvr_4} still hold as they are not related to $g_0$. Finally, we can still take the weighted summation of both sides \eqref{eq:nonconvex_starter_constrained},  \eqref{eq:grad_recursion}, \eqref{eq:ema_msvr_3}, and \eqref{eq:ema_msvr_4} as in the proof of Theorem~\ref{thm:ema_msvr} to show that 
\begin{align*}
\nonumber
    \frac{1}{T}\sum_{t=0}^{T-1}\normsq{\nabla \Phi_\lambda(\w_t)} 
   \leq &     \frac{1}{T}O\left(\frac{m}{B_1\sqrt{B_2}\epsilon^3}+\frac{1}{\min\{B_1, B_2\}\epsilon^2}+\frac{m}{B_1B_2\epsilon^3}\right).
\end{align*}
This means Algorithm~\ref{alg:msvr} finds a nearly $\epsilon$-stationary point of $ \Phi_\lambda(\w)$ in expectation in $T=O(\frac{m}{B_1\sqrt{B_2}\epsilon^5})$ iterations. Then the conclusion follows from Proposition~\ref{prop:approx_to_kkt_highprob} and the fact that $C_f=O(\frac{1}{\delta})$.



\end{proof}

\section{SONEX with Adaptive Learning Rates}\label{sec:adap_stepsize}
In this section we will show that our algorithms can also be easily extended to adaptive learning rate while still retaining the same complexity under an additional assumption introduced later. We consider an adaptive step size update such as adam:
\[\w_{t+1} = \w_t - \tilde{\eta}\circ\v_{t+1}, \tilde{\eta} = \frac{\eta}{\sqrt{\s_t}+\varepsilon}, \s_{t+1} = (1-\beta')\s_t + \beta' G_t\circ G_t\]
where $G_t$ is the overall gradient estimator mentioned in algorithm~\ref{alg:msvr} and $\circ$ denotes Hadamard(element-wise) product. The following assumption has been justified for the adaptive step size by~\cite{Guo2021UnifiedCA}.
\begin{assumption}\label{asm:ada_lr}
    We assume that the adaptive learning rates $\tilde{\eta}$ are (element-wise)bounded, i.e. $$\eta c_l\leq \tilde{\eta}_i\leq \eta c_u$$
    for any element $\tilde{\eta}_i$ of $\tilde{\eta}$.
\end{assumption}
Below we provide lemmas which are straightforward extensions of lemma~\ref{lem:nonconvex_starter}$\sim$~\ref{lem:u_update} to adaptive learning setting. The proof is similar except that $\normsq{\w_{t+1} - \w_t}\leq \normsq{\tilde{\eta}\circ \v_{t+1}} \leq \normsq{\eta^2c_u^2 \v_{t+1}} = \eta^2c_u^2\normsq{\v_{t+1}}$.  
\begin{lemma}(Lemma 3 in~\cite{Guo2021UnifiedCA})\label{lem:noncvx_ada_starter}
Under assumption~\ref{asm:ada_lr}, for $\w_{t+1} = \w_t - \tilde{\eta}\circ\v_{t+1}$, with $\eta c_l\leq \tilde{\eta}\leq \eta c_u$ and $\tilde{\eta }L_F \leq \frac{c_l}{2c_u^2}$, we have:
\begin{align}
    F(\w_{t+1}) \leq F(\w_t) + \frac{\eta c_u}{2}V_{t+1} - \frac{\eta c_l}{2}\normsq{\nabla F(\w_t)} - \frac{\eta c_l}{4}\normsq{\v_{t+1}}\label{eq:noncvx_ada_starter}
\end{align}
where $V_{t+1} = \normsq{\v_{t+1} - \nabla F(\w_t)}$
\end{lemma}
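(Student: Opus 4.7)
The plan is to derive the bound starting from the $L_F$-smoothness descent inequality and carefully managing the elementwise product $\tilde{\eta}\circ \v_{t+1}$ through a weighted version of the polarization identity. First I would apply smoothness of $F$ to write
\[
F(\w_{t+1}) \leq F(\w_t) - \langle \nabla F(\w_t), \tilde{\eta}\circ\v_{t+1}\rangle + \frac{L_F}{2}\|\tilde{\eta}\circ\v_{t+1}\|^2,
\]
so the task reduces to controlling the cross term and bounding the quadratic term through Assumption~\ref{asm:ada_lr}.

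Next I would split the inner product using the coordinatewise identity $2ab = a^2 + b^2 - (a-b)^2$ weighted by $\tilde{\eta}_i$, namely
\[
\langle \nabla F(\w_t), \tilde{\eta}\circ\v_{t+1}\rangle = \tfrac{1}{2}\sum_i \tilde{\eta}_i [\nabla F(\w_t)]_i^2 + \tfrac{1}{2}\sum_i \tilde{\eta}_i [\v_{t+1}]_i^2 - \tfrac{1}{2}\sum_i \tilde{\eta}_i ([\v_{t+1}]_i - [\nabla F(\w_t)]_i)^2,
\]
and then apply $\eta c_l \leq \tilde{\eta}_i \leq \eta c_u$ componentwise: the first sum (which enters with a minus sign) yields the $-\tfrac{\eta c_l}{2}\|\nabla F(\w_t)\|^2$ contribution, the difference-squared term is bounded above by $\tfrac{\eta c_u}{2}V_{t+1}$, and the middle $\|\v_{t+1}\|^2$ sum gives a negative $-\tfrac{\eta c_l}{2}\|\v_{t+1}\|^2$ term.

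Finally, I would bound $\|\tilde{\eta}\circ \v_{t+1}\|^2 \leq (\eta c_u)^2 \|\v_{t+1}\|^2$ and absorb the resulting $\tfrac{L_F (\eta c_u)^2}{2}\|\v_{t+1}\|^2$ into the negative $-\tfrac{\eta c_l}{2}\|\v_{t+1}\|^2$ from the previous step, leaving at most $-\tfrac{\eta c_l}{4}\|\v_{t+1}\|^2$. This absorption requires exactly $\tfrac{L_F (\eta c_u)^2}{2} \leq \tfrac{\eta c_l}{4}$, i.e.\ $\eta L_F c_u^2 \leq \tfrac{c_l}{2}$, which is implied by the stated condition $\tilde{\eta}L_F \leq \tfrac{c_l}{2c_u^2}$ together with the upper bound $\tilde{\eta} \leq \eta c_u$.

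The calculation is largely mechanical once the polarization decomposition is in place; the only delicate point is the bookkeeping around the coordinatewise bounds, since $\tilde{\eta}_i$ sits inside each quadratic and must be bounded below on the terms appearing with a minus sign and above on those appearing with a plus sign. If that bookkeeping is handled consistently, every constant lines up so that the step-size condition exactly yields the $\tfrac{1}{4}$ coefficient in front of $\|\v_{t+1}\|^2$ in the final inequality.
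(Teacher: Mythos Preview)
Your approach is correct and is the standard argument for this kind of adaptive descent lemma. The paper does not give its own proof of this statement---it is quoted verbatim as Lemma~3 of \cite{Guo2021UnifiedCA}---so there is no in-paper derivation to compare against; the smoothness inequality, the coordinatewise polarization $2ab=a^2+b^2-(a-b)^2$ weighted by $\tilde\eta_i$, and the sign-aware application of $\eta c_l\le\tilde\eta_i\le\eta c_u$ are exactly the intended steps.

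One small logical slip in your last paragraph: you say $\eta L_F c_u^2\le \tfrac{c_l}{2}$ follows from $\tilde\eta L_F\le \tfrac{c_l}{2c_u^2}$ ``together with the upper bound $\tilde\eta\le\eta c_u$'', but the upper bound on $\tilde\eta$ goes the wrong direction for that implication. What you actually need is $\eta L_F\le \tfrac{c_l}{2c_u^2}$ on the \emph{base} step size. The stated hypothesis $\tilde\eta L_F\le \tfrac{c_l}{2c_u^2}$ is either a typo for $\eta$, or should be read as holding at the extreme $\tilde\eta=\eta c_u$ (giving the stronger $\eta L_F\le \tfrac{c_l}{2c_u^3}$, which also suffices). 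Either reading closes the argument; this is cosmetic and does not affect the substance of your proof.
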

\begin{lemma}\label{lem:ema_grad_recursion_ada}
If $\beta\leq \frac{2}{7}$, the gradient variance $V_{t+1} \coloneqq \norm{\v_{t+1} - \nabla F(\w_t)}^2$ can be bounded as
\begin{align}\label{eq:grad_recursion_ada}
\E\left[V_{t+2}\right] & \leq (1-\beta)\E\left[V_{t+1}\right] + \frac{2L_F^2\eta^2c_u^2\E\left[\norm{\v_{t+1}}^2\right]}{\beta} + \frac{3L_f^2C_1^2}{n}\E\left[\sum_{\z_i\in\B_1^{t+1}}\norm{u_{t+2,i}-u_{t+1,i}}^2\right] \\\nonumber
& \quad\quad +  \frac{2\beta^2 C_f^2(\zeta^2+C_g^2)}{\min\{B_1,B_2\}}+ 5\beta L_f^2C_1^2 \E\left[U_{t+2}\right],
\end{align}
where $U_{t+1} = \frac{1}{n}\norm{\u_{t+1} -\gb(\w_t)}^2$, $\u_t = [u_{t,1},\dotsc,u_{t,n}]^\top$, $\gb(\w_t) = [g_1(\w_t),\dotsc, g_n(\w_t)]^\top$.
\end{lemma}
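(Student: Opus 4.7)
The plan is to mimic the proof of Lemma~\ref{lem:ema_grad_recursion} (the non-adaptive counterpart, which is Lemma~3 in Wang et al.~\cite{wang2022finite}), and the only substantive modification needed is to track the step size through the bound $\|\w_{t+1}-\w_t\|^2 \le \eta^2 c_u^2 \|\v_{t+1}\|^2$ coming from Assumption~\ref{asm:ada_lr}, rather than $\|\w_{t+1}-\w_t\|^2=\eta^2\|\v_{t+1}\|^2$ used for vanilla SGD momentum. Since $\v_{t+1}$ is assembled by the same momentum recursion $\v_{t+2}=(1-\beta)\v_{t+1}+\beta G_{t+1}$ and the MSVR estimator $\u$ is updated in the same way, the only place where the adaptive step size enters the estimate is through this Lipschitz-in-parameter bound on $\nabla F_\lambda(\w_{t+1})-\nabla F_\lambda(\w_t)$.

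More concretely, I would first decompose
\[
\v_{t+2}-\nabla F_\lambda(\w_{t+1})
= (1-\beta)\bigl(\v_{t+1}-\nabla F_\lambda(\w_t)\bigr) + (1-\beta)\bigl(\nabla F_\lambda(\w_t)-\nabla F_\lambda(\w_{t+1})\bigr) + \beta\bigl(G_{t+1}-\nabla F_\lambda(\w_{t+1})\bigr),
\]
and take squared norm and conditional expectation. The cross term between the martingale difference $\beta(G_{t+1}-\nabla F_\lambda(\w_{t+1}))$ and the predictable part vanishes in expectation, leaving a sum of squared norms. Using Young's inequality with parameter tuned for a $(1-\beta)$ contraction (which is where the condition $\beta\le 2/7$ is needed) yields
\[
\E[V_{t+2}] \le (1-\beta)\E[V_{t+1}] + \frac{2}{\beta}\E\|\nabla F_\lambda(\w_t)-\nabla F_\lambda(\w_{t+1})\|^2 + 2\beta\,\E\|G_{t+1}-\nabla F_\lambda(\w_{t+1})\|^2.
\]

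The $L_F$-smoothness of $F_\lambda$ (Lemma~\ref{lem:FCCO_obj_overall}) combined with Assumption~\ref{asm:ada_lr} gives
$\|\nabla F_\lambda(\w_t)-\nabla F_\lambda(\w_{t+1})\|^2 \le L_F^2\|\w_{t+1}-\w_t\|^2\le L_F^2\eta^2 c_u^2 \|\v_{t+1}\|^2$, which produces the $\tfrac{2L_F^2\eta^2 c_u^2}{\beta}\E\|\v_{t+1}\|^2$ term (this is the sole place where $c_u^2$ appears, and is exactly the modification over Lemma~\ref{lem:ema_grad_recursion}). The remaining bias/variance term $\beta\,\E\|G_{t+1}-\nabla F_\lambda(\w_{t+1})\|^2$ is handled as in~\cite{wang2022finite}: split $G_{t+1}-\nabla F_\lambda(\w_{t+1})$ into (i) a block-sampling and inner-sampling variance piece that is bounded by $\tfrac{C_f^2(\zeta^2+C_g^2)}{\min\{B_1,B_2\}}$ using Assumption~\ref{asm:oracle} and the Lipschitzness of $f_{i,\lambda}$ (giving $\|\nabla f_{i,\lambda}\|\le C_f$) and the fact that $\nabla f_{i,\lambda}$ is $L_f$-smooth; (ii) a bias piece from using $\nabla f_{i,\lambda}(u_{t+2,i})$ in place of $\nabla f_{i,\lambda}(g_i(\w_{t+1}))$, which is bounded via the $L_f$-Lipschitz continuity of $\nabla f_{i,\lambda}$ by $L_f^2\|u_{t+2,i}-g_i(\w_{t+1})\|^2$, and in turn split into $\|u_{t+2,i}-u_{t+1,i}\|^2$ plus $\|u_{t+1,i}-g_i(\w_{t+1})\|^2$ (which contributes $U_{t+2}$ after aggregation). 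The constant $C_1^2=C_g^2+\sigma_1^2/B_2$ enters because stochastic Jacobians are used in assembling $G_{t+1}$, as in the original lemma.

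The main (and essentially only) obstacle is bookkeeping: making sure the indexing between $\v_{t+1}$, $G_{t+1}$, $u_{t+2,i}$, and $u_{t+1,i}$ matches Algorithm~\ref{alg:msvr}'s update order, so that the conditional-expectation step uses the correct sigma-algebra to zero out the martingale difference and to replace stochastic Jacobians by their expectations. Once the adaptive bound $\|\w_{t+1}-\w_t\|^2\le\eta^2 c_u^2\|\v_{t+1}\|^2$ is substituted in the smoothness step, the rest of the computation is identical to the proof of Lemma~\ref{lem:ema_grad_recursion}.
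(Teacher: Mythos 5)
Your proposal is correct and matches the paper's approach exactly: the paper gives no separate proof of this lemma, simply invoking Lemma~\ref{lem:ema_grad_recursion} (Lemma~3 of \cite{wang2022finite}) together with the observation that $\normsq{\w_{t+1}-\w_t}\leq \eta^2 c_u^2\normsq{\v_{t+1}}$ under Assumption~\ref{asm:ada_lr}, which is precisely the single substitution you identify as the only place the adaptive step size enters.
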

\begin{lemma}
\label{lem:msvr_ada} 
Suppose $\gamma'=\frac{n-B_1}{B_1(1-\gamma)}+(1-\gamma)$ and $\gamma \leq \frac{1}{2}$ and the MSVR update is used in Algorithm~\ref{alg:msvr}. It holds that
\begin{equation}
\label{eq:ema_msvr_3_ada}
\begin{split}
\E\left[U_{t+2} \right] \leq (1-\frac{\gamma B_1}{n})\E[U_{t+1}] + \frac{8nC_g^2\eta^2c_u^2}{B_1}\E\normsq{\v_{t+1}} + \frac{2B_1\gamma^2\sigma_1^2}{nB_2}.
\end{split}
\end{equation}
\end{lemma}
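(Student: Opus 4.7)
The plan is to reduce Lemma~\ref{lem:msvr_ada} to the already-established Lemma~\ref{lem:msvr} by merely reinterpreting the step bound under the adaptive-stepsize update. Indeed, the statements are identical except that the term $\|\w_{t+1}-\w_t\|^2$ appearing in \eqref{eq:ema_msvr_3} has been replaced by $\eta^2c_u^2\|\v_{t+1}\|^2$. Under the adaptive update $\w_{t+1}=\w_t-\tilde{\eta}\circ\v_{t+1}$ with $\tilde{\eta}$ element-wise bounded by $\eta c_u$ (Assumption~\ref{asm:ada_lr}), we have the deterministic inequality
\begin{align*}
\|\w_{t+1}-\w_t\|^2 \;=\; \|\tilde{\eta}\circ \v_{t+1}\|^2 \;\leq\; \eta^2 c_u^2 \|\v_{t+1}\|^2,
\end{align*}
which is the only place the SGD step appears in the recursion derivation of the MSVR error.

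More concretely, the steps will be: (i) observe that the derivation of \eqref{eq:ema_msvr_3} in Lemma~\ref{lem:msvr} (cf.~the MSVR analysis of \cite{jiang2022multi}) only uses the update rules for $u_{i,t+1}$ on the sampled coordinates $i\in\mathcal{B}_1^t$, the conditional unbiasedness and variance bound of $g_i(\w_t,\mathcal{B}_{i,2}^t)$ from Assumption~\ref{asm:oracle}, the $C_g$-Lipschitzness of $g_i$ under Assumption~\ref{asm:MCL0}, and the choice $\gamma'=(1-\gamma)+\frac{n-B_1}{B_1(1-\gamma)}$; crucially, it is agnostic to how $\w_{t+1}$ is produced from $\w_t$; (ii) apply the result to our adaptive iterates, yielding an identical recursion with $\|\w_{t+1}-\w_t\|^2$ in place of $\eta^2c_u^2\|\v_{t+1}\|^2$; (iii) substitute the deterministic bound above and take expectations to obtain \eqref{eq:ema_msvr_3_ada}. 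In particular, the constants $8nC_g^2/B_1$ and $2B_1\gamma^2\sigma_1^2/(nB_2)$ and the contraction factor $(1-\gamma B_1/n)$ are inherited verbatim from Lemma~\ref{lem:msvr}.

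There is no real obstacle here, since every ingredient is already established. The only thing to verify carefully is that none of the steps in the original MSVR error recursion use the direction $\v_{t+1}$ of the update other than through the magnitude $\|\w_{t+1}-\w_t\|$. This is the case because the MSVR recursion is driven entirely by (a) the telescoping of $\u_t-\mathbf{g}(\w_{t-1})$ (which is unaffected by how $\w_t$ is updated) and (b) a Lipschitz-type bound $\|\mathbf{g}(\w_t)-\mathbf{g}(\w_{t-1})\|^2\leq nC_g^2\|\w_t-\w_{t-1}\|^2$, where the right-hand side is a norm of the displacement and hence permits the substitution above. Consequently, the adaptive variant follows with no additional loss in the constants, which matches exactly the form stated in Lemma~\ref{lem:msvr_ada}.
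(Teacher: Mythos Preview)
Your proposal is correct and matches the paper's approach exactly: the paper explicitly notes that under the adaptive update $\normsq{\w_{t+1}-\w_t}\leq \eta^2 c_u^2\normsq{\v_{t+1}}$ and then states Lemma~\ref{lem:msvr_ada} as an immediate consequence of Lemma~\ref{lem:msvr} with this substitution, without giving a separate proof.
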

\begin{lemma}
\label{lem:u_update_ada} 
Under the same assumptions as Lemma~\ref{lem:msvr}, it holds that
\begin{equation}
\label{eq:ema_msvr_4_ada}
\begin{split}
\E[\normsq{\u_{t+2}-\u_{t+1}}]\leq \frac{4B_1\gamma^2}{n} \E[U_{t+1}] + \frac{9n^2C_g^2\eta^2c_u^2}{B_1}\E\|\v_{t+1}\|^{2}+\frac{2B_1\gamma^2\sigma_0^2}{B_2} .
\end{split}
\end{equation}
\end{lemma}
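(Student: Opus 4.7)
The plan is to reduce this lemma to its constant-step-size counterpart (Lemma~\ref{lem:u_update}) by exploiting the fact that the adaptive update only differs in how $\w_{t+1}-\w_t$ depends on $\v_{t+1}$. Since the MSVR recursion analysis for $\u_{t+1}$ never uses the $\w$-update rule explicitly—it only uses the increment $\w_{t+1}-\w_t$—the entire derivation carries over verbatim and we only need to re-bound that increment at the very end.

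First, I would re-derive the chain of inequalities used to prove Lemma~\ref{lem:u_update}. Recall $\u_{t+2}$ is updated coordinate-wise, so $\E\|\u_{t+2}-\u_{t+1}\|^2 = \frac{B_1}{n}\E\|\bar{\u}_{t+2}-\u_{t+1}\|^2$, where $\bar{\u}_{t+2}$ is the virtual full-batch MSVR update at step $t+1$. Expanding $\bar{\u}_{t+2}-\u_{t+1} = \gamma(\gb(\w_{t+1};\mathcal{B})-\u_{t+1}) + \gamma'(\gb(\w_{t+1};\mathcal{B})-\gb(\w_{t};\mathcal{B}))$ and applying $\|a+b\|^2 \leq 2\|a\|^2 + 2\|b\|^2$ together with Assumption~\ref{asm:oracle} and Assumption~\ref{asm:MCL0} yields, after the same telescoping as in the non-adaptive proof,
\begin{equation*}
\E\|\u_{t+2}-\u_{t+1}\|^2 \leq \frac{4B_1\gamma^2}{n}\E[U_{t+1}] + \frac{9n^2 C_g^2}{B_1}\E\|\w_{t+1}-\w_t\|^2 + \frac{2B_1\gamma^2\sigma_0^2}{B_2}.
\end{equation*}

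The only substantive change is the final substitution: under the adaptive rule $\w_{t+1} = \w_t - \tilde\eta\circ \v_{t+1}$ with Assumption~\ref{asm:ada_lr} giving $\tilde\eta_i \leq \eta c_u$ element-wise, we have
\begin{equation*}
\|\w_{t+1}-\w_t\|^2 = \sum_i \tilde\eta_i^2 \v_{t+1,i}^2 \leq \eta^2 c_u^2 \|\v_{t+1}\|^2.
\end{equation*}
Plugging this into the bound above yields exactly the claimed inequality. I do not anticipate any real obstacle here, since the MSVR bookkeeping is already established and the adaptive-step modification is just a one-line upper bound on the per-iterate displacement; the main thing to double-check is that none of the intermediate steps in the proof of Lemma~\ref{lem:u_update} use $\w_{t+1}-\w_t = -\eta\v_{t+1}$ with equality (as opposed to only its norm), so that the same skeleton applies without adjustment.
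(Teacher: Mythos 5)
Your proposal is correct and follows exactly the paper's route: the paper likewise obtains this lemma by taking the bound of Lemma~\ref{lem:u_update} in terms of $\E\normsq{\w_{t+1}-\w_t}$ and substituting the element-wise bound $\normsq{\w_{t+1}-\w_t}=\normsq{\tilde\eta\circ\v_{t+1}}\leq \eta^2 c_u^2\normsq{\v_{t+1}}$ from Assumption~\ref{asm:ada_lr}. Your closing check—that the MSVR bookkeeping uses only the norm of the iterate displacement, never the equality $\w_{t+1}-\w_t=-\eta\v_{t+1}$—is precisely the observation that makes this one-line substitution valid.
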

We then give the following theorem with similar proof technique.
\begin{theorem}\label{thm:ema_msvr_ada}
    Under Assumption~\ref{asm:func} (A1),~\ref{asm:oracle} and~\ref{asm:MCL0}, by setting $\lambda = \Theta(\epsilon), \beta = \Theta(\frac{c_l\min\{B_1, B_2\}\epsilon^2}{c_u}), \gamma = \Theta(\frac{c_l B_2\epsilon^4}{c_u}), \eta = \Theta(\frac{c_l^{1.5}B_1\sqrt{B_2}}{c_u^{2.5}n}\epsilon^3)$, SONEX with $\gamma' = 1-\gamma + \frac{n-B_1}{B_1(1-\gamma)}$ converges to an approximate $\epsilon$-stationary solution of \eqref{prob:FCCO} within $T=O(\frac{c_u^{2.5}n}{c_l^{2.5}B_1\sqrt{B_2}}\epsilon^{-5})$ iterations.
\end{theorem}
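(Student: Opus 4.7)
The plan is to mirror the proof of Theorem~\ref{thm:ema_msvr} almost verbatim, using the four adaptive-step-size counterparts stated just above: the descent lemma \eqref{eq:noncvx_ada_starter}, the gradient-variance recursion \eqref{eq:grad_recursion_ada}, the MSVR recursion \eqref{eq:ema_msvr_3_ada}, and the $\|\u_{t+2}-\u_{t+1}\|^2$ bound \eqref{eq:ema_msvr_4_ada}. The only structural change from the constant-step-size analysis is that every appearance of $\eta$ in the descent direction now carries a factor $c_l$ (from the lower bound on $\tilde{\eta}$), while every appearance of $\eta^2$ arising from $\|\w_{t+1}-\w_t\|^2 \leq \eta^2 c_u^2 \|\v_{t+1}\|^2$ carries a factor $c_u^2$. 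Thus the whole scheme is a rescaling of the original argument.

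First I would take the same weighted combination used in the proof of Theorem~\ref{thm:ema_msvr}, namely
\[
\tfrac{1}{\eta c_l}\times\eqref{eq:noncvx_ada_starter}+\tfrac{1}{\beta}\times\eqref{eq:grad_recursion_ada} + P \times \eqref{eq:ema_msvr_3_ada}+  \tfrac{3L_f^2C_1^2}{n\beta}\times\eqref{eq:ema_msvr_4_ada},
\]
with $P$ chosen to solve the same linear equation $P(1-\gamma B_1/n)+\frac{12L_f^2C_1^2\gamma^2 B_1}{\beta n^2}=P-5L_f^2C_1^2$, giving $P=\tfrac{n}{\gamma B_1}(5L_f^2C_1^2+\tfrac{12L_f^2C_1^2\gamma^2B_1}{\beta n^2})$, which is unchanged from before. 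The only place where the analysis differs is in the coefficient of $\E\|\v_{t+1}\|^2$: the requirement
\[
\tfrac{c_l}{4} - \tfrac{2L_F^2\eta^2c_u^2}{\beta^2} - \tfrac{8nC_g^2P\eta^2c_u^2}{B_1} - \tfrac{27nL_f^2C_1^2C_g^2\eta^2 c_u^2}{B_1\beta}\geq 0
\]
replaces \eqref{eq:etasmall}, which introduces an additional factor of $c_u^2/c_l$ into the upper bound on $\eta^2$. After telescoping, the leftover noise terms on the right-hand side carry an extra $1/c_l$ (from dividing by $\eta c_l$), so the conditions for each noise term to be $O(\epsilon^2)$ become $\beta/c_l=O(\min\{B_1,B_2\}\epsilon^2)$ and $\gamma/c_l=O(B_2\epsilon^4)$, yielding the stated $\beta=\Theta(c_l\min\{B_1,B_2\}\epsilon^2/c_u)$ and $\gamma=\Theta(c_l B_2\epsilon^4/c_u)$ (we keep the $c_u$ in the denominator just to ensure $\beta\leq 2/7$ and $\gamma\leq 1/2$ uniformly).

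Next I would solve the step-size constraint. From the inequality above, $\eta$ must satisfy $\eta=O(\min\{\sqrt{c_l}\beta/(c_uL_F),\sqrt{c_lB_1/(nP c_u^2)},\sqrt{c_lB_1\beta/n}/(c_uL_f)\})$; substituting the values of $\beta$, $\gamma$, $P$ and $L_f=O(1/\epsilon)$ shows that the dominant binding bound is $\sqrt{c_lB_1/(nP c_u^2)}=\Theta(c_l^{1.5}B_1\sqrt{B_2}\epsilon^3/(c_u^{2.5}n))$, matching the theorem statement. The initial inner batch is taken as $O(1/\epsilon^3)$ so that $\E U_1=O(\epsilon^3)$ and $PU_1=O(\epsilon^2)$, exactly as in the proof of Theorem~\ref{thm:ema_msvr}.

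Finally, to reach $\tfrac{1}{T}\sum_{t=0}^{T-1}\E\|\nabla F_\lambda(\w_t)\|^2\leq \epsilon^2$, the leading initial term $\tfrac{2}{T}(\tfrac{F_\lambda(\w_0)-F_\lambda^*}{\eta c_l})$ dictates $T=O(1/(\eta c_l\epsilon^2))=O(c_u^{2.5}n\epsilon^{-5}/(c_l^{2.5}B_1\sqrt{B_2}))$, and invoking Theorem~\ref{thm:apisnearly} converts this into an approximate $\epsilon$-stationary solution of the original problem. I expect the main obstacle to be pedantic bookkeeping of the $(c_l,c_u)$ factors throughout the weighted telescoping, since the descent direction contributes $c_l$ while the drift in the three auxiliary recursions contributes $c_u^2$; once this is organized carefully the rest is a direct transcription of the earlier argument.
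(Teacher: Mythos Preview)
Your overall strategy is exactly the paper's: form the same weighted Lyapunov combination as in Theorem~\ref{thm:ema_msvr} using the adaptive recursions, telescope, and read off the parameter choices. There is, however, a bookkeeping slip in the weight you put on the descent lemma. You multiply \eqref{eq:noncvx_ada_starter} by $\tfrac{1}{\eta c_l}$; the paper multiplies by $\tfrac{1}{\eta c_u}$. The difference matters for the $V$-telescoping: with your weight the descent lemma contributes $\tfrac{c_u}{2c_l}V_{t+1}$ on the right, so together with $\tfrac{1-\beta}{\beta}V_{t+1}$ from \eqref{eq:grad_recursion_ada} the right-hand coefficient of $V_{t+1}$ becomes $\tfrac{1}{\beta}-1+\tfrac{c_u}{2c_l}$, which exceeds the left-hand coefficient $\tfrac{1}{\beta}$ whenever $c_u>2c_l$, and the potential no longer telescopes. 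With the paper's weight the descent contributes exactly $\tfrac{1}{2}V_{t+1}$ (as in the non-adaptive proof), the $\|\v_{t+1}\|^2$ constraint becomes
\[
\tfrac{c_l}{4c_u} - \tfrac{2L_F^2\eta^2c_u^2}{\beta^2} - \tfrac{8nC_g^2P\eta^2c_u^2}{B_1} - \tfrac{27nL_f^2C_1^2C_g^2\eta^2c_u^2}{B_1\beta}\geq 0
\]
(leading constant $\tfrac{c_l}{4c_u}$, not your $\tfrac{c_l}{4}$), and the final division is by $\tfrac{c_l}{2c_u}T$; this is where the extra $c_u/c_l$ factor on the noise terms actually originates. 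With this one correction your parameter settings and final $T$ agree with the paper.
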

Combining the above theorem with Theorem~\ref{prop:stationary_point}, we obtain the following guarantee:
\begin{corollary}\label{cor:ema_msvr_ada}
    Under Assumption~\ref{asm:func}(A1), ~\ref{asm:LBSV},~\ref{asm:oracle} and~\ref{asm:MCL0}, with the same setting as in Theorem~\ref{thm:ema_msvr_ada}, SONEX converges to a nearly $\epsilon$-stationary solution of \eqref{prob:FCCO} within $T=O(\frac{c_u^{2.5}n}{c_l^{2.5}B_1\sqrt{B_2}}\epsilon^{-5})$ iterations.
\end{corollary}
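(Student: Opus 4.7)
The plan is essentially to chain Theorem~\ref{thm:ema_msvr_ada} with Theorem~\ref{prop:stationary_point}. First I would invoke Theorem~\ref{thm:ema_msvr_ada}, which, under Assumptions~\ref{asm:func}(A1), \ref{asm:oracle}, and \ref{asm:MCL0} and with the specified choices of $\lambda$, $\beta$, $\gamma$, $\eta$, $\gamma'$, guarantees that SONEX with adaptive step sizes produces (an expected) approximate $\epsilon$-stationary solution $\w_\tau$ of \eqref{prob:FCCO} in the sense of Definition~\ref{def:aps} after $T=O(\tfrac{c_u^{2.5}n}{c_l^{2.5}B_1\sqrt{B_2}}\epsilon^{-5})$ iterations. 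This supplies points $\t_i$ and $\y_i\in\partial f_i(\t_i)$ with $\|\t_i-g_i(\w_\tau)\|\leq \epsilon$ and $\|\tfrac{1}{n}\sum_i\nabla g_i(\w_\tau)\y_i\|\leq \epsilon$.

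Next, I would verify the size condition on $\epsilon$ required by Theorem~\ref{prop:stationary_point}, namely $\epsilon\leq\min\{c,\tfrac{\delta^2}{16nC_g^2L_g}\}$, where $c$ and $\delta$ come from Assumption~\ref{asm:LBSV}. Since the entire convergence analysis is asymptotic in $\epsilon\to 0$, this bound is satisfied for all sufficiently small $\epsilon$, so it imposes no additional essential restriction. With that in place, Theorem~\ref{prop:stationary_point} applies to $\w_\tau$ and asserts that any approximate $\epsilon$-stationary solution is also a nearly $\epsilon$-stationary solution of \eqref{prob:FCCO}, which is exactly the desired conclusion.

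The iteration complexity is inherited verbatim from Theorem~\ref{thm:ema_msvr_ada}, since the post-processing step invoked in Theorem~\ref{prop:stationary_point} is purely a proof-theoretic argument (the existence of a nearby point $\w'$ with small subgradient), not an algorithmic step SONEX has to execute. No new hyperparameters need to be chosen beyond those already fixed by Theorem~\ref{thm:ema_msvr_ada}.

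There is really no technical obstacle specific to this corollary: all the work sits inside Theorem~\ref{thm:ema_msvr_ada} (which extends Theorem~\ref{thm:ema_msvr} to adaptive step sizes via Lemmas~\ref{lem:noncvx_ada_starter}--\ref{lem:u_update_ada}) and Theorem~\ref{prop:stationary_point} (the approximate-to-nearly reduction via the regularity Assumption~\ref{asm:LBSV}). The only mild subtlety worth mentioning is bookkeeping on expectations: Theorem~\ref{thm:ema_msvr_ada} yields the guarantee in expectation, while Theorem~\ref{prop:stationary_point} is a deterministic pointwise implication, so one should apply the latter pathwise for a random realization $\w_\tau$ and then either invoke Markov's inequality or simply restate the result in expectation, exactly as done in Corollary~\ref{cor:ema_msvr}.
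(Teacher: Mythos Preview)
Your proposal is correct and follows exactly the paper's approach: the paper proves Theorem~\ref{thm:ema_msvr_ada} and then simply notes that ``the remaining part is just the same as the proof of corollary~\ref{cor:ema_msvr},'' which in turn is obtained by combining the approximate-stationarity guarantee with Theorem~\ref{prop:stationary_point}. Your additional remarks on the $\epsilon$-size condition and the expectation bookkeeping are accurate elaborations of details the paper leaves implicit.
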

\begin{proof}[Proof of theorem~\ref{thm:ema_msvr_ada} and corollary~\ref{cor:ema_msvr_ada}]
Since the proof is almost the same as the theorem~\ref{thm:ema_msvr} and corollary~\ref{cor:ema_msvr}, we only highlight the difference.\\
Let $P>0$ be a positive constant to be decided later. Taking the weighted summation of both sides \eqref{eq:noncvx_ada_starter},  \eqref{eq:grad_recursion_ada}, \eqref{eq:ema_msvr_3_ada}, and \eqref{eq:ema_msvr_4_ada} as specified in the following formula
$$
\frac{1}{\eta c_u}\times\eqref{eq:noncvx_ada_starter}+\frac{1}{\beta}\times\eqref{eq:grad_recursion_ada} + P \times \eqref{eq:ema_msvr_3_ada}+  \frac{3L_f^2C_1^2}{n\beta}\times\eqref{eq:ema_msvr_4_ada},
$$ 
we have
\begin{align*}
    &\frac{1}{\eta c_u}\E F_\lambda(\w_{t+1}) + \frac{1}{\beta} \E V_{t+2} + \left(P-5 L_f^2C_1^2 \right)\E U_{t+2}  \\
  \leq &\frac{1}{\eta c_u}\E F_\lambda(\w_{t}) + \left(\frac{1}{\beta} - \frac{1}{2}\right) \E V_{t+1} +  \left(P\left(1-\frac{\gamma B_1}{n}\right) + \frac{12L_f^2C_1^2\gamma^2 B_1 }{\beta n^2}\right)\E U_{t+1} \\
   & - \left(\frac{c_l}{4c_u} - \frac{2L_F^2\eta^2c_u^2}{\beta^2} -  \frac{8nC_g^2P\eta^2c_u^2}{B_1} - \frac{27nL_f^2C_1^2C_g^2\eta^2c_u^2}{B_1\beta}\right)\E\normsq{\v_{t+1}}\\
    & + \frac{2\beta C_f^2(C_g^2 +\sigma_1^2)}{\min\{B_1, B_2\}} +  \frac{2B_1\gamma^2\sigma_1^2P}{nB_2} + \frac{6L_f^2C_1^2\gamma^2\sigma_0^2B_1}{\beta n B_2}   - \frac{c_l}{2c_u}\E\normsq{\nabla F_\lambda(\w_t)}
\end{align*}
We choose $\eta$ such that
\begin{align}
\label{eq:etasmall_ada}
\frac{c_l}{4c_u} - \frac{2L_F^2\eta^2c_u^2}{\beta^2} -  \frac{8nC_g^2P\eta^2c_u^2}{B_1} - \frac{27nL_f^2C_1^2C_g^2\eta^2c_u^2}{B_1\beta}\geq0.
\end{align}
We follow proof of~\ref{thm:ema_msvr} to set 
$$
P=\frac{n}{\gamma B_1}\left(5 L_f^2C_1^2+ \frac{12L_f^2C_1^2\gamma^2 B_1 }{\beta n^2}\right).
$$
which satisfies the following linear equation
$$
P\left(1-\frac{\gamma B_1}{n}\right) + \frac{12L_f^2C_1^2\gamma^2 B_1 }{\beta n^2}= P-5 L_f^2C_1^2,
$$
Combining the results above gives 
\begin{align*}
    &\frac{1}{\eta c_u}\E F_\lambda(\w_{t+1}) + \frac{1}{\beta} \E V_{t+2} + \left(P-5 L_f^2C_1^2 \right)\E U_{t+2}  \\
  \leq &\frac{1}{\eta c_u}\E F_\lambda(\w_{t}) + \frac{1}{\beta}  \E V_{t+1} +  \left(P-5 L_f^2C_1^2 \right)\E U_{t+1} \\
     & + \frac{2\beta C_f^2(C_g^2 +\sigma_1^2)}{\min\{B_1, B_2\}} +  \frac{2B_1\gamma^2\sigma_1^2P}{nB_2} + \frac{6L_f^2C_1^2\gamma^2\sigma_0^2B_1}{\beta n B_2} -\frac{c_l}{2c_u}\E\normsq{\nabla F_\lambda(\w_t)}.
\end{align*}
Summing both sides of the inequality above over $t=0,1,\dots,T-1$ and organizing term lead to 
\begin{align}
\nonumber
    \frac{1}{T}\sum_{t=0}^{T-1}\normsq{\nabla F_\lambda(\w_t)} 
   \leq &     \frac{2c_u}{c_lT}\left(\frac{1}{\eta c_u} (F_\lambda(\w_{0})-F_\lambda(\w_{T})) + \frac{1}{\beta}  \E V_{1} +  \left(P-5 L_f^2C_1^2 \right)\E U_{1}\right)\\\label{eq:mainineq1_ada}
 &  +\frac{c_u}{c_l}\left(\frac{4\beta C_f^2(C_g^2 +\sigma_1^2)}{\min\{B_1, B_2\}} +  \frac{4B_1\gamma^2\sigma_1^2P}{nB_2} + \frac{12L_f^2C_1^2\gamma^2\sigma_0^2B_1}{\beta n B_2} \right)
\end{align}

Note that $L_F=O(\frac{1}{\epsilon})$, $L_f=O(\frac{1}{\epsilon})$ and $P=O(\frac{n}{B_1\gamma\epsilon^2}+\frac{\gamma}{n\beta\epsilon^2})$. 
We require that all the terms in the right-hand side to be in the order of $O(\epsilon^2)$. To ensure $\frac{4c_u\beta C_f^2(C_g^2 +\sigma_1^2)}{c_l\min\{B_1, B_2\}}=O(\epsilon^2)$, we must have $\beta=O(\frac{c_l\min\{B_1, B_2\}\epsilon^2}{c_u})$.  To ensure $\frac{4c_uB_1\gamma^2\sigma_1^2P}{c_lnB_2}=\frac{c_u}{c_l}O(\frac{\gamma}{B_2\epsilon^2}+\frac{B_1\gamma^3}{n^2B_2\beta\epsilon^2})=O(\epsilon^2)$, we must have $\gamma=O(\frac{c_lB_2\epsilon^4}{c_u})$. With these orders of $\beta$ and $\gamma$, we also have $P=O(\frac{c_un}{c_lB_1B_2\epsilon^6})$ and $\frac{12c_uL_f^2C_1^2\gamma^2\sigma_0^2B_1}{c_l\beta n B_2} =O(\epsilon^4)$. Therefore the last three terms in \eqref{eq:mainineq1_ada} are $O(\epsilon^2)$.

To satisfy \eqref{eq:etasmall_ada}, we will need to set 
\begin{align*}
    \eta&=O\left(\sqrt{\frac{c_l}{c_u^3}}\min\left\{\frac{\beta}{L_F},\sqrt{\frac{B_1}{nP}},\frac{1}{L_f}\sqrt{\frac{B_1\beta}{n}}\right\}\right)\\
    &=O\left(\sqrt{\frac{c_l}{c_u^3}}\min\left\{\frac{c_l}{c_u}\min\{B_1, B_2\}\epsilon^3,\frac{B_1\sqrt{B_2}}{n}\epsilon^3,\sqrt{\frac{c_l B_1\min\{B_1, B_2\}}{c_u n}}\epsilon^2\right\}\right)\\&=O\left(\frac{c_l^{1.5}B_1\sqrt{B_2}}{c_u^{2.5}n}\epsilon^3\right).
\end{align*}
We also set the sizes of $\mathcal{B}_{i,2}$ to be $O(\frac{1}{\epsilon^3})$ so that $\E U_1 = O(\epsilon^3)$. Also, it is easy to see that $F_\lambda(\w_{0})-F_\lambda(\w_{T})=O(1)$ and $\E V_{1}=O(1)$. Then the first term in \eqref{eq:mainineq1_ada} becomes 
$$
\frac{1}{T}O\left(\frac{1}{\eta c_l}+\frac{1}{\beta}+P\epsilon^3\right)=\frac{1}{T}O\left(\frac{c_u^{2.5}n}{c_l^{2.5}B_1\sqrt{B_2}\epsilon^3}+\frac{c_u}{c_l\min\{B_1, B_2\}\epsilon^2}+\frac{c_un}{c_lB_1B_2\epsilon^3}\right)
$$
which will become $O(\epsilon^2)$ when $T=O(\frac{c_u^{2.5}n}{c_l^{2.5}B_1\sqrt{B_2}\epsilon^5})$.\\
Then the remaining part is just the same as the proof of corollary~\ref{cor:ema_msvr}.
\end{proof}

\section{More Experiment Details}
\subsection{GDRO with CVaR divergence}\label{subsec:gdro}
\textbf{Motivation of GDRO}
Modern machine learning models are typically trained under the empirical risk minimization (ERM) framework, which treats all samples in the dataset equally. Although ERM often achieves strong average performance on test sets drawn from distributions similar to the training data, it can perform poorly on rare or underrepresented subpopulations, i.e., it lacks robustness to distributional shifts. The motivation of GDRO is to address the spurious correlation between features and labels, and distributional shift. GDRO partitions the dataset into groups representing different distributions and applies a robust optimization scheme that assigns more weight on the worst groups. Specifically, GDRO considers the following objective:
\begin{align}
    \min_\theta \max_{\p\in\Omega}\frac{1}{n}\sum_{g=1}^n p_g\E_{(\x,y)\sim \D_g} l(\theta;(\x,y))\nonumber
\end{align}
where $\Omega\subseteq\Delta$ is the set of distribution under consideration and $\Delta$ is the simplex. We consider here a popular choice of $\Omega = \{\p\in\Delta:p_i\leq \frac{1}{k}, \forall i\in[n]\}$, which, by solving the inner-level maximization problem, leads to an equivalent reformulation~\ref{prob:gdro} corresponding to the so-called CVaR divergence~\cite{wang2023alexr, rockafellar2000optimization}. Intuitively GDRO with CVaR divergence aims to minimize averaged loss of the top-k worst groups.

\textbf{CAMELYON17-WILDS}~\cite{bandi2018detection} is part of the WILDS benchmark suite and consists of histopathology whole-slide images from five medical centers, with the goal of detecting metastatic tissue in lymph node biopsies. Following the WILDS setup, we frame this as a binary classification task on image patches, where the primary challenge lies in distribution shift across hospitals (domains). We construct group with attributes `hospital' and `slide' which generates 30 groups. 

\textbf{Amazon-WILDS}~\cite{ni2019justifying} is a text classification dataset derived from Amazon product reviews, where the goal is to predict binary sentiment (positive or negative) based on TF-IDF features of review text. The data spans multiple product categories. We construct group with the attribute `user' which generates 1252 groups. 

\textbf{CelebA}~\cite{liu2015deep} is a large-scale facial attribute dataset containing over 200,000 celebrity images annotated with 40 binary attributes. We select 4 attributes `Attractive', `Mouth\_Slightly\_Open', `Male' and `Blonde\_Hair' and construct 16 groups, where `Blonde\_Hair' also serves as the target attribute for us to do classification. 

{\bf Hyperparameter tuning.}  We tune the same hyperparameters of different methods from the same candidates as follows for fair comparison. For the three tasks we train the models for 10, 4, 15 epochs with batch size and the number of groups within a mini batch of 256(8), 32(8), 64(4), respectively. We set $\alpha=0.15$ for all the three dataset. We tune learning rate in \{1e-5, 2e-5, 5e-5, 1e-4, 2e-4, 5e-4, 1e-3, 2e-3, 5e-3\}, $\lambda$ in \{1, 0.1, 0.01\}, $\gamma$ and $\beta$ in \{0.1, 0.2, 0.5, 0.8\} and $\gamma'$ in \{0.01, 0.02, 0.05, 0.1, 0.2\}. We set weight decay to be 0.01, 0.01, 0.02 for the three tasks, respectively.  We use step decay (decay by 0.3x for every 3 epochs), linear decay with 1st epoch warmup, step decay (decay by 0.2x for every 3 epochs) for learning rate for the three tasks, respectively.

\subsection{AUC Maximization with ROC Fairness Constraints}\label{subsec:auc}
In this part we perform experiment on learning a model with ROC fairness constraint~\cite{vogel2021learning} following the same experiment setting as~\cite{yang2025single}. Suppose the data are divided into two demographic groups $\mathcal{D}_p=\{(\mathbf a_i^p, b_i^p)\}_{i=1}^{n_p}$ and $\mathcal{D}_u=\{(\mathbf a_i^u, b_i^u)\}_{i=1}^{n_u}$, where $\mathbf a$ denotes the input data and $b\in\{1,-1\}$ denotes the class label. A ROC  fairness is to ensure the ROC curves for classification of the two groups are the same.\\
Since the ROC curve is constructed with all possible thresholds, we follow~\cite{vogel2021learning} by using a set of thresholds $\Gamma=\{\tau_1, \cdots, \tau_m\}$ to define the ROC fairness. For each threshold $\tau$, we impose a constraint that the false positive rate (FPR) and true positive rate (TPR) of the two groups are close, formulated as follow:
\begin{equation*}
\begin{aligned}
    &h_{\tau}^+(\w) : = \Big| \frac{1}{n_p^+}\sum_{i=1}^{n_p} \I\{b^p_i=1\}\sigma(s_\w(\mathbf a_i^p)-\tau)\vspace*{-0.1in}-\frac{1}{n_u^+}\sum_{i=1}^{n_u} \I\{b^u_i=1\}\sigma(s_\w(\mathbf a_i^u)-\tau)\Big| - \kappa \leq 0
\end{aligned}
\end{equation*}
\begin{equation*}
\begin{aligned}
    &h_{\tau}^-(\w) : = \Big| \frac{1}{n_p^-}\sum_{i=1}^{n_p} \I\{b^p_i=-1\}\sigma(s_\w( \mathbf a_i^p)-\tau)\vspace*{-0.1in}-\frac{1}{n_u^-}\sum_{i=1}^{n_u} \I\{b^u_i=-1\}\sigma(s_\w(\mathbf a_i^u)-\tau)\Big| -\kappa \leq 0,
\end{aligned}
\end{equation*}
where $s_{\w}(\cdot)$ denotes a parameterized model,  $\sigma(z)$ is the sigmoid function, and $\kappa>0$ is a tolerance parameter. 
We use the pairwise AUC loss as the objective function:
\begin{align*}
    F(\w)=\frac{-1}{|\cD_+||\cD_-|} \sum_{\x_i \in \cD_+}\sum_{\x_j \in \cD_-} \sigma(s(\w,\x_i)-s(\w,\x_j)),
\end{align*}
where $\cD_+$ and $\cD_-$ denote the set of positive and negative examples regardless their groups, respectively.
We follow~\cite{yang2025single} to recast original constraint optimization as the following hinge-penalized objective:
\begin{align}\label{fairnessimplement}
    \min_{\w} F(\w) + \frac{1}{2|{\Gamma}|}\sum_{\tau\in\Gamma} \beta[h_{\tau}^+(\w)]_+ + \beta[h_{\tau}^-(\w)]_+
\end{align}
It is notable that the penalty terms in the above problem can be formulated as $[h_{\tau}^+(\w)]_+ = f(g(\w;\tau))$, where $f(g)=[|g_1 - g_2|-\kappa]_+$, $g_{1}(\w; \tau) =  \frac{1}{n_p^+}\sum_{i=1}^{n_p} \I\{b_i^p=1\}\sigma(s_\w(\mathbf a_i^p)-\tau)$, $g_{2}(\w;\tau) =  \frac{1}{n_u^+}\sum_{i=1}^{n_u} \I\{b_i^u=1\}\sigma(s_\w(\mathbf a_i^u)-\tau)$. As a result, $f$ is  convex,  $g(\w)$ is smooth. As a result, Algorithm~\ref{alg:OSDL} is applicable. 

{\bf Hyperparameter tuning.} For ALEXR2, we tune $K_t$ in \{5, 10\}, tune $\alpha$ (i.e. learning rate for outer loop) and $\eta$ (i.e. learning rate for inner loop) in \{1e-3, 1e-2, 0.1, 1\}, and tune $\lambda$ (i.e. smoothing coefficient for outer function) and $\nu$ (i.e. smoothing coefficient for overall objective) in \{2e-2, 2e-3, 2e-4\} and \{0.1, 0.01\}, respectively. We fix the hyperparameters in MSVR update for both ALEXR2~\footnote{As discuss in section~\ref{sec:wcinner}, update of $\y$ in inner loop is equivalent to MSVR update, i.e. $y_{i,t} = \nabla f_{i,\lambda}(u_{i,t}), u_{i,t+1} =(1-\hat\gamma) u_{i,t} + \hat\gamma   g_i(\z_{t,k};\B_{i,2}^{t,k}) + \hat\gamma \theta(g_i(\z_{t,k};\B_{i,2}^{t,k}) - g_i(\z_{t-1,k};\B_{i,2}^{t,k}))$} and SONX, i.e. $\hat{\gamma}=0.8$ and $\hat{\gamma}' = \hat{\gamma}\theta = 0.1$.  For the two baseline algorithms, we tune the initial learning rate in \{0.1, 1e-2, 1e-3, 1e-4\}. We decay learning rate(outer lr for ALEXR2) at 50\% and 75\% epochs by a factor of 10. We tuned $\rho=\{4, 6, 8, 10, 20, 40\}$ for ALEXR2 and SONX and $\rho=\{10,40,80,100,200,400,800,1000\}$ for SOX. We also compare a double-loop method (ICPPAC)~\citep[Algorithm 4]{boob2023stochastic}, where we tune their $\eta$ in \{0.1, 0.01\}, $\tau$ in \{1, 10, 100\}, $\mu$ in \{1e-2, 1e-3, 1e-4\}, and fix $\theta_t$ to 0.1.

\begin{figure*}[tb!]
\centering
    \includegraphics[width=0.22\textwidth]{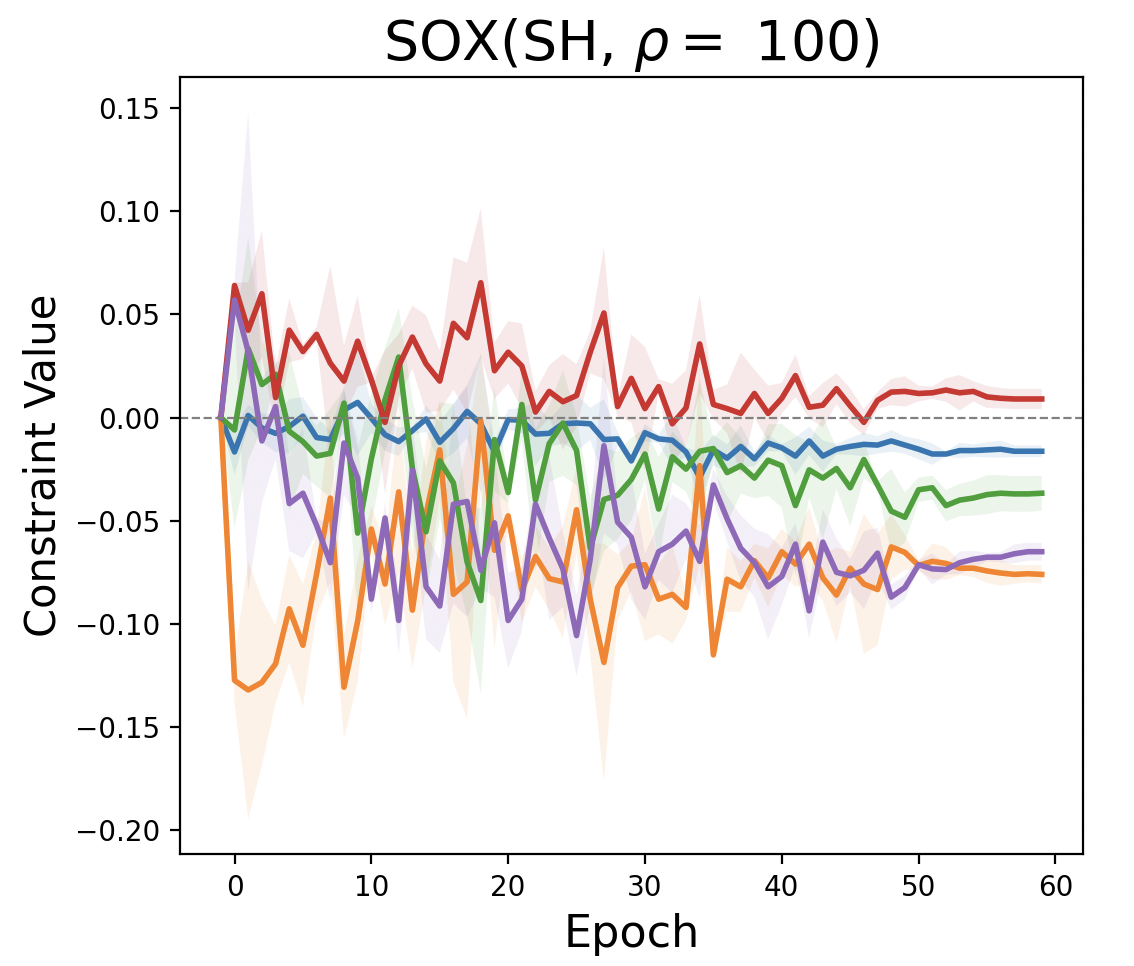} 
   \includegraphics[width=0.22\textwidth]{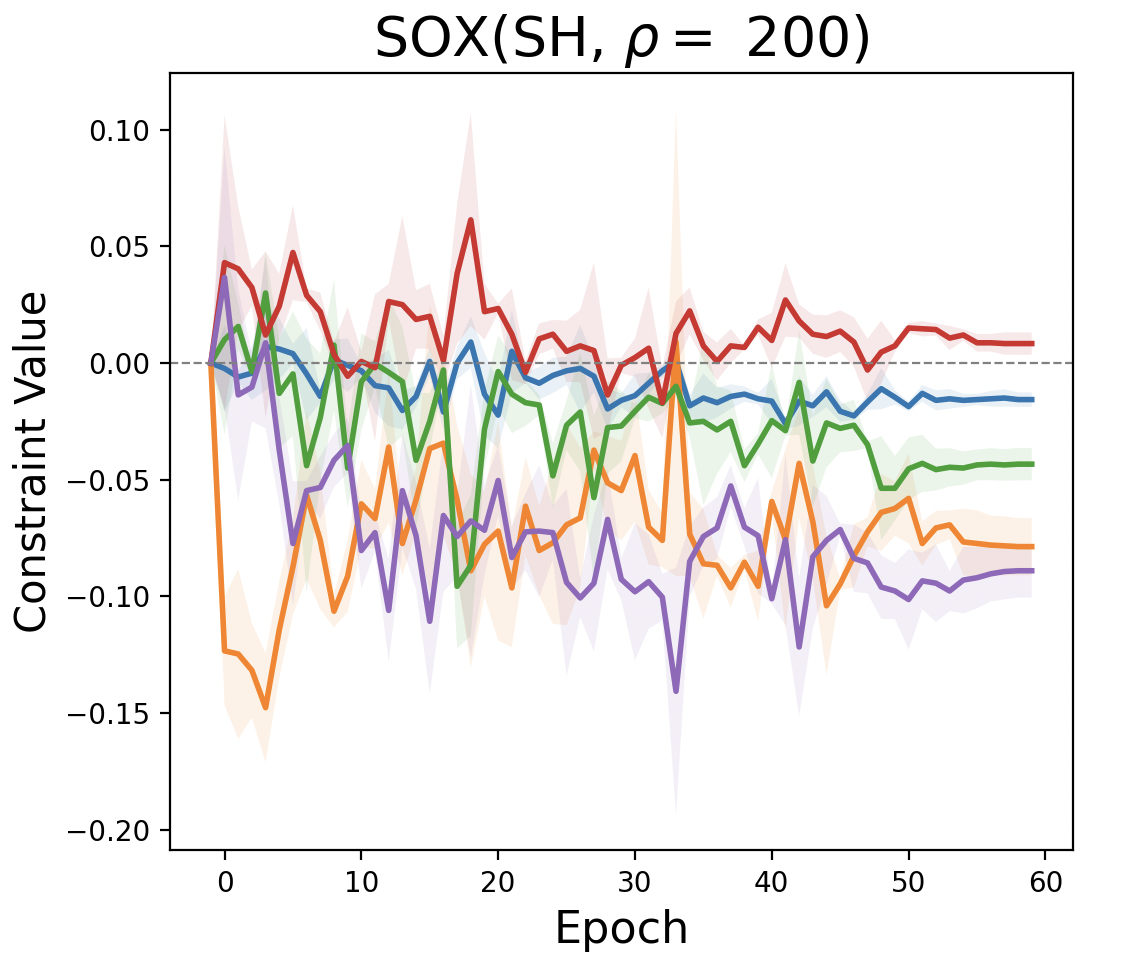} 
    \includegraphics[width=0.22\textwidth]{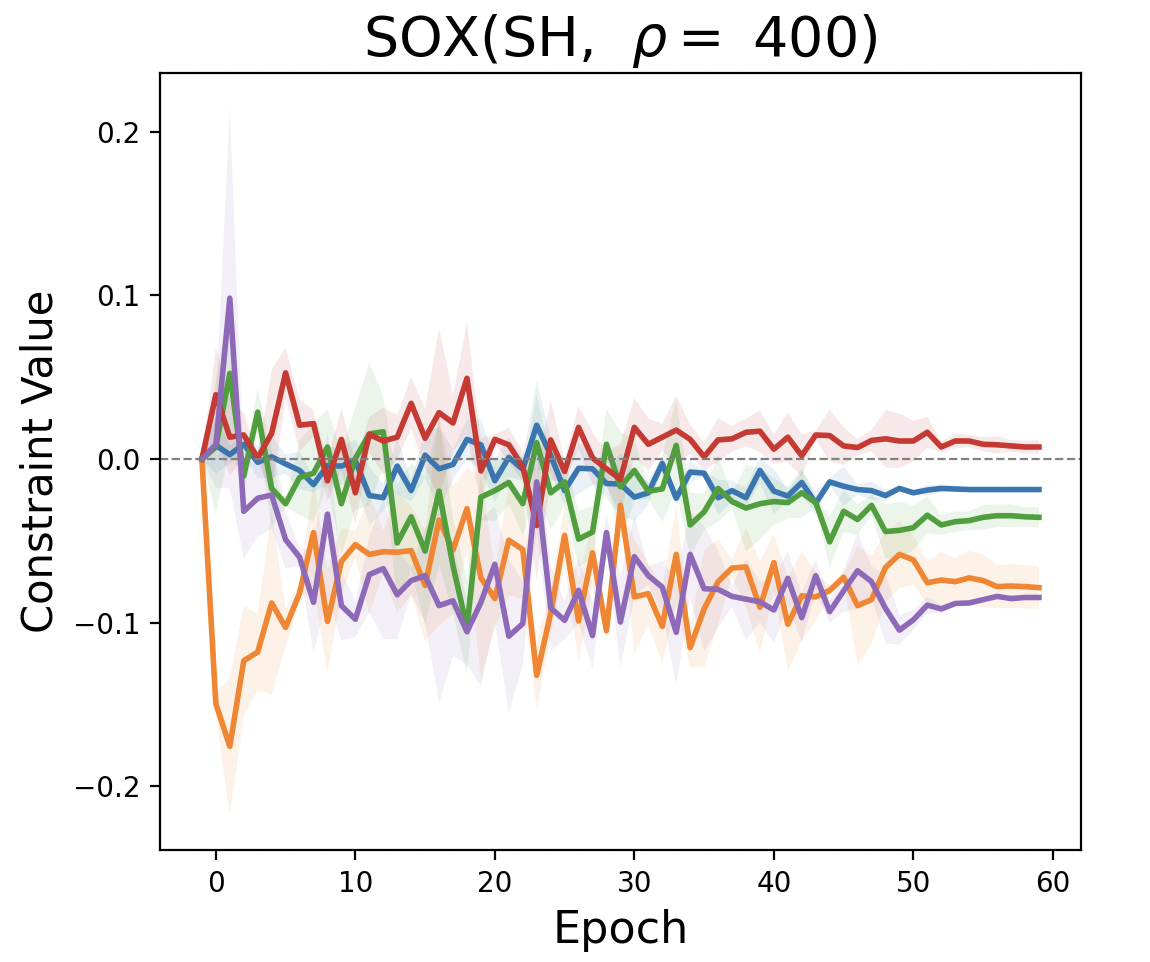} 
   \includegraphics[width=0.32\textwidth]{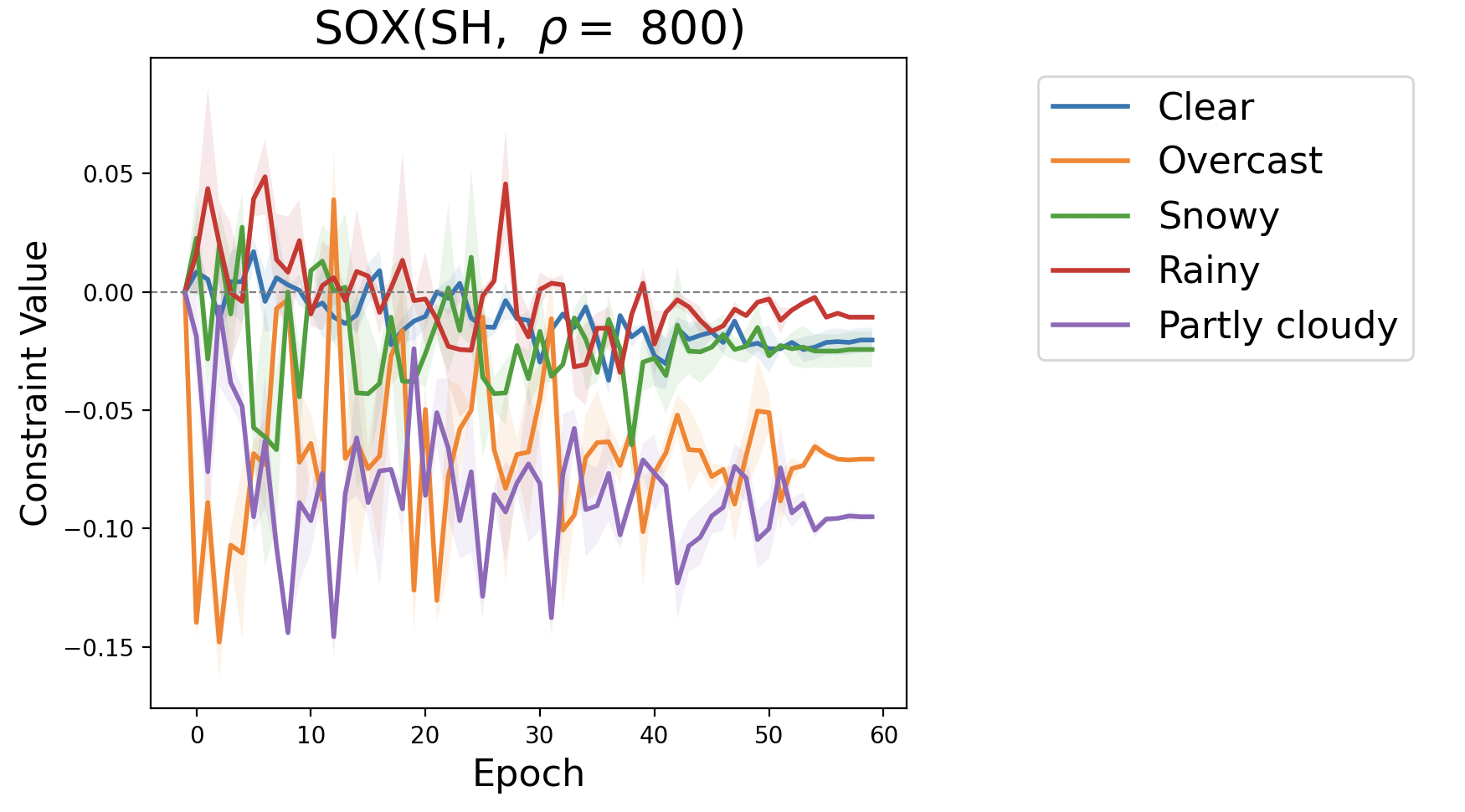}\\
    \includegraphics[width=0.22\textwidth]{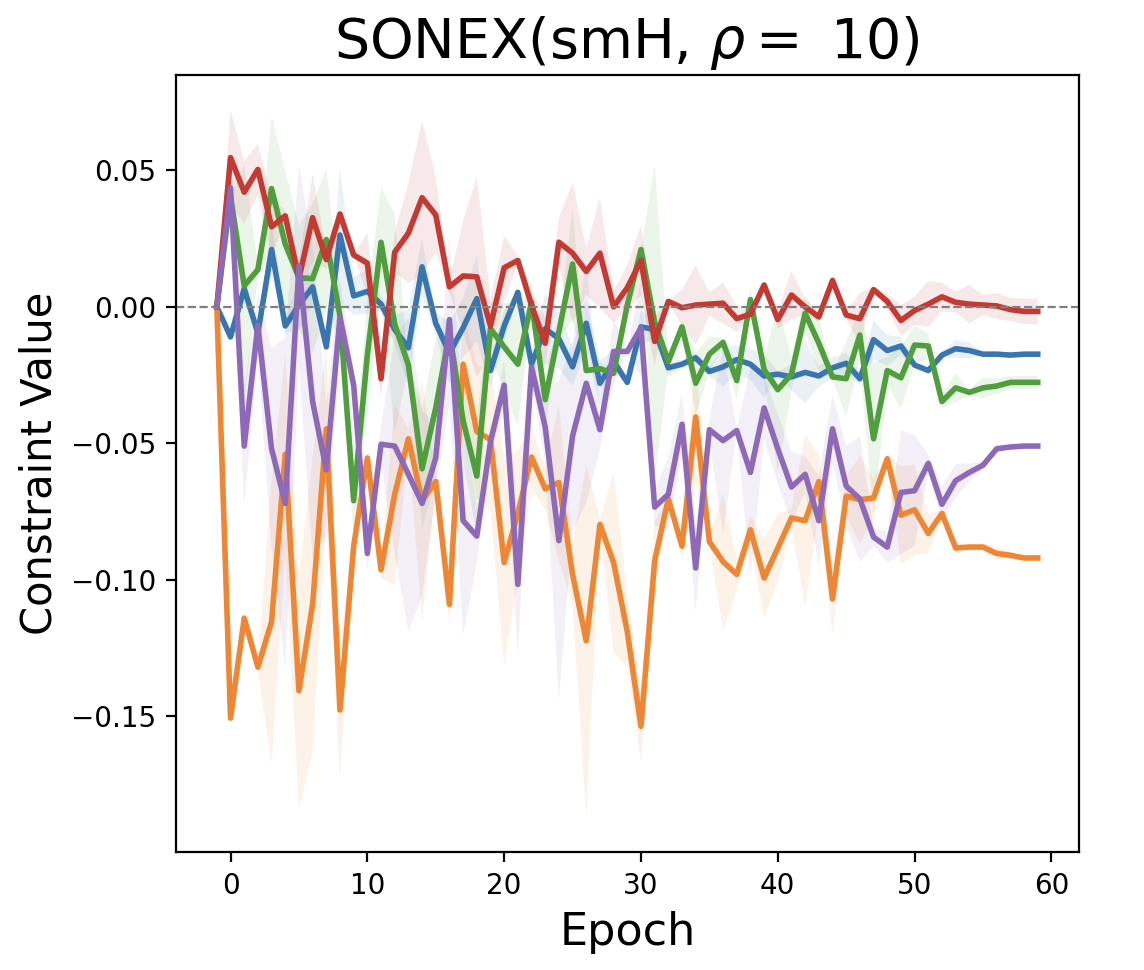} 
   \includegraphics[width=0.22\textwidth]{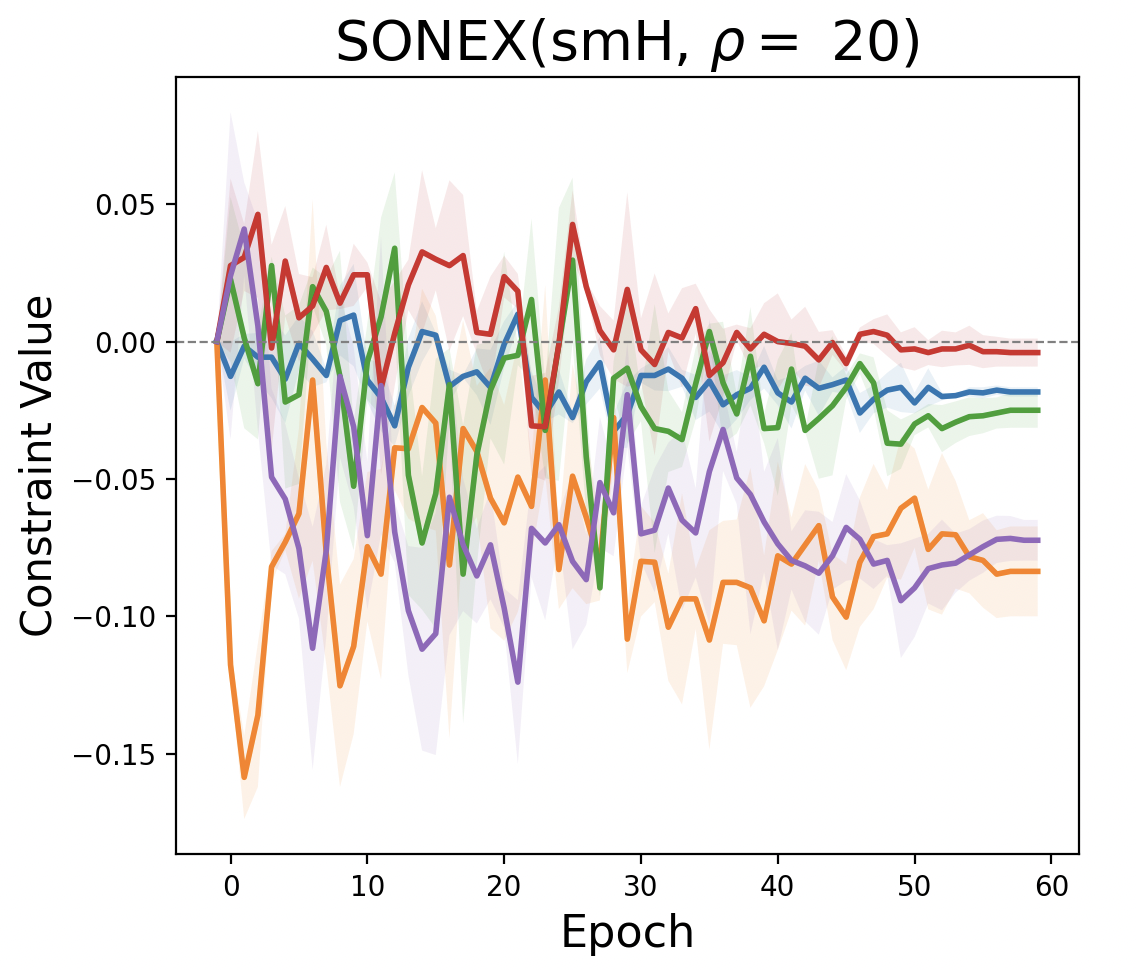} 
    \includegraphics[width=0.22\textwidth]{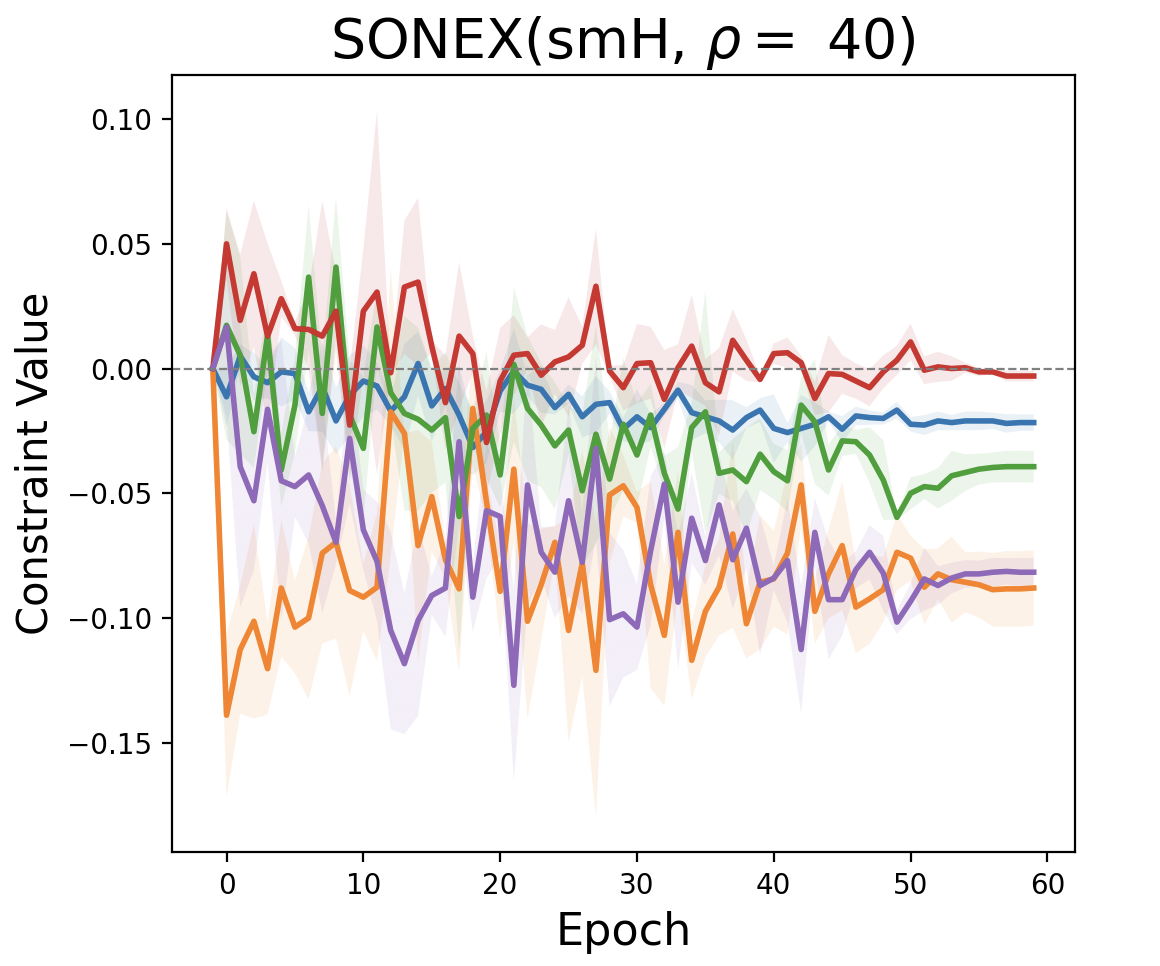} 
    \includegraphics[width=0.32\textwidth]{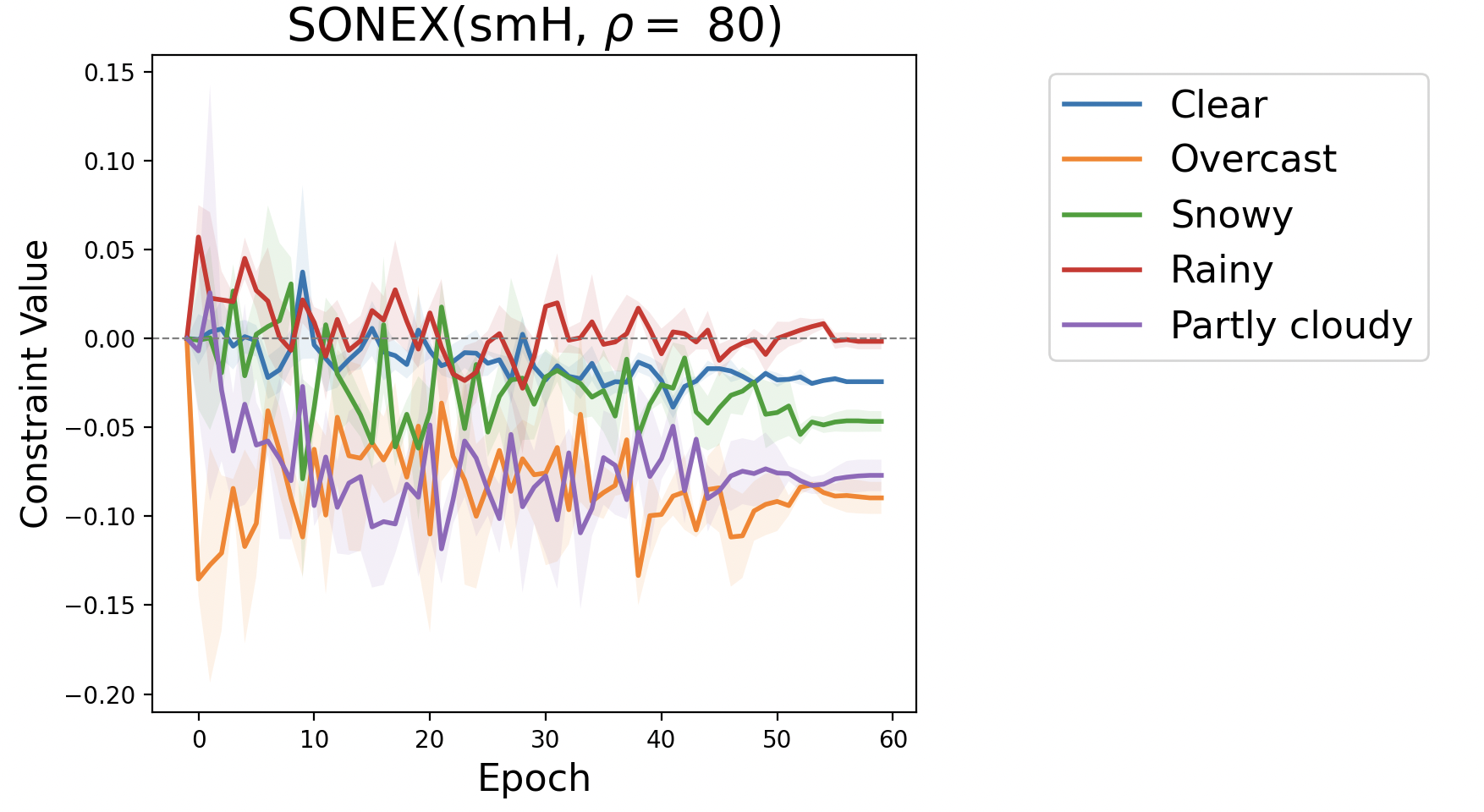} 
\vspace{-0.15in}
\centering
\caption{Training curves of 5 constraint values in zero-one loss of different methods for continual learning with non-forgetting constraints when targeting the foggy class. Top: squared-hinge penalty method with different $\rho$; Bottom: smoothed hinge  penalty method with different $\rho$.} 
\label{fig:mds_cons1}
\end{figure*}
\subsection{Continual learning with non-forgetting constraints}\label{subsec:cont}
We follow ~\cite{li2024model} and  consider training a CLIP model~\cite{radford2021learning} with global contrastive loss (GCL)~\cite{yuan2022provable} as the objective and a so-called model developmental safety (MDS) constraints on protected tasks, given by:
\begin{align*}
    &\min_{\mathbf{w}}  F(\mathbf{w}, \mathcal{D}) := \frac{1}{n} \sum_{(\mathbf{x}_i, \mathbf{t}_i) \in \mathcal{D}} L_{\text{GCL}}(\mathbf{w}; (\mathbf{x}_i, \mathbf{t}_i), (\mathcal{T}_i^-, \mathcal{I}_i^-)),\\
    &\text{s.t. } h_{k}:=\mathcal{L}_k(\mathbf{w}, D_k) - \mathcal{L}_k(\mathbf{w}_{\text{old}}, D_k) \leq 0, 
 k = 1, \cdots,m.
\end{align*}
where $L_{\text{GCL}}(\mathbf{w}; (\mathbf{x}_i, \mathbf{t}_i), (\mathcal{T}_i^-, \mathcal{I}_i^-))$ is the two-way GCL for each image-text pair $(\mathbf{x}_i, \mathbf{t}_i)$, 
\begin{align*}
    L_{\text{GCL}}(\mathbf{w}; \mathbf{x}_i, t_i, \mathcal{T}_i^-, \mathcal{I}_i^-)\coloneqq &- \tau \log \frac{\exp(E_I(\mathbf{w}, \mathbf{x}_i)^\top E_T(t_i) / \tau)}{\sum_{t_j \in \mathcal{T}_i^-} \exp(E_I(\mathbf{x}_i)^\top E_T(t_j) / \tau)} \\
    & - \tau \log \frac{\exp(E_T(t_i)^\top E_I(\mathbf{x}_i) / \tau)}{\sum_{\mathbf{x}_j \in \mathcal{I}_i^-} \exp(E_T(t_j)^\top E_I(\mathbf{x}_i) / \tau)},
\end{align*}
where $E_I(\mathbf{x})$ and $E_T(\t)$ denote the normalized feature representation of an image $\mathbf{x}$ and a text $t$, generated by visual encoder and text encoder of CLIP model, respectively.
$\mathcal{T}_i^{-}$ denotes the set of all texts to be contrasted with respect to (w.r.t) $\mathbf{x}_i$ (including itself) and $\mathcal{I}_i^{-}$ denotes the set of all images to be contrasted w.r.t $t_i$ (including itself). Here, the data $\mathcal{D}$ is a target dataset. $\mathcal{L}_k(\mathbf{w}, \mathcal{D}_k) = \frac{1}{n_k} \sum_{(\mathbf{x}_i, y_i) \sim \mathcal{D}_k} \ell_k(\mathbf{w}, \mathbf{x}_i, y_i)$ is the loss of the model $\w$ on the $k$-th protected task, where $\ell_k$
is a logistic loss  for the $k$-th classification task. 

{\bf Hyperparameter tuning.} The training data for the target tasks are sampled from the training set of BDD100K and LAION400M~\cite{schuhmann2021laion}, while the data used to construct the MDS constraints are sampled exclusively from BDD100K. For all the method we use the same learning rate of 1e-6 and the same weight decay 0.1. For each experiment we run for 60 epochs * 400 iterations per epoch and the batch size is 256. We tune $\rho$ in \{0.01, 0.1, 1, 5, 10, 20, 40, 80\}
for SONEX+smooth Hinge and in \{1, 20, 50, 100, 200, 400, 800\} for SOX+squared hinge; we set $\gamma_1 = \gamma_2 = 0.8, \gamma_1' = \gamma_2' = 0.1, \tau = 0.05$ where $\gamma_1, \gamma_1'$ is the hyperparameter in MSVR update for objective term (i.e. $L_{\text{GCL}}$~\footnote{note that GCL is also a type of FCCO problem, we can also apply MSVR update to track the inner functions}) while $\gamma_2, \gamma_2'$ is the hyperparameter in MSVR update for the penalty terms.
\begin{figure*}[tb!]
\centering
    \includegraphics[width=0.22\textwidth]{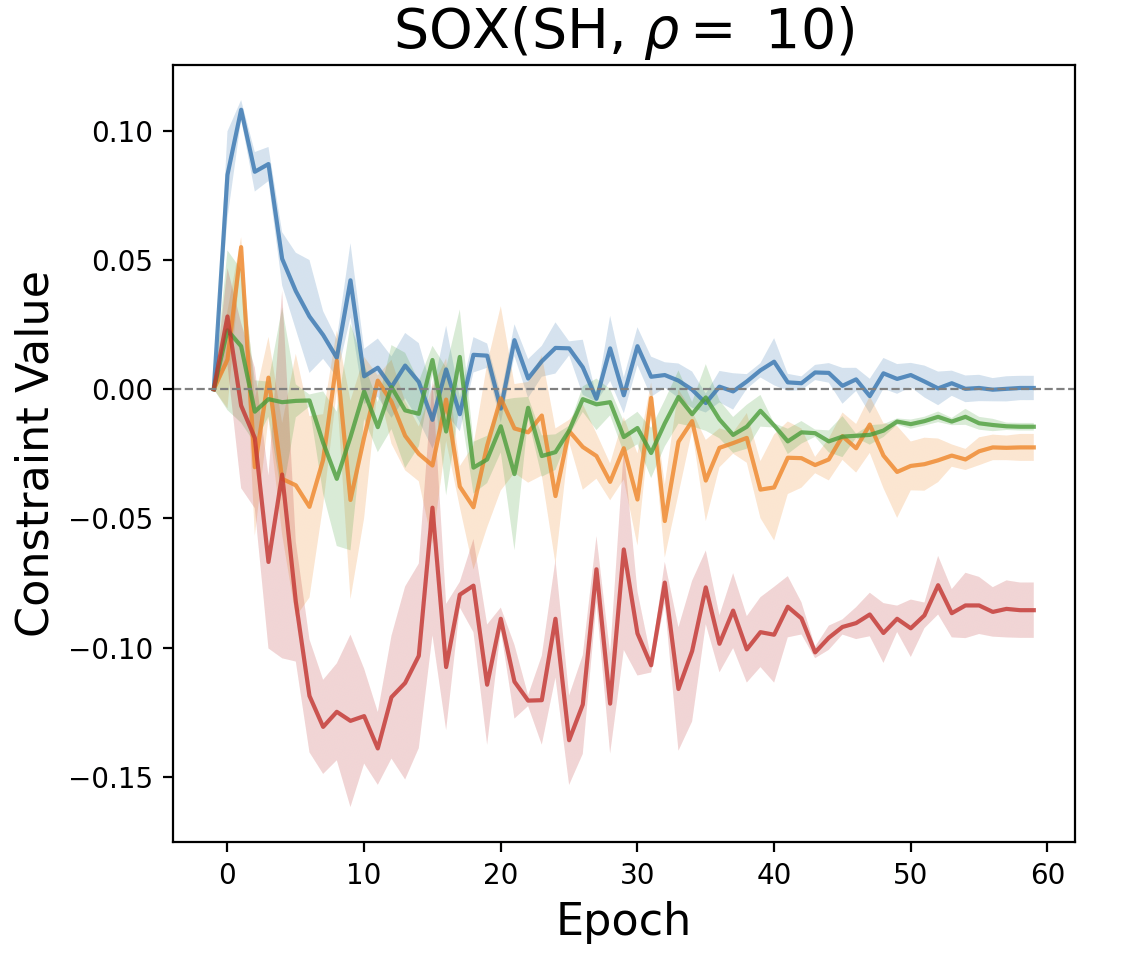} 
   \includegraphics[width=0.22\textwidth]{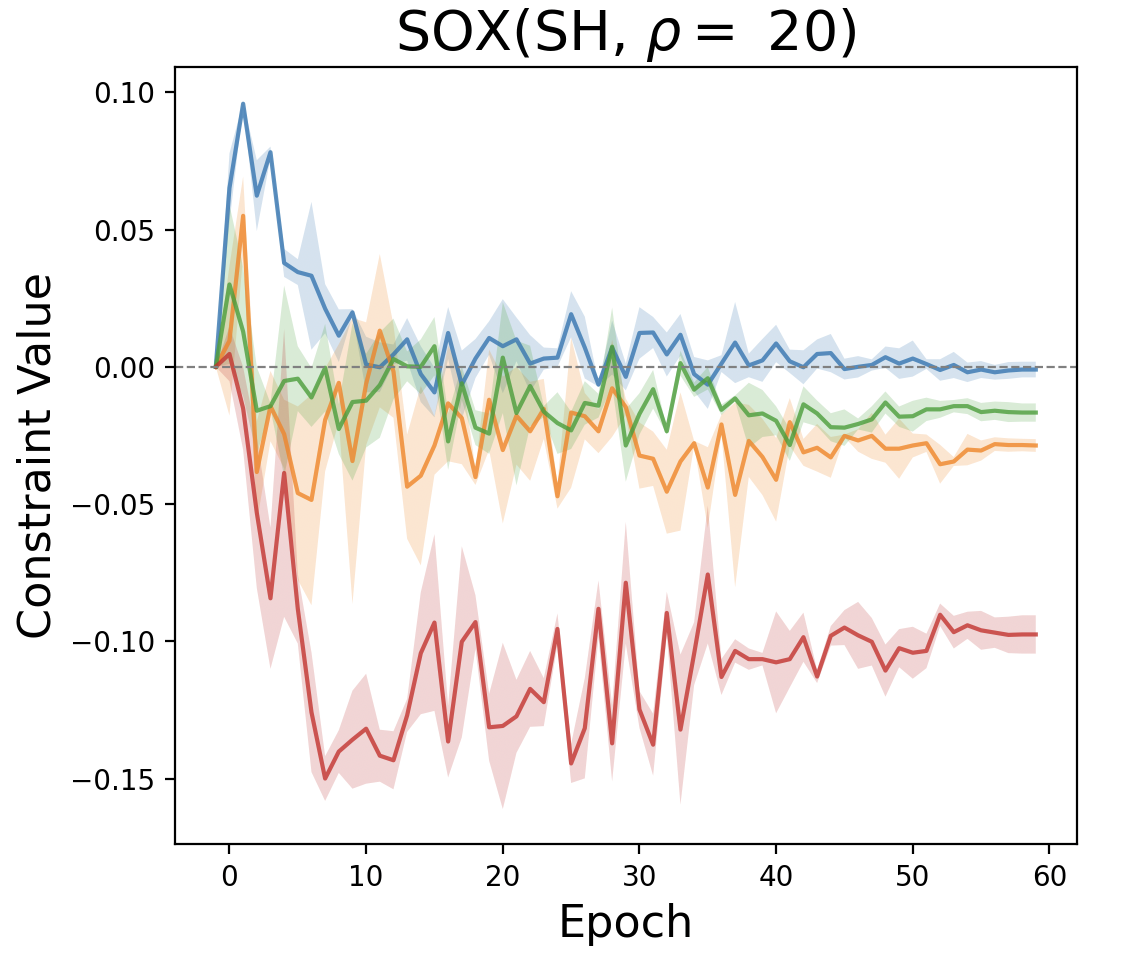} 
    \includegraphics[width=0.22\textwidth]{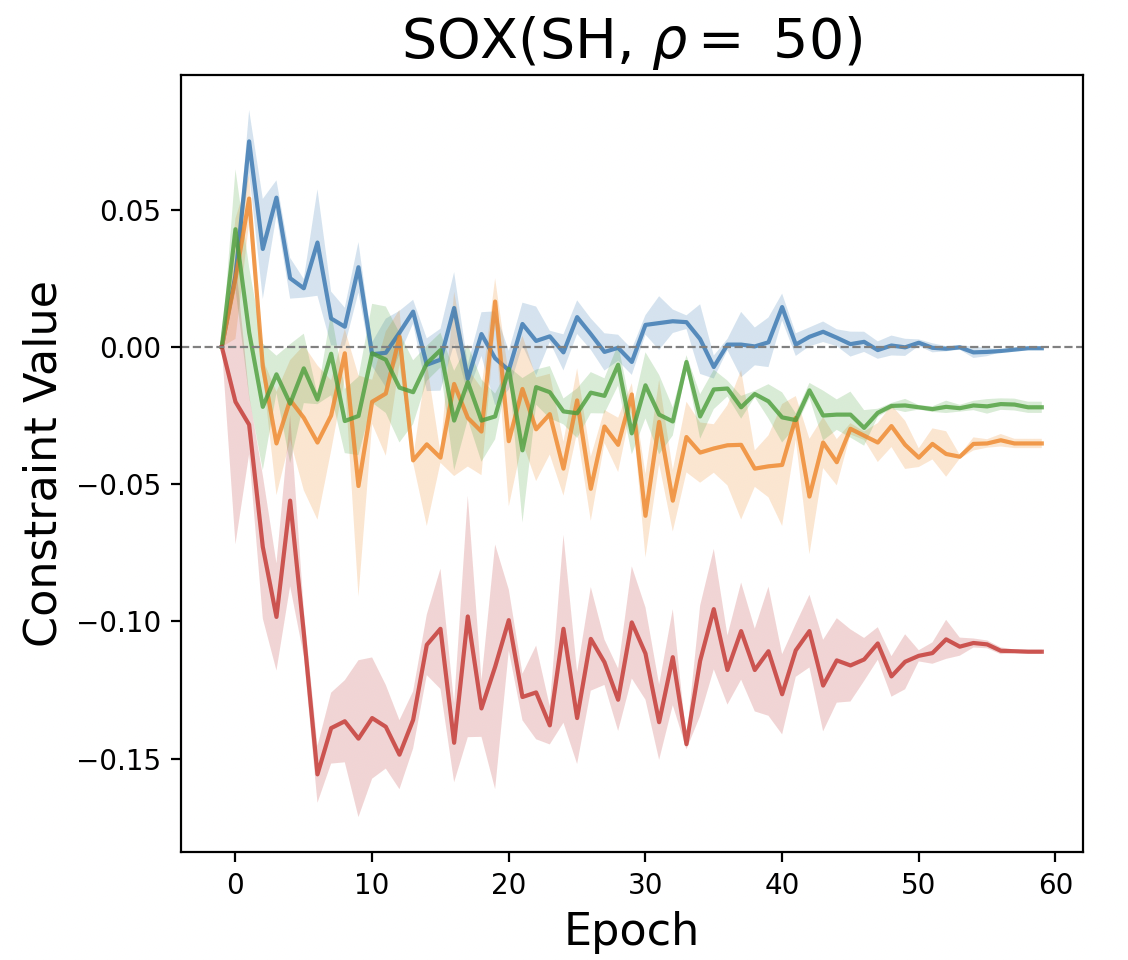} 
   \includegraphics[width=0.32\textwidth]{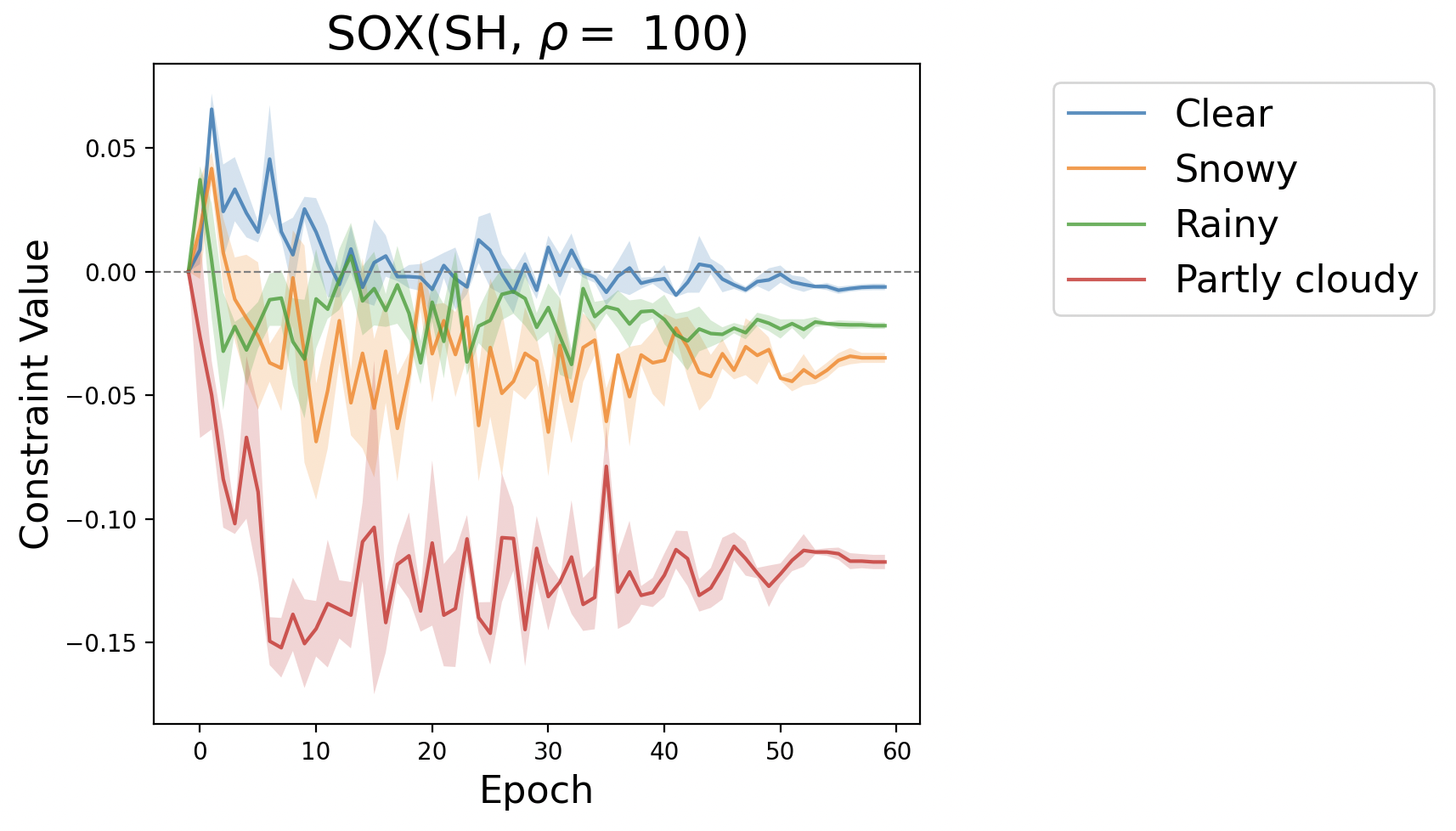}\\
    \includegraphics[width=0.22\textwidth]{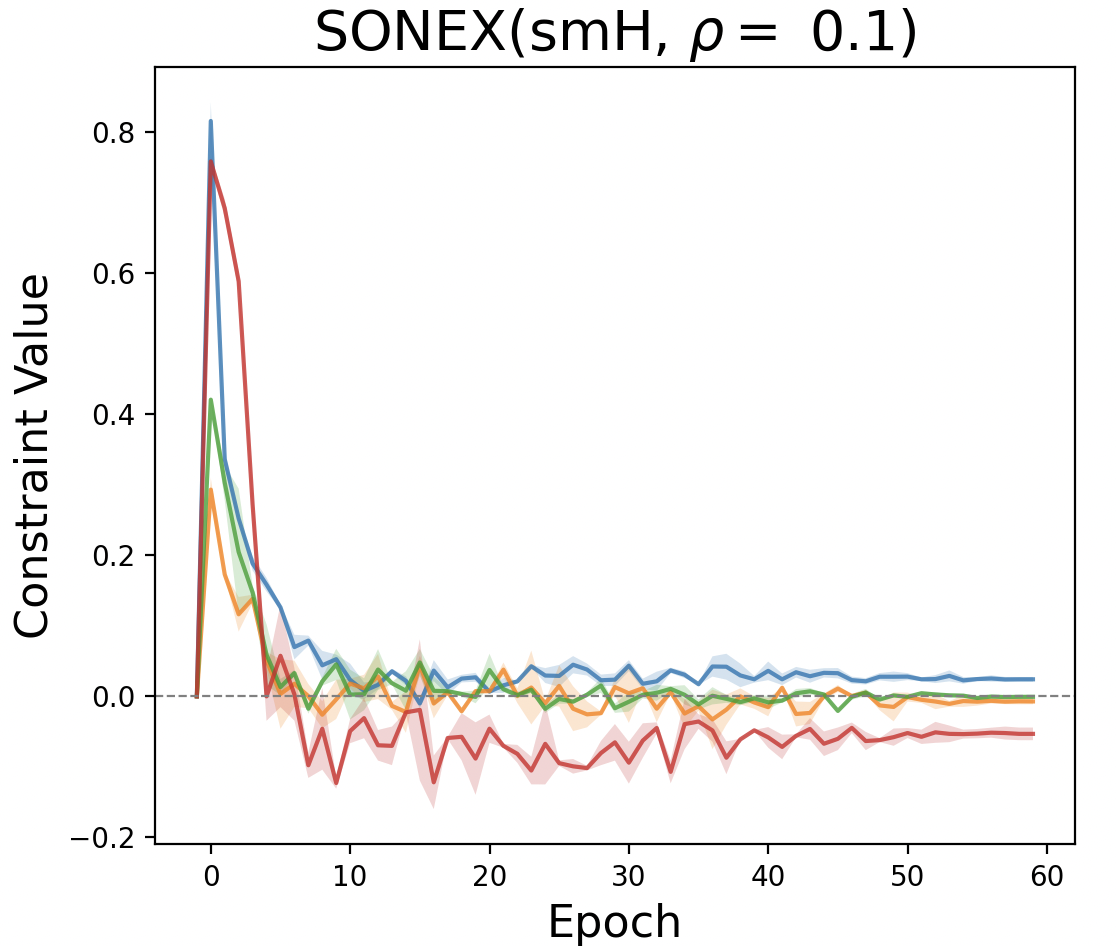} 
   \includegraphics[width=0.22\textwidth]{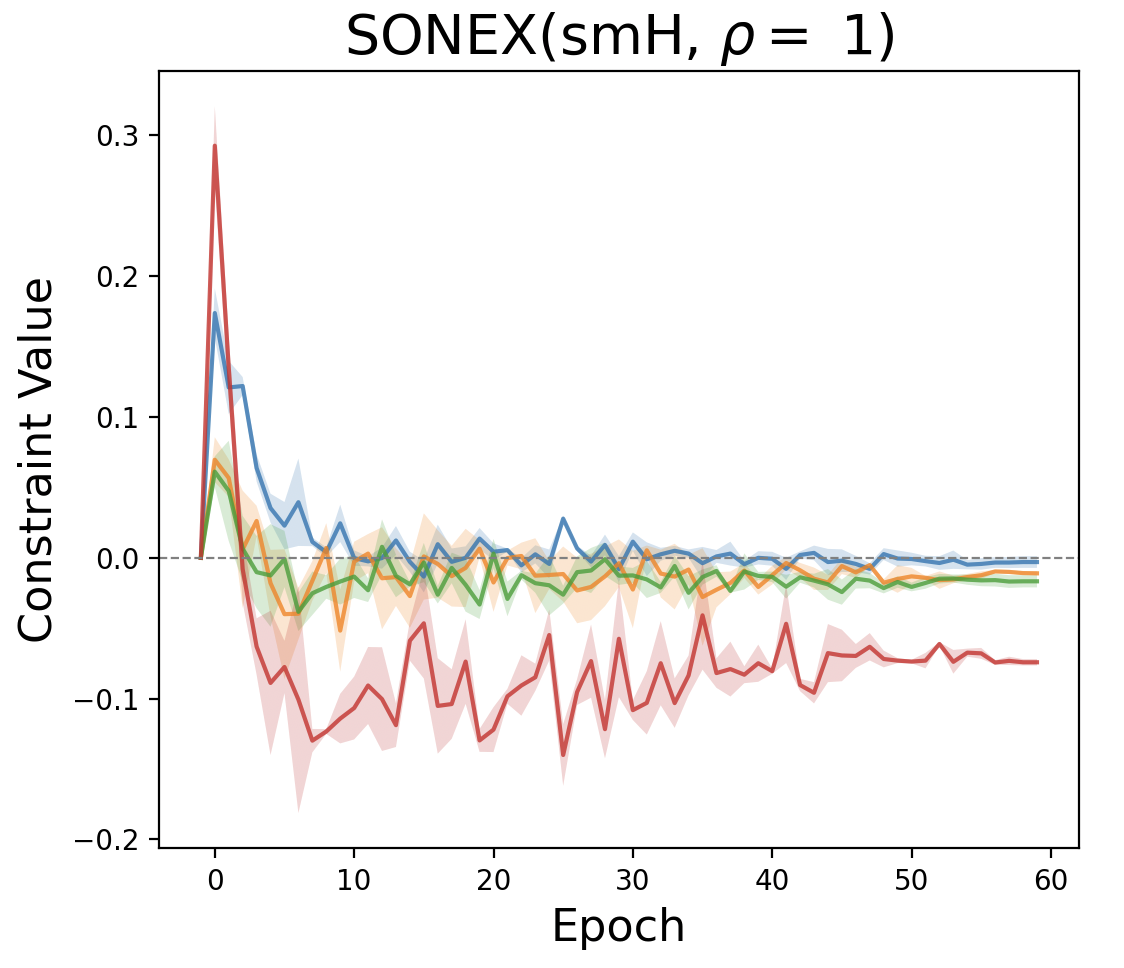} 
    \includegraphics[width=0.22\textwidth]{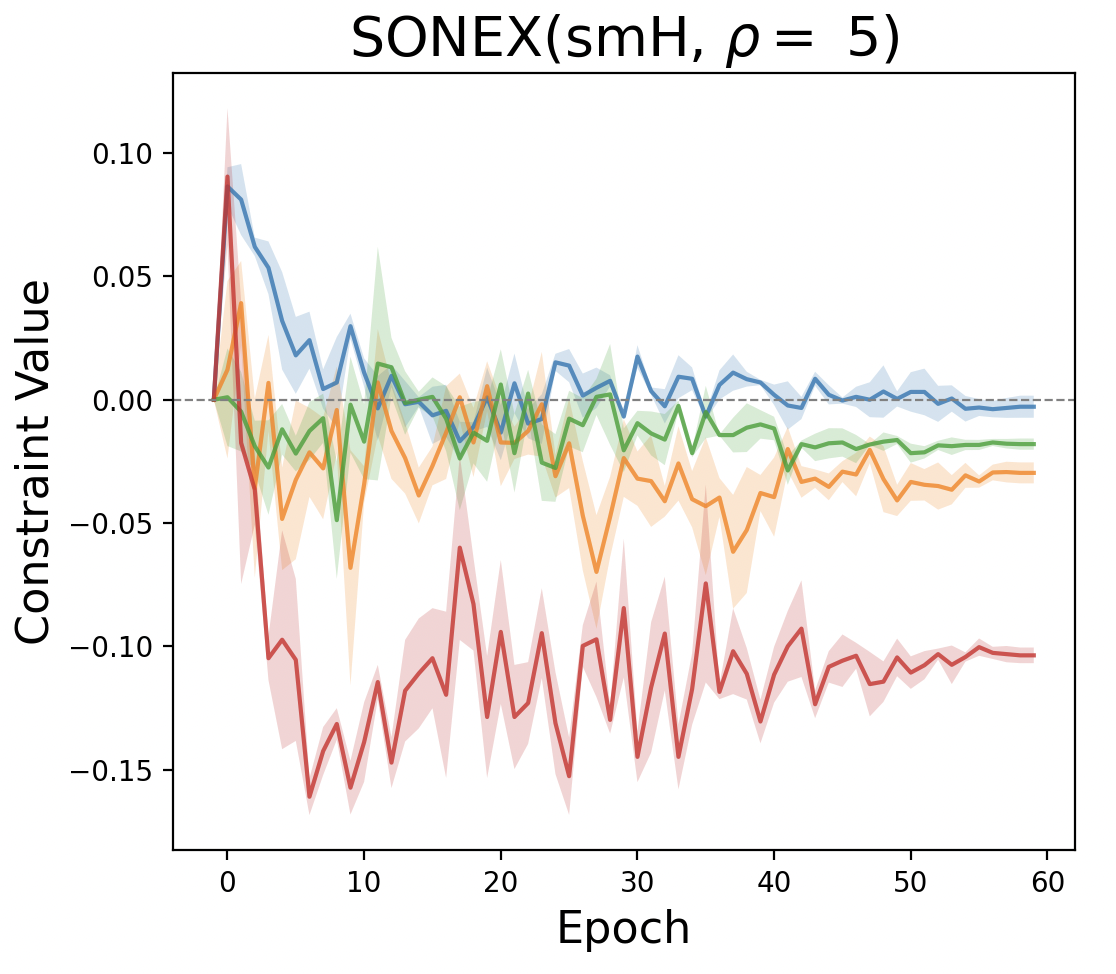} 
    \includegraphics[width=0.32\textwidth]{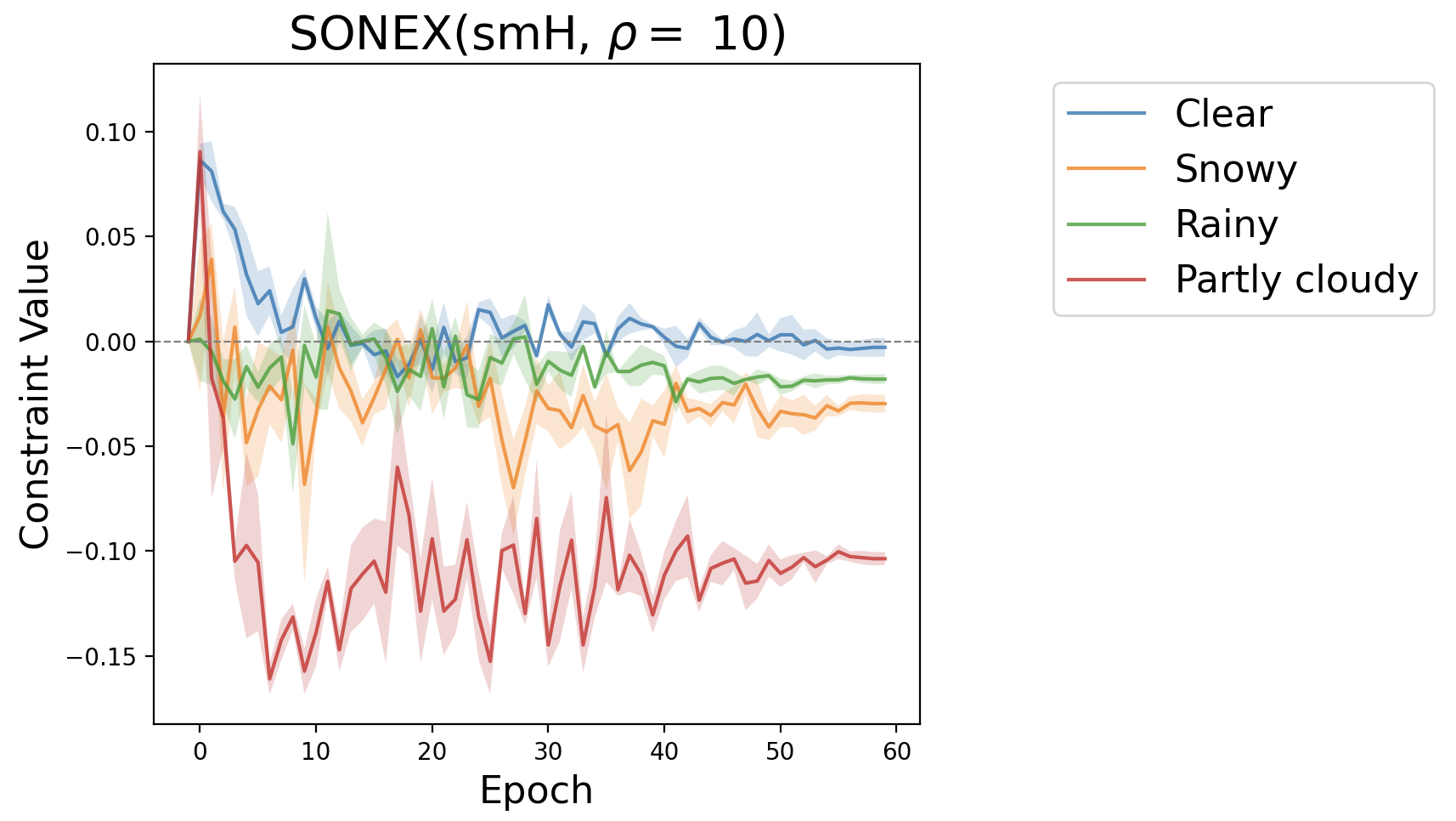} 
\vspace{-0.15in}
\centering
\caption{Training curves of 4 constraint values in zero-one loss of different methods for continual learning with non-forgetting constraints when targeting the overcast class. Top: squared-hinge penalty method with different $\rho$; Bottom: smoothed hinge penalty method with different $\rho$.} 
\label{fig:mds_cons2}
\end{figure*}
\subsection{Ablation Study}
We conduct a series of ablation studies to examine the effects of key hyperparameters and design choices.
\paragraph{Results for varying $\beta$}
We investigate the impact of different $\beta$ on model performance. We evaluate $\beta \in \{0.1, 0.3, 0.5, 0.7, 1.0\}$ and conduct experiments for SONEX on Amazon dataset for GDRO task and ALEXR2 on Adult dataset for AUC maximization with ROC fairness constraints. The results are shown in table~\ref{tab:sonex_alexr2_beta} which indicates the importance of setting $\beta\leq 1.0$ .
\paragraph{Results for varying $\theta$}
We analyze the influence of different $\theta$ of ALEXR2 on model performance, testing $\theta \in \{\frac{1}{16}, \frac{1}{8}, \frac{1}{4}, \frac{1}{2}, 1\}$ and conduct experiments for ALEXR2 on Adult dataset. As shown in Table~\ref{tab:alexr2_theta}, the final AUC of ALEXR2 remains relatively stable across this range, suggesting that it is not highly sensitive to $\theta$.
\paragraph{Benefit of Adam-type update}
Finally we compare Adam-type update with momentum-type update. We conduct experiments for SONEX on Amazon dataset and ALEXR2 on Adult dataset and the results are summarized in table~\ref{tab:adam_vs_mmt}. They clearly demonstrate the advantage of using Adam-type updates, which consistently yield superior performance.
\begin{table}[h]
    \centering
    \caption{Final loss and AUC of SONEX and ALEXR2 on Amazon dataset and Adult dataset respectively, with varying $\beta$}
    \label{tab:sonex_alexr2_beta}
    \begin{tabular}{l|cccc}
    \hline
      SONEX w/ varying $\beta$   & 0.1 & 0.3 & 0.5 & 1.0\\\hline
      Final loss& 0.5657 & \textbf{0.5563} & 0.5768 & 0.578\\\hline
    \end{tabular}
    
    \vspace{1em}
    
    \begin{tabular}{l|cccc}
    \hline
      ALEXR2 w/ varying $\beta$   & 0.1 & 0.3 & 0.5 & 1.0\\\hline
      Final AUC& 0.8975 & \textbf{0.8976} & 0.8973 & 0.8969\\\hline
    \end{tabular}
\end{table}
\begin{table}[h]
    \centering
    \caption{Final AUC of ALEXR2 with varying $\theta$ on Adult dataset}
    \label{tab:alexr2_theta}
    \begin{tabular}{l|cccc}
    \hline
      varying $\theta$   & 0.1 & 0.3 & 0.5 & 1.0\\\hline
      Final AUC& 0.8975 & \textbf{0.8976} & 0.8973 & 0.8969\\\hline
    \end{tabular}
\end{table}
\begin{table}[h]
    \centering
    \caption{Comparison between Adam-type update and Momentum-type update for SONEX and ALEXR2 on Amazon dataset and Adult dataset, respectively}
    \label{tab:adam_vs_mmt}
    \begin{tabular}{l|cc}
    \hline
      Algorithm   & Adam-type & Momentum-type\\\hline
      SONEX(Final Loss)& \textbf{0.5657} & 0.9666\\
      ALEXR2(Final AUC)& \textbf{0.8975} & 0.861\\\hline
    \end{tabular}
\end{table}
\subsection{Verification of Assumption~\ref{asm:LBSV}}\label{subsec:valbsv}
We verify the assumption~\ref{asm:LBSV} by computing minimum eigenvalue $\lambda_{min}$ of $\nabla \gb(\w)\nabla \gb(\w)^\top$ in group DRO experiment. We compute $\lambda_{min}$ for models trained on Camelyon17 and CelebA from the last epoch and report it in table~\ref{tab:min_eigval_gdro}. Our experiment results demonstrate that the minimum eigenvalue of $\nabla \gb(\w)\nabla \gb(\w)^\top$ remains positive after training process finishes. \\
Besides, assumption~\ref{asm:cons_cond} has been verified empirically in Appendix A.1 of~\cite{li2024model}.
\begin{table}[ht]
\centering
\caption{Minimum eigen values $\lambda_{min}$ of $\nabla \gb(\w)\nabla \gb(\w)^\top$ at different solution.}
\small
\label{tab:min_eigval_gdro}
\begin{tabular}{l|ccc}
\hline
Camelyon17 & seed 1 & seed 2 & seed 3 \\
\hline
$\lambda_{min}$ & 0.0206 & 0.0349 & 0.0261 \\
\hline
\end{tabular}

\vspace{1em}

\begin{tabular}{l|cc}
\hline
CelebA & seed 10 & seed 20 \\
\hline
$\lambda_{min}$ & 0.9000 & 0.2713 \\
\hline
\end{tabular}

\end{table}

\begin{figure*}[tb!]
\centering
    \includegraphics[width=0.26\textwidth]{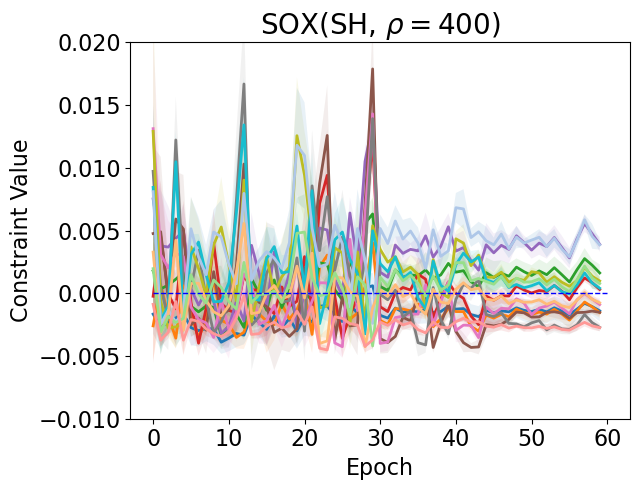} 
    \includegraphics[width=0.26\textwidth]{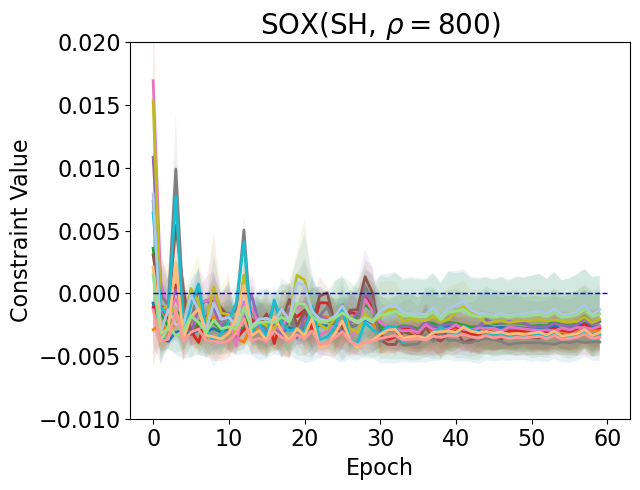} 
    \includegraphics[width=0.35\textwidth]{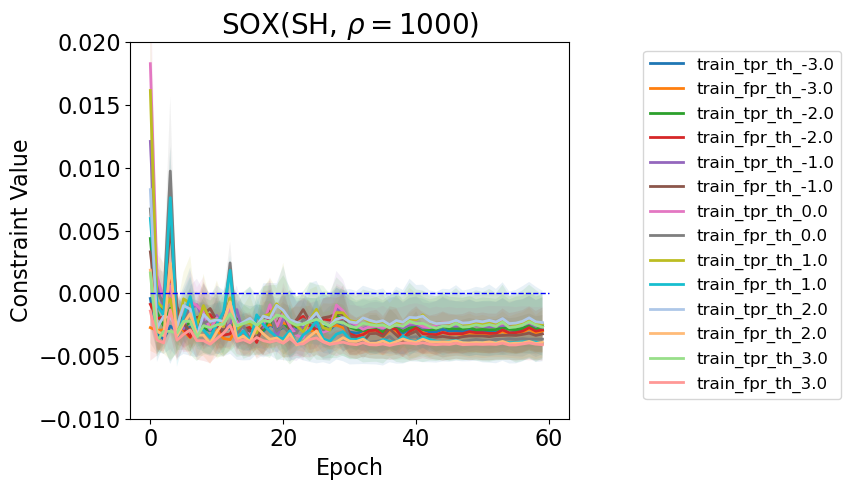} \\

    \includegraphics[width=0.26\textwidth]{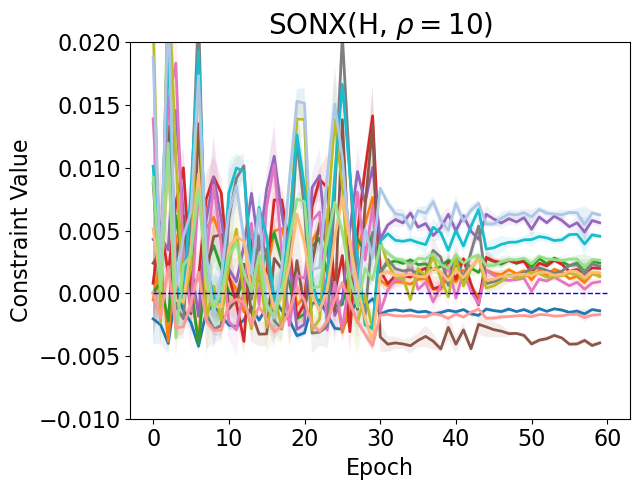} 
    \includegraphics[width=0.26\textwidth]{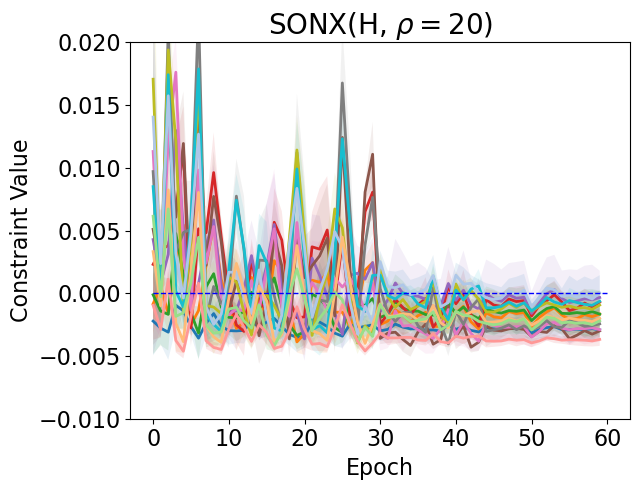}    
    \includegraphics[width=0.35\textwidth]{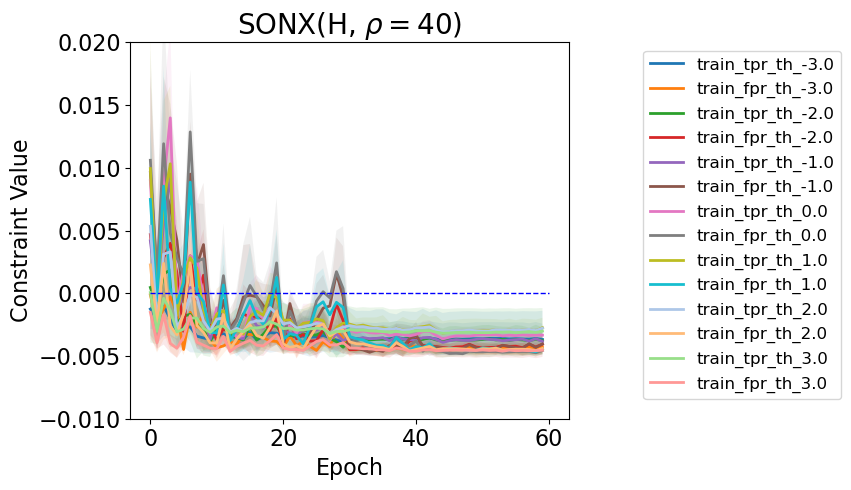}\\
    \includegraphics[width=0.26\textwidth]{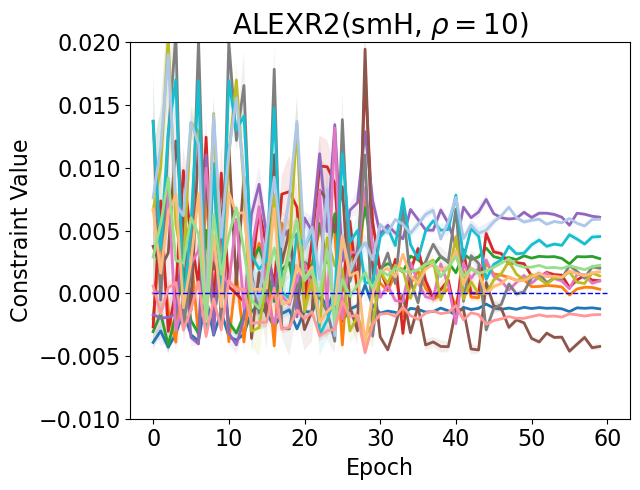} 
    \includegraphics[width=0.26\textwidth]{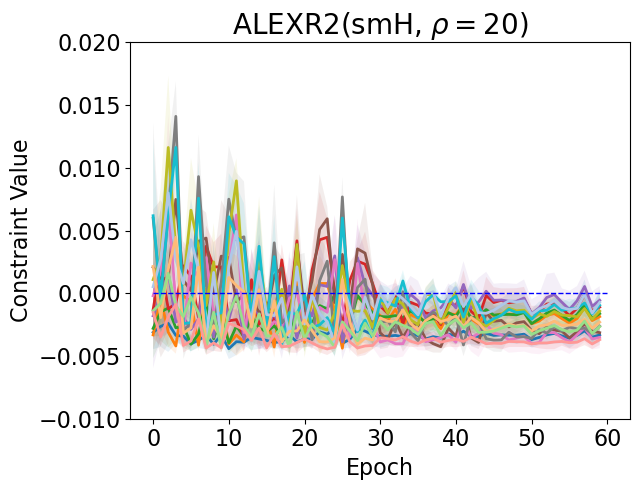}    
    \includegraphics[width=0.35\textwidth]{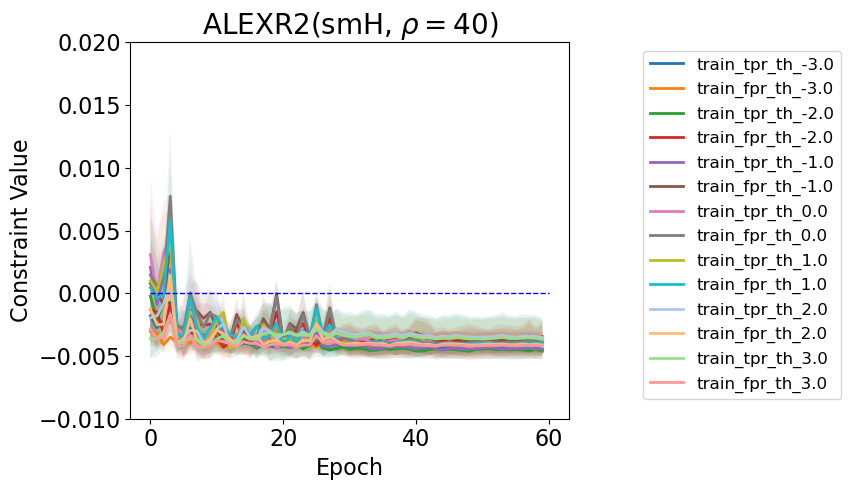}
\vspace{-0.15in}
\centering
\caption{Training curves of 14 constraint values of different methods on adult dataset for AUC maximization with ROC fairness constraints. Top row: SOX with squared-hinge penalty method; Middle: SONX with Hinge penalty method; Bottom: ALEXR2 with smoothed hinge penalty method.} 
\label{fig:fairness_cons1}
\end{figure*}
\begin{figure*}[t]
\centering
    \includegraphics[width=0.26\textwidth]{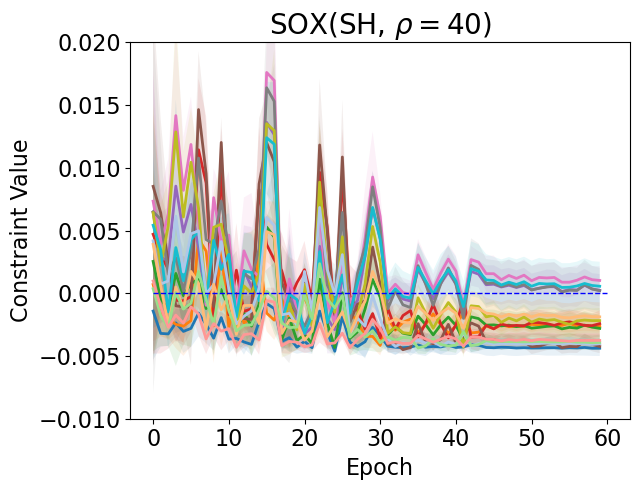} 
    \includegraphics[width=0.26\textwidth]{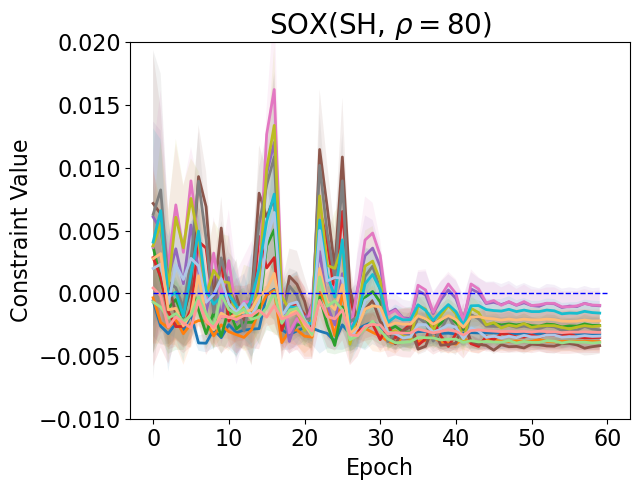} 
    \includegraphics[width=0.35\textwidth]{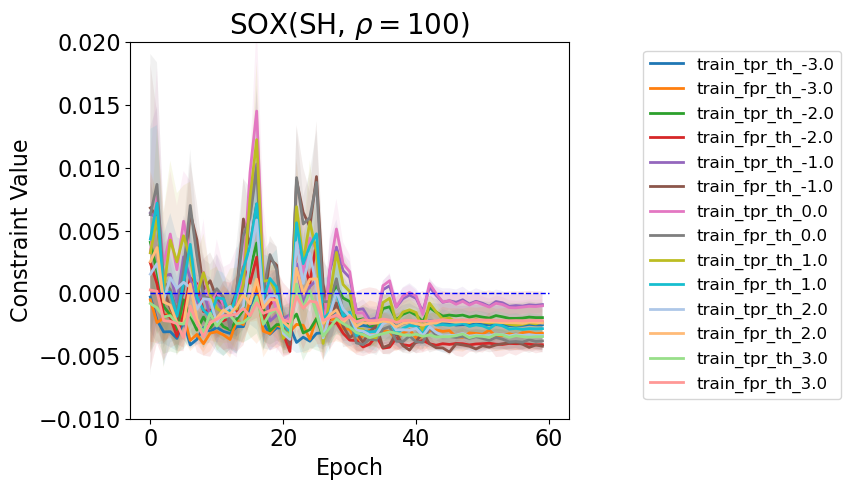} \\

    \includegraphics[width=0.26\textwidth]{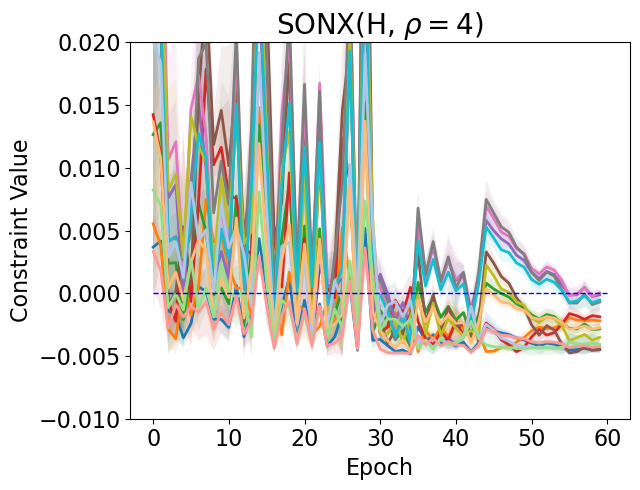} 
    \includegraphics[width=0.26\textwidth]{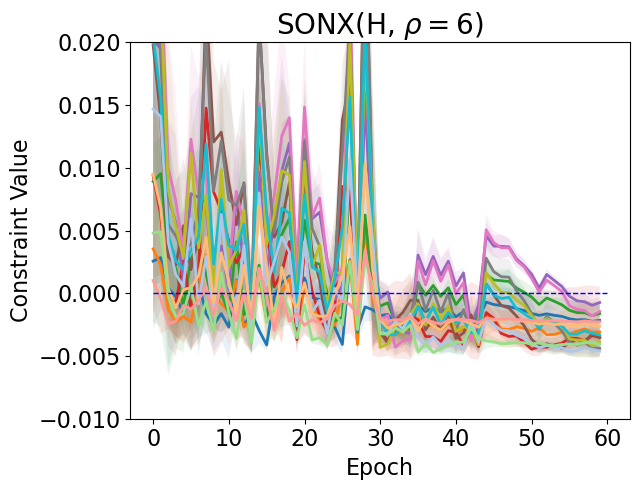}    
    \includegraphics[width=0.35\textwidth]{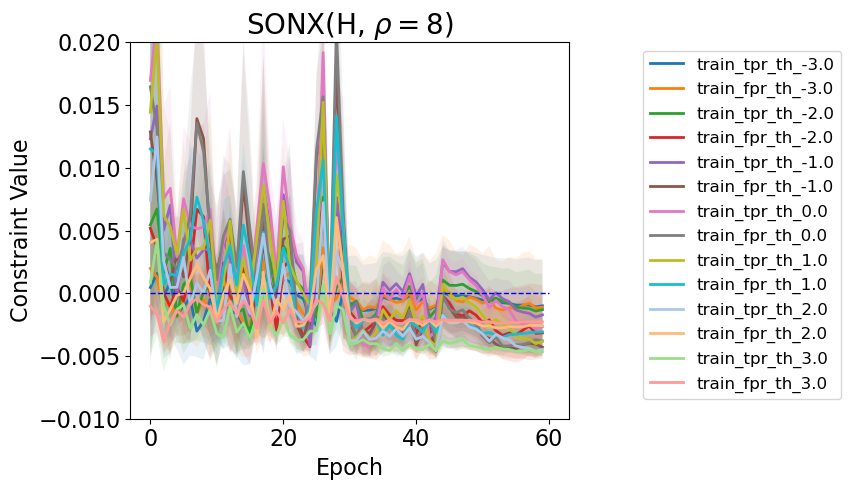}\\
    \includegraphics[width=0.26\textwidth]{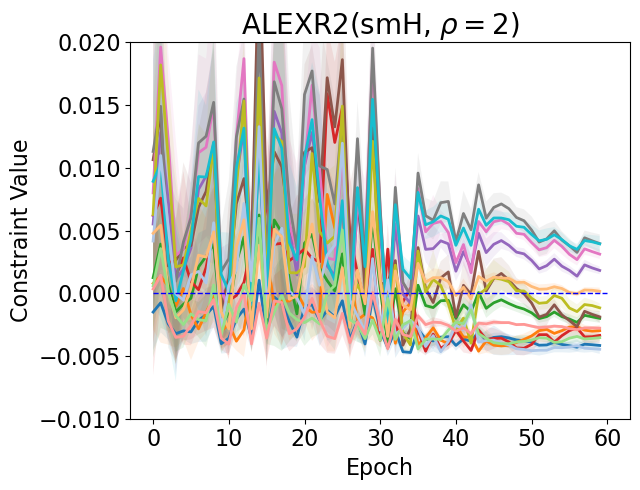} 
    \includegraphics[width=0.26\textwidth]{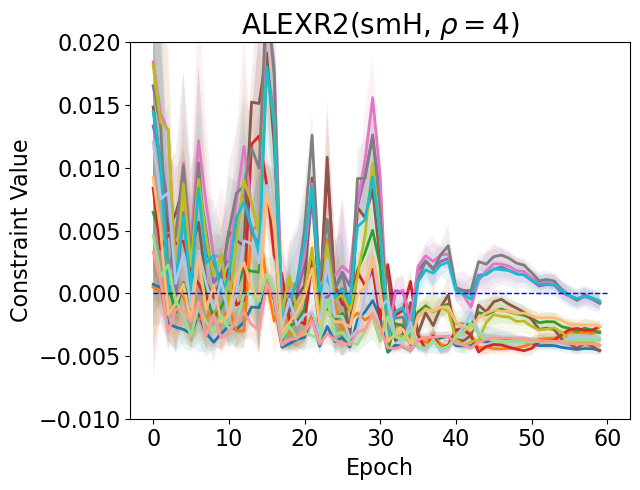}    
    \includegraphics[width=0.35\textwidth]{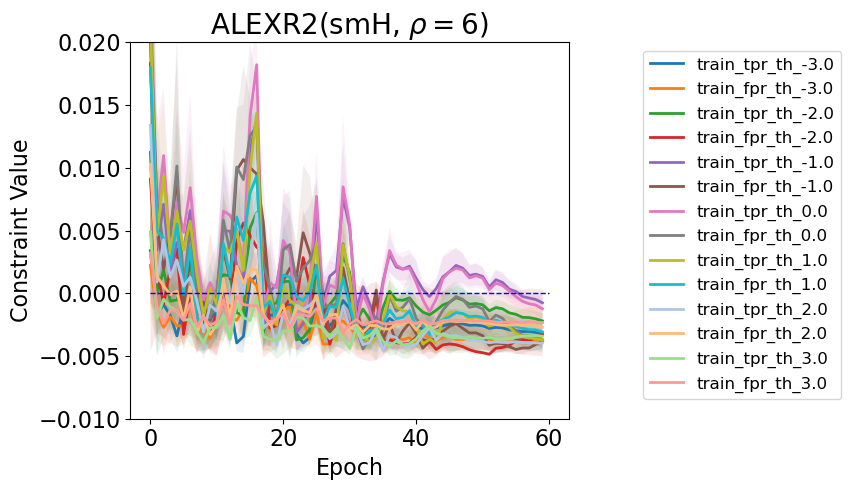}
\vspace{-0.15in}
\centering
\caption{Training curves of 14 constraint values of different methods on COMPAS dataset for AUC maximization with ROC fairness constraints. Top row: SOX with squared-hinge penalty method; Middle: SONX with Hinge penalty method; Bottom: ALEXR2 with smoothed hinge penalty method.} 
\label{fig:fairness_cons2}
\end{figure*}

\subsection{Other Details of Experiments}
\textbf{Computing Resource and Running time}:
The experiments of AUC Maximization with ROC Fairness Constraints and the experiments of group DRO of Camelyon17 dataset and CelebA dataset in our paper is run on an A30 24G GPU, among which the first experiment takes less than 10 minutes for each run while for the second one, Camelyon17 takes about 4 hours and CelebA takes about 5 hours, for each run. The Amazon dataset of group DRO experiment is run on one A100 40GB GPUs and takes about 12 hours each run. The experiment of continual learning with non-forgetting constraints is run on two A100 40GB GPUs and takes about 12 hours each run.\\
\textbf{Data Split}: We perform data split of CelebA dataset ourselves: within each group, samples are divided into training, validation, and test sets in an 8:1:1 ratio. For all the other datasets mentioned in our paper we use default split.\\
\textbf{Other Experiment results}
We show the training curves of individual constraint values for our two experiments about constraint optimization, as shown in Figure~\ref{fig:mds_cons1}, ~\ref{fig:mds_cons2} (continual learning with non-forgetting constraints) and Figure~\ref{fig:fairness_cons1}, ~\ref{fig:fairness_cons2} (AUC maximization with ROC fairness constraints). Note that we use other weather conditions except foggy to construct non-forgetting constraints since there is no foggy data in BDD100k for defining such a constraint.

\clearpage

\end{document}